\documentclass[12pt]{article}
\usepackage[margin=1in]{geometry}
\usepackage{times}
\usepackage{hyperref}
\usepackage{url}

\usepackage[utf8]{inputenc} 
\usepackage[T1]{fontenc}    
\usepackage{url}            
\usepackage{booktabs}       
\usepackage{amsfonts}       
\usepackage{nicefrac}       
\usepackage{microtype}      
\usepackage{xcolor}         
\usepackage{graphicx}
\usepackage{subcaption}
\usepackage{natbib}

\usepackage{amsmath, amssymb, amsfonts, amsthm}
\usepackage{cleveref}
\usepackage{physics}
\usepackage{algorithm}
\usepackage{algorithmic}
\newtheorem{theorem}{Theorem}
\newtheorem{corollary}{Corollary}
\newtheorem{lemma}{Lemma}

\newtheorem{proposition}{Proposition}
\newtheorem{definition}{Definition}

\newtheorem{assumption}{Assumption}

\newtheorem{example}{Example}
\newcommand{\E}{\mathbb{E}}

\title{Improved high-dimensional estimation with Langevin dynamics and stochastic weight averaging}

\usepackage{authblk}

\author[1]{Stanley Wei}
\author[1,2]{Alex Damian}
\author[3]{Jason D. Lee}
\affil[1]{Princeton University}
\affil[2]{Harvard University}
\affil[3]{University of California, Berkeley}

\date{} 

\begin{document}
\maketitle

\begin{abstract}
  Significant recent work has studied the ability of gradient descent to recover a hidden planted direction $\theta^\star \in S^{d-1}$ in different high-dimensional settings, including tensor PCA and single-index models. The key quantity that governs the ability of gradient descent to traverse these landscapes is the \emph{information exponent} $k^\star$ \citep{arous2021online}, which corresponds to the order of the saddle at initialization in the population landscape. \citet{arous2021online} showed that $n \gtrsim d^{\max(1, k^\star-1)}$ samples were necessary and sufficient for online SGD to recover $\theta^\star$, and \citet{Ben_Arous_2020} proved a similar lower bound for Langevin dynamics. More recently, \citet{damian2023smoothing} showed it was possible to circumvent these lower bounds by running gradient descent on a smoothed landscape, and that this algorithm succeeds with $n \gtrsim d^{\max(1, k^\star/2)}$ samples, which is optimal in the worst case. This raises the question of whether it is possible to achieve the same rate \emph{without explicit smoothing.} In this paper, we show that Langevin dynamics can succeed with $n \gtrsim d^{ k^\star/2 }$ samples if one considers the \emph{average iterate}, rather than the last iterate. The key idea is that the combination of noise-injection and iterate averaging is able to emulate the effect of landscape smoothing. We apply this result to both the tensor PCA and single-index model settings. Finally, we conjecture that minibatch SGD can also achieve the same rate without adding any additional noise.
\end{abstract}

\section{Introduction} \label{sec: intro}
In many learning settings, gradient descent is the default algorithm, and recent years have seen significant progress in understanding its theoretical properties and learnability guarantees in different feature learning settings \citep{damian2022neural, mei2022generalization}. While the optimization process is non-convex in general, there are many settings in which we can nonetheless tractably give learning guarantees. Single index models, or functions of the form $\sigma(\theta^\star \cdot x)$, provide one such sandbox; here, the goal is to recover this planted direction $\theta^\star \in S^{d-1}$ through which the target depends on the input.
In the statistics literature, single index models have been studied for decades \citep{hristache2001, hirdle2004}, and are also known as generalized linear models. In the special case where the link function $\sigma$ is monotonic, the information-theoretic sample complexity of $n\asymp d$ to learn $\theta^\star$ is achieved via perceptron-like algorithms \citep{Kalai2009TheIA, kakade2011efficientlearninggeneralizedlinear}. For non-monotonic link functions, one classic example is the phase-retrieval problem where $\sigma(t) = |t|$, which has been well-studied \citep{chen2019phase, maillard2020phaseretrievalhighdimensions}. 
\newline
\\
For the case of Gaussian input data, the information exponent $k^\star$ of the link function $\sigma$ tells us the sample complexity needed to learn $\theta^\star$ with ``correlational learners'' \citep{arous2021online}. This can be extended to allow for label preprocessing \citep{mondelli2018fundamentallimitsweakrecovery, maillard2020phaseretrievalhighdimensions,  chen2025can, dandi2024benefitsreusingbatchesgradient, troiani2024fundamentalcomputationallimitsweak, lee2024neural, arnaboldi2024repetita} and the resulting exponent becomes the ``generative exponent'' \citep{damian2024computationalstatistical}. \citet{arous2021online} shows that using $n\gtrsim d^{k^\star-1}$ samples is necessary and sufficient for a certain class of online stochastic gradient descent (SGD) algorithms. \citet{damian2023smoothing} improves this to $n\gtrsim d^{\max(1, k^\star/2)}$ samples by running online SGD on a smoothed loss, and they provide a matching correlational statistical query (CSQ) lower bound. Key to their analysis is the fact that the smoothed loss boosts the signal-to-noise ratio in the region near initialization (i.e. when the current iterate lies in the equatorial region with respect to $\theta^\star$).
\newline
\\
Overall, the information exponent has been shown to determine the sample complexity in many settings \citep{arous2021online, damian2023smoothing, bietti2022learning, abbe2023sgd, dandi2023twolayerneuralnetworkslearn}. A recent work of \citet{joshi2025learning} analyzes the spherical symmetric distribution case, which slightly relaxes the Gaussian data assumption. In particular, the work by \citet{abbe2023sgd} provides a generalization of the information exponent to the multi index setting, in which the target depends on a low dimensional subspace of the input instead of just a single direction \citep{ren2024learningorthogonalmultiindexmodels, damian2025generativeleapsharpsample}. 
We would also like to note the connection of learning information exponent $k$ single index models to the order $k$ tensor PCA problem \citep{montanari2014statistical}. In both problems, it turns out that the partial trace estimator returns the direction of the planted spike with optimal sample complexity of $d^{k/2}$ in the CSQ framework, and similar smoothing-based approaches there \citep{anandkumar2017homotopy, Biroli_2020}  have been proposed to return this estimator. 

Notably, along this line of work, \citet{Ben_Arous_2020} conjectures that Langevin dynamics in the tensor PCA setting does not work due to the divergence of the computational-statistical gap in this setting. In our work, we \textit{surprisingly} show that Langevin dynamics can still be used to recover the planted direction of the single index model. To achieve this, we run Langevin dynamics, but we take the \textit{time average} of all the iterates. Our analysis reveals that with $n\gtrsim d^{\lceil k^\star/2 \rceil}$ samples, we are able to recover the direction of the partial trace estimator and hence $\theta^\star$. The key insight is that this Langevin dynamics process closely tracks the Brownian motion on the sphere, and averaging out the iterates roughly corresponds to an ergodicity concentration argument on the sphere. Our main theorem is the following.

\begin{theorem}[Main theorem (informal)]
    Consider a link function $\sigma$ with information exponent $k^\star$. Then, with $n\gtrsim d^{\lceil k^\star/2 \rceil}$ samples drawn i.i.d. from the standard $d$-dimensional Gaussian, running \Cref{algo: training algo} recovers the ground truth direction $\theta^\star$.
\end{theorem}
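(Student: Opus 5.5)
The plan is to show that the time-averaged Langevin iterate concentrates around a quantity proportional to the partial trace estimator, and then to invoke the known sample-complexity guarantee for that estimator. Write the empirical loss as $L_n(\theta) = \frac1n\sum_i \ell(y_i,\sigma(\theta\cdot x_i))$ with Hermite expansion $\sigma = \sum_k c_k h_k$, where $c_{k^\star}\neq 0$ is the first nonzero coefficient. Run Langevin dynamics on the sphere, $d\theta_t = -\eta\,\nabla_{S}L_n(\theta_t)\,dt + \sqrt{2\beta^{-1}}\,dB^{S}_t$, with a temperature $\beta^{-1}$ large enough that the drift is a small perturbation of spherical Brownian motion over the relevant time scale. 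Since the stationary measure is then close to uniform, the average $\bar\theta_T = \frac1T\int_0^T\theta_t\,dt$ is nearly zero in expectation. The recoverable signal is therefore the first-order (in $\eta$) deviation of the average from the uniform-measure average. To extract it, I will instead average a degree-$\lceil k^\star/2\rceil$ tensor statistic (or take the top eigenvector of the averaged $\theta_t^{\otimes 2}$ when $k^\star$ is even, in the spirit of \Cref{algo: training algo}), and argue that this average equals
\[
\mathbb{E}_{\theta\sim \mathrm{Unif}(S^{d-1})}\big[\theta^{\otimes m}\big] - c\,\eta\beta\, \mathbb{E}_{\theta\sim\mathrm{Unif}}\big[L_n(\theta)\,\theta^{\otimes m}\big] + \text{error}.
\]
This is the first-order expansion of the Gibbs measure $\propto e^{-\beta\eta L_n}$ around the uniform measure. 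The correction term is precisely the smoothed loss of \citet{damian2023smoothing} with maximal smoothing radius, i.e.\ the partial trace of the empirical $k^\star$-tensor $\frac1n\sum_i y_i\,\mathrm{He}_{k^\star}(x_i)$ contracted down to order $m=\lceil k^\star/2\rceil$ (or to order $1$ or $2$ after projection).

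The argument proceeds in four steps. (i) Compute $\mathbb{E}_{\mathrm{Unif}}[L_n(\theta)\theta^{\otimes m}]$ via spherical harmonics. Only the degree-$m$ harmonic component of $L_n$ survives, and its population part is $\propto c_{k^\star} d^{-(k^\star-m)/2}\,\theta^{\star\otimes m}$ after taking partial traces. Its empirical fluctuation is a sum of $n$ i.i.d.\ centered tensors with operator norm of order $\sqrt{d^{m}/n}$ times the same scaling, so $n\gtrsim d^{\lceil k^\star/2\rceil}$ makes the signal dominate. For these norm bounds I will take the standard Gaussian-tensor concentration that appears in the paper's setup as given. (ii) Show that the law of $\theta_t$ is close to the Gibbs measure for $t\gtrsim$ mixing time. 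The spherical Laplacian has spectral gap $\asymp d$, so mixing is fast, and with $\beta\eta\|L_n\|_\infty\ll 1$ the Gibbs density is $1-\beta\eta(L_n-\bar L_n)+O((\beta\eta)^2)$. (iii) Prove an ergodic concentration bound: the time average of $f(\theta_t)=\theta_t^{\otimes m}$ concentrates around its Gibbs mean at rate $\sqrt{\mathrm{Var}(f)/(\mathrm{gap}\cdot T)}$ by a Poincaré/Kipnis–Varadhan style bound. Then take $T$ polynomially large so this error is below the signal $\beta\eta\,d^{-(k^\star-m)/2}$. (iv) Read off $\theta^\star$ from the top eigenvector (or normalized vector), using a Davis–Kahan argument and the gap between signal and the isotropic leading term. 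The isotropic term is a multiple of the identity, which does not affect eigenvectors.

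The main obstacle will be step (ii) together with controlling the second-order term. The signal is first order in $\beta\eta$ but suppressed by $d^{-(k^\star-m)/2}$, while the $O((\beta\eta)^2)$ Gibbs correction and the empirical noise in $L_n$ at higher harmonic degrees must not leak into the degree-$m$ component at comparable size. My first attempt will be to expand the generator perturbatively: write the invariant density as $1+\beta\eta g_1+(\beta\eta)^2 g_2+\dots$ and project each $g_j$ onto degree-$m$ harmonics. Then choose $\beta\eta$ small enough, of order $d^{-(k^\star-m)/2}$ times a small constant, that the quadratic terms are dominated. Finally, verify that $T$ stays polynomial, which is feasible because the ergodic error decays like $T^{-1/2}$ at the fast $d$-scale mixing rate.
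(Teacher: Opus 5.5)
The gap is in step (i): you take the averaged statistic to have degree $m=\lceil k^\star/2\rceil$, which conflates the sample exponent with the degree of the moment being averaged. This fails in two ways.

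First, parity. $\mu$ is invariant under $\theta\mapsto-\theta$, so $\E_{\theta\sim\mu}[(\theta\cdot\theta^\star)^k\,\theta^{\otimes m}]=0$ whenever $k+m$ is odd. Likewise, a $k^\star$-tensor can only be partially traced down to order $m$ if $k^\star-m$ is even. Hence for $k^\star\equiv 2,3 \pmod 4$ your first-order correction carries no $c_{k^\star}^2$ signal at all. For example, take $\sigma=h_3$ and $m=2$: the population loss $1-(\theta\cdot\theta^\star)^3$ is odd in $\theta$ up to a constant, so it is uncorrelated with $\theta\theta^\top$ under $\mu$.

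Second, extraction. When the parity matches but $m\ge 3$ (e.g.\ $k^\star=5$ or $8$), step (iv) is unavailable. An order-$m$ tensor has no top eigenvector to which Davis--Kahan applies, and recovering $\theta^\star$ from it is again a tensor PCA problem. Taken literally, your statistic agrees with \Cref{algo: training algo} only for $k^\star\in\{1,4\}$. The theorem is about that algorithm, which averages $\theta_t$ for odd $k^\star$ and $\theta_t\theta_t^\top$ for even $k^\star$.

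Your sample count is also off. The claim that the fluctuation is $\sqrt{d^m/n}$ times the signal scale is what produces the exponent in your argument. With $m=1$ it would give $n\gtrsim d$, which is below the $d^{k^\star/2}$ CSQ lower bound for $k^\star\ge 3$.

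Use the parity-correct choice instead: $m=1$ for odd $k^\star$ and $m=2$ for even $k^\star$. Take $\pi\propto e^{-2\epsilon L_n}\mu$ (your $\beta\eta$ is $2\epsilon$). Spherical integration by parts uses $\Delta_{S^{d-1}}\theta_i=-(d-1)\theta_i$ and $\Delta_{S^{d-1}}(\theta_i\theta_j)=-2d\,\theta_i\theta_j+2\delta_{ij}$. It shows the first-order Gibbs correction is exactly $\frac{2\epsilon}{d-1}\E_{z\sim\mu}[b(z)]$, resp.\ $\frac{\epsilon}{d}\E_{z\sim\mu}[zb(z)^\top+b(z)z^\top]$, which are the paper's targets.

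The vector has signal $\Theta(d^{-(k^\star-1)/2})$ along $\theta^\star$ and fluctuation norm $\sqrt{d^{-(k^\star-1)/2}(d+\log(1/\delta))/n}$, forcing $n\gtrsim d^{(k^\star+1)/2}$. The matrix has eigengap $\Theta(d^{-k^\star/2})$ and operator-norm fluctuation $\sqrt{d^{-(k^\star+2)/2}(d+\log(1/\delta))/n}$, forcing $n\gtrsim d^{k^\star/2}$. That is where $\lceil k^\star/2\rceil$ actually comes from.

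Once this is fixed, your stationary-Gibbs/spectral-gap route is a genuine alternative to the paper's. The paper instead couples $\theta_t$ to spherical Brownian motion driven by the same $W_t$, bounds the deviation uniformly in time by $O(\epsilon\sup\|b\|/d)$ via chaining, and applies Poisson-equation ergodic bounds to the Brownian part.

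Your second-order worry can then be handled with two exact stationarity identities, $\E_\pi[\theta]=\frac{2\epsilon}{d-1}\E_\pi[b]$ and $\E_\pi[\theta\theta^\top]=\frac{I}{d}+\frac{\epsilon}{d}\E_\pi[b\theta^\top+\theta b^\top]$. Combine them with $W_1(\pi,\mu)\lesssim\epsilon/d$ from log-Sobolev/Talagrand, and you recover the paper's $O(\epsilon^2/d^2)$ error. Note that Kipnis--Varadhan alone only gives Chebyshev-level probability.
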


We can also shave off a factor of $\sqrt{d}$ to improve the sample complexity to $n\gtrsim d^{k^\star/2}$ by running \Cref{algo: training algo} and running online SGD on the returned time averaged estimator. This corresponds to the warm start in \citet{damian2023smoothing} for the odd case.

\section{Setup and Main Contributions} \label{sec: contributions}

\subsection{Notation}
We use $\|\cdot\|_p$ to denote the vector $\ell_p$-norm; furthermore, when $p=2$, we often drop the subscript and write $\|\cdot\|.$ Given a probability measure $\gamma$ over $\mathbb{R}^d$, we denote $L^2(\mathbb{R}^d, \gamma)$ the space of $\gamma$-measurable and square-integral functions; we shorthand this to $L^2(\gamma)$ when the domain is clear. For $f\in L^2(\gamma)$, we denote $\|f\|_{L^2(\gamma)}^2 = \E_{z\sim\gamma}[f(z)^2]$. We also denote $\mu$ to be the uniform measure on $S^{d-1}$.

\subsection{Setting}

We consider in this paper tensor PCA \citep{montanari2014statistical} and single-index models.

\subsubsection{Tensor PCA}

For tensor PCA, we will assume there is a planted direction $\theta^\star \in S^{d-1}$ and we observe the $k$-tensor $T$ defined by:
\begin{align*}
    T = \theta^{\star \otimes k} + n^{-1/2} Z \qq{where} Z_{i_1,\ldots,i_k} \stackrel{\text{i.i.d.}}{\sim} N(0,1)
\end{align*}
We consider optimizing the negative log-likelihood:
\begin{align*}
    L(\theta) = -\ev{T, \theta^{\otimes k}}
\end{align*}
Information theoretically, $\theta^\star$ is possible to recover whenever $n \gtrsim d$. However, common techniques like approximate message passing (AMP), tensor power method, and online SGD require $n \gtrsim d^{k-1}$ to recover $\theta^\star$ \citep{montanari2014statistical, arous2021online}. Nevertheless, it is possible to recover $\theta^\star$ with $n \gtrsim d^{k/2}$ samples using tensor unfolding \citep{montanari2014statistical}, the partial-trace estimator \citep{hopkins2016fast}, and landscape smoothing \citep{anandkumar2017homotopy, Biroli_2020, damian2023smoothing}. In our paper, we show Langevin dynamics combined with iterate averaging can recover $\theta^\star$ with $n \gtrsim d^{\lceil\frac{k}{2}\rceil}$ without explicit unfolding or smoothing.

\subsubsection{Single-Index Models}
We mostly follow the setting of \citet{damian2023smoothing}. Let $\{(x_i, y_i)\in \mathbb{R}^d\times \mathbb{R} \}_{i\in [n]}$ be the set of training data. The input data $x_i$ are drawn i.i.d. from a standard $d$-dimensional Gaussian $\mathcal{N}(0,I_d)$, and the labels $y_i$ are generated through a target or teacher function $f^\star$. In particular, we consider the setting where $f^\star$ is a single index model, in which the label only depends on the input through a planted direction $\theta^\star \in S^{d-1}$. Formally, we have for each $i$:
\begin{align*}
    y_i = f^\star(x_i)+\xi_i = \sigma(\theta^\star\cdot x_i) + \xi_i, \quad x_i \stackrel{\text{i.i.d.}}{\sim} \mathcal{N}(0, I_d), \xi_i \stackrel{\text{i.i.d.}}{\sim} \mathcal{N}(0, 1)
\end{align*}
where $\sigma$ is a known link function. We will consider the setting where our learner is $f(\theta, x) := \sigma(\theta\cdot x)$, where $\theta \in S^{d-1}$ is the learnable parameter.

\begin{assumption}
We will assume the following regarding the link function $\sigma$.
    \begin{itemize}
        \item $\E_{x\sim \mathcal{N}(0, 1)}[\sigma(x)^2] = 1$ (Normalization)
        \item $|\sigma^{(k)}(z)| \le C$ for $k=0,1,2$ and for all $z$. (Lipschitzness)
    \end{itemize}
\end{assumption}
We note the assumption on the boundedness of $\sigma^{(k)}$ can be relaxed to it having polynomial tails \cite{damian2023smoothing}, but at the cost of increasing the complexity of the proof.

We consider training via the correlation loss; the loss on a specific sample $(x, y)$ is:
\begin{align*}
    L(\theta; x, y) = 1 - f(\theta, x) y
\end{align*}
The empirical loss on our training set is therefore:
\begin{align*}
    L_n(\theta) = \frac{1}{n}\sum_{i\in[n]} L(\theta; x_i, y_i)
\end{align*}
We also denote the population loss over $(x,y)$ from the data distribution to be $L(\theta) := \E_{(x,y)} [L(\theta; x, y)]$.

In this setting, \cite{arous2021online} showed that the sample complexity for learning depends on a quantity called the information exponent $k^\star$ of the link function $\sigma$. To motivate this definition, consider first the probabilist's Hermite polynomials. 
\begin{definition}[Probabilist's Hermite polynomials] \label{def:probabilisthermite}
    For $k\geq 0$, the $k$th normalized probabilist Hermite polynomial $h_k : \mathbb{R} \rightarrow \mathbb{R}$ is:
    \begin{align*}
        h_k(x) = \frac{(-1)^k}{\sqrt{k!}} \gamma(x)^{-1} \frac{d^k}{dx^k} \gamma(x)
    \end{align*}
    where $\gamma(x) := \frac{e^{-x^2/2}}{\sqrt{2\pi}}$ is the probability density function of a standard univariate Gaussian.
\end{definition}
Of importance is that the Hermite polynomials form an orthogonal basis in $L^2(\gamma)$ (i.e. the space of square-integrable functions with respect to the standard Gaussian measure). Henceforth, for link function $\sigma \in L^2(\gamma)$, let $\{c_k\}_{k\geq 0}$ denote the Hermite coefficients of $\sigma$:
\begin{definition}[Hermite coefficients]
    Let the Hermite coefficients of $\sigma\in L^2(\gamma)$ be $\{c_k\}_{k\geq 0}$. In other words, 
    \begin{align*}
        \sigma(x) = \sum_{k=0}^\infty c_k h_k(x), \quad c_k = \E_{z\sim\mathcal{N}(0, 1)}[\sigma(z) h_k(z)]
    \end{align*}
\end{definition}

This leads us to the key quantity, the information exponent.
\begin{definition}[Information exponent] We define the information exponent to be:
\begin{align*}
    k^\star = \min\{k\geq 1 : c_k\neq 0\}
\end{align*}
\end{definition}
In other words, this is the first Hermite coefficient with positive index that is nonzero. Some examples of information exponents are below:
\begin{example}(Link functions and their information exponents)
    \begin{itemize}
        \item $\sigma(t) = t$ and $\sigma(t) = \mathrm{ReLU}(t)$ have information exponent 1.
        \item $\sigma(t) = |t|$ and $\sigma(t) = t^2$ have information exponent 2.
        \item $\sigma(t) = t^2 e^{-t^2}$ has information exponent 4.
        \item $\sigma(t) = h_k(t)$ has information exponent $k.$
    \end{itemize}
\end{example}

\citet{arous2021online} showed that $n \gtrsim d^{\max(1, k^\star - 1)}$ samples were necessary and sufficient for online SGD to recover $\theta^\star$, mirroring the tensor PCA setting. \cite{damian2023smoothing} showed that this rate could be improved to $n \gtrsim d^{\max(1, k^\star/2)}$ by running online SGD on a smoothed landscape. A number of papers have managed to circumvent the information exponent by applying a label transformation before running SGD \cite{mondelli2018fundamentallimitsweakrecovery, maillard2020phaseretrievalhighdimensions,  chen2025can, dandi2024benefitsreusingbatchesgradient, troiani2024fundamentalcomputationallimitsweak, damian2024computationalstatistical, lee2024neural}. These results apply a transformation $\mathcal{T}$ to the labels $\{y_i\}_{i=1}^n$ to derive samples from the single index model defined by $\mathcal{T} \circ \sigma$. This link function can have smaller information exponent than $\sigma$, and the smallest exponent such a transformation can achieve is called the ``generative exponent'' \cite{damian2024computationalstatistical}. For the purposes of this paper, we can assume that such a label transformation has already been applied so that the information exponent and the generative exponent coincide.

\subsection{The Learning Algorithm}

\begin{definition}[Spherical gradient operator]
    For $\theta\in S^{d-1}$ and function $g : \mathbb{R}^d \rightarrow \mathbb{R}$, define the spherical gradient operator to be $\nabla_\theta g(\theta) = P_z^\perp \nabla g(z) \vert_{z = \theta}$, where $P_\theta^\perp := I - \frac{\theta\theta^\top}{\|\theta\|^2}$ is the orthogonal projection operator with respect to $\theta$ and $\nabla$ is the standard Euclidean gradient. 
\end{definition}

We now formally define our learning algorithm; here, $\{W_t\}_{t\geq 0}$ is the standard Wiener process in $\mathbb{R}^d$.
\begin{algorithm}[H] 
    \caption{Learning algorithm}
    \begin{algorithmic}\label{algo: training algo}
        \STATE \textbf{Input:} Inverse temperature parameter $\epsilon$, number of time steps $T$, data points $\{(x_i,y_i)\}_{i=1}^n$
        \STATE Initialize $\theta_0 \sim \mu$ (e.g. uniform over $S^{d-1}$)
        \STATE Run the following SDE up to time $T$:
        \begin{align} \label{eq: main SDE}
            d\theta = \qty(-\frac{d-1}{2}\theta + \epsilon b(\theta) ) dt + P_\theta^\perp dW_t, \quad b(\theta) := -\nabla_\theta L_n(\theta)
        \end{align}
        \STATE $\hat \theta := \frac{1}{T}\int_0^T \theta_t dt$
        \STATE $\hat M := \frac{1}{T} \int_0^T \theta_t\theta_t^\top dt$
        \STATE \textbf{If $k^\star$ is odd}, return $\hat \theta / \|\hat \theta\|$
        \STATE \textbf{Otherwise if $k^\star$ is even}, return the top eigenvector $v_1$ of $\hat M$
    \end{algorithmic}
\end{algorithm}

It can be shown that when $\theta_t$ follows the SDE in \Cref{eq: main SDE}, it remains on the sphere for all time $t$. Thus, this SDE is the natural analogue of the standard Langevin dynamics on the sphere. A discussion regarding this is deferred to the appendix.

\subsection{Main Contributions}

We now highlight our main contributions in this work.
\begin{itemize}
    \item We show that by combining Langevin dynamics with weight averaging, we can recover $\theta^\star$ in both the tensor PCA and single-index model settings with $n \gtrsim d^{\lceil k^\star/2 \rceil}$ samples, which nearly matches the optimal computational-statistical tradeoff for these problems \citep{damian2024computationalstatistical, hopkins2015tensor}.
    \item In contrast with previous work \citep{damian2023smoothing, Biroli_2020, anandkumar2017homotopy}, which attain the sample complexity guarantee via smoothing the existing loss landscape to create a high signal-to-noise ratio regime, we utilize the other end of the spectrum - a low signal-to-noise ratio setting. Our method of uniform averaging takes advantage of the noise, and allows us to learn the estimator that one would obtain by running landscape smoothing.
    \item One other feature of our algorithm is that it does not see the data in an online manner, unlike previous works \citep{damian2023smoothing, arous2021online}. We use the empirical risk minimization (ERM) loss to obtain our results.
    \item \citep{Ben_Arous_2020} shows that Langevin dynamics struggles to escape the ``equator'' $\{\theta ~:~ |\theta \cdot \theta^\star| \lesssim d^{-1/2}\}$ without $n \gtrsim d^{k^\star-1}$ samples. Surprisingly, we show that it is not necessary to escape the equator to get a good estimate of $\theta^\star$ – our process $\theta(t)$ indeed lies on the equator throughout the training process so that its correlation with $\theta^\star$ remains small, but the \emph{time-averaged iterate} can still converge to $\theta^\star$.
\end{itemize}

\section{Main Results} \label{sec: main res}
Our high level framework is to show ergodic concentration to an estimator that recovers the planted direction with enough samples. We will state our results for both the odd and even algorithm.

\begin{theorem}[Odd $k^\star$] \label{thm: main odd}
    Let $\epsilon = o\qty(d^{-(k^\star-3)/2})$ and $T \gtrsim d^{k^\star}/\epsilon^2 $. Then, \Cref{algo: training algo} succeeds in estimating $\frac{2\epsilon}{d-1}\E_{z\sim \mu}[b(z)]$ up to $O(\epsilon)$ relative error. Moreover, for $\Delta>0$, if $n \gtrsim d^{\lceil k^\star/2 \rceil}/ \Delta^2$, we recover the ground truth $\theta^\star$ up to error $\Delta$ with probability at least $1-e^{d^c}$.
\end{theorem}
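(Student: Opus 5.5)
The argument has two parts. First, an ergodic/perturbative step shows that the time average $\hat\theta$ concentrates around $\frac{2\epsilon}{d-1}\E_{z\sim\mu}[b(z)]$. Second, a statistical step shows that this vector, computed from the empirical loss $L_n$, points in the direction of $\theta^\star$ once $n \gtrsim d^{\lceil k^\star/2\rceil}/\Delta^2$.

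\textbf{Ergodic step.} The unperturbed SDE ($\epsilon=0$) is Brownian motion on $S^{d-1}$, with generator $\frac12\Delta_{S^{d-1}}$. It has invariant measure $\mu$, and the coordinate functions $\theta\mapsto\theta_i$ are degree-one spherical harmonics satisfying $\frac12\Delta_{S^{d-1}}\theta_i=-\frac{d-1}{2}\theta_i$. I apply It\^o's formula to $f(\theta)=\theta$ under the full SDE:
\[
\theta_T-\theta_0=\int_0^T\Bigl(-\tfrac{d-1}{2}\theta_t+\epsilon P_{\theta_t}^\perp b(\theta_t)\Bigr)dt+\int_0^T P_{\theta_t}^\perp dW_t .
\]
Rearranging gives
\[
\hat\theta=\frac{2\epsilon}{(d-1)T}\int_0^T b(\theta_t)\,dt+\frac{2}{(d-1)T}\Bigl(\int_0^T P^\perp dW_t-(\theta_T-\theta_0)\Bigr).
\]
The martingale term has norm $O(\sqrt{dT})$ with high probability, so the noise contributes $O(1/\sqrt{dT})$ in norm. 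Requiring this to be $\ll \epsilon\,\|\E_\mu b\|/d$ forces $T\gtrsim d^{k^\star}/\epsilon^2$, matching the hypothesis. This holds provided $\|\E_\mu b\|\gtrsim d^{-(k^\star-1)/2}$: for the population loss, $\E_\mu[\nabla L]$ picks out the degree-one component of $\theta\mapsto c_{k^\star}(\theta\cdot\theta^\star)^{k^\star}$, which is of order $d^{-(k^\star-1)/2}$ when $k^\star$ is odd.

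It remains to replace $\frac1T\int_0^T b(\theta_t)\,dt$ by $\E_\mu b$. For each coordinate I solve the Poisson equation $-\frac12\Delta_{S^{d-1}} u = b-\E_\mu b$ and apply It\^o's formula to $u$. This produces a martingale term, a boundary term and a drift term $\epsilon\,\nabla u\cdot b$. The first two average out at rate $O(T^{-1/2})$, using the spectral gap $\frac{d-1}{2}$ of the Laplacian. The drift term is the $O(\epsilon)$ relative error in the statement, but only if $\epsilon\|b\|_\infty\|\nabla u\|$ is small compared with $\|b\|$. Bounding this term is where the condition $\epsilon=o(d^{-(k^\star-3)/2})$ enters. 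Equivalently, the invariant measure of the perturbed dynamics is $\propto e^{-2\epsilon L_n}d\mu$, which is a small perturbation of $\mu$ when $\epsilon\sup|L_n-\E_\mu L_n|$ is small on the bulk of the sphere, since typical overlaps there are $d^{-1/2}$.

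\textbf{Statistical step.} Decompose $\E_\mu[b]=\E_\mu[-\nabla L]+\E_\mu[\nabla L-\nabla L_n]$. For the population part I expand $L$ in Hermite polynomials. Only odd degrees survive averaging against a degree-one harmonic, and the leading term is $c\,d^{-(k^\star-1)/2}\theta^\star$ with $c\neq 0$. For the fluctuation, $\E_\mu[\nabla_\theta L_n]$ is $\frac1n\sum_i y_i\,\E_{z\sim\mu}[P_z^\perp x_i\sigma'(z\cdot x_i)]$. By rotational symmetry this equals $\frac1n\sum_i y_i\,g(\|x_i\|)\,x_i/\|x_i\|$ for an explicit scalar $g$ whose size is $d^{-(k^\star-1)/2}$, up to the dominant Hermite term. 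This is a sum of i.i.d. mean-zero vectors in $\mathbb{R}^d$, and its norm concentrates at $\sqrt{d/n}$ times that scale. A Bernstein-type bound using the bounded $\sigma$ and the sub-Gaussian $\|x_i\|$ gives this with probability $1-e^{-d^c}$. The resulting relative error is $\sqrt{d^{k^\star}/n}$ times lower-order factors; I expect the correct bookkeeping of degree-$k^\star$ harmonics to reduce this to $\sqrt{d^{\lceil k^\star/2\rceil}/n}\le\Delta$. Normalizing $\hat\theta$ then yields $\theta^\star$ up to error $\Delta$.

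\textbf{Main obstacle.} I expect the Poisson-equation control of the $\epsilon$-drift term to be the hardest part. It needs sup-norm bounds on $\nabla u$ for the random function $b$, via a uniform bound on $\nabla L_n$ over the sphere. To make the error merely relative, I would first try to restrict attention to the equatorial band, which the process stays in with high probability.
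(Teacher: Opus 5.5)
\textbf{Gap in the statistical step.} The statistical step is where the $d^{\lceil k^\star/2\rceil}$ rate comes from, and it is not carried out. The scales you assert there are also wrong.

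Your scalar $g(r)=r\,\E_{z\sim\mu}[(1-z_1^2)\sigma'(rz_1)]$ is not of size $d^{-(k^\star-1)/2}$. Only the \emph{mean} of the summands $y_i g(\|x_i\|)x_i/\|x_i\|$, i.e.\ their correlation with $\theta^\star$, has that size.

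Their fluctuation is governed by $\E_x[(\E_{z\sim\mu}\sigma'(z\cdot x))^2]=\sum_j a_j^2\,\E_{z,z'}[(z\cdot z')^j]$, where $a_j$ are the Hermite coefficients of $\sigma'$. Odd $j$ vanish by symmetry. Since $k^\star$ is odd, the first surviving index is $j=k^\star-1$, so this quantity is $\lesssim d^{-(k^\star-1)/2}$; this is the paper's Hermite variance lemma. This spherical-averaging variance bound is the missing idea.

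With it, $g(\|x\|)$ is of order $\sqrt d\, d^{-(k^\star-1)/4}$ in root-mean-square. Hence $\|\E_\mu b-\E_x\E_\mu b\|\lesssim\sqrt{d^{-(k^\star-3)/2}/n}$, up to the Bernstein and net terms. The relative error is therefore $\sqrt{d^{(k^\star+1)/2}/n}=\sqrt{d^{\lceil k^\star/2\rceil}/n}$. The paper proves this using the polar decomposition of $x$, a truncation of $\|x\|^2$, and a Bernstein-plus-net bound.

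Your claim of ``$\sqrt{d/n}$ times that scale'' would give relative error $\sqrt{d/n}$. That is far too optimistic: it would beat the $d^{k^\star/2}$ CSQ lower bound. Your next sentence's $\sqrt{d^{k^\star}/n}$ then contradicts it. ``I expect the bookkeeping to reduce this'' is precisely the estimate that needs proof.

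\textbf{The ergodic step.} This part is a legitimate and arguably cleaner alternative to the paper's argument.

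The paper couples $\theta$ with a spherical Brownian motion $\beta$ driven by the same $W$. It proves $\sup_{t\le T}\|\theta_t-\beta_t\|\lesssim\epsilon\sup\|b\|/d$ by an OU-type chaining argument, and then replaces $b(\theta_t)$ by $b(\beta_t)$ using Lipschitzness. That pathwise closeness is reused in the even case.

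Your exact It\^o identity for $\theta$, plus the Poisson equation for the unperturbed generator evaluated along the perturbed path, avoids the coupling and the uniform-in-time bound altogether. However, it needs the sup-norm gradient bound you list as the main obstacle. The spectral gap does not supply it: it controls $u$ only in $L^2(\mu)$, not pointwise along a non-stationary path.

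The right tool is the Bakry--\'Emery gradient contraction coming from $\mathrm{Ric}(S^{d-1})=d-2$:
$\|\nabla P_t f\|\le e^{-c(d-2)t}\sup\|\nabla f\|$. This gives $\sup\|\nabla u\|\lesssim \mathrm{Lip}(b)/d$.

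Combined with the global bounds $\sup\|b\|,\ \mathrm{Lip}(b)\lesssim 1+\sqrt{d/n}$, no restriction to the equatorial band is needed. The extra drift contributes at most $\frac{2\epsilon}{d-1}\cdot\epsilon\sup\|b\|\,\mathrm{Lip}(b)/d\lesssim\epsilon^2/d^2$ to $\hat\theta$, the same term the paper obtains. You need this to be $\ll\frac{\epsilon}{d}\|\E_\mu b\|\asymp\frac{\epsilon}{d}d^{-(k^\star-1)/2}$, not $\ll\|b\|$ as you wrote. That requirement gives exactly $\epsilon=o(d^{-(k^\star-3)/2})$.
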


Consider first the setting where $\epsilon\rightarrow 0$; this corresponds to a convergence to the pure Brownian motion on $S^{d-1}$, which has Itô SDE \begin{align*}
    d\beta = \qty(-\frac{d-1}{2}\beta) dt + P_\beta^\perp dW_t
\end{align*}
In the regime of $\epsilon$ in \Cref{thm: main odd}, it turns out that at time $t$, we can write $\theta_t = \beta_t + E_t$ where $E_t$ is an error term of order $\epsilon$, and we couple the processes $\theta$ and $\beta$ with the same noise process $W_t$. We set $\theta_0 = \beta_0$, and $E_0=0$, with the former being drawn from the uniform distribution on the sphere. Then, time averaging allows us to obtain:
\begin{align*}
    \frac{1}{T}\int_0^T \theta_t dt = \frac{1}{T}\int_0^T \beta_t dt +  \frac{1}{T}\int_0^T E_t dt
\end{align*}
By ergodicity of Brownian motion, we can prove that the first term concentrates to zero. For the second term $E_t$, we show that the time average of it converges to the direction of $\mathbb{E}_{z \sim \mu}[\nabla L_n(z)]$.
In both the tensor PCA and single-index model settings, this estimator can be shown to recover the planted direction $\theta^\star$ with $n \gtrsim d^{\lceil k^\star/2\rceil}$ samples. Moreover, it is possible to use this estimator as a warm start before running online SGD. This idea was also used by \cite{hopkins2016fast, anandkumar2017homotopy, damian2023smoothing} to boost this estimator, and allow it to recover $\theta^\star$ with $n \gtrsim d^{k^\star/2}$ samples:

\begin{corollary}
    Using the same $\epsilon$ and $T$ in the setting of \Cref{thm: main odd} and $n = \Omega(d^{k^\star/2})$, we can run \Cref{algo: training algo}, followed by online SGD with $\Omega(d^{k^\star/2})$ samples to recover the ground truth $\theta^\star$ to arbitrary accuracy.
\end{corollary}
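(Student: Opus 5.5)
The plan is to combine Theorem~\ref{thm: main odd} with the standard local convergence analysis of online SGD from a warm start, as in \citet{damian2023smoothing}. We proceed in two phases.

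\textbf{Phase one: warm start.} Run \Cref{algo: training algo} with $n = \Omega(d^{k^\star/2})$. In Theorem~\ref{thm: main odd}, recovery to error $\Delta$ requires $n \gtrsim d^{\lceil k^\star/2\rceil}/\Delta^2 = d^{k^\star/2+1/2}/\Delta^2$ for odd $k^\star$. With only $d^{k^\star/2}$ samples we can therefore take $\Delta \asymp d^{1/4}$, which is vacuous as a norm bound. So the first step is to reread the proof of Theorem~\ref{thm: main odd} at the level of correlation rather than $\ell_2$ error. That proof shows $\hat\theta/\|\hat\theta\|$ is within $O(\epsilon)$ relative error of $v := \E_{z\sim\mu}[b(z)]$.

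Next we decompose $v$ into its population part and its fluctuation. For the single-index model, the population term is proportional to $c_{k^\star}^2\,\E_z[(z\cdot\theta^\star)^{k^\star-1}]\theta^\star$ up to projection. This has size about $d^{-(k^\star-1)/2}$ in the $\theta^\star$ direction, using that $k^\star-1$ is even. The fluctuation is an average of $n$ i.i.d.\ mean-zero vectors. Each such vector has roughly isotropic covariance of scale $d^{-k^\star}\cdot d^{\,?}$; matching the partial-trace computation, the fluctuation has per-coordinate size about $d^{-(k^\star-1)/2}\cdot d^{-1/2}\cdot\sqrt{d^{k^\star/2}/n}$ up to constants. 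From this we get
\[
\frac{v\cdot\theta^\star}{\|v\|}\gtrsim \min\Big(1,\ \sqrt{n}/d^{(k^\star+1)/2}\cdot d^{1/2}\cdot\Big) \approx \min\big(1, \sqrt{n/d^{k^\star/2}}\,d^{-1/4}\big).
\]
With $n \asymp d^{k^\star/2}$ this yields initial correlation $\alpha_0 := \hat\theta\cdot\theta^\star/\|\hat\theta\| \gtrsim d^{-1/4}$, which beats the random-initialization value $d^{-1/2}$. We fix the sign by checking against a fresh sample or by running both $\pm\hat\theta$, which is cheap. Concentration of the fluctuation uses a vector Bernstein bound with the Lipschitz and boundedness assumptions on $\sigma$.

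\textbf{Phase two: online SGD.} Start spherical online SGD on fresh samples from $\theta_0=\hat\theta/\|\hat\theta\|$. We follow the drift/martingale analysis of \citet{arous2021online}. The population drift on $\alpha_t=\theta_t\cdot\theta^\star$ is $\approx \eta\, k^\star c_{k^\star}^2 \alpha_t^{k^\star-1}(1-\alpha_t^2)$. The noise contributes a martingale term with increments of variance $\eta^2 d$ and a second-order correction of size $-\eta^2 d\,\alpha_t$.

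The choice of step size is $\eta \asymp \alpha_0^{k^\star-2}/d$, possibly with logarithmic slack. Under this choice, the time for the ODE $\dot\alpha=\eta\alpha^{k^\star-1}$ to reach constant correlation is $\asymp 1/(\eta\alpha_0^{k^\star-2}) = d/\alpha_0^{2k^\star-4}$. Substituting $\alpha_0\asymp d^{-1/4}$ gives $d\cdot d^{(k^\star-2)/2}=d^{k^\star/2}$ samples. Doob's maximal inequality then shows the martingale stays below $\alpha_0/2$ over this horizon, so $\alpha_t \ge \alpha_0/2$ throughout, and the correlation reaches a constant. In the final stage, the landscape is locally strongly convex near $\theta^\star$, and a standard decaying-step analysis gives $1-\alpha_t\to0$ at any prescribed accuracy with $\tilde O(d)$ additional samples.

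\textbf{Main obstacle.} The main difficulty is phase one. Theorem~\ref{thm: main odd} is stated in norm error, but we need the sharper correlation bound $\alpha_0\gtrsim d^{-1/4}$ at $n\asymp d^{k^\star/2}$. This requires a careful variance computation for $\E_{z\sim\mu}[\nabla L_n(z)]$ that isolates the population spike from an isotropic noise of the right scale. The $O(\epsilon)$ approximation error from the Langevin averaging must also stay below the spike, which is why $\epsilon=o(d^{-(k^\star-3)/2})$ is needed. I would first do this computation by expanding $\E_{z\sim\mu}[\sigma'(z\cdot x)\,x]$ in spherical harmonics of $x$. Only the degree-$(k^\star-1)$ Gegenbauer component aligned with $\theta^\star$ survives in expectation, and the variance of the empirical average is governed by the $L^2$ norm of that component, giving the $d^{k^\star/2}$ threshold.
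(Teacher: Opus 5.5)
This is essentially the paper's argument. The $d^{-1/4}$ warm start is the second part of \Cref{cor: SIM odd samplex complexity} (resp.\ \Cref{prop: tensor pca odd sample complexity}), transferred to $\hat\theta$ via the absolute error bound in the proof of \Cref{thm: main odd appendix}; that bound sits below the $\frac{2\epsilon}{d-1}\Theta(d^{-(k^\star-1)/2})$ spike thanks to $\epsilon=o(d^{-(k^\star-3)/2})$ and $T\gtrsim d^{k^\star}/\epsilon^2$. The warm start is then followed by the \citet{arous2021online} analysis, costing $d\,\alpha_0^{-2(k^\star-2)}=d^{k^\star/2}$ samples.

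Two slips need fixing. First, the per-coordinate fluctuation of $\E_z b(z)$ is $\sqrt{d^{-(k^\star-1)/2}/n}$. Your intermediate expression is too small by $d^{1/4}$ and would give constant correlation at $n\asymp d^{k^\star/2}$, although your final $d^{-1/4}$ is correct. Second, the martingale increments of $\alpha_t$ have variance $O(\eta^2)$, since $g_t\cdot\theta^\star=O(1)$; only the normalization correction scales like $\eta^2 d$. This is what lets Doob give $\eta\sqrt{N}\approx d^{-1/2}\ll\alpha_0$; with your stated $\eta^2 d$ it would give only an $O(1)$ bound.
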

The idea here is with $n = \Omega(d^{k^\star/2})$ samples (which is a multiple of $\sqrt{d}$ less than in \Cref{thm: main odd}), the averaging estimator gives us a warm start that obtains correlation $\Theta(d^{-1/4})$ with $\theta^\star$. From here, we can run online SGD using the result from \citet{arous2021online}
to recover the ground truth. We now proceed to state our result for the even case.

\begin{theorem}[Even $k^\star$] \label{thm:main even}
    Let $\epsilon=o(d^{-(k^\star-2)/2})$, and let $T \gtrsim d^{k^\star+1} / \epsilon^2$. Then, \Cref{algo: training algo}
 succeeds in estimating $\E_{z\sim \mu}[zz^\top] + \frac{\epsilon}{d}\E_{z\sim\mu}[ zb(z)^\top + b(z) z^\top]$ up to $O(\epsilon)$ relative error in operator norm. Moreover, for $\Delta > 0$, if $n\gtrsim d^{k^\star/2} / \Delta^2$, then the top eigenvector of our estimator recovers the ground truth $\theta^\star$ up to error $\Delta$ with probability at least $1-e^{d^c}$. 
\end{theorem}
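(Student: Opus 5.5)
We prove \Cref{thm:main even} in the same way as the odd case: couple the Langevin dynamics to spherical Brownian motion, and use ergodicity to handle the leading term. Drive $\theta_t$ and the pure spherical Brownian motion $\beta_t$ by the same $W_t$, with $\theta_0=\beta_0\sim\mu$, and write $\theta_t=\beta_t+E_t$. Then
\begin{align*}
\hat M=\frac1T\int_0^T \beta_t\beta_t^\top dt+\frac1T\int_0^T\qty(\beta_tE_t^\top+E_t\beta_t^\top)dt+\frac1T\int_0^T E_tE_t^\top dt .
\end{align*}
Each of the three terms is treated separately.

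\textbf{First term.} Under $\mu$ we have $\E[zz^\top]=I/d$, and the map $z\mapsto zz^\top-I/d$ lies in the degree-$2$ spherical harmonics. The spherical Laplacian acts on that eigenspace with eigenvalue $-d$, so the generator of $\beta$ satisfies $\mathcal L(zz^\top)=-d\,(zz^\top-I/d)$. Itô's formula then gives
\begin{align*}
\frac1T\int_0^T\qty(\beta_t\beta_t^\top-\tfrac Id)dt=\frac{1}{dT}\qty(\beta_0\beta_0^\top-\beta_T\beta_T^\top)+\frac{1}{dT}\int_0^T dM_t ,
\end{align*}
where $M_t$ is a matrix martingale. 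Its quadratic variation is $O(T)$ in each direction. Matrix Freedman, or an $\epsilon$-net over $S^{d-1}$ combined with scalar Freedman, then bounds the operator-norm fluctuation by $\tilde O(\sqrt{d}/(d\sqrt T))$ with probability $1-e^{-d^c}$. Any residual deviation from exact Brownian motion is absorbed into the error terms below.

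\textbf{Second term.} Write $b=-\nabla_\theta L_n$. Linearizing the SDE to first order in $\epsilon$ gives $dE_t\approx -\tfrac{d-1}{2}E_t\,dt+\epsilon\, b(\beta_t)\,dt+(\text{projection corrections})\,dt$. This is an Ornstein–Uhlenbeck-type equation with relaxation rate $\sim d/2$. Hence $E_t\approx \epsilon\int_0^t e^{-\frac{d-1}{2}(t-s)}b(\beta_s)ds$, which has size $O(\epsilon\|b\|/d)$. The cross term $\frac1T\int \beta_tE_t^\top dt$ is then an ergodic average of a functional of the path of $\beta$ over a time window of length $O(1/d)$. Since $\beta$ moves only $O(1/\sqrt d)$ in angle over that window, the average concentrates around $\frac{2\epsilon}{d-1}\E_\mu[z b(z)^\top]$, up to relative corrections. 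This concentration is proved by writing the functional as a generator image plus a martingale, exactly as in the first term. Bounding the quadratic remainder $\frac1T\int E_tE_t^\top$ and the nonlinear corrections by $O(\epsilon^2\|b\|^2/d^2)$ requires a priori control of $\sup_t\|E_t\|$. We would get this by Grönwall on $\|E_t\|^2$, using the Lipschitzness of $b$, with constant $\mathrm{poly}(d)\cdot\sqrt{n}$-type bounds on $\nabla^2 L_n$. The hypothesis $\epsilon=o(d^{-(k^\star-2)/2})$ is what keeps $\epsilon\,\mathrm{Lip}(b)/d\ll1$ on the equator, so the linearization does not blow up. The requirement $T\gtrsim d^{k^\star+1}/\epsilon^2$ comes from asking that the martingale fluctuation $\sim 1/(d\sqrt{T})$ be smaller than the signal size $\epsilon d^{-k^\star/2-1/2}$.

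\textbf{Statistical step.} After subtracting $I/d$, the estimator is $\frac{\epsilon}{d}\E_\mu[zb(z)^\top+b(z)z^\top]$. Spherical integration by parts gives $\E_\mu[z\nabla_\theta L(z)^\top]$ as a multiple of $\E_\mu[\nabla^2 L]$ plus isotropic terms. For the population loss, the Hermite expansion $L(\theta)=1-\sum_k c_k^2(\theta\cdot\theta^\star)^k$ shows that the $k^\star$ term contributes $\propto c_{k^\star}^2\,\E_\mu[(z\cdot\theta^\star)^{k^\star-2}]\,\theta^\star\theta^{\star\top}\asymp d^{-(k^\star-2)/2}\theta^\star\theta^{\star\top}$. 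This is the partial-trace estimator. Its empirical deviation is a matrix $\frac1n\sum_i y_i\,\E_\mu[\sigma''(z\cdot x_i)\ldots]x_ix_i^\top$-type sum, and a matrix Bernstein bound controls it in operator norm by $\sqrt{d/n}\cdot d^{-(k^\star-2)/2}\cdot \mathrm{polylog}$, after computing $\E_\mu$ via the Gaussian-sphere correspondence. Therefore $n\gtrsim d^{k^\star/2}/\Delta^2$ makes the spike dominate. Davis–Kahan applied to $\hat M-I/d$ then yields $\|v_1\mp\theta^\star\|\le\Delta$.

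\textbf{Main obstacle.} We expect the hardest step to be the uniform control of $E_t$ together with the concentration of the cross term. Operator-norm bounds need a union bound over a net, and $b$ is only controlled on the equator with high probability over the data. We would first try a stopping-time argument: on the event that $\beta$ stays in a typical set where $\|b\|$ and $\|\nabla b\|$ are small, which holds for all $t\le T$ with probability $1-e^{-d^c}$ by a union bound over $\mathrm{poly}(T)$ time slices plus Lipschitz continuity, the Grönwall bound closes.
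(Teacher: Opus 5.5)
Your architecture (coupling to $\beta$, the three-term split of $\hat M$, ergodic concentration, Davis--Kahan) is the paper's. Your handling of the $\beta\beta^\top$ term via $\mathcal L(zz^\top)=-d(zz^\top-I/d)$ and an operator-norm martingale bound of order $1/\sqrt{dT}$ is correct. In fact it is what supports the horizon $d^{k^\star+1}/\epsilon^2$ in the statement; the paper's Frobenius bound $1/\sqrt{T}$ only gives $d^{k^\star+2}/\epsilon^2$.

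The genuine gap is the cross term. Your claim that $\beta$ moves only $O(1/\sqrt d)$ in angle over the relaxation window of length $O(1/d)$ is false. Since $\E[\beta_t\mid\beta_{t-u}]=e^{-\frac{d-1}{2}u}\beta_{t-u}$, we get $\E\|\beta_t-\beta_{t-u}\|^2=2(1-e^{-\frac{d-1}{2}u})=\Theta(1)$ for $u\asymp 1/d$: spherical Brownian motion decorrelates at the same rate at which $E$ relaxes. Hence the frozen approximation $E_t\approx\frac{2\epsilon}{d-1}b(\beta_t)$ has error $\epsilon\int_0^t e^{-\frac{d-1}{2}(t-s)}(b(\beta_s)-b(\beta_t))\,ds$ of the same order as $E_t$ itself. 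This is pointwise far larger than the spike $\frac{\epsilon}{d}\Theta(d^{-k^\star/2})$ you must resolve, and its time average is not negligible. The correct limit of $\frac1T\int_0^T(\beta_tE_t^\top+E_t\beta_t^\top)\,dt$ is $\frac{\epsilon}{d}\E_\mu[zb(z)^\top+b(z)z^\top]$, as in the statement, not $\frac{2\epsilon}{d-1}\E_\mu[zb(z)^\top+b(z)z^\top]$. Two checks confirm this. The drift part alone gives $\E[E_t\beta_t^\top]\approx\epsilon\int_0^\infty e^{-(d-1)u}\,du\,\E_\mu[b(z)z^\top]=\frac{\epsilon}{d-1}\E_\mu[b(z)z^\top]$. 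A first-order expansion of the Gibbs measure $\propto e^{-2\epsilon L_n}$ against the degree-$2$ harmonics (eigenvalue $2d$) gives exactly $\frac{\epsilon}{d}$. Your $\frac{2\epsilon}{d-1}$ is the odd-case constant for $z\mapsto z$ (eigenvalue $d-1$).

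The missing idea is to apply Itô's formula to $G=E\beta^\top+\beta E^\top$ itself. The cross-variation term $(P_\theta^\perp-P_\beta^\perp)P_\beta^\perp+P_\beta^\perp(P_\theta^\perp-P_\beta^\perp)$ equals $-G+\Xi$ with $\|\Xi\|_F\lesssim\|E\|^2$ (using $E^\top\beta=-\frac12\|E\|^2$). So $dG=(-d\cdot G+\epsilon(b(\theta)\beta^\top+\beta b(\theta)^\top)+\Xi)\,dt+dN_t$ for a martingale $N$. Variation of constants and Fubini then reduce $\frac1T\int_0^T G\,dt$ to $\frac{\epsilon}{d}\cdot\frac1T\int_0^T(b(\beta_s)\beta_s^\top+\beta_s b(\beta_s)^\top)\,ds$, up to $O(\epsilon^2/d^2+\epsilon/(dT))$ and a martingale term of order $\epsilon/\sqrt{Td^3}$. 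Only then is the Poisson-equation bound applied, to the pointwise function $z\mapsto zb(z)^\top+b(z)z^\top$ rather than to a path functional. Because your target happens to be proportional to the right one, the final eigenvector is unaffected, but the concentration step as you argue it is false.

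Three supporting steps also need repair. First, Grönwall on $\|E_t\|^2$ cannot give a uniform-in-time high-probability bound over $t\le T$ with $T\gtrsim d^{k^\star+1}/\epsilon^2$. The first-order noise $(P_\theta^\perp-P_\beta^\perp)\,dW_t$, which your linearization omits, generates a martingale that must be controlled. The paper's key fact is $\|P_\theta^\perp-P_\beta^\perp\|_F^2=2\|E\|^2-\frac12\|E\|^4$, i.e.\ the noise vanishes with $E$. Together with the contraction rate $\frac{d-1}{2}$, a stopping time at $\|E\|\asymp\epsilon\sup\|b\|/d$, and a chaining sup bound for OU-type processes (costing only a $\log(1+dT)$ factor, absorbed since the contraction rate is $\asymp d$), this yields $\sup_{t\le T}\|E_t\|\lesssim\epsilon\sup\|b\|/d$ w.h.p. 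Since $b$ is $O(1+\sqrt{d/n})$-bounded and $O(1+\sqrt{d/n})$-Lipschitz on the whole sphere w.h.p.\ over the data, you need neither a typical set nor a $\mathrm{poly}(d)\sqrt{n}$ Hessian bound.

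Second, the hypothesis $\epsilon=o(d^{-(k^\star-2)/2})$ is not about stability of a linearization; that needs only $\epsilon\ll d$. It is exactly the condition that the $O(\epsilon^2/d^2)$ remainders ($EE^\top$, $\Xi$, and $\epsilon(b(\theta)-b(\beta))\beta^\top$) lie below the eigengap $\frac{\epsilon}{d}\Theta(d^{-k^\star/2})$.

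Third, in the statistical step the population matrix $\E_{i,z}[zb(z)^\top+b(z)z^\top]$ has spike $\Theta(d^{-k^\star/2})\theta^\star\theta^{\star\top}$, not $d^{-(k^\star-2)/2}$, because spherical integration by parts carries a factor $1/(d-1)$. The empirical fluctuation is $O(\sqrt{d^{-k^\star/2}/n})$ in operator norm with probability $1-e^{-d}$. Your relative fluctuation $\sqrt{d/n}$ would give recovery from $n\gtrsim d$ samples, beating the $d^{k^\star/2}$ CSQ lower bound for $k^\star\ge 4$. That computation must be redone, even though the threshold $n\gtrsim d^{k^\star/2}/\Delta^2$ you state is the correct one.
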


Intuitively, the algorithm for the odd case does not work here because of the first order terms vanish upon taking time average, due to the symmetry of the uniform distribution/Brownian motion. More specifically, $\E_{z\sim\mu}\qty[\nabla L_n(z)] \approx 0$ and does not have any meaningful correlation with $\theta^\star$. On the other hand, when we consider the time average of the second order information given by $\theta \theta^\top$, we can precisely recover the planted direction $\theta^\star$ by taking the top eigendirection of our estimator. More formally, time averaging gives us:
\begin{align*}
    \frac{1}{T}\int_0^T \theta_t \theta_t^\top dt = \frac{1}{T}\int_0^T \beta_t \beta_t dt + \frac{1}{T}\int_0^T (\beta_t E_t^\top + E_t \beta_t^\top) dt + \frac{1}{T}\int_0^T E_t E_t^\top
\end{align*}

We prove concentration of each of these terms to the stationary average via the ergodicity of the spherical Brownian motion, which leads to a final quantity of approximately $\E_{z\sim \mu}[zz^\top] + \frac{\epsilon}{d}\E_{z\sim\mu}[ zb(z)^\top + b(z) z^\top]$. The first term converges to $I/d$, and the final term is a negligible error term. When $n\gtrsim d^{k^\star/2}$, the middle term converges to a matrix with a rank-one spike $\theta^\star \theta^{\star\top}$.

\section{Overview of Proof Ideas} \label{sec: proof sketch}

\subsection{Ergodic Concentration}
In showing a general ergodic concentration result, we first give some preliminaries on Markov processes on compact Riemannian manifolds. 

\begin{definition}[Markov semigroup]
    Let $(X_t)_{t\geq 0}$ be a time-homogeneous Markov process. Then, its associated Markov semigroup $(P_t)_{t\geq 0}$ is the family of operators acting on bounded measurable functions $f$ through:
    \begin{align*}
        P_t f(x) := \E[f(X_t) | X_0=x]
    \end{align*}
\end{definition}

At this point, it is useful to define the infinitesimal generator of a Markov process.
\begin{definition}[Infinitesimal generator]
    Let $(P_t)_{t\geq 0}$ be the associated Markov semigroup for a Markov process. Then, the infinitesimal generator $\mathcal{L}$ associated with this semigroup is defined as:
    \begin{align*}
        \mathcal{L}f := \lim\limits_{t\rightarrow 0} \frac{P_t f - f}{t}
    \end{align*}
    for all functions $f$ for which this limit exists.
\end{definition}

Having these definitions introduced, consider the Brownian motion on $S^{d-1}$ that we defined earlier:
\begin{align*}
    d\beta  = \qty(-\frac{d-1}{2}\beta) dt + P_\beta^\perp dW_t
\end{align*}
Note that by rotational invariance, the stationary distribution is $\mu$. Moreover, by classic results \citep{saloff1994}, we know that the infinitesimal generator of this process is $\mathcal{L} = \frac{1}{2}\Delta_{S^{d-1}}$, where $\Delta_{S^{d-1}}$ is the Laplace-Beltrami operator on $S^{d-1}$. We now give a general lemma for ergodic averages of functions of a Brownian motion over the sphere.

\begin{lemma} \label{lemma:ergodic decomp}
    Let $f : \mathbb{R}^d \rightarrow \mathbb{R}$ such that $f\in L^2(\mu)$, where $\mu$ is the stationary uniform measure over the sphere for the Brownian motion, and $\int_{S^{d-1}} f d\mu = 0$. Then, we have:
    \begin{align*}
        \frac{1}{T}\int_0^T f(\beta_t) dt = \frac{\phi(\beta_0) - \phi(\beta_T)}{T} + \frac{M_T}{T}
    \end{align*}
    where
    \begin{align*}
        \phi(\beta) = \int_0^\infty P_t f(\beta) dt
    \end{align*}
    and $M_T := \int_0^T \nabla \phi(\beta_t)^\top P_{\beta_t}^\perp dW_t$ is a martingale.
\end{lemma}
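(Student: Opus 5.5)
This is the standard Poisson-equation (corrector) argument. The plan is to solve $\mathcal{L}\phi = -f$ and then apply Itô's formula to $\phi(\beta_t)$.

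\textbf{Step 1: $\phi$ is well defined and smooth.} Since $\int f\,d\mu = 0$, expand $f$ in spherical harmonics, $f = \sum_{\ell\ge 1} f_\ell$, where $f_\ell$ has degree $\ell$. Each $f_\ell$ is an eigenfunction of $\mathcal{L} = \frac12\Delta_{S^{d-1}}$ with eigenvalue $-\lambda_\ell = -\frac{\ell(\ell+d-2)}{2}$. Hence $P_t f = \sum_{\ell\ge1} e^{-\lambda_\ell t} f_\ell$, and
\[
\|P_t f\|_{L^2(\mu)} \le e^{-\lambda_1 t}\|f\|_{L^2(\mu)}, \qquad \lambda_1 = \tfrac{d-1}{2}.
\]
This spectral gap makes $\phi = \int_0^\infty P_t f\,dt$ converge in $L^2(\mu)$, with
\[
\phi = \sum_{\ell\ge1} \lambda_\ell^{-1} f_\ell, \qquad \mathcal{L}\phi = -f.
\]
The decomposition needs $\phi$ to be $C^2$ so that Itô's formula applies. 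I would handle this in one of two ways:
\begin{itemize}
\item assume $f$ is smooth, which covers every case used later (the functions are polynomials or smooth functions of $\theta$, so the sums are finite or converge rapidly), or
\item use elliptic regularity for $\Delta_{S^{d-1}}$ and a density argument.
\end{itemize}
This regularity point is the only genuinely delicate step. For a general $L^2$ function the identity should be read with $\phi$ in the domain of the generator, and I would state the smoothness hypothesis explicitly.

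\textbf{Step 2: Itô's formula.} Extend $\phi$ to a neighborhood of the sphere, for instance $0$-homogeneously. Apply Itô's formula to $\phi(\beta_t)$ along the SDE $d\beta = -\frac{d-1}{2}\beta\,dt + P_\beta^\perp dW_t$. The drift terms combine into the generator, so that
\[
d\phi(\beta_t) = \mathcal{L}\phi(\beta_t)\,dt + \nabla\phi(\beta_t)^\top P_{\beta_t}^\perp dW_t .
\]
This uses the cited fact that the generator of this SDE is $\frac12\Delta_{S^{d-1}}$. The martingale term involves only the tangential gradient, because of the factor $P_{\beta_t}^\perp$, so the choice of extension does not matter. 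Substituting $\mathcal{L}\phi = -f$ and integrating from $0$ to $T$ gives
\[
\phi(\beta_T) - \phi(\beta_0) = -\int_0^T f(\beta_t)\,dt + M_T .
\]
Rearranging and dividing by $T$ yields the claimed identity.

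\textbf{Step 3: $M_T$ is a true martingale.} $M_T$ is a continuous local martingale. Since $\nabla\phi$ is continuous on the compact sphere, it is bounded, so the integrand is bounded and $M_T$ is a genuine square-integrable martingale. Its quadratic variation is
\[
\langle M\rangle_T = \int_0^T \|P_{\beta_t}^\perp \nabla\phi(\beta_t)\|^2\,dt ,
\]
which will feed the later concentration bounds.
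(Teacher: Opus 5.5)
Your proposal is correct and follows the paper's route: set $\phi=\int_0^\infty P_t f\,dt$ so that $-\mathcal{L}\phi=f$, apply Itô's formula to get $d\phi(\beta_t)=\mathcal{L}\phi(\beta_t)\,dt+\nabla\phi(\beta_t)^\top P_{\beta_t}^\perp dW_t$, then integrate from $0$ to $T$ and rearrange. The paper only checks $\mathcal{L}\phi=-f$ formally, via the backward Kolmogorov equation and $P_t f\to 0$, so your spectral-gap justification and your care about regularity and the true-martingale property add rigor rather than change the argument; note, though, that under the paper's assumptions on $\sigma$ the single-index drift $b$ is only Lipschitz, not smooth, so your elliptic-regularity option, not the smoothness assumption, is what covers that case.
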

The proof is deferred to the appendix, and it now remains to bound these terms, which depends on our choice of $f$. Recall that we need to make this ergodicity argument for $\beta_t$ and $b(\beta_t)$ (defined in \Cref{subsec: error E}), as well as $\beta_t b(\beta_t)^\top$ for the even case. For the sake of exposition, we look at this function coordinate-wise in the main text; our full proofs in the appendix directly handle the tensorized version.

The bounds on these quantities are given by the following lemma, with full proof in the appendix.
\begin{lemma}
    In the setting of \Cref{lemma:ergodic decomp} the following holds:
    \begin{align*}
        \left\| \frac{\phi(\beta_0) - \phi(\beta_T)}{T} \right\| &\leq \frac{2\sup\|\nabla f\|}{(d-2)T} \\
        \E\qty[\qty(\frac{M_T}{T})^2] &\leq \frac{\sup\|\nabla f\|^2}{(d-2)^2 T} 
    \end{align*}
\end{lemma}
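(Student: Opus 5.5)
The key estimate to aim for is the gradient contraction bound
\[
\sup_{\beta}\|\nabla P_t f(\beta)\| \le e^{-\frac{d-2}{2}t}\sup\|\nabla f\|,
\]
where $\nabla$ is the spherical gradient. Both bounds in the lemma follow from it, since $\phi$ and $M_T$ are built from $\nabla P_t f$.

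The contraction itself comes from the Bakry--\'Emery curvature-dimension condition on $S^{d-1}$. The Ricci curvature of the unit sphere is $(d-2)g$. The generator is $\mathcal{L}=\frac12\Delta_{S^{d-1}}$, so the Bochner formula gives
\[
\Gamma_2(f)\ge \tfrac{d-2}{2}\,\Gamma(f),
\]
and the standard consequence is the pointwise gradient bound
\[
|\nabla P_t f|\le e^{-\frac{d-2}{2}t}\,P_t|\nabla f|.
\]
Taking suprema gives the claimed contraction. An alternative avoids curvature language: couple two Brownian motions started at $x,y$ by reflection or synchronously rotated noise, and show that the geodesic distance contracts in expectation at rate $\frac{d-2}{2}$; the Lipschitz constant of $P_tf$ then decays at that rate. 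I expect establishing this constant, exactly $(d-2)/2$, to be the main technical step. For the bounds actually needed, $f$ a coordinate or a low-degree polynomial, one could even check it by hand on spherical harmonics. A degree-$\ell$ harmonic has eigenvalue $-\ell(\ell+d-2)/2$, and $\ell=1$ already gives rate $(d-1)/2\ge(d-2)/2$.

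For the boundary term, integrate the contraction in time:
\[
\|\nabla\phi\|\le\int_0^\infty e^{-\frac{d-2}{2}t}\sup\|\nabla f\|\,dt=\frac{2\sup\|\nabla f\|}{d-2}.
\]
The integral defining $\phi$ converges because $\int f\,d\mu=0$ and the spectral gap gives $P_tf\to0$ exponentially. Also, $\phi$ has zero mean, being a time integral of zero-mean functions. On the connected sphere, $\phi$ therefore vanishes somewhere, so
\[
|\phi(\beta_0)-\phi(\beta_T)|\le \sup\|\nabla\phi\|\cdot \mathrm{dist}(\beta_0,\beta_T).
\]
Dividing by $T$ gives the first bound, up to the bounded geodesic diameter $\pi$. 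To match the stated constant exactly, I would interpret $\sup\|\nabla f\|$ in the Euclidean chordal sense, where the relevant distance is at most $2$, or absorb the constant.

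For the martingale term, apply the Itô isometry:
\[
\E[M_T^2]=\E\int_0^T\|P^\perp_{\beta_t}\nabla\phi(\beta_t)\|^2dt\le T\sup\|\nabla\phi\|^2\le \frac{4T\sup\|\nabla f\|^2}{(d-2)^2}.
\]
Dividing by $T^2$ gives a bound of order $\sup\|\nabla f\|^2/((d-2)^2T)$, matching the claim up to the absolute constant. If the stated constant $1$ is needed exactly, I would instead use the sharper stationary $L^2$ identity $\E_\mu\|\nabla\phi\|^2=-2\int\phi f\,d\mu$ together with the spectral gap, which recovers the tighter constant.
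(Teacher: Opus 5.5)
Your argument follows the paper's route. You use a Bakry--\'Emery gradient contraction from $\mathrm{Ric}_{S^{d-1}}=d-2$, integrate it in time to bound $\sup\|\nabla\phi\|$, then apply Lipschitz-times-diameter to the boundary term and the quadratic variation to $M_T$. The appendix proves a high-probability version of the martingale bound from the same estimate on $\nabla\phi$.

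Your contraction rate $\frac{d-2}{2}$ is the correct one for $\mathcal{L}=\frac12\Delta_{S^{d-1}}$. The paper's $e^{-(d-2)t}$ is off by a factor of two. To see this, take $f(\beta)=\beta_1$: then $P_tf=e^{-(d-1)t/2}f$, which decays more slowly than $e^{-(d-2)t}$ once $d\ge 4$. So your proof establishes the lemma with $2\pi$ in place of $2$ in the first bound and $4$ in place of $1$ in the second. That is all that is used downstream, since \Cref{cor:ergodic high prob} is stated with $\lesssim$.

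Where you go wrong is in claiming the stated constants can be recovered.
\begin{itemize}
\item \textbf{Chordal distance.} With your bound $\sup\|\nabla\phi\|\le \frac{2\sup\|\nabla f\|}{d-2}$ and $\|\beta_0-\beta_T\|\le 2$, you get $\frac{4\sup\|\nabla f\|}{(d-2)T}$, not $\frac{2\sup\|\nabla f\|}{(d-2)T}$. Moreover, a bound on the spherical gradient gives $|\phi(x)-\phi(y)|\le \sup\|\nabla\phi\|\,\mathrm{dist}(x,y)$ only for geodesic distance, not chordal.
\item \textbf{Stationary identity.} It has a sign error: $-\mathcal{L}\phi=f$ gives $\E_\mu\|\nabla\phi\|^2=2\int\phi f\,d\mu$. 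Combined with the spectral gap $\frac{d-1}{2}$ of $-\mathcal{L}$ and Poincar\'e, it yields only $\E[(M_T/T)^2]\le \frac{4\sup\|\nabla f\|^2}{(d-1)^2T}$, still about four times the claim.
\end{itemize}

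No argument can do better, because the constants as stated are false for $d\ge 4$. For $f(\beta)=\beta_1$ one has $\phi=\frac{2}{d-1}\beta_1$.
\begin{itemize}
\item The first inequality fails on the positive-probability event $|\beta_{0,1}-\beta_{T,1}|>\frac{d-1}{d-2}$; note $\frac{d-1}{d-2}<2$.
\item It\^o's isometry under stationarity gives exactly $\E[(M_T/T)^2]=\frac{4}{d(d-1)T}$, which exceeds $\frac{1}{(d-2)^2T}$.
\end{itemize}

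Present the result with absolute constants, for instance $\frac{2\pi\sup\|\nabla f\|}{(d-2)T}$ and $\frac{4\sup\|\nabla f\|^2}{(d-2)^2T}$, rather than asserting the exact ones. Separately, your remark that $\phi$ vanishes somewhere plays no role in the Lipschitz step.
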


The $d-2$ term comes from the Ricci curvature of $S^{d-1}$ being $\rho=d-2$, which leads to a bound on the gradient decay in the sense that $\|\nabla P_t f\| \leq e^{-\rho t} \|\nabla f\|$ \citep{bakry2016analysis}. A detailed discussion of this is included in the appendix. We now sketch the remainder of the ergodicity arguments in the main result. The previous lemmas tell us that the concentration happens at time $T$ that depends on the function $f$.

\subsection{Analyzing the Error Component $E$} \label{subsec: error E}
Recall in the previous section that the time average consists of a Brownian component that is averaged out to zero, and an error component $\frac{1}{T} \int_0^T E_t dt$. First, let us recall our definition $b(\theta) := -\nabla_\theta L_n(\theta) = \frac{1}{n} P_\theta^\perp \sum_{i\in [n]} y_i \sigma'(\theta\cdot x_i) x_i $. By decomposing the time average of $E_t$ even further, it turns out we can write the above as roughly: \begin{align*}
    \frac{1}{T} \int_0^T E_t dt \approx \frac{\epsilon}{d}\frac{1}{T}\int_0^T b(\theta_t) dt
\end{align*}
From here, we derive the following:
\begin{align*}
    \frac{1}{T}\int_0^T b(\theta_t) dt = \frac{1}{T}\int_0^T b(\beta_t) dt + \frac{1}{T}\int_0^T (b(\theta_t) - b(\beta_t)) dt
\end{align*}
The first term concentrates to $\bar b:= \E_{z\sim \mu}[b(z)]$ using the ergodicity arguments from the previous section, and the second term can be controlled via upper bound on $\|E_t\| = \|\theta_t-\beta_t\|$ due to Lipschitzness. Indeed, in the regime of $\epsilon$ that we work in, we can further argue that with high probability, $\|\theta - \beta\|$ remains order $O(\epsilon)$ over all time, which we outline below. Recall the SDE's for the coupled processes $\theta, \beta$:
\begin{align*}
    d\theta &= \qty(-\frac{d-1}{2}\theta + \epsilon b(\theta))dt + P_\theta^\perp dW_t \\
    d\beta &= -\frac{d-1}{2}\beta dt + P_\beta^\perp dW_t
\end{align*}
This tells us that:
\begin{align*}
    dE = \qty(-\frac{d-1}{2}E + \epsilon b(\theta)) dt + \qty(P_\theta^\perp - P_\beta^\perp) dW_t
\end{align*}

The key observation here is that the noise matrix $\Sigma^{1/2}:=P_\theta^\perp - P_\beta^\perp$ satisfies the property that $\tr \Sigma \leq 2\|E\|^2$. Intuitively, this means that the size of the noise scales with the norm of $E$, and this allows us to get a high probability uniform bound on $\|E\|$ over all time. The following lemma makes this rigorous.

\begin{lemma}[High probability uniform bound of $\sup \|E\|$] \label{lemma: E sup bound main}
    With probability at least $1 - dTe^{-d}$, there exists an absolute constant $C^\prime$ such that:
    \begin{align*}
        \sup \limits_{t\leq T} \|E(t)\| \leq C^\prime \qty[\frac{\epsilon \sup \|b\| }{d}]
    \end{align*}
\end{lemma}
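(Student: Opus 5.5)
We apply Itô's formula to $\|E_t\|^2$ and exploit two facts: the drift is strongly contracting at rate $d-1$, and the noise has size proportional to $\|E\|$. From
\begin{align*}
    dE = \qty(-\tfrac{d-1}{2}E + \epsilon b(\theta)) dt + \Sigma_t^{1/2} dW_t, \qquad \Sigma_t^{1/2} = P_{\theta_t}^\perp - P_{\beta_t}^\perp,
\end{align*}
Itô's formula gives
\begin{align*}
    d\|E\|^2 = \qty(-(d-1)\|E\|^2 + 2\epsilon E^\top b(\theta) + \tr\Sigma_t)\, dt + 2E^\top \Sigma_t^{1/2} dW_t .
\end{align*}
We then use the key observation $\tr\Sigma_t \le 2\|E\|^2$ and Young's inequality $2\epsilon E^\top b \le \frac{d-1}{2}\|E\|^2 + \frac{2\epsilon^2\|b\|^2}{d-1}$. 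For $d$ larger than an absolute constant, this yields
\begin{align*}
    d\|E\|^2 \le \qty(-c\, d\,\|E\|^2 + \tfrac{C\epsilon^2 \sup\|b\|^2}{d})\, dt + dN_t,
\end{align*}
where $dN_t = 2E^\top\Sigma_t^{1/2}dW_t$ is a local martingale. Its quadratic variation satisfies $d\langle N\rangle_t \le 4\|E\|^2 \|\Sigma_t^{1/2}\|_{op}^2\, dt \lesssim \|E\|^4 dt$, because $\|P_\theta^\perp - P_\beta^\perp\|_{op} \lesssim \|E\|$ on the sphere. The target level is $R^2 := C'^2\epsilon^2\sup\|b\|^2/d^2$, which is exactly the equilibrium $C\epsilon^2\sup\|b\|^2/(cd^2)$ of the drift, up to constants.

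The natural approach is to work with the exponential-moment process $X_t := \exp(\lambda \|E_t\|^2/R^2)$ for a suitable $\lambda$, or equivalently with $e^{cdt}\|E_t\|^2$. For $X_t$, Itô's formula gives a drift term $\lambda X R^{-2}(-cd\|E\|^2 + cdR^2/C'') + \tfrac{\lambda^2}{2} X R^{-4}\, d\langle N\rangle/dt$. The quadratic-variation contribution is at most $\lambda^2 X R^{-4}\|E\|^4$, while the dissipative contribution is $-\lambda c d\, X \|E\|^2 R^{-2}$. On the region $\|E\|^2 \lesssim d R^2$, the first is dominated by the second once $\lambda \le c'$. Since $R$ is tiny ($\epsilon\sup\|b\|/d$ is small in our regime of $\epsilon$), we can also use the crude bound $\|E\|\le 2$ to handle the region where $\|E\|$ is larger, but that region still needs care. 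The outcome of this step is that $\E X_t \le e^{O(d)}$ for all $t$, with $E_0 = 0$.

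Next we discretize time into unit intervals $[j, j+1]$, $j < T$. On each interval, Doob's or Markov's inequality applied to the supermartingale part of $X$ gives $\Pr(\sup_{[j,j+1]} \|E\|^2 > C'^2 R^2) \le d\, e^{-d}$. Here $C'$ is chosen so that $\lambda C'^2 \ge 2d$ relative to the $e^{O(d)}$ moment bound; taking $R^2$ to absorb this, the dimension factor appears in the constant. A union bound over the $T$ intervals, and over $d$ coordinates if we bound coordinatewise, then produces the stated failure probability $dTe^{-d}$.

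The main obstacle is this second step: closing the exponential-moment bound. The martingale's quadratic variation scales like $\|E\|^4$ rather than $\|E\|^2$, so $X$ is not globally a supermartingale. If the exponential-moment approach fails to close, the fallback is a stopping-time argument. We would stop at $\tau = \inf\{t : \|E_t\| \ge 2C'R\}$. Before $\tau$, the quadratic variation is at most $O(R^2)\|E\|^2$, which is negligible against the drift since $R \ll 1 \le d$. The process $\|E\|^2$ then behaves like a strongly mean-reverting process with variance much smaller than its equilibrium level. On each unit interval, an exponential martingale (Bernstein-type) inequality bounds the probability of the excursion from $C'R$ to $2C'R$ by $e^{-d}$, and summing over intervals completes the argument.
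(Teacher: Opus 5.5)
\textbf{Your primary route fails: as written it loses a factor $\sqrt d$.}

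First, the moment bound $\E X_t\le e^{O(d)}$ is not established. Since $\Sigma_t^{1/2}E=-\tfrac12\|E\|^2(\theta+\beta)$, the It\^o term $2\lambda^2R^{-4}E^\top\Sigma_tE$ in $dX/X$ really is of order $\lambda^2R^{-4}\|E\|^4$. It therefore beats the dissipation $c\lambda dR^{-2}\|E\|^2$ once $\|E\|^2\gtrsim dR^2/\lambda$. The crude bound $\|E\|\le 2$ only gives $X\le e^{4\lambda/R^2}$, which is useless.

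Second, and more seriously, even granting the moment bound, Markov's inequality with $\lambda\le c'$ gives $\Pr(\|E_t\|^2>sR^2)\le e^{O(d)-\lambda s}$. A failure probability of $e^{-d}$ then forces $s\gtrsim d$. This is exactly your condition $\lambda C'^2\ge 2d$: your "constant" is of order $\sqrt d$, and you only get $\sup\|E\|\lesssim\epsilon\sup\|b\|/\sqrt d$. That contradicts the claimed absolute constant, and the loss would shrink the admissible $\epsilon$ range in the main theorems, which is set by the $\epsilon^2/d^2$ error terms.

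The loss is intrinsic to taking $\lambda=O(1)$. The normalized variable $u=\|E\|^2/R^2$ satisfies $du\le d(-cu+C_0)\,dt+dM$ with $d\langle M\rangle\lesssim u^2\,dt$. So $u$ sits at an $O(1)$ level with fluctuations of order $d^{-1/2}$, and $e^{-\Omega(d)}$ tails need $\lambda\asymp d$. At that rate the It\^o term already matches the dissipation at $\|E\|\asymp R$, so you must localize anyway.

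Note also that $R$ has dropped out of this equation. Excursions are rare because the mean-reversion rate $\asymp d$ dominates the $O(1)$ relative noise strength (the paper's condition $A\ge CBL$). It is not because $R\ll 1$, which is the justification your fallback gives.

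\textbf{Your fallback is the right argument, but it omits the step that does the work.} It is essentially the paper's proof via \Cref{lemma: uniform bound}. The paper splits $E=X_1+X_2$ and bounds the drift convolution $X_1$ deterministically by $G/A$, with $G=\epsilon\sup\|b\|$ and $A=\tfrac{d-1}{2}$. It stops when $\|X_2\|$ first reaches $G/A$, and uses $\tr\Sigma\le 2\|E\|^2\le 8(G/A)^2$ before that time, so that $\|X_2\|^2$ is mean-reverting with bounded noise. Working with $\|E\|^2$ directly via your Young's inequality is equally fine.

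Concretely, set $a:=C'\epsilon\sup\|b\|/d$ and $\tau:=\inf\{t:\|E_t\|\ge 2a\}$. For $t\le\tau$, variation of constants gives $\|E_t\|^2\le K/(cd)+U_t$, where $K=C\epsilon^2\sup\|b\|^2/d$, $U_t:=\int_0^t e^{-cd(t-s)}dN_s$, and $d\langle N\rangle\le 64a^4\,dt$. You then need
$\sup_{t\le T\wedge\tau}U_t\lesssim a^2\sqrt{\log((1+dT)/\delta)/d}$.
Once $C'^2\gtrsim C/c$ and $d\gtrsim\log((1+dT)/\delta)$, this gives $\tau>T$.

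The paper gets this supremum from subgaussian increments plus chaining (\Cref{corollary: sup}). A single exponential martingale inequality per unit interval does not suffice. The weight $e^{cdt}$ that turns $U$ into the martingale $\int_0^t e^{cds}dN_s$ varies by a factor $e^{cd}$ across a unit interval. Instead, cut $[0,T]$ into about $dT$ pieces of length $1/d$ and apply a Bernstein-type bound on each. That union bound, not one over coordinates, is where the factor $dT$ in the failure probability comes from. What either argument honestly gives is $1-O(dT)e^{-cd}$ for a small absolute $c$, which is also what the paper's lemma yields.
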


The key idea of this uniform bound lies in a bijection between this Ornstein–Uhlenbeck-like process and a suitable subgaussian process. From there, we can apply the chaining method to obtain a uniform bound of $\sup\|E\|$ over time. Indeed, the fact that $\|E\| = O(\epsilon)$ throughout training is key to both the proofs of odd and even $k^\star$, since it heuristically reduces our process to a Brownian component plus an $\epsilon$ signal component that can leverage the randomness in the Brownian component.\footnote{As an aside, our technique is one way to prove convergence to the stationary Gibbs distribution $\mu_\epsilon \propto \exp(-2\epsilon L_n)$, and we believe this can be a useful way to approach our minibatch conjecture in \Cref{subsec:minibatch}.}

\subsection{Recovery of $\theta^\star$}
Let $\Tilde{O}(\cdot)$ hide non-$\epsilon$ terms. In the odd case, our estimator converges to the direction of $\bar b = \E_{z\sim \mu}[b(z)]$ with a magnitude of $\Tilde{O}(\epsilon)$. We prove in \Cref{sec:proofs:tensorPCA} that for the tensor PCA setting, this recovers $\theta^\star$ with $n \gtrsim d^{\lceil k^\star/2 \rceil}$, and we prove in \Cref{sec:proofs:single_index} that for the single-index model setting, it recovers $\theta^\star$ with $n \gtrsim d^{\lceil k^\star/2 \rceil}$ as well. Moreover, we prove that when $n\gtrsim d^{k^\star/2}$, we obtain nontrivial correlation with $\theta^\star$, from which we can then run online SGD to get a total sample complexity of $d^{k^\star/2}$. For the even case, full proofs are included in \Cref{sec:proofs:tensorPCA} and \Cref{sec:proofs:tensorPCA} as well; we also leverage the uniform bound on $\sup\|E\|$ to prove convergence of our estimator $\hat M$ to approximately $\frac{I}{d}$ plus $\Tilde{O}(\epsilon)$ spike in $\theta^\star \theta^{\star\top}$. From here, we can perform PCA or a similar algorithm to recover $\theta^\star$.

\section{Discussion} \label{sec: discussion}
\begin{figure}[t] 
  \centering
  \begin{subfigure}[t]{0.49\linewidth}
    \centering
    \includegraphics[width=\linewidth]{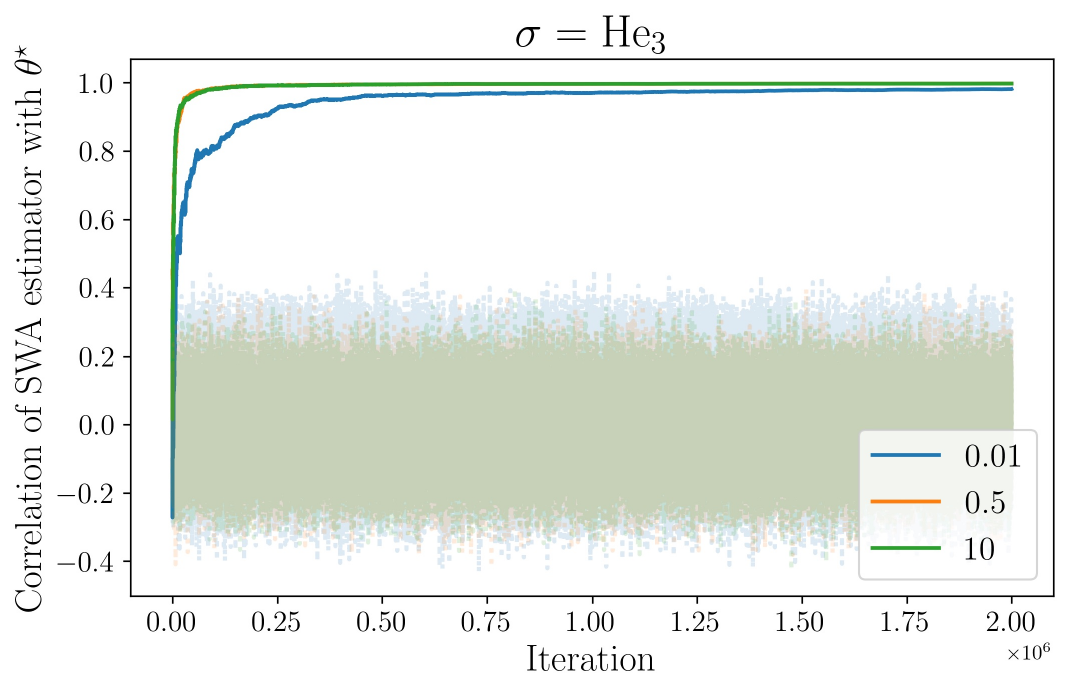}
    \label{fig:He3}
  \end{subfigure}\hfill
  \begin{subfigure}[t]{0.49\linewidth}
    \centering
    \includegraphics[width=\linewidth]{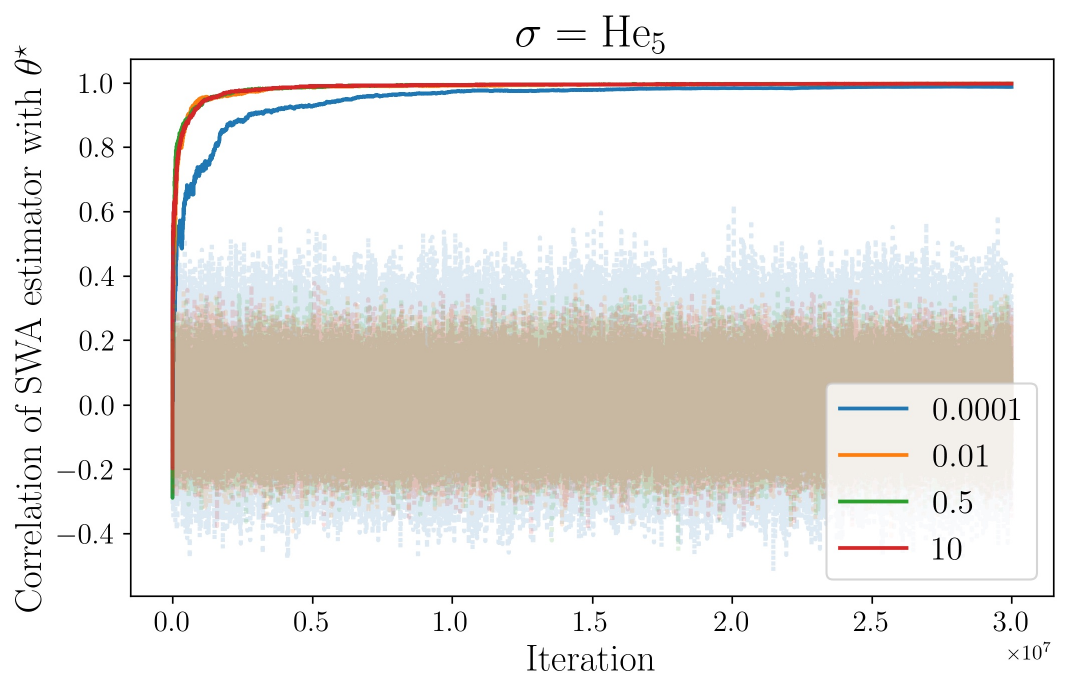}
    \label{fig:He5}
  \end{subfigure}
  \caption{We run with $d=100$ with $n=10d^{\lceil k^\star/2 \rceil}$ samples, using various learning rates. Here, the dark curves correspond to the correlation of the time average as a function of iteration, in which it indeed converges to the direction of $\theta^\star$. The light curves correspond to the actual iterate as a function of time, which can be seen to stay near the equator over the entire training process.}
  \label{fig:odd}
\end{figure}

\begin{figure}[t] 
  \centering
  \begin{subfigure}{0.49\linewidth}
    \centering
    \includegraphics[width=\linewidth]{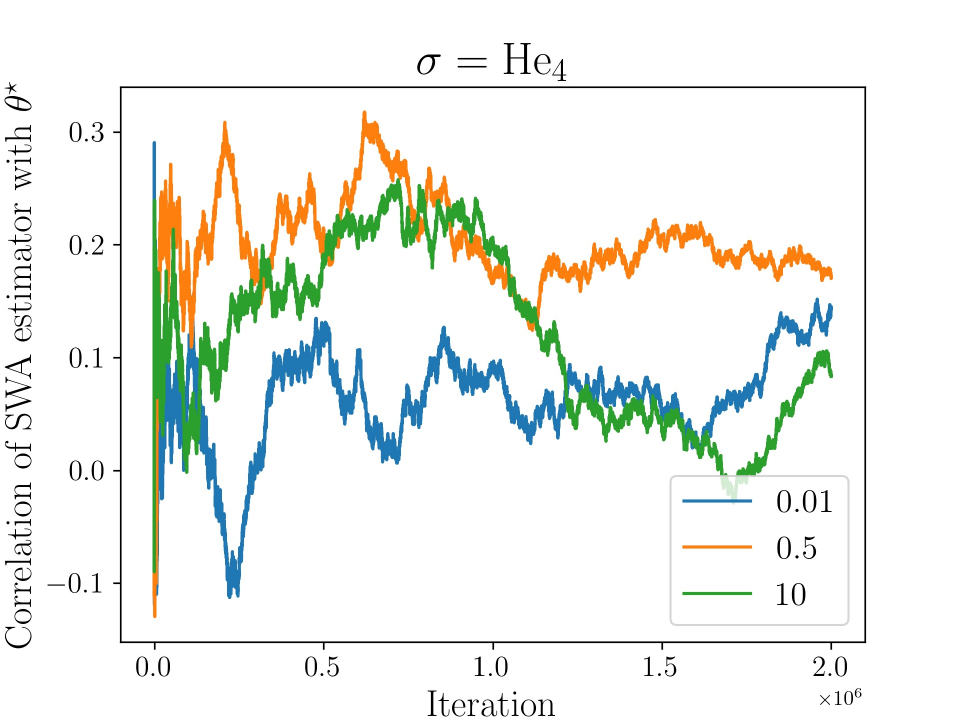}
    \caption{For various learning rate choices, we track the time average (e.g. the first order estimator) as a function of time, which can be seen to not have any meaningful correlation with $\theta^\star$. This is due to the $\sigma^\prime$ being an odd function, causing the first order estimator to vanish. }
    \label{fig:He4 theta}
  \end{subfigure}\hfill
  \begin{subfigure}{0.49\linewidth}
    \centering
    \includegraphics[width=\linewidth]{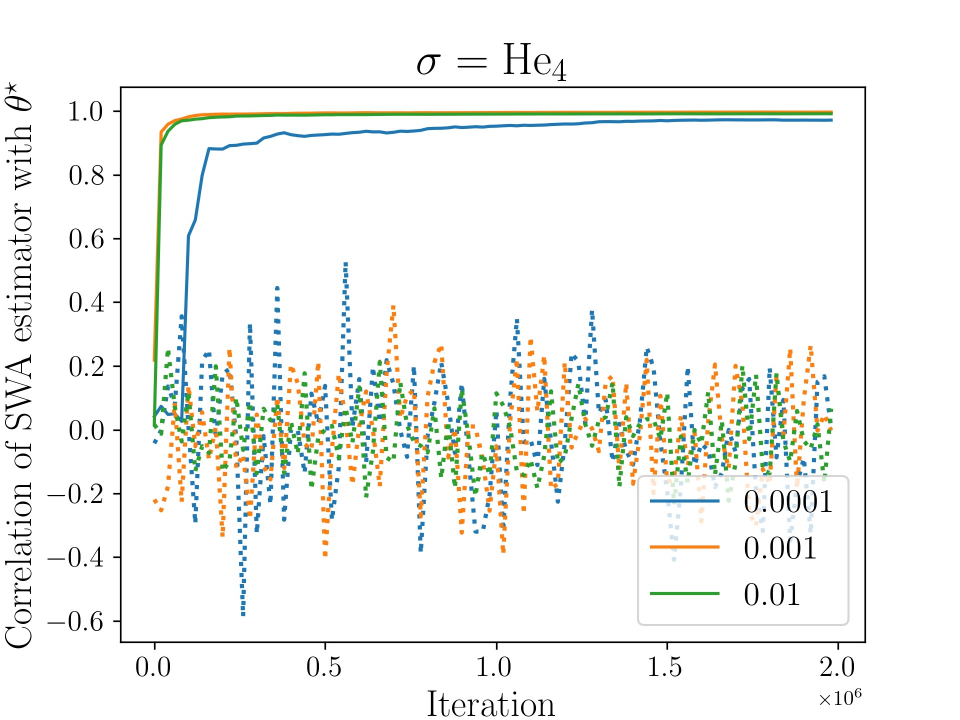}
    \caption{The solid curves correspond to the correlation of $\theta^\star$ with the top eigenvector of the time average of $\theta\theta^\top$, and the dotted lines are for the correlation between the actual iterate $\theta$ and $\theta^\star$. Indeed, the actual iterate itself remains near the equator over all time.}
    \label{fig:He4 theta theta}
  \end{subfigure}
  \caption{Simulations for $k^\star=4$, run with $d=100$ with $n=10d^2$ samples. }
  \label{fig:even}
\end{figure}

\subsection{Experiments}
We sanity check our findings experimentally via different choices of link functions which correspond to different $k^\star$. For $k^\star=3,4,5$, we let $\sigma(t) = h_{k^\star}(t)$, as defined in \Cref{def:probabilisthermite}. Specifically, we run the minibatch update defined in \Cref{subsec:minibatch} with batch size 1. Our findings are included in \Cref{fig:odd} and \Cref{fig:even} for the odd and even cases, respectively. For $k^\star=3,5$, our first-order estimator indeed recovers $\theta^\star$, even though the iterates stay near the equator throughout training. For $k^\star=4$, this same estimator does not recover $\theta^\star$, but the second-order estimator's top eigendirection does, with the iterates once again staying near the equator. Our experiments are run with different learning rates, and we observe that smaller learning rates behave more and more like gradient flow, whereas larger ones behave more like Brownian motion and stay near the equator, as we would predict with Langevin dynamics. However, there are some more nuances to this, as we describe in the next section.

\subsection{Extension to Mini-batch SGD} \label{subsec:minibatch}
Our experimental results suggest that pure mini-batch SGD should have theoretically guarantees too. Consider mini-batch SGD with learning rate $\eta$ and batch size 1:
\begin{align*}
    \theta_{t+1} = \frac{\theta_t - \eta g_t}{\|\theta_t - \eta g_t\|}, \quad g_t := \nabla_\theta L(\theta_t; x_{i_t}, y_{i_t}), \quad i_t \sim\mathcal{U}([n])
\end{align*}
$g_t$ is approximately a standard Gaussian, since $\nabla L(\theta; x,y) = -y \sigma^\prime(\theta\cdot x) x$ and $\theta\cdot x$ is $O(1)$ for the most part, and hence $\|g_t\|\approx O(\sqrt{d})$. For $\eta\ll d^{-1/2}$, we have the following approximation:
\begin{align*}
    \theta_{t+1} = \frac{\theta_t - \eta g_t}{\|\theta_t - \eta g_t\|} = \frac{\theta_t-\eta g_t}{\sqrt{1 + \eta^2 \|g_t\|^2}} \approx (\theta_t - \eta g_t) (1 - \frac{1}{2}\eta^2 (d-1))
\end{align*}

Let $z_t := g_t + b(\theta_t)$ be the mini-batch noise\footnote{By choosing batch size $B=1$, we maximize the scale of the noise without explicit noise boosting.}. Because we are in a noise-dominated regime, $z_t$ is approximately isotropic so if we approximate this process by an SDE, we would heuristically get:
\begin{align*}
    \theta_{t+1} &\approx \theta_t - \eta g_t - \frac{1}{2}\eta^2 (d-1) \theta_t \\
    &= \theta_t - \sqrt{\eta} \cdot \sqrt{\eta} z_t - \eta\cdot \frac{1}{2} \eta (d-1) \theta + \eta b(\theta_t) \\
    \implies d\theta &\approx \qty(-\frac{d-1}{2}\eta \theta + b(\theta) )dt + \sqrt{\eta} P_\theta^\perp dW_t \\
    \implies d\theta &\approx \qty(-\frac{d-1}{2}\theta + \frac{1}{\eta}b(\theta))dt + P_\theta^\perp dW_t
\end{align*}
which roughly recovers our Langevin setting with $\epsilon:=\frac{1}{\eta}$. \textit{We therefore conjecture that there exists a learning rate regime for which this SGD argument holds even without the noise boosting that is present in Langevin dynamics.} The main technical challenge in extending our results in this direction is not just controlling the discretization error, but also the dependencies that arise between the noise covariance and the smoothing estimator. In particular, the stationary distribution for the pure-noise process will no longer be isotropic over the sphere and will have a data-dependent stationary distribution, which introduces additional complications. However, extending our results and techniques to the minibatch SGD setting is a promising direction for future work.

\section*{Acknowledgements}
SW acknowledges support from a NSF Graduate Research Fellowship. AD acknowledges support from a Jane Street Graduate Research Fellowship. JDL acknowledges support of NSF IIS 2107304, NSF CCF 2212262, NSF CAREER Award 2540142, and NSF 2546544.

\bibliography{bib_stuff}

\newpage
\appendix

\section{Preliminaries}
\begin{definition} \label{def:whp events}
Let $\iota = C_\iota\log(d)$ for a sufficiently large constant $C_\iota$. We define high probability events to be events that happen with probability at least $1-\mathrm{poly}(d) e^{-\iota}$ where $\mathrm{poly} (d)$ does not depend on $C_\iota$.
\end{definition}
Note that high probability events are closed under polynomial number of union bounds. 

\begin{lemma}\label{lemma:stay on sphere}
    The Itô stochastic differential equations for $\beta$ and $\theta$ remain on $S^{d-1}$ for all time.
\end{lemma}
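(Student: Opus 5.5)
The plan is to apply Itô's formula to $\|\theta_t\|^2$ (and likewise $\|\beta_t\|^2$) and show that $r_t := \|\theta_t\|^2 - 1$ satisfies a linear SDE with no noise term whose unique solution with $r_0 = 0$ is identically zero. We treat $\theta$, since $\beta$ is the special case $\epsilon = 0$. The coefficients in \Cref{eq: main SDE} are only defined as stated when $\theta \neq 0$, because $P_\theta^\perp = I - \theta\theta^\top/\|\theta\|^2$. We therefore first work up to the stopping time $\tau = \inf\{t : \|\theta_t\| \notin (1/2, 2)\}$. On this region the drift and diffusion coefficients are locally Lipschitz: $b$ is smooth under the bounded-derivative assumption on $\sigma$, and $P_\theta^\perp$ is smooth away from $0$. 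Hence a unique strong solution exists up to $\tau$.

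Next we compute $d\|\theta\|^2$ by Itô's formula, giving $d\|\theta\|^2 = 2\theta^\top d\theta + \tr(P_\theta^\perp P_\theta^{\perp\top})\,dt$. We then use three facts:
\begin{itemize}
\item $\theta^\top P_\theta^\perp = 0$, so the martingale term $2\theta^\top P_\theta^\perp dW_t$ vanishes identically.
\item $b(\theta) = -P_\theta^\perp \nabla L_n(\theta)$ lies in the range of $P_\theta^\perp$, so $\theta^\top b(\theta) = 0$. Here it is essential that the spherical gradient is defined with the projection evaluated at the current point, not at its normalization.
\item $P_\theta^\perp$ is a rank-$(d-1)$ orthogonal projector, so $\tr(P_\theta^\perp P_\theta^{\perp\top}) = d-1$.
\end{itemize}
Together these give
\begin{align*}
d\|\theta_t\|^2 = \qty(-(d-1)\|\theta_t\|^2 + (d-1))\,dt, \qquad\text{i.e.}\qquad dr_t = -(d-1) r_t\,dt,
\end{align*}
which is a pathwise ODE. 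With $r_0 = \|\theta_0\|^2 - 1 = 0$ (since $\theta_0 \sim \mu$), Grönwall or uniqueness for linear ODEs gives $r_t = e^{-(d-1)t} r_0 = 0$ for all $t \le \tau$.

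The main obstacle is closing the localization argument: we must show $\tau = \infty$ almost surely, so that the solution is global rather than just a solution until exit. Since $\|\theta_t\| = 1$ on $[0,\tau)$ and paths are continuous, $\|\theta_\tau\| = 1$ whenever $\tau < \infty$. This contradicts the definition of $\tau$ as an exit time from $(1/2, 2)$, so $\tau = \infty$ a.s. One must also make sure the problem is posed with a fixed extension of the coefficients, for instance by replacing $\|\theta\|^2$ in $P_\theta^\perp$ by $\max(\|\theta\|^2, 1/4)$. With such an extension, global existence and uniqueness are standard, and the trajectory agrees with the sphere-valued one. Finally, setting $\epsilon = 0$ gives the same conclusion for $\beta$.
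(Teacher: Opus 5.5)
Your proof is correct and follows the same route as the paper: apply Itô's formula to the squared norm and use $\theta^\top P_\theta^\perp = 0$, $\theta^\top b(\theta) = 0$, and $\tr P_\theta^\perp = d-1$. Your version is also more careful. The paper concludes $d\bigl(\tfrac12\|\theta\|^2\bigr) = 0$ by implicitly substituting $\theta\cdot\theta = 1$ into the drift, which is circular. You instead derive $dr_t = -(d-1)r_t\,dt$, use uniqueness with $r_0 = 0$, and handle the localization needed for $P_\theta^\perp$ to be well defined.
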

\begin{proof}
    This follows by Itô's lemma on $f(X) = \frac{1}{2}\|X\|^2$. More concretely,
    \begin{align*}
        d\qty(\frac{1}{2}\|\theta\|^2) = \qty(-\frac{d-1}{2}(\theta\cdot\theta) + P_\theta^\perp \cdot \epsilon b(\theta) \theta + \frac{1}{2}\tr P_\theta^\perp) dt + \theta^\top P_\theta^\perp dW_t = 0 
    \end{align*}
    The derivation for $\beta$ proceeds similarly.
\end{proof}

We proceed by applying \Cref{lemma: uniform bound} that gives high probability control of $E$ over all time.
\begin{lemma}[High probability uniform bound of $\sup \|E\|$]
    With probability at least $1 - dTe^{-d}$, there exists an absolute constant $C^\prime$ such that:
    \begin{align*}
        \sup \limits_{t\leq T} \|E(t)\| \leq C^\prime \qty[\frac{\epsilon \sup \|b\| }{d}]
    \end{align*}
\end{lemma}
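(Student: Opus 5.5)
We propose to work with the SDE for $E=\theta-\beta$,
\begin{align*}
    dE = \qty(-\tfrac{d-1}{2}E + \epsilon b(\theta))dt + \Sigma_t^{1/2} dW_t, \qquad \Sigma_t^{1/2} := P_{\theta_t}^\perp - P_{\beta_t}^\perp ,
\end{align*}
and to analyze it through Duhamel's formula. Since the drift $-\tfrac{d-1}{2}E$ is linear with a scalar coefficient, variation of constants gives exactly
\begin{align*}
    E_t = \epsilon\int_0^t e^{-\frac{d-1}{2}(t-s)} b(\theta_s)\,ds + \int_0^t e^{-\frac{d-1}{2}(t-s)}\Sigma_s^{1/2} dW_s =: A_t + N_t .
\end{align*}
The deterministic part is bounded pathwise: $\|A_t\| \le \epsilon \sup\|b\|\cdot \frac{2}{d-1}$. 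This already matches the claimed scale. It therefore suffices to show that the stochastic convolution $N_t$ is at most a constant multiple of $\sup_{s\le t}\|E_s\|$ with a small constant, or is of lower order, uniformly on $[0,T]$ with the stated probability.

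The key structural input is the bound on the noise. For unit vectors,
\[
P_\theta^\perp - P_\beta^\perp = \beta\beta^\top - \theta\theta^\top,
\]
which has rank at most two and eigenvalues $\pm\|\theta\wedge\beta\|$. Hence $\tr\Sigma = 2(1-(\theta\cdot\beta)^2) \le 2\|E\|^2$.

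To exploit this, we would localize. Let $\tau$ be the first time $\|E_t\|$ exceeds $R := C'\epsilon\sup\|b\|/d$, and work with the stopped process. Up to $\tau$, the integrand of $N$ has Hilbert–Schmidt norm at most $\sqrt2 R$. For any fixed $t$ and unit vector $u$, the quadratic variation of $u\cdot N_{t\wedge\tau}$ is then at most
\[
\int_0^t e^{-(d-1)(t-s)}\, 2R^2\, ds \le \frac{2R^2}{d-1}.
\]
By Bernstein's inequality for continuous martingales, $u\cdot N_{t\wedge \tau}$ is subgaussian with variance proxy $O(R^2/d)$. A net over $u\in S^{d-1}$ then gives $\|N_{t\wedge\tau}\|\lesssim R\sqrt{d/d}\cdot O(1)$ with failure probability $e^{-d}$. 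Tuning the constant so this is at most $R/4$ requires care; see below.

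To pass from fixed $t$ to the supremum over $t\le T$, we would discretize time into a grid of mesh $\delta\approx 1/d$, giving $O(dT)$ points. This is where the $dT e^{-d}$ failure probability comes from. Between grid points we control increments with chaining, or more simply by writing $N_t$ on $[t_j,t_{j+1}]$ as $e^{-\frac{d-1}{2}(t-t_j)}N_{t_j}$ plus a short stochastic integral, whose supremum is handled by Doob's maximal inequality together with the same quadratic variation bound. The combination of these bounds is the subgaussian-process argument alluded to in the main text. On the good event, for $t\le\tau$ we get $\|E_t\|\le \|A_t\|+\|N_t\| \le \tfrac{2}{d-1}\epsilon\sup\|b\| + \tfrac12 R < R$ once $C'$ is large. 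By continuity this forces $\tau>T$, which is the bootstrap.

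The main obstacle is the size of $N$. A naive bound gives $\|N\|\sim R\cdot\sqrt{\text{(trace)}/d}\sim R$ with a constant of order one, not a small constant, so the bootstrap does not close trivially. What we would try first is to use the rank-two structure more carefully: $\Sigma^{1/2}$ acts only on $\mathrm{span}(\theta,\beta)$, with operator norm about $\|E\|$. Consequently $u\cdot N$ has variance proxy about $R^2/d$ in each direction, while $\|N\|^2$ has expectation about $2R^2/d$ rather than $R^2$. This is smaller by a factor of $1/d$, which is what closes the argument. Concretely, we would apply Itô's formula to $\|N\|^2$ (or $\|E\|^2$ directly). The drift is then $-(d-1)\|N\|^2+\tr\Sigma\le -(d-1)\|N\|^2+2R^2$, so $\|N\|^2$ is an OU-type process with equilibrium level $\sim R^2/d$. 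We would then use an exponential supermartingale bound on $e^{\lambda d\|N\|^2/R^2}$ to obtain the tail $e^{-d}$ per time window.
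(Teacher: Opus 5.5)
Your final argument is the paper's proof. The split $E=A_t+N_t$ is exactly the decomposition $X=X_1+X_2$ in the paper's general uniform-bound lemma, applied with $A=\frac{d-1}{2}$, $B=2$, $G\asymp\epsilon\sup\|b\|$; it is followed, as in the paper, by a stopping time and by It\^o's formula on $\|N\|^2$, using $\tr\Sigma\le 2\|E\|^2$ to get an OU-type process with equilibrium level of order $R^2/d$. The only difference is how the supremum over $t\le T$ is controlled: the paper splits $\|X_2\|^2=Y_1+Y_2$ (a drift part bounded by $2BR^2/A$ plus an OU-type martingale part handled by subgaussian chaining), whereas you use an exponential supermartingale and a union bound over $O(dT)$ windows; both give failure probability of order $dTe^{-cd}$, and your net-over-directions detour is unnecessary.
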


\begin{proof}
Recall the SDE for $E(t)$:
\begin{align*}
    dE = \qty(-\frac{d-1}{2}E + \epsilon b(\theta)) dt + \qty(P_\theta^\perp - P_\beta^\perp) dW_t
\end{align*}
By \Cref{lemma: uniform bound}, we can apply the result with $C=\frac{d-1}{2} \asymp d$, $G\asymp \epsilon \sup\|b\|$, and $B=2$.
\end{proof}

\section{Ergodic Concentration}

\begin{lemma}[\Cref{lemma:ergodic decomp}, restated] \label{lemma:ergodic decomp restated}
    Let $f : \mathbb{R}^d \rightarrow \mathbb{R}^m$ such that $f\in L^2(\mu)$, where $\mu$ is the stationary uniform measure over the sphere for the Brownian motion, and $\int_{S^{d-1}} f d\mu = 0$. Then, we have:
    \begin{align*}
        \frac{1}{T}\int_0^T f(\beta_t) dt = \frac{\phi(\beta_0) - \phi(\beta_T)}{T} + \frac{M_T}{T}
    \end{align*}
    where
    \begin{align*}
        \phi(\beta) = \int_0^\infty P_t f(\beta) dt
    \end{align*}
    and $M_T := \int_0^T \nabla \phi(\beta_t)^\top P_{\beta_t}^\perp dW_t$ is a martingale.
\end{lemma}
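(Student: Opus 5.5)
This is the standard Poisson-equation (corrector) argument for additive functionals of a reversible diffusion. The plan is to show that $\phi$ is a well-defined smooth solution of $\mathcal{L}\phi = -f$, and then to apply Itô's formula to $\phi(\beta_t)$.

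\textbf{Step 1: $\phi$ is well defined.} Since $\int f\,d\mu = 0$ and the spherical Laplacian has spectral gap $d-1$ on $L^2(\mu)$, I use the Poincaré inequality for $\mu$, which gives
\begin{align*}
\|P_t f\|_{L^2(\mu)} \le e^{-\frac{d-1}{2}t}\|f\|_{L^2(\mu)}
\end{align*}
(the factor $\tfrac12$ comes from $\mathcal{L}=\tfrac12\Delta_{S^{d-1}}$). Hence $\phi=\int_0^\infty P_t f\,dt$ converges in $L^2(\mu)$, coordinatewise for the $\mathbb{R}^m$-valued case. I also need $\phi$ to be regular enough ($C^2$) for Itô's formula. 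Expanding $f$ in spherical harmonics $f=\sum_{\ell\ge1} f_\ell$ gives $\phi=\sum_{\ell\ge1}\frac{2}{\ell(\ell+d-2)}f_\ell$. So $\phi=-\mathcal{L}^{-1}f$ gains two derivatives over $f$, and by elliptic regularity $\phi$ is smooth whenever $f$ is, e.g.\ Lipschitz or $C^1$ as in our applications. Alternatively, the gradient bound $\|\nabla P_t f\|\le e^{-(d-2)t}\|\nabla f\|$ quoted in the text gives uniform convergence of $\int_0^\infty \nabla P_t f\,dt$.

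\textbf{Step 2: the Poisson equation.} I compute
\begin{align*}
\mathcal{L}\phi=\int_0^\infty \mathcal{L}P_t f\,dt=\int_0^\infty \tfrac{d}{dt}P_t f\,dt=\lim_{s\to\infty}P_s f - f=-f,
\end{align*}
using $P_s f\to \int f\,d\mu=0$. Exchanging $\mathcal{L}$ with the integral is justified by the exponential decay in Step 1, or by working eigenfunction by eigenfunction.

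\textbf{Step 3: Itô's formula.} Extend $\phi$ to a neighborhood of the sphere, for instance by $0$-homogeneity $\phi(x)=\phi(x/\|x\|)$. By Lemma~\ref{lemma:stay on sphere}, $\beta_t$ stays on $S^{d-1}$. The generator of the SDE $d\beta=-\frac{d-1}{2}\beta\,dt+P_\beta^\perp dW_t$, applied to the extension, is
\begin{align*}
-\frac{d-1}{2}\beta\cdot\nabla\phi+\frac12\tr(P_\beta^\perp\nabla^2\phi),
\end{align*}
which equals $\tfrac12\Delta_{S^{d-1}}\phi=\mathcal{L}\phi$ on the sphere. This is the identification already quoted from \citep{saloff1994}. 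Itô's formula then gives
\begin{align*}
\phi(\beta_T)-\phi(\beta_0)=\int_0^T\mathcal{L}\phi(\beta_t)\,dt+\int_0^T\nabla\phi(\beta_t)^\top P_{\beta_t}^\perp dW_t=-\int_0^T f(\beta_t)\,dt+M_T.
\end{align*}
Rearranging and dividing by $T$ yields the claimed identity.

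\textbf{Step 4: $M_T$ is a martingale.} This holds because $\nabla\phi$ is bounded on the compact sphere, so the integrand is square-integrable.

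The main obstacle is the regularity in Steps 1--2: convergence of $\phi$ and the interchange of $\mathcal{L}$ with $\int_0^\infty$ in a sense strong enough to justify Itô's formula. I would handle it via the spectral expansion under a mild smoothness assumption on $f$, or by mollifying $f$ and passing to the limit.
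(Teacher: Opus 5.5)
Your proposal is correct and follows the same route as the paper. Both solve the Poisson equation $-\mathcal{L}\phi = f$ with $\phi=\int_0^\infty P_t f\,dt$ via the backward equation and $P_s f\to 0$, then apply Itô's formula to $\phi(\beta_t)$ and rearrange. You add justifications the paper glosses over: spectral-gap convergence of $\phi$, $C^2$ regularity of $\phi$ under the mild smoothness of $f$ available in the applications, and the $0$-homogeneous extension that identifies the generator with $\tfrac12\Delta_{S^{d-1}}$. That generator computation is cleaner than the paper's appeal to $\beta^\top d\beta = 0$, which is not literally true as an Itô differential.
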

\begin{proof}
    To begin, observe that $\phi$ satisfies $-\mathcal{L}\phi = f$. To see why, note that:
    \begin{align*}
            \mathcal{L} \phi(x) = \int_0^\infty \mathcal{L}(P_tf) (x) dt = \qty[(P_t f)(x)]_0^\infty = -f(x)
    \end{align*}
    where in the second equality we used Kolmogorov's backward equation:
    \begin{align*}
            \frac{d}{dt}P_t f = P_t \mathcal{L} f = \mathcal{L} P_t f, \quad P_0 f = f
    \end{align*}

    Applying Itô's to $\phi(\beta_t)$, we obtain:
    \begin{align*}
        d \phi(\beta) &= \nabla \phi(\beta) \cdot d\beta + \mathcal{L} \phi(\beta) dt  \\
        &= \nabla \phi(\beta)^\top P_\beta^\perp   d\beta + \mathcal{L} \phi(\beta) dt  \\
        &= \nabla \phi(\beta)^\top  P_\beta^\perp dW_t + \mathcal{L}\phi(\beta) dt
    \end{align*}
    where the second line follows from that fact that $\beta^\top (d\beta) = 0$ (i.e. Brownian motion stays on the sphere).
    Therefore, it holds that by integrating from $0$ to $T$, 
    \begin{align*}
        \phi(\beta_T) - \phi(\beta_0) &= \int_0^T \nabla \phi(\beta_t)^\top P_{\beta_t}^\perp dW_t + \int_0^T \mathcal{L} \phi(\beta_t) dt \\
        &= M_T - \int_0^T f(\beta_t) dt
    \end{align*}
    Rearranging gives the desired result.
\end{proof}

\begin{lemma}\label{lemma:linear term ergodic decomp}
    In the setting of \Cref{lemma:ergodic decomp restated} with $f : \mathbb{R}^d \rightarrow \mathbb{R}^m$, the following holds:
    \begin{align*}
        \left\| \frac{\phi(\beta_0) - \phi(\beta_T)}{T} \right\| \leq \frac{2\sup\|\nabla f\|_2}{(d-2)T}
    \end{align*}
\end{lemma}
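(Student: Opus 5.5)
We need to bound $\|\phi(\beta_0)-\phi(\beta_T)\|\le 2\sup\|\nabla f\|/(d-2)$. The natural route is to bound the oscillation of $\phi$ on the sphere through a Lipschitz bound on $\phi$. For any two points $x,y\in S^{d-1}$ we write
\begin{align*}
\|\phi(x)-\phi(y)\| \le \int_0^\infty \|P_tf(x)-P_tf(y)\|\,dt .
\end{align*}
We then control each integrand in one of two ways: via the gradient decay of the heat semigroup, or via the mean-zero property of $f$.

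\textbf{Step 1 (gradient decay).} We invoke the Bakry--\'Emery commutation estimate for Brownian motion on $S^{d-1}$, whose generator is $\mathcal{L}=\frac12\Delta_{S^{d-1}}$. The Ricci curvature is $d-2$, and the curvature-dimension condition $CD(\rho,\infty)$ for $\frac12\Delta$ (with the appropriate normalization, as quoted in the main text) gives
\begin{align*}
\|\nabla P_tf\|\le e^{-\rho t}P_t\|\nabla f\|\le e^{-\rho t}\sup\|\nabla f\|
\end{align*}
pointwise. Here $\nabla$ is the spherical gradient. For vector-valued $f$ we apply this to the scalar functions $u^\top f$ for unit vectors $u$ and take the supremum over $u$, so that operator norms are handled correctly.

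\textbf{Step 2 (integrate in time and over geodesics).} For fixed $t$, $P_tf$ is Lipschitz on the sphere with respect to geodesic distance, with constant $e^{-\rho t}\sup\|\nabla f\|$. Hence
\begin{align*}
\|P_tf(x)-P_tf(y)\|\le \operatorname{dist}(x,y)\,e^{-\rho t}\sup\|\nabla f\|.
\end{align*}
Integrating over $t\in[0,\infty)$ gives $\|\phi(x)-\phi(y)\|\le \operatorname{dist}(x,y)\sup\|\nabla f\|/\rho$. Applying this with $x=\beta_0$, $y=\beta_T$ and dividing by $T$ yields the claimed form, up to the constant.

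\textbf{Main obstacle (the constant).} The geodesic diameter of the sphere is $\pi$, not $2$, so Step 2 alone gives a factor $\pi$ in place of $2$. To obtain exactly $2$, I would instead bound $\|P_tf(x)-P_tf(y)\|$ through the Euclidean chord, viewing $f$ and $P_tf$ as functions on $\mathbb{R}^d$ restricted to the sphere, where $\|x-y\|\le 2$. Alternatively, I would use the mean-zero property: $\|P_tf(x)\|=\|P_tf(x)-\int P_tf\,d\mu\|\le \sup_y\|P_tf(x)-P_tf(y)\|$. For the stated inequality, the cleanest reading is that "$\sup\|\nabla f\|$" denotes a Lipschitz constant with respect to chordal distance, which combined with the semigroup contraction gives $\|x-y\|\le 2$. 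I would settle that normalization first; if it is unavailable, I would accept $\pi$ in place of $2$, since the constant is immaterial for the downstream use.

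A secondary technical point is justifying the interchange of the time integral with evaluation and differentiation, i.e., that $\phi$ is well defined and smooth. This follows from the exponential decay in Step 1 together with the spectral gap of the sphere.
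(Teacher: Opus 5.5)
Your route is the paper's: bound the oscillation of $\phi$ by (diameter)$\times\sup\|\nabla\phi\|$, and bound $\sup\|\nabla\phi\|$ by integrating the Bakry--\'Emery decay $\|\nabla P_t f\|\le e^{-\rho t}\sup\|\nabla f\|$ with $\rho=d-2$. That last step is where it breaks, and it breaks in the paper's proof too. The Ricci bound $d-2$ gives $CD(d-2,\infty)$ for $\Delta_{S^{d-1}}$. The generator here, however, is $\mathcal{L}=\frac12\Delta_{S^{d-1}}$, so $P_t=e^{t\Delta/2}$ and the correct estimate is $\|\nabla P_tf\|\le e^{-\frac{d-2}{2}t}P_t\|\nabla f\|$. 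No ``appropriate normalization'' gives rate $d-2$.

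As a result, the inequality you are asked to prove is false for every $d\ge 4$. Take $m=1$ and $f(x)=x_1$, a coordinate of the $f(\beta)=\beta$ the paper uses later; it is mean zero and has $\sup\|\nabla f\|=1$. Taking expectations in the SDE gives $P_tf=e^{-\frac{d-1}{2}t}f$, so $\phi=\frac{2}{d-1}x_1$. On the positive-probability event $\beta_0\approx e_1$, $\beta_T\approx -e_1$, you get $|\phi(\beta_0)-\phi(\beta_T)|\approx\frac{4}{d-1}>\frac{2}{d-2}$. Your fallback with $\pi$ in place of $2$ is also violated once $d\ge 6$, so changing the diameter cannot rescue the stated constant.

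With $\rho=\frac{d-2}{2}$, what your argument actually proves is $\|\phi(\beta_0)-\phi(\beta_T)\|\le\frac{2\pi\sup\|\nabla f\|}{d-2}$. Your scalarization through $u^\top f$ and the geodesic Lipschitz bound are fine.

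Your suspicion about the factor $2$ is also correct. Bakry--\'Emery controls the Riemannian gradient, hence geodesic Lipschitz constants, and converting to chord length costs a factor $\pi/2$. The paper's step $\|\phi(\beta_0)-\phi(\beta_T)\|\le 2\sup\|\nabla\phi\|$ silently skips this. The mean-zero trick does not help: it only gives $\|\phi(x)\|\le\frac{\pi}{2}\,\mathrm{Lip}(\phi)$, which is the same $\pi$ overall.

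If you want the chord diameter $2$ rigorously, couple two copies $X,Y$ through the same $W$. With $r=\|X-Y\|$ and the identity $\operatorname{tr}(P_X^\perp-P_Y^\perp)^2=2r^2-\frac12 r^4$, It\^o's formula gives $dr=-\bigl(\frac{d-2}{2}r+\frac18 r^3\bigr)dt+dM_t$. Hence $\mathbb{E}\|X_t-Y_t\|\le e^{-\frac{d-2}{2}t}\|x-y\|$, and $\|\phi(x)-\phi(y)\|\le\frac{4L}{d-2}$ whenever $f$ is $L$-Lipschitz in chordal distance.

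Either corrected constant is still $O(\sup\|\nabla f\|/d)$, so every downstream use (all stated with $\lesssim$) survives. The lemma should simply be restated with $\frac{d-2}{2}$ in place of $d-2$.
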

\begin{proof}
    We recall that $\|\nabla f(\beta)\|_2$ can be interpreted as the Lipschitz constant of $f$ with respect to the Euclidean norm. First, note that two points on $S^{d-1}$ can differ by at most 2 in Euclidean norm. Therefore, we have:
    \begin{align*}
        \left\| \phi(\beta_0) - \phi(\beta_T) \right\| \leq 2 \sup \|\nabla \phi\|_2 
    \end{align*}
    We can then bound the supremum as follows:
    \begin{align*}
        \sup \| \nabla \phi(\beta) \|_2 &= \sup \left\| \int_0^\infty \nabla P_t f(\beta) dt \right\|_2 \\
        &\leq \int_0^\infty \sup \| \nabla P_t f(\beta) \|_2 dt \\
        &\leq \int_0^\infty e^{-(d-2)t}\sup\|\nabla f(\beta)\|_2 dt \\
        &= \frac{\sup \|\nabla f(\beta)\|_2}{d-2}
    \end{align*}
    where the second to last inequality follows from the Ricci curvature of $S^{d-1}$ being $d-2$ and the gradient bound of Theorem 3.2.3 in \citet{bakry2016analysis}, and the first result follows upon division by $T$.
\end{proof}

\begin{lemma}\label{lemma:martingale ergodic decomp}
    In the setting of \Cref{lemma:ergodic decomp restated} with $f : \mathbb{R}^d \rightarrow \mathbb{R}^m$, the following holds with probability $1-e^{-m}$:
    \begin{align*}
    \left\|\frac{M_T}{T}\right\| \lesssim \sqrt{\frac{m \sup\|\nabla f(\beta)\|_2^2}{T(d-2)^2}}
    \end{align*}
\end{lemma}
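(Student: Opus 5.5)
The plan is to bound the quadratic variation of the vector martingale $M_T=\int_0^T P_{\beta_t}^\perp\nabla\phi(\beta_t)\,dW_t$ deterministically, and then turn that into a high-probability tail bound for $\|M_T\|$. Here $M_T\in\mathbb{R}^m$ and $\nabla\phi(\beta_t)^\top$ is the $m\times d$ Jacobian of $\phi$.

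\textbf{Step 1: uniform gradient bound.} From the proof of \Cref{lemma:linear term ergodic decomp}, the Bakry--Émery gradient estimate with Ricci curvature $d-2$ gives
\[
\sup\|\nabla\phi\|_2\le \frac{G}{d-2}, \qquad G:=\sup\|\nabla f\|_2 .
\]
Here $\|\cdot\|_2$ is the operator norm of the Jacobian, i.e.\ its Lipschitz constant. Since $\|P_\beta^\perp\|\le 1$, each row $e_j^\top\nabla\phi^\top P^\perp_\beta$ has Euclidean norm at most $G/(d-2)$.

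\textbf{Step 2: coordinatewise tails.} Fix a unit vector $u\in\mathbb{R}^m$ and consider the scalar martingale
\[
u^\top M_T=\int_0^T \big(P_{\beta_t}^\perp\nabla\phi(\beta_t)u\big)^\top dW_t .
\]
Its quadratic variation is at most $T\,G^2/(d-2)^2=:\sigma^2$, and this bound holds deterministically. I would then use the exponential martingale $\exp(\lambda u^\top M_t-\tfrac{\lambda^2}{2}\langle u^\top M\rangle_t)$, which is a true martingale by Novikov's criterion since the quadratic variation is bounded. This gives
\[
\mathbb{E}\,e^{\lambda u^\top M_T}\le e^{\lambda^2\sigma^2/2},
\]
so $u^\top M_T$ is $\sigma$-subgaussian. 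Equivalently, one can invoke Dambis--Dubins--Schwarz: $u^\top M_T$ is a Brownian motion evaluated at a random time that is at most $\sigma^2$, and the reflection principle bounds its tail.

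\textbf{Step 3: a net argument.} Take a $1/2$-net $\mathcal N$ of $S^{m-1}$ with $|\mathcal N|\le 5^m$. Then
\[
\|M_T\|\le 2\max_{u\in\mathcal N}u^\top M_T .
\]
A union bound over the net, at threshold $t=C\sigma\sqrt{m}$, gives failure probability at most $5^m e^{-C^2 m/2}\le e^{-m}$ for $C$ a large enough absolute constant. Dividing by $T$ yields
\[
\left\|\frac{M_T}{T}\right\|\lesssim \sqrt{\frac{m\,G^2}{T(d-2)^2}},
\]
as claimed.

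\textbf{Main obstacle.} The only delicate point is Step 1: \Cref{lemma:linear term ergodic decomp} states the gradient decay $\|\nabla P_tf\|\le e^{-(d-2)t}\|\nabla f\|$ for scalar $f$, and here it must hold for the vector-valued $f$. The scalar bound applied to each $u^\top f$ controls $\|\nabla\phi^\top u\|$ uniformly in $u$, and that uniform control is exactly the row bound Step 2 uses. So no stronger matrix-valued estimate is needed, and the remaining steps are standard.
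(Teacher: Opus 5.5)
Your proof is correct and follows essentially the paper's route. You bound the integrand $\nabla\phi(\beta_t)^\top P_{\beta_t}^\perp$ in operator norm by $\sup\|\nabla f\|_2/(d-2)$ via the Bakry--\'Emery gradient decay, then apply a tail bound for a vector martingale whose integrand is controlled in operator norm. The paper simply cites \Cref{lemma:op norm martingale norm bound} with $\delta=e^{-m}$, whereas you prove that tail bound inline (scalar exponential martingale plus a $1/2$-net on $S^{m-1}$), and you also justify the vector-valued gradient estimate by applying the scalar one to $u^\top f$, which the paper leaves implicit.
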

\begin{proof}
Recall that $M_T := \int_0^T \nabla \phi(\beta_t)^\top P_{\beta_t}^\perp dW_t$. We consider the predictable quadratic variation matrix $\langle M_t \rangle = \int_0^T \nabla \phi(\beta_t)^\top P_{\beta_t}^\perp \qty(\nabla \phi(\beta_t)^\top P_{\beta}^\perp)^\top dt $. Then, we have that:
\begin{align*}
    \|\nabla \phi(\beta)^\top P_\beta^\perp\|_2 \leq \|\nabla \phi(\beta) \|_2 \leq \frac{\sup \|\nabla f(\beta)\|_2}{d-2}
\end{align*}
Since we have operator norm control here (rather than Frobenius), applying \Cref{lemma:op norm martingale norm bound} yields that with probability $1-\delta$, 
\begin{align*}
    \|M_T\| \lesssim \frac{\sup \|\nabla f(\beta)\|_2}{d-2} \sqrt{T(m + \log(1/\delta))}
\end{align*}
from which the desired result follows upon division by $T$.
\end{proof}

\begin{corollary}\label{cor:ergodic high prob}
    In the setting of \Cref{lemma:ergodic decomp restated} with $f:\mathbb{R}^d\rightarrow \mathbb{R}^m$ it holds with probability $1-e^{-m}$ that:
    \begin{align*}
        \left\| \frac{1}{T} \int_0^T f(\beta_t) dt \right\| \lesssim \frac{\sup\|\nabla f(\beta) \|_2}{Td} + \sqrt{\frac{m \sup \|\nabla f(\beta)\|_2^2}{T d^2} }
    \end{align*}
\end{corollary}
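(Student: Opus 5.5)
This corollary follows by combining the three preceding results. Apply \Cref{lemma:ergodic decomp restated} to the vector-valued, mean-zero $f$ to write
\begin{align*}
    \frac{1}{T}\int_0^T f(\beta_t)\,dt = \frac{\phi(\beta_0)-\phi(\beta_T)}{T} + \frac{M_T}{T},
\end{align*}
and bound the norm by the triangle inequality, handling the two terms separately.

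For the boundary term, \Cref{lemma:linear term ergodic decomp} gives a deterministic bound,
\begin{align*}
    \left\|\frac{\phi(\beta_0)-\phi(\beta_T)}{T}\right\| \le \frac{2\sup\|\nabla f\|_2}{(d-2)T},
\end{align*}
valid for every realization. For the martingale term, \Cref{lemma:martingale ergodic decomp} gives, with probability at least $1-e^{-m}$,
\begin{align*}
    \left\|\frac{M_T}{T}\right\| \lesssim \sqrt{\frac{m\sup\|\nabla f\|_2^2}{T(d-2)^2}}.
\end{align*}
This corresponds to taking $\delta=e^{-m}$ in the intermediate bound, so that $m+\log(1/\delta)=2m$. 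Since the first bound holds surely, no union bound is needed and the overall failure probability remains $e^{-m}$.

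It remains to replace $d-2$ by $d$. For $d\ge 3$ we have $d-2\ge d/3$, so $\frac{1}{d-2}\le \frac{3}{d}$. The resulting constant factors are absorbed into $\lesssim$, and summing the two bounds gives the stated inequality. The $d\le 2$ cases are excluded, because the curvature bound there degenerates; the paper implicitly works with large $d$.

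No step here is a serious obstacle, since all the analytic content sits in the earlier lemmas. The one point worth checking is that \Cref{lemma:martingale ergodic decomp} is applied with the vector dimension $m$, not $d$, so that the probability $1-e^{-m}$ and the factor $\sqrt{m}$ are consistent with each other.
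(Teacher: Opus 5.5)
Your proposal is correct and is the same argument the paper intends. The paper states this corollary without a separate proof, and it follows exactly as you describe: apply the triangle inequality to the decomposition of \Cref{lemma:ergodic decomp restated}, combine the deterministic boundary bound of \Cref{lemma:linear term ergodic decomp} with the probability-$(1-e^{-m})$ martingale bound of \Cref{lemma:martingale ergodic decomp}, and absorb $d-2\asymp d$ into the implicit constant.
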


\section{Proof of the Odd $k^\star$ Case}

We now show that after sufficiently long running time, the time average of $\theta$ roughly approximates the time average of the Brownian motion, which in expectation over the stationary measure $\mu$ should converge to the partial trace estimator for $k^\star$ odd (i.e. $\E_{z\sim \mu}[b(z)]$).
\begin{proposition}[Decomposition of $E$] \label{prop: Et decomp odd}
    At time $t\geq 0$, it holds that:
    \begin{align*}
    E(t) = \int_0^t e^{-\frac{d-1}{2}(t-s)} \epsilon b(\theta_s) ds + \int_0^t e^{-\frac{d-1}{2}(t-s)} (P_\theta^\perp - P_\beta^\perp) dW_s
    \end{align*}
\end{proposition}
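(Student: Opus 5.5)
The formula in the proposition is the variation-of-constants (Duhamel) representation of the linear SDE satisfied by $E=\theta-\beta$. The plan is to apply Itô's product rule to an exponential integrating factor.

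First we record the SDE for $E$. Subtracting the coupled SDEs for $\theta$ and $\beta$, both driven by the same Wiener process $W_t$, gives
\begin{align*}
dE_t = \qty(-\tfrac{d-1}{2}E_t + \epsilon b(\theta_t))\,dt + \qty(P_{\theta_t}^\perp - P_{\beta_t}^\perp)\,dW_t .
\end{align*}
The drift is linear in $E$ with the deterministic scalar coefficient $-\frac{d-1}{2}$. The remaining terms $\epsilon b(\theta_t)$ and $P_{\theta_t}^\perp-P_{\beta_t}^\perp$ are adapted processes that we treat as forcing terms. Their nonlinear dependence on $E$ through $\theta_t$ plays no role, since the claim is an identity rather than a closed-form solution.

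Next we set $Y_t := e^{\frac{d-1}{2}t}E_t$ and apply Itô's product rule. The factor $e^{\frac{d-1}{2}t}$ is deterministic and of finite variation, so there is no cross-variation term, and
\begin{align*}
dY_t = \tfrac{d-1}{2}e^{\frac{d-1}{2}t}E_t\,dt + e^{\frac{d-1}{2}t}dE_t = e^{\frac{d-1}{2}t}\epsilon b(\theta_t)\,dt + e^{\frac{d-1}{2}t}\qty(P_{\theta_t}^\perp-P_{\beta_t}^\perp)dW_t .
\end{align*}
The linear drift cancels exactly. We then integrate from $0$ to $t$ and use $Y_0=E_0=0$, which holds because $\theta_0=\beta_0$. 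Multiplying by $e^{-\frac{d-1}{2}t}$, which is deterministic and can be moved inside both the Lebesgue and Itô integrals, gives the stated formula. In that formula the projections are evaluated at $\theta_s,\beta_s$.

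The only point needing care is that the stochastic integral is well defined. By \Cref{lemma:stay on sphere}, both processes stay on $S^{d-1}$, so the integrand is adapted and bounded: each projection has operator norm at most $1$, and the Frobenius norm of their difference is at most $2\|E_s\|\le 4$. It is therefore square-integrable on $[0,t]$. The drift term is finite because $\sup\|b\|<\infty$ on the sphere under the boundedness assumption on $\sigma,\sigma'$. With these checks the argument is routine, and I expect no real obstacle beyond them.
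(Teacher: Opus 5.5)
Your proposal is correct and follows the paper's route: subtract the coupled SDEs to obtain the linear SDE for $E$, then solve it by variation of constants, which the paper compresses into ``integrating this gives the desired expression.'' Your integrating-factor computation with $Y_t = e^{\frac{d-1}{2}t}E_t$, together with the boundedness checks on the integrands, simply makes that step explicit.
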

\begin{proof}
    Recall the SDE's for the coupled processes $\theta$ and $\beta$.
    \begin{align*}
        d\theta &= \qty(-\frac{d-1}{2}\theta + \epsilon b(\theta))dt + P_\theta^\perp dW_t \\
    d\beta &= -\frac{d-1}{2}\beta dt + P_\beta^\perp dW_t
    \end{align*}
    This implies that:
    \begin{align*}
            dE = \qty(-\frac{d-1}{2}E + \epsilon b(\theta)) dt + \qty(P_\theta^\perp - P_\beta^\perp) dW_t
    \end{align*}
    Integrating this gives the desired expression.
\end{proof}

We now give the ergodic concentration results for the relevant functions. 

\begin{lemma}[Ergodic concentration of $b$]\label{lemma: ergo conc b}
    Suppose $T\gtrsim d^{-1}$. With probability at least $1 - e^{-d}$, we have:
    \begin{align}
            \left\| \frac{1}{T}\int_0^T b(\beta_s) ds - \bar b \right\| \lesssim \frac{\sup \|\nabla b\|_2}{\sqrt{Td}} \lesssim \frac{1}{\sqrt{Td}}
    \end{align}
\end{lemma}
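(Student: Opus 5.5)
We apply \Cref{cor:ergodic high prob} to the centered function $f(\beta) := b(\beta) - \bar b$, viewed as a map $\mathbb{R}^d \to \mathbb{R}^m$ with $m = d$. The argument has three steps.

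\textbf{Step 1: check the hypotheses of \Cref{lemma:ergodic decomp restated}.} By definition $\bar b = \E_{z\sim\mu}[b(z)]$, so $f$ has mean zero under the stationary measure $\mu$. The assumptions on $\sigma$ give boundedness, since $b(\theta) = \frac1n P_\theta^\perp \sum_i y_i\sigma'(\theta\cdot x_i)x_i$ with $|\sigma'|\le C$. Hence $f\in L^2(\mu)$. Moreover $\nabla f = \nabla b$, because $\bar b$ is constant.

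\textbf{Step 2: apply the corollary.} \Cref{cor:ergodic high prob} gives, with probability $1-e^{-d}$,
\begin{align*}
\left\|\frac1T\int_0^T b(\beta_s)ds - \bar b\right\| \lesssim \frac{\sup\|\nabla b\|_2}{Td} + \sqrt{\frac{d\,\sup\|\nabla b\|_2^2}{Td^2}} = \frac{\sup\|\nabla b\|_2}{Td} + \frac{\sup\|\nabla b\|_2}{\sqrt{Td}}.
\end{align*}
Under the assumption $T\gtrsim d^{-1}$ we have $Td \gtrsim 1$, so $\frac{1}{Td} \le \frac{1}{\sqrt{Td}}$. The first term is therefore dominated by the second, which gives the first inequality of the lemma.

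\textbf{Step 3 (the main obstacle): show $\sup_\theta\|\nabla b(\theta)\|_2 \lesssim 1$.} This is the operator norm of the Jacobian of $b$ on the sphere, uniformly over $\theta$ and with high probability over the data. Differentiate $b(\theta) = P_\theta^\perp v(\theta)$, where $v(\theta) = \frac1n\sum_i y_i\sigma'(\theta\cdot x_i)x_i$. This splits the Jacobian into two pieces.

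The first piece comes from differentiating the projection. It is bounded by a constant times $\|v(\theta)\|$, or by $|\theta\cdot v(\theta)| + \|P^\perp_\theta v(\theta)\|$.

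The second piece is $P_\theta^\perp \frac1n\sum_i y_i\sigma''(\theta\cdot x_i)x_ix_i^\top$. To bound its operator norm, fix unit vectors $u,w$ and use Cauchy–Schwarz together with $|\sigma''|\le C$:
\begin{align*}
\left|\frac1n\sum_i y_i\sigma''(\theta\cdot x_i)(u\cdot x_i)(w\cdot x_i)\right| \le C\left(\frac1n\sum_i y_i^2\right)^{1/2}\left(\frac1n\sum_i (u\cdot x_i)^2(w\cdot x_i)^2\right)^{1/2}.
\end{align*}
The factor $\frac1n\sum_i y_i^2$ is $O(1)$ with high probability, since $y$ is a bounded function of $x$ plus Gaussian noise.

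The delicate factor is the fourth-moment term $\sup_{u,w}\frac1n\sum_i (u\cdot x_i)^2(w\cdot x_i)^2$. I would first try the cruder bound $\|\frac1n\sum_i |y_i|\,x_ix_i^\top\|_{op}$. This is an empirical covariance weighted by $|y_i|$, and standard matrix concentration gives $O(1+d/n)$, which is $O(1)$ since $n\gtrsim d$.

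The same approach bounds $\|v(\theta)\|\le \sup_{\|u\|=1}\frac1n\sum_i |y_i|\,|u\cdot x_i| = O(1)$, since this is again controlled by the same weighted covariance. All of these bounds hold uniformly in $\theta$, because $\theta$ enters only through the bounded factors $\sigma'$ and $\sigma''$.

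Combining the two pieces gives $\sup\|\nabla b\|_2\lesssim 1$ on a high-probability event over the data. Intersecting this event with the ergodic event from Step 2 gives the second inequality of the lemma.
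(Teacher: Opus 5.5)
Your proposal is correct and matches the paper's proof: the paper likewise applies \Cref{cor:ergodic high prob} to $f=b-\bar b$ with $m=d$ and uses that $b$ is $O(1)$-Lipschitz. The only difference is that the paper takes the Lipschitz bound from its separate lemma, which bounds finite differences via $\|X\|_2/\sqrt{n}\lesssim 1+\sqrt{d/n}$, rather than from your Jacobian computation. One small caution for Step 3: drop the Cauchy--Schwarz/fourth-moment detour, since $\sup_u \frac1n\sum_i (u\cdot x_i)^4 \ge \|x_j\|^4/n \approx d^2/n$ is not $O(1)$ unless $n\gtrsim d^2$; what actually works is the direct estimate $|u\cdot x_i|\,|w\cdot x_i|\le \tfrac12\big((u\cdot x_i)^2+(w\cdot x_i)^2\big)$, which reduces the bilinear form to the weighted covariance $\|\frac1n\sum_i |y_i|x_ix_i^\top\|_{op}$.
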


\begin{proof}
    This follows directly from \Cref{cor:ergodic high prob}, setting $f(\beta) = b(\beta) - \bar b$, and using the fact that $b$ is $O(1)$-Lipschitz.
\end{proof}

\begin{lemma}[Ergodic concentration of $\beta$]\label{lemma: ergo conc beta}
    Suppose $T\gtrsim d^{-1}$. With probability at least $1 - e^{-d}$, it holds that:
    \begin{align*}
        \left\| \frac{1}{T}\int_0^T \beta_s ds \right\| \lesssim \frac{1}{\sqrt{Td}}
    \end{align*}
\end{lemma}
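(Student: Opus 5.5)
This lemma follows by applying \Cref{cor:ergodic high prob} to the identity map $f(\beta) = \beta$, viewed as a function $\mathbb{R}^d \to \mathbb{R}^m$ with $m = d$. First I will check the hypotheses of \Cref{lemma:ergodic decomp restated}. The function $f$ is bounded on the sphere, so $f \in L^2(\mu)$. It is also centered: $\int_{S^{d-1}} z \, d\mu(z) = 0$, since $\mu$ is invariant under $z \mapsto -z$. This means no centering constant is needed, unlike the case of $b$ in \Cref{lemma: ergo conc b}.

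Next I will compute the Lipschitz constant that enters the bound. The Euclidean Jacobian of $f$ is $I_d$, so $\sup \|\nabla f\|_2 = 1$ in operator norm. If one prefers to work intrinsically, the spherical gradient $P_\beta^\perp$ also has operator norm at most $1$. Either way $\sup\|\nabla f\|_2 \le 1$. This operator-norm control is exactly the quantity that \Cref{lemma:martingale ergodic decomp} uses, so no Frobenius-norm factor of $\sqrt d$ appears beyond the explicit $m$.

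Plugging $m = d$ and $\sup\|\nabla f\|_2 \le 1$ into \Cref{cor:ergodic high prob} gives, with probability at least $1 - e^{-d}$,
\begin{align*}
    \left\| \frac{1}{T}\int_0^T \beta_s\, ds \right\| \lesssim \frac{1}{Td} + \sqrt{\frac{d}{T d^2}} = \frac{1}{Td} + \frac{1}{\sqrt{Td}}.
\end{align*}
It remains to show that the first term is dominated by the second. The ratio of the first term to the second is $(Td)^{-1/2}$, which is $O(1)$ exactly when $T \gtrsim d^{-1}$. Under that assumption the bound collapses to $\lesssim 1/\sqrt{Td}$, which is the claimed estimate.

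The only point needing care is the application of the Bakry--Émery gradient bound $\|\nabla P_t f\| \le e^{-(d-2)t}\|\nabla f\|$ to a vector-valued $f$ in operator norm. Here this step is actually explicit. The coordinate functions $z_i$ are degree-one spherical harmonics, so they are eigenfunctions of $\frac12 \Delta_{S^{d-1}}$ with eigenvalue $-\frac{d-1}{2}$. Hence $P_t f = e^{-\frac{d-1}{2}t} f$, which gives $\phi(\beta) = \frac{2}{d-1}\beta$ exactly. With this explicit $\phi$, the boundary term is directly at most $\frac{4}{(d-1)T}$. The martingale $M_T = \frac{2}{d-1}\int_0^T P_{\beta_t}^\perp dW_t$ has quadratic variation bounded by $\frac{4T}{(d-1)^2} I$. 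So one can bypass the curvature argument entirely and conclude from \Cref{lemma:op norm martingale norm bound}.
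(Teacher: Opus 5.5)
Your proposal is correct and takes the same approach as the paper, whose proof is the one-line application of \Cref{cor:ergodic high prob} with $f(\beta)=\beta$ and $m=d$, so that $\sup\|\nabla f\|_2=1$. Your closing check that $\phi(\beta)=\frac{2}{d-1}\beta$ is also valid; it is equivalent to integrating $d\beta=-\frac{d-1}{2}\beta\,dt+P_\beta^\perp dW_t$ directly, and gives a more elementary confirmation that avoids the curvature bound.
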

\begin{proof}
    This follows directly from \Cref{cor:ergodic high prob}, setting $f(\beta) = \beta$.
\end{proof}

We now prove the main theorem.
\begin{theorem}[\Cref{thm: main odd}, restated] \label{thm: main odd appendix}
    Let $\epsilon = o\qty(d^{-(k^\star-3)/2})$ and $T \gtrsim d^{k^\star} / \epsilon^2$. Then for $\delta, \Delta>0$, if $n \gtrsim d^{\lceil k^\star/2 \rceil}/\Delta^2$, \Cref{algo: training algo} succeeds in recovering the ground truth $\theta^\star$ up to error $\Delta$ with probability at least $1-e^{d^{c}}$.
\end{theorem}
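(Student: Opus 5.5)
We prove \Cref{thm: main odd appendix} by writing $\hat\theta = \frac1T\int_0^T\beta_t\,dt + \frac1T\int_0^T E_t\,dt$. The aim is to show that the first term is negligible, that the second term equals $\frac{2\epsilon}{d-1}\bar b$ up to lower-order error, and that the direction of $\bar b$ is $\Delta$-close to $\theta^\star$ once $n \gtrsim d^{\lceil k^\star/2\rceil}/\Delta^2$.

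\textbf{Step 1: average the error process.} Integrating the SDE for $E$ from \Cref{prop: Et decomp odd} over $[0,T]$ gives
\begin{align*}
\frac{d-1}{2}\cdot\frac1T\int_0^T E_t\,dt = \frac{\epsilon}{T}\int_0^T b(\theta_t)\,dt + \frac{1}{T}\int_0^T (P_{\theta_t}^\perp - P_{\beta_t}^\perp)\,dW_t - \frac{E_T}{T}.
\end{align*}
We bound the three terms on the right as follows.
\begin{itemize}
\item The boundary term $E_T/T$ is $O(\epsilon\sup\|b\|/(dT))$ by \Cref{lemma: E sup bound main}, which is negligible.
\item For the martingale term we apply \Cref{lemma:op norm martingale norm bound}. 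Its integrand has operator norm $\lesssim \|E_t\|$, and on the high-probability event of \Cref{lemma: E sup bound main} this is $\lesssim \epsilon\sup\|b\|/d$. The term is therefore $\lesssim (\epsilon\sup\|b\|/d)\sqrt{d/T}$, which is $o(\epsilon)$ relative to the signal. To keep this rigorous we stop the process at the first exit from the good event.
\item For the drift term we write $b(\theta_t) = b(\beta_t) + (b(\theta_t) - b(\beta_t))$. Since $b$ is Lipschitz on the sphere, the difference is bounded by $\sup\|\nabla b\|\,\sup_t\|E_t\| = O(\epsilon/d)$. By \Cref{lemma: ergo conc b}, $\frac1T\int_0^T b(\beta_t)\,dt = \bar b + O(1/\sqrt{Td})$.
\end{itemize}
Finally, \Cref{lemma: ergo conc beta} shows the Brownian average is $O(1/\sqrt{Td})$.

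Putting these together,
\begin{align*}
\hat\theta = \frac{2\epsilon}{d-1}\bar b + O\!\left(\frac{\epsilon}{d}\Big(\frac{\epsilon}{d} + \frac{1}{\sqrt{Td}}\Big) + \frac{1}{\sqrt{Td}}\right).
\end{align*}

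\textbf{Step 2: compute $\bar b$.} We expand $\sigma$ and $\sigma'$ in Hermite polynomials and average over $z\sim\mu$. Writing $\bar b = \bar b_{\mathrm{pop}} + \bar b_{\mathrm{noise}}$, the population part is $\bar b_{\mathrm{pop}} \asymp c_{k^\star}^2 d^{-(k^\star-1)/2}\theta^\star$, up to constants. This uses that $\E_{z\sim\mu}[(z\cdot\theta^\star)^{k^\star-1}\,\cdot]$ picks up the appropriate partial trace, and that odd $k^\star$ makes $k^\star-1$ even, so the contribution survives averaging. The empirical fluctuation $\bar b_{\mathrm{noise}} = \frac1n\sum_i(\cdot) - \E$ is a sum of i.i.d. vectors. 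After averaging over the sphere, its dominant term is a degree-$(k^\star-1)$ spherical harmonic part whose variance is $\approx d^{\lceil k^\star/2\rceil}\cdot d^{-(k^\star-1)}/n$ per direction. A vector Bernstein or sub-Weibull bound, using hypercontractivity of Gaussian polynomials, then gives
\begin{align*}
\|\bar b_{\mathrm{noise}}\| \lesssim \sqrt{d^{\lceil k^\star/2\rceil}/n}\cdot d^{-(k^\star-1)/2}
\end{align*}
with probability $1-e^{-d^c}$. Hence the angle between $\bar b$ and $\theta^\star$ is at most $\Delta$ once $n\gtrsim d^{\lceil k^\star/2\rceil}/\Delta^2$.

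\textbf{Step 3: parameter check.} The signal size is $\|\frac{2\epsilon}{d-1}\bar b\| \asymp \epsilon d^{-(k^\star+1)/2}$. We need the error terms from Step 1 to be at most $\Delta$ times this.
\begin{itemize}
\item The Brownian term $1/\sqrt{Td}$ is small relative to the signal exactly when $T\gtrsim d^{k^\star}/\epsilon^2$, which is the hypothesis on $T$.
\item The Lipschitz/coupling error $\epsilon^2/d^2$ is small relative to $\epsilon d^{-(k^\star+1)/2}$ exactly when $\epsilon = o(d^{-(k^\star-3)/2})$, which is the hypothesis on $\epsilon$.
\end{itemize}
A subtlety here is that the Lipschitz constant of $b$ may be $O(1)$ only in a crude sense. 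Bounding the coupling error by $\|E\|$ times the true Lipschitz constant could be wasteful, but the stated $\epsilon$ regime is designed to absorb exactly this loss.

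\textbf{Main obstacle.} The hardest step is Step 2. We must bound the empirical fluctuation of the sphere-averaged gradient at the partial-trace scale, which requires identifying the variance $d^{\lceil k^\star/2\rceil - (k^\star-1)}/n$ of the harmonic-projected noise and establishing high-probability concentration of these polynomial sums. To do this we would first use the explicit Hermite/Gegenbauer expansion of $\E_{z\sim\mu}[\sigma'(z\cdot x)]\,x$ and then apply moment bounds for Gaussian chaos. A secondary concern is making the stopping-time argument for the martingale term compatible with the $e^{-d^c}$ failure probability.
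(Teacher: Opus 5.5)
Your proposal is correct and follows essentially the paper's proof: the same coupling $\theta=\beta+E$, the same three error sources of order $1/\sqrt{Td}$, $\epsilon/\sqrt{Td^3}$ and $\epsilon^2/d^2$ measured against the signal $\epsilon d^{-(k^\star+1)/2}$, and the same population/fluctuation split of $\bar b$. Integrating the $E$-SDE directly (which produces the boundary term $E_T/T$) is just a streamlined form of the paper's Duhamel-plus-Fubini computation, and your stopping-time remark tightens a step the paper glosses over. One small slip: $d^{\lceil k^\star/2\rceil-(k^\star-1)}/n$ is the total variance (trace of the covariance) of $\bar b_{\mathrm{noise}}$, not the per-direction variance, which is $d^{-(k^\star-1)/2}/n$; only the total-variance reading is consistent with your norm bound.
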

\begin{proof}
    The time average of the $E$ up to time $T$ is the sum of the time averages of the two terms in \Cref{prop: Et decomp odd}.
    For the second term, which is the noise term, we have the following:
    \begin{align*}
        M_T
        &:= \frac{1}{T}\int_0^T \int_0^t e^{-\frac{d-1}{2}(t-s)} (P_\theta^\perp - P_\beta^\perp) dW_s dt \\
    &= \frac{1}{T}\int_0^T (P_\theta^\perp - P_\beta^\perp) \int_0^{T-s} e^{-\frac{d-1}{2}t} dt dW_s \\
    &= \frac{1}{T}\int_0^T (P_\theta^\perp - P_\beta^\perp) \cdot \frac{2}{d-1} \qty(1 - e^{-\frac{d-1}{2}(T-s)}) dW_s
    \end{align*}

    Note that $\|P_\theta^\perp - P_\beta^\perp\|_F \lesssim \sup \|E\| \lesssim \frac{\epsilon}{d}$. Therefore, by \Cref{lemma:f norm martingale norm bound}, we have that with probability $1-e^{-d}$,
    \begin{align*}
        \left\| \frac{M_T}{T} \right\| \lesssim \frac{\epsilon}{\sqrt{T d^3}}
    \end{align*}

    For the first term in \Cref{prop: Et decomp odd}, we have 
\begin{align*}
    \frac{1}{T} \int_0^T \int_0^t e^{-\frac{d-1}{2}(t-s)} \epsilon b(\theta_s) ds dt
    &= \frac{1}{T}\int_0^T \epsilon b(\theta_s) \int_0^{T-s} e^{-\frac{d-1}{2}t} dt ds \\
    &= \frac{1}{T}\int_0^T \epsilon b(\theta_s) \cdot \frac{2}{d-1} \qty(1 - e^{-\frac{d-1}{2}(T-s)}) ds \\
    &= \frac{1}{T}\int_0^T \epsilon b(\theta_s)\cdot\frac{2}{d-1} ds - \frac{1}{T}\int_0^T \epsilon b(\theta_s) \cdot \frac{2}{d-1} e^{-\frac{d-1}{2}(T-s)} ds
\end{align*}

We analyze these two terms separately. For the second term, note that:
\begin{align*}
    \left\| \frac{1}{T}\int_0^T \epsilon b(\theta_s) \cdot \frac{2}{d-1} e^{-\frac{d-1}{2}(T-s)} ds \right\| 
    \lesssim \frac{\epsilon \sup \|b(\theta)\|}{Td}  \int_0^T  e^{-\frac{d-1}{2}(T-s)} ds 
    \lesssim \frac{\epsilon \sup \|b(\theta)\|}{Td^2}
\end{align*}

For the first term, we decompose it as follows to isolate the Brownian motion:
\begin{align*}
    \frac{1}{T}\int_0^T \epsilon b(\theta_s)\cdot\frac{2}{d-1} ds = \frac{2}{T(d-1)}\int_0^T \epsilon b(\beta_s) ds + \frac{2}{T(d-1)} \int_0^T \epsilon (b(\theta_s) - b(\beta_s)) ds
\end{align*}
Once again, the second term can be bounded by the Lipschitz constant of $b$:
\begin{align*}
    \left\| \frac{2}{T(d-1)} \int_0^T \epsilon (b(\theta_s) - b(\beta_s)) ds \right\| &\leq \frac{2\epsilon \sup \|\nabla b\|_2}{T(d-1)}\int_0^T \|\theta_s - \beta_s\| ds \\
    &\lesssim \frac{2\epsilon \sup \|\nabla b\|_2}{(d-1)} \qty[\frac{\epsilon \sup \|b\| }{d}]
\end{align*}

The remaining term is the main term $\frac{2\epsilon}{d-1}\frac{1}{T}\int_0^T b(\beta_s) ds$, which we proved concentration around the stationary average for in \Cref{lemma: ergo conc b}. Therefore, the time average of $E$ satisfies via triangle inequality:
\begin{align*}
    &\left\| \frac{1}{T}\int_0^T E_s ds - \frac{2\epsilon}{d-1} \bar b \right\|\\
    &\lesssim \left\|\frac{1}{T}\int_0^T \frac{2\epsilon}{d-1}(b(\beta) - \bar b) ds \right\| + \frac{\epsilon}{\sqrt{T d^3}} + \frac{\epsilon \sup\|b\|}{T d^2} + \frac{2\epsilon^2 \sup \|\nabla b\|_2 \sup\|b\|}{d^2} \\
    &\lesssim \frac{2\epsilon}{d-1} \frac{\sup\|\nabla b\|_2}{\sqrt{Td}} +  \frac{\epsilon}{\sqrt{T d^3}} + \frac{\epsilon \sup\|b\|}{T d^2} + \frac{2\epsilon^2 \sup \|\nabla b\|_2 \sup\|b\|}{d^2} \lesssim\frac{\epsilon}{\sqrt{Td^3}} + \frac{\epsilon^2}{d^2}
\end{align*}

Combining our results with \Cref{lemma: ergo conc beta} using triangle inequality, we obtain with probability at least $1 - e^{-d}$: \begin{align*}
    \left \| \frac{1}{T}\int_0^T \theta_s ds - \frac{2\epsilon}{d-1} \bar{b} \right\|
    &= \left \| \frac{1}{T}\int_0^T (\beta_s + E_s) ds - \frac{2\epsilon}{d-1} \bar{b} \right\| \\ 
    &\leq \left \| \frac{1}{T}\int_0^T \beta_s ds\right\| + \left \| \frac{1}{T}\int_0^T E_s ds - \frac{2\epsilon}{d-1} \bar{b} \right\| \\
    &\lesssim \frac{1}{\sqrt{Td}} + \frac{\epsilon}{\sqrt{Td^3}} + \frac{\epsilon^2}{d^2}
\end{align*}

Let $u := \frac{2\epsilon}{d-1} \bar b$ and $v := \frac{1}{T}\int_0^T\theta_t dt$. Then, in our regime of $T$ and $\epsilon$, the total error is bounded as:
\begin{align*}
    \left\| u - v \right\| \lesssim  \frac{1}{\sqrt{Td}} + \frac{\epsilon}{\sqrt{T d^3}} +  \frac{\epsilon^2}{d^2} \ll \frac{2\epsilon}{d-1} \cdot d^{-(k^\star-1)/2} 
\end{align*}

By our lemma, we have that with probability $1-e^{-d^c}$:
\begin{align*}
    \left\|\bar b - \E_x[\bar b] \right\| \lesssim \Delta d^{-(k^\star-1)/2}
\end{align*}

We wish to analyze $\frac{v\cdot \theta^\star}{\|v\|}$, which we calculate via triangle inequality as:
\begin{align*}
    \frac{v\cdot \theta^\star}{\|v\|} &\geq \frac{\frac{2\epsilon}{d-1} \E_x[\bar b] \cdot \theta^\star - \left\|\frac{2\epsilon}{d-1} \bar b - \frac{2\epsilon}{d-1} \E_x[\bar b] \right\| - \left\|v - \frac{2\epsilon}{d-1}\bar b \right\|}{ \left\| \frac{2\epsilon}{d-1} \E_x[\bar b]  \right\| + \left\| \frac{2\epsilon}{d-1} \bar b - \frac{2\epsilon}{d-1} \E_x[\bar b]  \right\| + \left\| v - \frac{2\epsilon}{d-1} \bar b\right\|} \\
    &\geq \frac{\frac{2\epsilon}{d-1} (1-\Delta)}{ \frac{2\epsilon}{d-1} (1+\Delta)} \\
    &\geq 1-\Delta
\end{align*}
as desired.
\end{proof}

\section{Proof of the Even $k^\star$ Case}

\begin{lemma}[Ergodic concentration of $\beta \beta^\top$] \label{lemma: ergo conc beta beta T}
    Suppose $T\gtrsim d^{-2}$. With probability at least $1-e^{-d}$, it holds that: \begin{align*}
        \left\| \frac{1}{T}\int_0^T \beta_s\beta_s^\top ds - \frac{I}{d} \right\|_F \lesssim\frac{1}{\sqrt{T}}
    \end{align*}
\end{lemma}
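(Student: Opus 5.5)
We apply \Cref{cor:ergodic high prob} to the matrix-valued function $f(\beta) = \beta\beta^\top - \frac{I}{d}$, viewed as a map $\mathbb{R}^d \to \mathbb{R}^{m}$ with $m = d^2$ and the Euclidean norm on $\mathbb{R}^m$ equal to the Frobenius norm.

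\textbf{Centering.} By rotational invariance of $\mu$ we have $\E_{z\sim\mu}[zz^\top] = \frac{I}{d}$, so $\int_{S^{d-1}} f\,d\mu = 0$. Also $f\in L^2(\mu)$ since it is bounded on the sphere. Hence \Cref{lemma:ergodic decomp restated} and its consequences apply.

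\textbf{Gradient bound.} We need $\sup\|\nabla f\|_2$, the operator norm of the Jacobian, i.e.\ the Lipschitz constant from $(\mathbb{R}^d,\|\cdot\|)$ to Frobenius norm. The differential in direction $v$ is $\beta v^\top + v\beta^\top$, so
\[
\|\beta v^\top + v\beta^\top\|_F \le 2\|\beta\|\|v\| = 2\|v\|
\]
on the sphere. Thus $\sup\|\nabla f\|_2 \le 2$. Restricting to tangent directions only helps, and the curvature-based gradient bound used in \Cref{lemma:linear term ergodic decomp} is unaffected.

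\textbf{Plugging in.} \Cref{cor:ergodic high prob} with $m=d^2$ gives, with probability $1-e^{-d^2}\ge 1-e^{-d}$,
\[
\left\|\frac{1}{T}\int_0^T \beta_s\beta_s^\top ds - \frac{I}{d}\right\|_F \lesssim \frac{1}{Td} + \sqrt{\frac{d^2}{Td^2}} = \frac{1}{Td} + \frac{1}{\sqrt{T}}.
\]
The assumption $T\gtrsim d^{-2}$ gives $\frac{1}{Td} = \frac{1}{\sqrt{T}}\cdot\frac{1}{d\sqrt{T}} \lesssim \frac{1}{\sqrt{T}}$, so the boundary term is dominated and the claim follows.

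\textbf{Main subtlety.} The only real point to check is the martingale step, \Cref{lemma:martingale ergodic decomp}. There the dimension factor $m$ enters because only operator-norm control of $\nabla\phi^\top P^\perp_\beta$ is available, and \Cref{lemma:op norm martingale norm bound} is invoked for an $\mathbb{R}^{d^2}$-valued martingale. I would verify that the $\sqrt{m}$ loss is exactly what yields $1/\sqrt{T}$ rather than the naive $1/\sqrt{Td}$, and that the Bakry--\'Emery gradient contraction $\|\nabla P_t f\|\le e^{-(d-2)t}\|\nabla f\|$ holds componentwise for vector-valued $f$ in the operator-norm sense used. If the operator-norm version of the contraction is unclear, I would fall back to a direct argument. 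Since $\beta\beta^\top - I/d$ lies in the degree-2 spherical harmonics, the Laplacian acts on it with eigenvalue $-d$, so $\mathcal{L}f=-d f$ and therefore $\phi = f/d$ explicitly. This gives $\nabla\phi$ with norm at most $2/d$ directly and avoids curvature arguments altogether.
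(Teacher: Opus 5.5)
Your proposal is correct and is exactly the paper's argument: apply \Cref{cor:ergodic high prob} to $f(\beta)=\beta\beta^\top - I/d$ flattened into $\mathbb{R}^{d^2}$. Your $\sup\|\nabla f\|_2\le 2$ and $m=d^2$ give the $1/\sqrt{T}$ term, and $T\gtrsim d^{-2}$ absorbs the $1/(Td)$ boundary term. Your spherical-harmonic fallback is also valid, with one wording fix: on degree-2 harmonics it is the generator $\mathcal{L}=\tfrac12\Delta_{S^{d-1}}$, not $\Delta_{S^{d-1}}$ itself, that has eigenvalue $-d$, so $\phi=f/d$ as you say.
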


\begin{proof}
    This follows directly from \Cref{cor:ergodic high prob}, setting $f(\beta) = \beta \beta^\top - \frac{I}{d}$, and flattening the matrix into a vector in $\mathbb{R}^{d^2}$.
\end{proof}

\begin{lemma}[Ergodic concentration of $\beta b(\beta)^\top + b(\beta) \beta^\top$] \label{lemma: ergo conc beta b}
    Suppose $T \gtrsim d^{-2}$. With probability at least $1-e^{-d}$, we have that:
    \begin{align*}
        \left\| \frac{1}{T}\int_0^T (\beta_s b(\beta_s)^\top + b(\beta_s) \beta_s^\top) ds -  \E_{z\sim \mu}[z b(z)^\top + b(z) z^\top] \right\|_F \lesssim\frac{\sup \|\nabla (\beta b(\beta)^\top)\|_2 + 1}{\sqrt{T}} \lesssim \frac{1}{\sqrt{T}}
    \end{align*}
\end{lemma}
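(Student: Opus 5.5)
The plan is to mirror \Cref{lemma: ergo conc beta beta T} and apply \Cref{cor:ergodic high prob} to
\begin{align*}
f(\beta) = \beta b(\beta)^\top + b(\beta)\beta^\top - \E_{z\sim\mu}[z b(z)^\top + b(z) z^\top],
\end{align*}
flattened into a vector in $\mathbb{R}^{m}$ with $m = d^2$. By construction $\int_{S^{d-1}} f\, d\mu = 0$, so \Cref{lemma:ergodic decomp restated} applies. The Frobenius norm of the matrix equals the Euclidean norm of its flattening, so \Cref{cor:ergodic high prob} gives, with probability $1-e^{-d^2}\ge 1-e^{-d}$,
\begin{align*}
\left\| \frac{1}{T}\int_0^T f(\beta_s)\, ds \right\|_F \lesssim \frac{L}{Td} + \sqrt{\frac{d^2 L^2}{T d^2}} = \frac{L}{Td} + \frac{L}{\sqrt{T}},
\end{align*}
where $L := \sup\|\nabla f\|_2$ is the Euclidean Lipschitz constant of the flattened map on the sphere. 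The hypothesis $T\gtrsim d^{-2}$ lets us absorb the first term into the second: $\frac{1}{Td} \le \frac{1}{\sqrt{T}}$ whenever $\sqrt{T} \ge d^{-1}$. This leaves a bound of order $L/\sqrt{T}$, which is the first inequality in the statement; the extra additive $1$ there simply records the contribution of the symmetrization.

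The remaining step, and the only substantive one, is to show $L\lesssim 1$. The plan is to use the product rule on the sphere. For points $\beta,\beta'$,
\begin{align*}
\|\beta b(\beta)^\top - \beta' b(\beta')^\top\|_F \le \|\beta-\beta'\|\,\|b(\beta)\| + \|b(\beta)-b(\beta')\|,
\end{align*}
since $\|\beta'\|=1$. Hence $L \lesssim \sup\|b\| + \sup\|\nabla b\|_2$, and the transposed term is handled identically. Both quantities are $O(1)$: in the proofs of \Cref{lemma: ergo conc b} and \Cref{lemma: E sup bound main} the paper already uses $b$ as $O(1)$-Lipschitz and $\sup\|b\|$ as controlled.

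If these facts need justification, I would argue as follows, working on the high-probability event over the data:
\begin{itemize}
\item[(i)] $b(\theta) = \frac{1}{n}P_\theta^\perp \sum_i y_i\sigma'(\theta\cdot x_i)x_i$. The Lipschitz assumption gives $|\sigma'|,|\sigma''|\le C$, and $\|y\|_\infty$ is controlled up to logarithmic factors.
\item[(ii)] The derivative of $b$ involves $\frac{1}{n}\sum_i y_i\sigma''(\theta\cdot x_i)x_i x_i^\top$ and the derivative of the projector. Its operator norm is bounded by $\|\frac1n\sum_i x_ix_i^\top\|_{op} \lesssim 1 + d/n = O(1)$ for $n\gtrsim d$, using concentration of the empirical covariance.
\item[(iii)] $\sup\|b\| \le \|\frac{1}{n}X^\top (y\odot\sigma'(X\theta))\| \le \|X/\sqrt n\|_{op}\,\|y\odot\sigma'\|/\sqrt n = O(1)$.
\end{itemize}

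I expect the main obstacle to be this uniform $O(1)$ control of $b$ and $\nabla b$ over the whole sphere, since everything else is a direct invocation of \Cref{cor:ergodic high prob}. I would handle it via the operator-norm route above, which avoids the naive $\sqrt d$ loss from bounding $\|x_i\|$ individually. Finally, I would combine the two terms of the symmetrized function by the triangle inequality and union bound the data event with the ergodic event to obtain the stated probability.
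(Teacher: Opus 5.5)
Your proposal is correct and matches the paper's proof, which is exactly the one-line application of \Cref{cor:ergodic high prob} to the centered $f$ flattened into $\mathbb{R}^{d^2}$; as you do, it uses $T\gtrsim d^{-2}$ to absorb the $1/(Td)$ term and takes $\sup\|b\|,\ \sup\|\nabla b\|_2=O(1)$ from the Lipschitzness lemmas, and your product-rule reduction just makes that last step explicit. One small slip: in (ii), bounding $\frac{1}{n}\sum_i y_i\sigma''(\theta\cdot x_i)x_ix_i^\top$ by $\|\frac{1}{n}\sum_i x_ix_i^\top\|_{op}$ silently drops the $\|y\|_\infty\asymp\sqrt{\log n}$ factor you flagged in (i), so to get a genuine $O(1)$ you should instead concentrate $\frac{1}{n}\sum_i |y_i|x_ix_i^\top$ directly (the paper is equally loose on this point).
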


\begin{proof}
    This follows directly from \Cref{cor:ergodic high prob}, setting $f(\beta) = \beta b(\beta)^\top + b(\beta) \beta^\top - \E_{z\sim \mu}\qty[z b(z)^\top + b(z) z^\top]$, and flattening the matrix into a vector in $\mathbb{R}^{d^2}$.
\end{proof}

\begin{lemma} \label{lemma: even main term}
With probability $1-e^{-d^c}$, it holds that:
\begin{align*}
        \left\| \frac{1}{T}\int_0^T (E_s b(\theta_s)^\top + b(\theta_s) E_s^\top) ds  - \frac{\epsilon}{d}\E_{z\sim \mu}\qty[z b(z)^\top + b(z) z^\top] \right\|_F
        &\lesssim \frac{\epsilon}{d\sqrt{T}} + \frac{\epsilon^2}{d^2}
    \end{align*}
\end{lemma}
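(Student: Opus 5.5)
The plan is to substitute the explicit representation of $E_s$ from \Cref{prop: Et decomp odd} into the time average, isolate a main drift contribution, and bound everything else using the uniform bound of \Cref{lemma: E sup bound main} and the ergodic lemmas. Write $E_s = D_s + N_s$, with drift part $D_s := \epsilon\int_0^s e^{-\frac{d-1}{2}(s-r)} b(\theta_r)\,dr$ and noise part $N_s := \int_0^s e^{-\frac{d-1}{2}(s-r)}(P_{\theta_r}^\perp - P_{\beta_r}^\perp)\,dW_r$. The target average then splits as
\begin{align*}
\frac1T\int_0^T (D_s b(\theta_s)^\top + b(\theta_s) D_s^\top)\,ds + \frac1T\int_0^T (N_s b(\theta_s)^\top + b(\theta_s) N_s^\top)\,ds .
\end{align*}

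\textbf{Drift part.} The kernel $e^{-\frac{d-1}{2}(s-r)}$ has mass $\frac{2}{d-1}$ and memory of order $1/d$. I will therefore replace $D_s$ by $\frac{2\epsilon}{d-1}b(\theta_s)$, up to two errors.
\begin{itemize}
\item A boundary error on $[0,O(1/d)]$, which contributes $O(\epsilon\sup\|b\|^2/(Td^2))$ after averaging.
\item A local-variation error controlled by $\|b(\theta_r)-b(\theta_s)\|\le \sup\|\nabla b\|_2\,\|\theta_r-\theta_s\|$. Over a window of length $O(1/d)$ the sphere process moves $O(1)$ in Euclidean norm, so this step must instead be run through the coupling: replace $\theta$ by $\beta$ everywhere at cost $\sup\|E\|\lesssim \epsilon/d$ per occurrence, which contributes $\epsilon^2/d^2$.
\end{itemize}
After the $\theta\to\beta$ replacement, the drift contribution is an ergodic average over the Brownian motion of a function $g(\beta_\cdot)$ of the path built from $b$. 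Its stationary mean is $\frac{\epsilon}{d}$ times a stationary expectation, and I would identify that expectation with $\E_{z\sim\mu}[zb(z)^\top + b(z)z^\top]$ as the statement asserts. The concentration then comes from \Cref{cor:ergodic high prob}, applied exactly as in \Cref{lemma: ergo conc beta b} to the flattened $d^2$-dimensional function with $O(1)$ Lipschitz constant. This yields the $\frac{\epsilon}{d\sqrt T}$ term with probability $1-e^{-d}$.

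\textbf{Noise part.} Swap the order of integration as in the proof of \Cref{thm: main odd appendix}. Up to an $\epsilon^2/d^2$ error from replacing $b(\theta_s)$ by the frozen value $b(\theta_r)$ through the coupling, this turns the noise average into a matrix-valued stochastic integral. The integrand has Frobenius norm $\lesssim \frac{1}{d}\|P_\theta^\perp-P_\beta^\perp\|_F\sup\|b\|\lesssim \epsilon/d^2$, using $\|P_\theta^\perp-P_\beta^\perp\|_F\lesssim\|E\|$ on the event of \Cref{lemma: E sup bound main}. \Cref{lemma:f norm martingale norm bound} then gives a bound $\lesssim \epsilon/(d\sqrt{T})$ with probability $1-e^{-d}$. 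A union bound over these events and the high-probability event of \Cref{lemma: E sup bound main} gives the claimed $1-e^{-d^c}$.

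\textbf{Main obstacle.} I expect the hard step to be the identification of the stationary limit of the drift part with exactly $\frac{\epsilon}{d}\E_\mu[zb^\top + bz^\top]$. The naive reduction $D_s\approx \frac{2\epsilon}{d}b(\theta_s)$ produces $\frac{4\epsilon}{d}\E_\mu[bb^\top]$ instead. So I would first check whether the memory kernel paired with the Brownian displacement over time $O(1/d)$ converts one factor of $b$ into $z$, for example via an integration by parts on the sphere using $\mathcal L = \frac12\Delta_{S^{d-1}}$ and $-\mathcal{L}\phi=f$ as in \Cref{lemma:ergodic decomp restated}. If that conversion fails, the statement most likely intends $E_s$ paired with $\beta_s$, and the same three-step scheme (drift reduction, ergodic concentration, martingale bound) applies verbatim with that limit.
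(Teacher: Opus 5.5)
\textbf{There is a genuine gap.} You are right that the printed pairing is not what the paper proves. With $b(\theta_s)$ in place of $\beta_s$, the drift part produces essentially the time-correlation $\epsilon\int_0^\infty e^{-\frac{d-1}{2}u}\,\E[b(\beta_0)b(\beta_u)^\top]\,du$ (with $\beta_0\sim\mu$), which is not of the form $\frac{\epsilon}{d}\E_\mu[zb^\top+bz^\top]$ in general. The paper's proof, and its use in the even-$k^\star$ theorem, concerns $\frac1T\int_0^T(E_s\beta_s^\top+\beta_sE_s^\top)\,ds$. However, your scheme does not prove either version.

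\textbf{Why the drift reduction fails.} The step $D_s\approx\frac{2\epsilon}{d-1}b(\theta_s)$ is invalid. The kernel's memory $\sim 1/d$ is exactly the decorrelation time of spherical Brownian motion, $\E[\beta_s\mid\mathcal F_r]=e^{-\frac{d-1}{2}(s-r)}\beta_r$. The coupling does not help: \Cref{lemma: E sup bound main} controls $\theta_s-\beta_s$ at a common time, not $\beta_s-\beta_r$. After your $\theta\to\beta$ swap you still have a path functional, and \Cref{cor:ergodic high prob}, which is a statement about $f(\beta_t)$, does not apply to it.

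\textbf{Why the noise step fails.} Freezing $b(\theta_s)$ at $b(\theta_r)$ fails for the same reason. After Fubini the integrand is therefore not adapted, and \Cref{lemma:f norm martingale norm bound} cannot be used.

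\textbf{Why the fallback fails.} Running the same scheme ``verbatim with $\beta_s$'' does not work either, for two reasons.
\begin{itemize}
\item The memoryless replacement gives the coefficient $\frac{2\epsilon}{d-1}$ instead of $\frac{\epsilon}{d}$, an error of the same order as the main term. Pairing with $\beta_s$ doubles the decay rate, so the correct weight is $\int_0^\infty e^{-(d-1)u}du=\frac1{d-1}$.
\item $N_s$ is not negligible against $\beta_s$. Both are driven by the same $W$, and their covariation $(P_\theta^\perp-P_\beta^\perp)P_\beta^\perp=-\beta E^\top-EE^\top+(E\cdot\beta)\theta\beta^\top$ is a genuine drift, not a martingale.
\end{itemize}

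\textbf{The missing idea} is to apply It\^o's formula to the product rather than to $E$ alone. For $G:=E\beta^\top+\beta E^\top$ this gives a closed linear SDE,
$dG=\bigl(-d\cdot G+\epsilon(b(\theta)\beta^\top+\beta b(\theta)^\top)+\Xi\bigr)dt+dM_t$.
\begin{itemize}
\item The symmetrized covariation term equals $-G+\Xi$, which turns the rate $d-1$ into $d$.
\item $\|\Xi\|_F\lesssim\|E\|^2\lesssim\epsilon^2/d^2$, using \Cref{lemma: funny sphere identity}.
\item $dM_t$ consists of adapted stochastic integrals that are linear in $E_t$ or in $P_{\theta_t}^\perp-P_{\beta_t}^\perp$.
\end{itemize}
Because the forcing now sits at the same time as $G$, variation of constants and Fubini give
$\frac1T\int_0^T G\,ds=\frac{\epsilon}{d}\cdot\frac1T\int_0^T(b(\theta_s)\beta_s^\top+\beta_sb(\theta_s)^\top)\,ds$,
up to $O(\epsilon/(Td)+\epsilon^2/d^2)$ and a martingale of order $\epsilon/\sqrt{Td^3}$. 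The same-time replacement $b(\theta_s)\to b(\beta_s)$ costs only $\epsilon^2/d^2$, by Lipschitzness and the uniform bound on $E$. Finally, \Cref{lemma: ergo conc beta b}, applied to the genuine state function $\beta b(\beta)^\top+b(\beta)\beta^\top$, supplies the $\frac{\epsilon}{d\sqrt T}$ term.

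Your instinct to use the generator points the right way: the product SDE closes precisely because coordinate functions are eigenfunctions of $\mathcal L$.
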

\begin{proof}
    Recall the SDE's for $E$ and $\beta$:
    \begin{align*}
        d\beta &= -\frac{d-1}{2}\beta dt + P_\beta^\perp dW_t \\
            dE &= \qty(-\frac{d-1}{2}E + \epsilon b(\theta)) dt + \qty(P_\theta^\perp - P_\beta^\perp) dW_t
    \end{align*}
    By Itô's lemma, we calculate the SDE for $E\beta^\top$ as:
    \begin{align*}
        d(E\beta^\top) = \qty(-(d-1) E\beta^\top + \epsilon b(\theta) \beta^\top + (P_\theta^\perp - P_\beta^\perp)P_\beta^\perp)  dt + (P_\theta^\perp -P_\beta^\perp) dW_t \beta^\top + E dW_t^\top P_\beta^\perp
    \end{align*}
    The SDE of $\beta E^\top$ is just the transpose of the above, so we have:
    \begin{align*}
        d(E\beta^\top) = \qty(-(d-1) \beta E^\top + \epsilon \beta b(\theta) ^\top + P_\beta^\perp (P_\theta^\perp - P_\beta^\perp))  dt + \beta dW_t^\top (P_\theta^\perp -P_\beta^\perp) + P_\beta^\perp dW_t E^\top
    \end{align*}
    Let $G:= E\beta^\top + \beta E^\top$. Then the SDE for $G$ is:
    \begin{align*}
        d(G) &= \qty(-(d-1)G + \epsilon(b(\theta) \beta^\top + \beta b(\theta)^\top) + \qty[(P_\theta^\perp - P_\beta^\perp)P_\beta^\perp + P_\beta^\perp (P_\theta^\perp - P_\beta^\perp)]) dt  \\
        &+ (P_\theta^\perp -P_\beta^\perp) dW_t \beta^\top + E dW_t^\top P_\beta^\perp + \beta dW_t^\top (P_\theta^\perp -P_\beta^\perp) + P_\beta^\perp dW_t E^\top
    \end{align*}
    where the first line is the drift term, and the second line is the noise term. Moreover, we can further simplify the final term in the drift:
    \begin{align*}
        &(P_\theta^\perp - P_\beta^\perp)P_\beta^\perp + P_\beta^\perp (P_\theta^\perp - P_\beta^\perp) \\
        &= (-\beta E^\top - EE^\top + (E^\top \beta)(\beta\beta^\top + E\beta^\top)) + (-E \beta^\top - EE^\top + (E^\top \beta)(\beta\beta^\top + \beta E^\top)) \\
        &= -(\beta E^\top + E\beta^\top) + \Xi
    \end{align*}
    where $\Xi$ is the remainder term satisfying $\|\Xi\|_F \lesssim \|E\|^2 \lesssim \epsilon^2/d^2$. The last line follows from \Cref{lemma: funny sphere identity} for simplification. Our SDE for $G$ can therefore be rewritten as:
    \begin{align*}
        dG &= \qty(-d G + \epsilon(b(\theta) \beta^\top + \beta b(\theta)^\top) + \Xi) dt  \\
        &+ (P_\theta^\perp -P_\beta^\perp) dW_t \beta^\top + E dW_t^\top P_\beta^\perp + \beta dW_t^\top (P_\theta^\perp -P_\beta^\perp) + P_\beta^\perp dW_t E^\top
    \end{align*}

    This implies that:
    \begin{align*}
        G(t)  &= \int_0^t e^{-d(t-s)} \qty(\epsilon (b(\theta_s)\beta_s^\top + \beta_s b(\theta_s)^\top) + \Xi_s) ds \\
        &+ \int_0^t e^{-d(t-s)} \qty[(P_\theta^\perp -P_\beta^\perp) dW_t \beta^\top + E dW_t^\top P_\beta^\perp + \beta dW_t^\top (P_\theta^\perp -P_\beta^\perp) + P_\beta^\perp dW_t E^\top] 
    \end{align*}

    We first analyze the time average of the second term, which is the noise term. Intuitively, the time average of it should concentrate around 0 as time increases. 
    \begin{align*}
        &\frac{1}{T}\int_0^T \int_0^t e^{-d(t-s)} \qty[(P_\theta^\perp -P_\beta^\perp) dW_s \beta_s^\top + E dW_s^\top P_\beta^\perp + \beta dW_s^\top (P_\theta^\perp -P_\beta^\perp) + P_\beta^\perp dW_s E^\top]  dt \\
        &= \frac{1}{T}\int_0^T (P_\theta^\perp -P_\beta^\perp) \int_0^{T-s}e^{-dt} dt dW_s \beta_s^\top 
        + \frac{1}{T}\int_0^T E \int_0^{T-s}e^{-dt} dt dW_s^\top P_\beta^\perp  \\
        &+ \frac{1}{T}\int_0^T \beta \int_0^{T-s}e^{-dt} dt dW_s (P_\theta^\perp -P_\beta^\perp)  
        + \frac{1}{T}\int_0^T P_\beta^\perp \int_0^{T-s}e^{-dt} dt dW_s E^\top  \\
        &=  \frac{1}{T}\int_0^T (P_\theta^\perp -P_\beta^\perp) \qty(\frac{1}{d}(1 - e^{-d(T-s)})) dW_s \beta_s^\top 
        + \frac{1}{T}\int_0^T E \qty(\frac{1}{d}(1 - e^{-d(T-s)})) dW_s^\top P_\beta^\perp  \\
        &+ \frac{1}{T}\int_0^T \beta \qty(\frac{1}{d}(1 - e^{-d(T-s)})) dW_s (P_\theta^\perp -P_\beta^\perp)  
        + \frac{1}{T}\int_0^T P_\beta^\perp \qty(\frac{1}{d}(1 - e^{-d(T-s)})) dW_s E^\top  \\
    \end{align*}
    It now suffices to bound the Frobenius norm of the time average of the top two terms of the last expression (since the latter two terms are just transposes). We again observe that $\|P_\theta^\perp - P_\beta^\perp\|_F \lesssim \sup \|E\| \lesssim \frac{\epsilon}{d}$. For the first term, we have that:
    \begin{align*}
        &\E\qty[\left\| \frac{1}{T}\int_0^T (P_\theta^\perp -P_\beta^\perp) \qty(\frac{1}{d}(1 - e^{-d(T-s)})) dW_s \beta_s^\top  \right\|_F^2] \\
        &\lesssim \frac{1}{T^2} \int_0^T \E\qty[\qty(\frac{1}{d}(1 - e^{-d(T-s)}))^2 \|P_\theta^\perp -P_\beta^\perp\|_F^2] ds \\
        &\lesssim \frac{1}{d^2 T} \sup\limits_{t\leq T} \|E_t\|^2 \\
        &\lesssim \frac{\epsilon^2}{d^4 T}
    \end{align*}
    where the second to last inequality follows from \Cref{lemma: trace of perp diff}.
    
    For the second term in the time average of the noise component, we have:
    \begin{align*}
        &\E\qty[\left\| \frac{1}{T}\int_0^T E \qty(\frac{1}{d}(1 - e^{-d(T-s)})) dW_s^\top P_\beta^\perp  \right\|_F^2] \\
        &\leq \frac{1}{T^2} \int_0^T \E\qty[\qty(\frac{1}{d}(1 - e^{-d(T-s)}))^2 \|E\|_F^2] \\
        &\lesssim \frac{\epsilon^2}{d^4 T}
    \end{align*}

    Combining all four noise terms together using \Cref{lemma:f norm martingale norm bound} and triangle inequality, we have that with probability $1-e^{-d}$, 
    \begin{align*}
        \left\|\frac{1}{T}\int_0^T \int_0^t e^{-d(t-s)} \qty[(P_\theta^\perp -P_\beta^\perp) dW_s \beta_s^\top + E dW_s^\top P_\beta^\perp + \beta dW_s^\top (P_\theta^\perp -P_\beta^\perp) + P_\beta^\perp dW_s E^\top]  dt\right\|_F \lesssim \frac{\epsilon}{\sqrt{T d^3}}
    \end{align*}

    We now analyze the drift term of $G.$ First, to isolate the Brownian motion, we once again do another decomposition:
    \begin{align*}
        &\int_0^t e^{-d(t-s)} \qty(\epsilon (b(\theta_s)\beta_s^\top + \beta_s b(\theta_s)^\top) + \Xi_s) ds \\
        &= \int_0^t e^{-d(t-s)} \qty(\epsilon ((b(\beta_s) + v) \beta_s^\top + \beta_s (b(\beta_s) + v)^\top) + \Xi_s) ds  \\
        &= \int_0^t e^{-d(t-s)}\epsilon (b(\beta_s) \beta_s^\top + \beta_s b(\beta_s)^\top) ds + \int_0^t e^{-d(t-s)} \qty(\epsilon (v \beta_s^\top + \beta_s v^\top) + \Xi_s) ds 
    \end{align*}
    where here we define $v := b(\theta)-b(\beta)$, which by Lipschitzness has norm bounded by $O(\|E\|) \lesssim \frac{\epsilon}{d}$. Hence, for all $t\leq T$, this second term satisfies: \begin{align*}
        \left\|\int_0^t e^{-d(t-s)} \qty(\epsilon (v \beta_s^\top + \beta_s v^\top) + \Xi_s) ds   \right\|_F \leq \frac{1}{d}\sup \limits_{s\leq t} \|\epsilon (v \beta_s^\top + \beta_s v^\top) + \Xi_s\|_F  \lesssim \epsilon^2/d^2
    \end{align*}
    which means the time average over this component also has Frobenius norm $O(\epsilon^2/d^2)$. For the time average of the first term, we have the following:
    \begin{align*}
        &\frac{1}{T}\int_0^T \int_0^t e^{-d(t-s)}\epsilon (b(\beta_s) \beta_s^\top + \beta_s b(\beta_s)^\top) ds \\
        &= \frac{1}{T} \int_0^T \qty(\frac{1}{d}(1 - e^{-d(T-s)}) ) \epsilon (b(\beta_s) \beta_s^\top + \beta_s b(\beta_s)^\top)  ds \\
        &= \frac{1}{T}\int_0^T \frac{1}{d} \epsilon (b(\beta_s) \beta_s^\top + \beta_s b(\beta_s)^\top)  ds - \frac{1}{T}\int_0^T \frac{1}{d}e^{-d(T-s)} \epsilon (b(\beta_s) \beta_s^\top + \beta_s b(\beta_s)^\top)  ds
    \end{align*}

    For the second term, we can bound this in Frobenius norm by:
    \begin{align*}
        \left\| \frac{1}{T}\int_0^T \frac{1}{d}e^{-d(T-s)} \epsilon (b(\beta_s) \beta_s^\top + \beta_s b(\beta_s)^\top)  ds \right\|_F \leq \frac{\epsilon}{Td} \sup \|b(\beta) \beta^\top + \beta b(\beta)^\top\|_F \lesssim \frac{\epsilon}{Td}
    \end{align*}

    Finally, for the first term, we have shown concentration to $\frac{\epsilon}{d} \E_{z\sim S^{d-1}} [b(z) z^\top + z b(z)^\top]$ in the previous lemma. Combining everything through triangle inequality, we have:
    \begin{align*}
        \left\| \frac{1}{T}\int_0^T G(s) ds  - \frac{\epsilon}{d}\E_{z\sim \mu}\qty[z b(z)^\top + b(z) z^\top] \right\|_F &\lesssim \frac{\epsilon}{d} \left\| \frac{1}{T}\int_0^T \beta_s b(\beta_s) + b(\beta_s) \beta_s^\top ds - \E_{z\sim \mu}\qty[z b(z)^\top + b(z) z^\top] \right\|_F \\
        &+\frac{\epsilon}{Td} + \frac{\epsilon^2}{d^2} + \frac{\epsilon}{\sqrt{T d^3}} \\
        &\lesssim \frac{\epsilon}{d \sqrt{T}} +\frac{\epsilon}{Td} + \frac{\epsilon^2}{d^2} + \frac{\epsilon}{\sqrt{T d^3}} \\
        &\lesssim \frac{\epsilon}{d\sqrt{T}} + \frac{\epsilon^2}{d^2}
    \end{align*}
    and the result follows.
\end{proof}

\begin{theorem}[\Cref{thm:main even}, restated] \label{thm: main even restated}
    Let $\epsilon=o(d^{-(k^\star-2)/2})$, and let $T \gtrsim d^{k^\star+2} / \epsilon^2$. Then, for $\Delta > 0$, if $n\gtrsim d^{k^\star/2}/\Delta^2$, the algorithm succeeds in recovering $\theta^\star$ up to error $\Delta$ with probability at least $1-e^{-d^c}$.
\end{theorem}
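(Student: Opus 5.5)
The plan is to expand $\hat M = \frac{1}{T}\int_0^T \theta_t\theta_t^\top dt$ with $\theta_t = \beta_t + E_t$:
\begin{align*}
\hat M = \frac{1}{T}\int_0^T \beta_s\beta_s^\top ds + \frac{1}{T}\int_0^T G(s)\, ds + \frac{1}{T}\int_0^T E_sE_s^\top ds, \qquad G = E\beta^\top + \beta E^\top ,
\end{align*}
and to control each piece with the results already established. All statements hold with probability at least $1-e^{-d^c}$ after a union bound.

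\textbf{The three pieces.}
\begin{itemize}
\item By \Cref{lemma: ergo conc beta beta T}, the first term is $\frac{I}{d}$ up to Frobenius error $O(T^{-1/2})$.
\item By the drift and noise analysis in the proof of \Cref{lemma: even main term}, the time average of $G$ is $\frac{\epsilon}{d}\E_{z\sim\mu}[zb(z)^\top + b(z)z^\top]$ up to $O(\frac{\epsilon}{d\sqrt T} + \frac{\epsilon^2}{d^2})$.
\item By \Cref{lemma: E sup bound main}, $\sup_t\|E_t\| \lesssim \epsilon/d$, so the last term has operator norm $O(\epsilon^2/d^2)$.
\end{itemize}
Hence
\begin{align*}
\Big\|\hat M - \tfrac{I}{d} - \tfrac{\epsilon}{d} S\Big\|_{op} \lesssim \frac{1}{\sqrt T} + \frac{\epsilon^2}{d^2}, \qquad S := \E_{z\sim\mu}[zb(z)^\top + b(z)z^\top].
\end{align*}
The multiple $I/d$ does not change eigenvectors, so only the spectrum of $\frac{\epsilon}{d}S$ plus the error matters.

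\textbf{Computing $\E_x[S]$.} For the single-index case, $b(z) = \frac1n\sum_i y_i\sigma'(z\cdot x_i)P_z^\perp x_i$. Expanding the population version in Hermite/spherical harmonics, the leading part of $\E_z[z\, b(z)^\top]$ is the degree-$k^\star$ term $c_{k^\star}^2\E_z[(z\cdot\theta^\star)^{k^\star-1} z z^\top]$, symmetrized, plus an isotropic part. A standard moment computation on the sphere shows this equals $\alpha I + \lambda\,\theta^\star\theta^{\star\top}$ with
\begin{align*}
\lambda \asymp d^{-k^\star/2}.
\end{align*}
The even power $k^\star-2$ leaves a nonvanishing mixed moment, which is why the second-order estimator works when the first-order one vanishes. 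For tensor PCA the same computation applies with $\theta^{\star\otimes k}$ in place of the Hermite term. So $\frac{\epsilon}{d}\E_x[S]$ has a spike of size $\frac{\epsilon}{d}d^{-k^\star/2}$.

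\textbf{Main obstacle: the sampling fluctuation $S - \E_x[S]$ in operator norm.} It is the partial-trace-type matrix $\frac1n\sum_i y_i\,(\text{degree-}(k^\star-1)\text{ harmonic contraction of } x_i)$. I would bound it with a matrix Bernstein/Hermite-tensor argument as in \citet{damian2023smoothing}, viewing it as an average of $n$ independent matrices whose relevant tensor contraction has fluctuation $\sqrt{d^{k^\star/2}/n}\cdot d^{-k^\star/2}$. The target is
\begin{align*}
\|S-\E_x S\|_{op} \lesssim \Delta\, d^{-k^\star/2} \quad\text{when } n\gtrsim d^{k^\star/2}/\Delta^2 .
\end{align*}
The delicate part is getting operator norm rather than Frobenius here (a Frobenius bound loses a $\sqrt d$). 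I would first try truncating $y_i$ and $\sigma'$ using the boundedness assumption, then apply matrix Bernstein to the symmetrized sum.

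\textbf{Conclusion.} The choices $T \gtrsim d^{k^\star+2}/\epsilon^2$ and $\epsilon = o(d^{-(k^\star-2)/2})$ give
\begin{align*}
T^{-1/2} + \frac{\epsilon^2}{d^2} \ll \Delta\,\frac{\epsilon}{d}\, d^{-k^\star/2}.
\end{align*}
Write $\hat M = \frac{I}{d} + \alpha' I + \frac{\epsilon}{d}\lambda\,\theta^\star\theta^{\star\top} + R$ with $\|R\|_{op} \lesssim \Delta\frac{\epsilon}{d}\lambda$. Davis--Kahan then gives that the top eigenvector $v_1$ satisfies $\sin\angle(v_1,\theta^\star) \lesssim \Delta$, which is the claim.
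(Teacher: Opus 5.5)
Your outline matches the paper's proof. You use the same split of $\hat M$ into the $\beta\beta^\top$, $G=E\beta^\top+\beta E^\top$ and $EE^\top$ averages, and the same three inputs: \Cref{lemma: ergo conc beta beta T}, the $G$-analysis in the proof of \Cref{lemma: even main term}, and $\sup_t\|E_t\|\lesssim\epsilon/d$. The final Davis--Kahan step against an eigengap of order $\frac{\epsilon}{d}d^{-k^\star/2}$ is also the paper's. Two things need fixing: a slip in the population computation, and a quantitative loss in the concentration step.

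\textbf{The spike computation.} The matrix you name as the leading part of $\E_z[zb(z)^\top]$, namely $\E_z[(z\cdot\theta^\star)^{k^\star-1}zz^\top]$, is identically zero for even $k^\star$, because it is odd in $z$. Write $\E_x b(z)=a(z)P_z^\perp\theta^\star$ with $a(z)=\sum_k kc_k^2(z\cdot\theta^\star)^{k-1}$; for tensor PCA this is a single term with $c_k^2$ replaced by $1$. The correct computation is
\begin{align*}
\E_z\E_x[zb(z)^\top]=\sum_k kc_k^2\Big(\E_z[(z\cdot\theta^\star)^{k}]\,\theta^\star\theta^{\star\top}-\E_z[(z\cdot\theta^\star)^{k}zz^\top]\Big).
\end{align*}
So the spike comes from $z\theta^{\star\top}$, with $\lambda\asymp\E_z[(z\cdot\theta^\star)^{k^\star}]\asymp d^{-k^\star/2}$. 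The $P_z^\perp$ correction is $O(d^{-(k^\star+2)/2})$ and is what produces your isotropic part. The parity mechanism is this: $\E_z b(z)$ only sees the vanishing odd moments $\E_z[(z\cdot\theta^\star)^{k^\star\pm1}]$, while the extra factor of $z$ in $zb(z)^\top$ makes the relevant moment even. Your value of $\lambda$ is right; only the intermediate formula and the "even power $k^\star-2$" explanation are off.

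\textbf{The concentration step.} The step you flag as the main obstacle is exactly what the paper isolates in \Cref{lemma: tensor pca even concentration} and \Cref{lemma: SIM even concentration}. Matrix Bernstein is a lossier tool than the one used there.
\begin{itemize}
\item In your route, the main part of the $i$-th summand is $y_i g(x_i)\,x_ix_i^\top/\|x_i\|^2$ with $g(x)=\E_z[(z\cdot x)\sigma'(z\cdot x)]$ and $\E g^2\asymp d^{-(k^\star-2)/2}$. So the per-sample matrix variance is of order $d^{-k^\star/2}$, and matrix Bernstein gives $\sqrt{d^{-k^\star/2}\log(d/\delta)/n}$.
\item That costs an extra $\sqrt{\log d}$ at polynomial failure probability. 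At the claimed failure probability $e^{-d^c}$ it costs an extra $d^{c/2}$, so you would not get $n\asymp d^{k^\star/2}/\Delta^2$ as stated.
\item The paper instead bounds each quadratic form $v^\top(S-\E_xS)v$ separately. Its per-sample variance is only $d^{-(k^\star+2)/2}$, with $O(1/d)$-subexponential summands; for tensor PCA it is a Gaussian linear functional of $Z$.
\item It then union bounds over a $1/4$-net of size $9^d$. The $\log 9^d\asymp d$ cost exactly cancels the extra $1/d$, giving $\sqrt{d^{-k^\star/2}/n}$ with probability $1-e^{-d}$ and no logarithmic loss.
\item Since $S-\E_xS$ is symmetric, quadratic forms on the net control the operator norm. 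No truncation or matrix inequality is needed, and tensor PCA does not have to be recast as a sum of $n$ independent matrices.
\end{itemize}
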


\begin{proof}
    Recall that $\theta\theta^\top = \beta\beta^\top + E\beta^\top + \beta E^\top + EE^\top$. In the previous lemmas, we have analyzed each of these terms separately, and our goal is to prove ergodic concentration to $\frac{1}{d}I + \frac{\epsilon}{d}\E_{z\sim S^{d-1}} [z b(z)^\top + b(z) z^\top]$.
    \begin{align*}
        &\left\| \frac{1}{T}\int_0^T \theta_s \theta_s^\top ds  - \qty(\frac{1}{d}I + \frac{\epsilon}{d}\E_{z\sim S^{d-1}} [z b(z)^\top + b(z) z^\top]) \right\|_F \\
        &\leq \left\| \frac{1}{T}\int_0^T  \beta_s \beta_s^\top ds - \frac{I}{d} \right\|_F + \left\| \frac{1}{T}\int_0^T (E\beta^\top + \beta E^\top)ds - \frac{\epsilon}{d}\E_{z\sim S^{d-1}} [z b(z)^\top + b(z) z^\top]) \right\|_F + \left\| \frac{1}{T}\int_0^T EE^\top ds \right\|_F \\
        &\lesssim \frac{1}{\sqrt{T}} + \frac{\epsilon}{d \sqrt{T}} + \frac{\epsilon^2}{d^2}
 + \frac{\epsilon^2}{d^2} \\
 &\asymp \frac{1}{\sqrt{T}} + \frac{\epsilon}{d\sqrt{T}} + \frac{\epsilon^2}{d^2}
    \end{align*}    
    Consider the stationary average of $M_n:=\frac{1}{d}I + \frac{\epsilon}{d}\E_{z\sim S^{d-1}} [z b(z)^\top + b(z) z^\top]$. By \Cref{lemma: tensor pca even concentration}, with probability $1-e^{-d}$, it holds that:
    \begin{align*}
        \left\|\E_{z\sim S^{d-1}} [z b(z)^\top + b(z) z^\top] - \E_{z\sim S^{d-1}, x} [z b(z)^\top + b(z) z^\top] \right\|_2 \lesssim \sqrt{d^{-k^\star/2}/n}
    \end{align*}

    Therefore, we obtain via triangle inequality that:
    \begin{align*}
        \left\| \frac{1}{T}\int_0^T \theta_s \theta_s^\top ds -  \E_x[M_n] \right\|_2 &\leq \left\| \frac{1}{T}\int_0^T \theta_s \theta_s^\top ds -  M_n \right\|_2 + \left\|M_n -  \E_x[M_n] \right\|_2 \\
        &\lesssim \frac{1}{\sqrt{T}} + \frac{\epsilon}{d\sqrt{T}} + \frac{\epsilon^2}{d^2} + \frac{\epsilon}{d} \sqrt{d^{-k^\star/2}/n} \\
        &\lesssim \frac{\epsilon}{d}\sqrt{d^{-k^\star/2} /n}
    \end{align*}
    where the last inequality follows from our regime of $\epsilon$ and $T$. We now note that the eigengap for $\E_x[M_n]$ is $\frac{\epsilon}{d} \Theta(d^{-k^\star/2})$. Then, when $n = \Theta(d^{k^\star/2} / \Delta^2)$, when applying Davis-Kahan, we see that the top eigenvector can be recovered up to accuracy:
    \begin{align*}
        \sin(u_1, \theta^\star) \lesssim \frac{ \frac{\epsilon}{d} \sqrt{d^{-k^\star/2}/n}}{\frac{\epsilon}{d} \Theta(d^{-k^\star/2})} \lesssim \Delta 
    \end{align*}
    where $u_1$ denotes the top eigenvector of our time averaged matrix.
\end{proof}

\section{Useful Lemmas}

\begin{lemma} \label{lemma: funny sphere identity}
    Let $\beta,\beta^\prime \in S^{d-1}$, and let $E = \beta-\beta^\prime$. Then, we have that 
    \begin{align*}
        E^\top\beta^\prime = -\frac{1}{2}\|E\|^2
    \end{align*}
\end{lemma}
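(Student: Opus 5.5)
The plan is to expand $\|\beta - \beta'\|^2$ directly and use only the two unit-norm constraints $\|\beta\| = \|\beta'\| = 1$.

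Write $\beta = \beta' + E$. Taking squared norms gives
\begin{align*}
1 = \|\beta\|^2 = \|\beta' + E\|^2 = \|\beta'\|^2 + 2E^\top \beta' + \|E\|^2 = 1 + 2E^\top\beta' + \|E\|^2 .
\end{align*}
Cancelling the $1$ on both sides and rearranging yields $E^\top\beta' = -\tfrac12\|E\|^2$, which is the claim.

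There is no genuine obstacle; the only point requiring care is to expand around $\beta'$ rather than $\beta$, so that the cross term is exactly $E^\top\beta'$. Expanding instead $\|\beta' \|^2 = \|\beta - E\|^2$ gives the symmetric companion identity $E^\top\beta = \tfrac12\|E\|^2$.

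This identity is used in the proof of \Cref{lemma: even main term} with $\beta = \theta$ and $\beta'$ equal to the Brownian iterate. There it shows that $E^\top\beta$ is of order $\|E\|^2$. Consequently, the terms in the expansion of $(P_\theta^\perp - P_\beta^\perp)P_\beta^\perp$ that involve $E^\top\beta$ can be absorbed into the quadratic remainder $\Xi$.
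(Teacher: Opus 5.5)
Your proof is correct and matches the paper's argument exactly: write $\beta = \beta' + E$, expand $\|\beta' + E\|^2 = \|\beta\|^2$, cancel the unit norms to get $2E^\top\beta' + \|E\|^2 = 0$, and rearrange. Your remarks on the companion identity $E^\top\beta = \tfrac12\|E\|^2$ and on how the lemma feeds into the remainder $\Xi$ in \Cref{lemma: even main term} are also accurate.
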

\begin{proof}
    \begin{align*}
        \|\beta^\prime + E\|^2 = \|\beta\|^2 
        \implies 2E^\top\beta^\prime + \|E\|^2 = 0
    \end{align*}
    since $\|\beta\|=\|\beta^\prime\|=1$. Rearranging gives the desired result.
\end{proof}

\begin{lemma} \label{lemma: trace of perp diff}
    Let $\beta, \beta^\prime \in S^{d-1}$. Then, we have that 
    \begin{align*}
        \tr\qty((P_\beta^\perp - P_{\beta^\prime}^\perp)(P_\beta^\perp - P_{\beta^\prime}^\perp)^\top) = 2\|E\|^2 - \frac{1}{2}\|E\|^4
    \end{align*}
    where $E = \beta-\beta^\prime$.
\end{lemma}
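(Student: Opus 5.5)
Since $\beta,\beta^\prime\in S^{d-1}$, the projectors are $P_\beta^\perp = I - \beta\beta^\top$ and $P_{\beta^\prime}^\perp = I-\beta^\prime\beta^{\prime\top}$. The identity parts cancel, leaving
\begin{align*}
D := P_\beta^\perp - P_{\beta^\prime}^\perp = \beta^\prime\beta^{\prime\top} - \beta\beta^\top .
\end{align*}
Because $D$ is symmetric, $\tr(DD^\top) = \tr(D^2) = \|D\|_F^2$. The plan is simply to expand this square and express every inner product through $E$, using \Cref{lemma: funny sphere identity}.

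Expanding with the cyclic property of the trace gives
\begin{align*}
\tr(D^2) = \tr(\beta^\prime\beta^{\prime\top}\beta^\prime\beta^{\prime\top}) + \tr(\beta\beta^\top\beta\beta^\top) - 2\tr(\beta^\prime\beta^{\prime\top}\beta\beta^\top) = 1 + 1 - 2(\beta\cdot\beta^\prime)^2 ,
\end{align*}
where we used $\|\beta\|=\|\beta^\prime\|=1$.

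Next we compute the inner product. From $\beta = \beta^\prime + E$ we get $\beta\cdot\beta^\prime = 1 + E^\top\beta^\prime$. By \Cref{lemma: funny sphere identity}, $E^\top\beta^\prime = -\frac12\|E\|^2$, so $\beta\cdot\beta^\prime = 1-\frac12\|E\|^2$. (Equivalently, $\|E\|^2 = 2-2\beta\cdot\beta^\prime$.)

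Substituting this back,
\begin{align*}
\tr(D^2) = 2 - 2\qty(1-\tfrac12\|E\|^2)^2 = 2 - 2\qty(1 - \|E\|^2 + \tfrac14\|E\|^4) = 2\|E\|^2 - \tfrac12\|E\|^4,
\end{align*}
which is the claimed identity. Nothing here is a real obstacle; the only points to check are the sign convention in $D$ (irrelevant, since the result is quadratic in $D$) and that the cross term enters with the coefficient $-2$ exactly once. As a consequence, we also get $\tr(DD^\top)\le 2\|E\|^2$, which is the bound used in the main text.
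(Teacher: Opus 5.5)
Your proof is correct and is essentially the paper's argument. The paper writes the trace as $\tr(P_\beta^\perp \beta^\prime\beta^{\prime\top}) + \tr(P_{\beta^\prime}^\perp \beta\beta^\top)$ and expands each term via $\beta = \beta^\prime + E$ and \Cref{lemma: funny sphere identity}; you instead reduce to $2 - 2(\beta\cdot\beta^\prime)^2$ first and substitute $\beta\cdot\beta^\prime = 1 - \frac12\|E\|^2$ at the end, which is tidier bookkeeping of the same calculation.
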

\begin{proof}
    \begin{align*}
        &\tr\qty((P_\beta^\perp - P_{\beta^\prime}^\perp)(P_\beta^\perp - P_{\beta^\prime}^\perp)^\top) = \tr\qty(P_\beta^\perp(\beta^\prime \beta^{\prime\top}) + P_{\beta^\prime}(\beta\beta^\top))
    \end{align*}
    Note that 
    \begin{align*}
        P_{\beta^\prime}^\perp (\beta \beta^{\top}) 
        &= P_{\beta^\prime}^\perp(\beta^\prime \beta^{\prime\top} + \beta^\prime E^\top + E \beta^{\prime\top} + EE^\top) \\
        &= P_{\beta^\prime}^\perp(E \beta^{\prime\top} + EE^\top) \\
        &= E\beta^{\prime\top} + EE^{\top} - \beta^\prime \beta^{\prime\top} E \beta^{\prime\top} - \beta^\prime \beta^{\prime\top} EE^\top \\
    \end{align*}
    and similarly
    \begin{align*}
        P_{\beta}^\perp (\beta^\prime \beta^{\prime\top}) = -E\beta^\top + EE^\top + \beta\beta^\top E\beta^\top - \beta\beta^\top EE^\top
    \end{align*}
    Summing these, we get the trace to be
    \begin{align*}
        2\|E\|^2 - 1/2 \|E\|^4
    \end{align*}
\end{proof}

\begin{lemma} \label{lemma: spherical moments}
    Let $z\sim S^{d-1}$. Then, for integers $k\geq 0$, it holds that:
    \begin{align*}
        \E_z\qty[z_1^{2k}] = \frac{(2k-1)!!}{\prod_{j=0}^{k-1} (d+2j)} = \Theta(d^{-k})
    \end{align*}
\end{lemma}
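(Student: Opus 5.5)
The plan is to exploit rotational invariance of the uniform measure and reduce to a one-dimensional Beta-type integral. A Gaussian gives the cleanest route. Let $g \sim \mathcal{N}(0, I_d)$ and write $g = r z$, where $r = \|g\|$ and $z = g/\|g\|$. By rotational invariance of the Gaussian, $z$ is uniform on $S^{d-1}$ and independent of $r$. Consequently $\E[g_1^{2k}] = \E[r^{2k}]\,\E[z_1^{2k}]$, and the task reduces to computing the two Gaussian-side quantities.

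The first factor is the standard one-dimensional Gaussian moment, $\E[g_1^{2k}] = (2k-1)!!$. This can be read off from the Hermite/Gaussian setup already used in the paper, or obtained by repeated integration by parts, $\E[g_1^{2k}] = (2k-1)\E[g_1^{2k-2}]$.

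For the radial factor, note that $r^2 \sim \chi^2_d$. Hence
\[
\E[r^{2k}] = 2^k \frac{\Gamma(d/2+k)}{\Gamma(d/2)} = \prod_{j=0}^{k-1} (d+2j).
\]
This follows by applying $\Gamma(x+1) = x\Gamma(x)$ a total of $k$ times. Alternatively, it follows by induction on $k$ using Gaussian integration by parts: $\E[\|g\|^{2k}] = \E[\|g\|^{2k-2}(d + 2k - 2)]$, since $\E[g_i \cdot g_i\|g\|^{2k-2}] = \E[\partial_i(g_i\|g\|^{2k-2})]$, summed over $i$. Dividing the two factors gives
\[
\E[z_1^{2k}] = \frac{(2k-1)!!}{\prod_{j=0}^{k-1}(d+2j)},
\]
which is the exact formula in the lemma.

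For the asymptotic statement, treat $k$ as fixed. The product satisfies $d^k \le \prod_{j=0}^{k-1}(d+2j) \le (d+2k)^k \le (1+2k)^k d^k$, so the ratio is $\Theta(d^{-k})$ with constants depending only on $k$. The case $k=0$ is trivial, since both sides equal $1$ under the empty-product convention.

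The only nonroutine point is the independence of $r$ and $z$. If one prefers not to cite it, it can be justified by writing the Gaussian density in polar coordinates, where it factors as $e^{-r^2/2} r^{d-1}\,dr \times d\sigma(z)$.
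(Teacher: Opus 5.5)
Your proof is correct and complete. The independence of $\|g\|$ and $g/\|g\|$ gives the factorization $\E[g_1^{2k}] = \E[\|g\|^{2k}]\,\E[z_1^{2k}]$, and together with $\E[g_1^{2k}] = (2k-1)!!$ and $\E[\|g\|^{2k}] = \prod_{j=0}^{k-1}(d+2j)$ this yields the exact formula; your integration-by-parts recursion $\E[\|g\|^{2k}] = (d+2k-2)\,\E[\|g\|^{2k-2}]$ checks out. The sandwich $d^k \le \prod_{j=0}^{k-1}(d+2j) \le (1+2k)^k d^k$ then gives $\Theta(d^{-k})$ for fixed $k$, which is how the paper uses it.

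The paper states this lemma without proof, treating it as standard, so there is no competing argument to compare against. Yours is the standard one. Equivalently, $z_1^2 \sim \mathrm{Beta}(\tfrac12, \tfrac{d-1}{2})$, whose $k$-th moment gives the same product directly.
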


\begin{lemma} \label{lemma: info exp reduction}
    Suppose $f(x)$ has information exponent $k^\star \geq 1$. Then, the information exponent of $g(x):=x f(x)$ has information exponent $k^\star-1$.
\end{lemma}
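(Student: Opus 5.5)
We compute the Hermite coefficients of $g(x) = x f(x)$ directly from those of $f$, using the three-term recurrence for the normalized Hermite polynomials. With $f = \sum_{k\ge 0} c_k h_k$, the normalized probabilist's Hermite polynomials of \Cref{def:probabilisthermite} satisfy
\begin{align*}
    x\, h_k(x) = \sqrt{k+1}\, h_{k+1}(x) + \sqrt{k}\, h_{k-1}(x).
\end{align*}
This follows from the unnormalized recurrence $x He_k = He_{k+1} + k He_{k-1}$ after dividing by $\sqrt{k!}$. Multiplying the expansion of $f$ by $x$ and collecting terms gives, for every $j \ge 0$,
\begin{align*}
    g = \sum_{j\ge 0} d_j h_j, \qquad d_j = \sqrt{j}\, c_{j-1} + \sqrt{j+1}\, c_{j+1},
\end{align*}
with the convention $c_{-1} = 0$. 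To justify the rearrangement, we can instead compute $d_j = \E[x f(x) h_j(x)] = \E[f(x)\, x h_j(x)]$. We then expand $x h_j$ by the recurrence and use orthonormality. This requires only that $f$ and $xf$ lie in $L^2(\gamma)$, which holds under the paper's boundedness assumptions, so no convergence issues arise.

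Next we read off the first nonzero coefficient of positive index. For $1 \le j \le k^\star - 2$, both $j-1$ and $j+1$ lie in $\{0,\dots,k^\star-1\}$. Hence $c_{j+1}=0$, since $1\le j+1\le k^\star-1$. For $j=k^\star-1\ge 1$, we get $d_{k^\star-1} = \sqrt{k^\star-1}\,c_{k^\star-2} + \sqrt{k^\star}\,c_{k^\star}$, and $\sqrt{k^\star}\,c_{k^\star} \neq 0$.

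The main obstacle is the constant term $c_0$. The term $\sqrt{j}\,c_{j-1}$ at $j=1$ produces $d_1 = c_0 + \sqrt{2}c_2$. So if $c_0 \ne 0$, then $g$ has a nonzero first Hermite coefficient, and its information exponent is $1$ rather than $k^\star-1$. Likewise $d_{k^\star-1}$ could cancel when $k^\star = 2$. The statement must therefore be read with $f$ centered, $c_0 = \E[f]=0$. This is the relevant situation in the paper: $f$ is applied to the label-correlated quantity, where constants are removed because the constant part of $\sigma'$ contributes nothing to the projected gradient. Alternatively, one can read the lemma as "the lowest nonconstant-induced coefficient appears at index $k^\star-1$."

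Under $c_0=0$, all $c_{j\pm1}$ with indices in $\{0,\dots,k^\star-1\}$ vanish. Then $d_j=0$ for $1\le j\le k^\star-2$, and $d_{k^\star-1}=\sqrt{k^\star}\,c_{k^\star}\neq 0$ whenever $k^\star\ge 2$. This gives information exponent exactly $k^\star-1$. The degenerate case $k^\star=1$ gives index $0$, meaning $\E[g]=c_1\ne0$. In that case the statement should be interpreted as "the lowest nonzero coefficient is at index $k^\star-1$."

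In the write-up, we would state the centering assumption explicitly, prove the recurrence via $\E[f\,x h_j]$, and conclude as above.
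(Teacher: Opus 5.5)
The paper states this lemma without proof, so there is no argument of theirs to compare with. Your proof is correct and is the standard one: it uses the recurrence $x h_k = \sqrt{k+1}\,h_{k+1} + \sqrt{k}\,h_{k-1}$ and computes $d_j = \E[f(x)\,x h_j(x)] = \sqrt{j}\,c_{j-1} + \sqrt{j+1}\,c_{j+1}$.

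Your diagnosis of the statement is also correct. The paper defines $k^\star = \min\{k \ge 1 : c_k \neq 0\}$, which ignores $c_0$, and under that definition the lemma is false. For example, $f = 1 + h_3$ has exponent $3$, but $xf = h_1 + \sqrt{3}\,h_2 + 2h_4$ has exponent $1$. For $k^\star = 1$ the conclusion (exponent $0$) cannot hold under that definition at all.

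I have two refinements for the write-up.

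\emph{A statement that can be chained.} The cleanest correct version counts the constant in the leading Hermite index: if $c_0 = \cdots = c_{k-1} = 0 \neq c_k$ for some $k \ge 1$, then $d_0 = \cdots = d_{k-2} = 0 \neq d_{k-1}$. Your computation already proves this once you record $d_0 = c_1$, which vanishes for $k \ge 2$. This version needs no special handling of $k = 1$, and its conclusion again satisfies its hypothesis. Your version (``exponent $k^\star-1$ under $c_0 = 0$'') says nothing about $d_0$, so it cannot be iterated without that extra observation. Iteration is exactly how the paper uses the lemma: it passes from $\sigma'$ to $t\sigma'(t)$ to $t^2\sigma'(t)$.

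\emph{Why centering holds in the paper.} Your stated reason is not the right one. The lemma is applied to $f = \sigma'$, and $\sigma' = \sum_k \sqrt{k}\,c_k h_{k-1}$. Its constant coefficient is therefore $\E[\sigma'(Z)] = c_1(\sigma)$; equivalently, by Stein's lemma, $\E[\sigma'(Z)] = \E[Z\sigma(Z)]$. This is zero because $k^\star(\sigma) \ge 2$ in those applications. So centering is a property of the Hermite coefficients of $\sigma$, not of the projection in the spherical gradient.
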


\begin{lemma} \label{lemma: bounded variance hermite}
	Let $g = \sum_k c_k h_k$ where $h_k$ is the $k$-th normalized Hermite polynomial and let $\ell$ be the index of the first nonzero even coefficient. Then,
	\begin{align*}
		\E\qty[(\E_z g(z \cdot x))^2] \lesssim \E_{x \sim N(0,1)}[g(x)^2] d^{-\ell/2}.
	\end{align*}
\end{lemma}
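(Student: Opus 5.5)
The plan is to compute the left-hand side exactly via Hermite orthogonality; the cross terms should vanish and leave a diagonal sum. Here $x\sim N(0,I_d)$ and $z\sim\mu$ is uniform on $S^{d-1}$. For fixed $z,z'\in S^{d-1}$, the pair $(z\cdot x, z'\cdot x)$ is a standard bivariate Gaussian with correlation $\rho = z\cdot z'$. The standard Hermite identity (Mehler's formula) for the normalized Hermite polynomials is
\begin{align*}
\E_x\qty[h_j(z\cdot x)\, h_k(z'\cdot x)] = \delta_{jk}\,(z\cdot z')^k .
\end{align*}

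First, I write $(\E_z g(z\cdot x))^2 = \E_{z,z'}[g(z\cdot x) g(z'\cdot x)]$ with $z,z'$ i.i.d.\ uniform and independent of $x$. I then use Fubini to move $\E_x$ inside. This is justified because $\E_{x,z}[g(z\cdot x)^2] = \E_{u\sim N(0,1)}[g(u)^2]<\infty$, since $z\cdot x\sim N(0,1)$ for each fixed $z$.

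Next, I expand $g=\sum_k c_k h_k$, which converges in $L^2(\gamma)$. Because each marginal is standard Gaussian, this expansion also converges in $L^2$ of the joint law of $(z\cdot x, z'\cdot x)$. Hence the bilinear expectation may be expanded termwise, and the identity above gives
\begin{align*}
\E_x\qty[(\E_z g(z\cdot x))^2] = \sum_{k\ge 0} c_k^2\, \E_{z,z'}\qty[(z\cdot z')^k].
\end{align*}
By rotational invariance, $z\cdot z'$ has the law of $z_1$. Since $z_1$ is symmetric, every odd $k$ term vanishes. The only surviving terms are therefore the even $k$ with $c_k\neq 0$, all of which satisfy $k\ge \ell$ by the definition of $\ell$.

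Then, for even $k\ge \ell$, the bound $|z_1|\le 1$ gives $\E[z_1^k]\le \E[z_1^\ell]$. By \Cref{lemma: spherical moments} with $2k$ replaced by $\ell$, this is $\Theta(d^{-\ell/2})$. Summing gives
\begin{align*}
\E_x\qty[(\E_z g(z\cdot x))^2] \lesssim d^{-\ell/2}\sum_k c_k^2 = d^{-\ell/2}\, \E_{u\sim N(0,1)}[g(u)^2],
\end{align*}
where the last equality is Parseval. This is the claim; the case $\ell=0$ is trivial.

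The main obstacle is the rigorous termwise expansion, namely the $L^2$ convergence and exchange of sums with expectations. It is handled by Cauchy--Schwarz applied to the partial sums, using that each marginal is exactly $N(0,1)$, so the truncation error is controlled by the Hermite tail $\sum_{k>K} c_k^2\to 0$.
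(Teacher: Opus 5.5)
Your proof is correct. Its first half matches the paper's: both use the Hermite/Mehler identity to reduce the left-hand side to $\sum_k c_k^2\,\E_{z,z'}[(z\cdot z')^k]$, and both drop the odd terms by symmetry. The two arguments diverge in the bounding step.

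The paper bounds every $c_{2k}^2$ crudely by $\E_{x\sim N(0,1)}[g(x)^2]$. It then sums the geometric series $\sum_{k\ge \ell/2}(z\cdot z')^{2k} = (z\cdot z')^\ell/(1-(z\cdot z')^2)$ and imports Lemma 26 of \citet{damian2023smoothing} to show this singular expectation is $O(d^{-\ell/2})$.

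You instead use the monotonicity $(z\cdot z')^k\le (z\cdot z')^\ell$ for even $k\ge\ell$, followed by Parseval. This needs only the spherical-moment lemma already in the appendix, so your route is self-contained. It also gives the explicit sharper bound $\E[z_1^\ell]\,\E[g^2]$ with $\E[z_1^\ell] = (\ell-1)!!/\prod_{j=0}^{\ell/2-1}(d+2j)$. And it avoids any integrability issue from the factor $1/(1-(z\cdot z')^2)$ near $z\cdot z'=\pm 1$; that expectation is in fact infinite for $d\le 3$.

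In exchange, the paper's route uses only $\sup_k c_k^2$ rather than $\sum_k c_k^2$, so it yields the bound with $\sup_k c_k^2$ in place of $\E[g^2]$. That is a formally stronger statement, but the extra generality is not used anywhere in the paper.

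You also supply the Fubini and termwise-exchange justification that the paper leaves implicit.
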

\begin{proof}
	Note that we can rearrange this as:
	\begin{align*}
		\E_{z,z',x}[g(z \cdot x) g(z' \cdot x)] = \sum_{k} c_k^2 \E_{z,z'}[(z \cdot z')^k] = \sum_{k} c_{2k}^2 \E_{z,z'}[(z \cdot z')^{2k}].
	\end{align*}
	We can now upper bound this by:
	\begin{align*}
		\E_{x \sim N(0,1)}[g(x)^2] \E_{z,z'}\qty[\sum_{k \ge \ell/2} (z \cdot z')^{2k}] = \E_{x \sim N(0,1)}[g(x)^2] \E\qty[\frac{(z \cdot z')^\ell}{1-(z \cdot z')^2}].
	\end{align*}
	The result now follows from \cite[Lemma 26]{damian2023smoothing}.
\end{proof}

\section{Miscellaneous Concentration Inequalities}

\begin{lemma}[Concentration of norm] \label{lemma: chi2 conc}
    Let $Z\sim \mathcal{N}(0, I_d)$. Then, it holds that:
    \begin{align*}
        \Pr[\|Z\| - \E[\|Z\|] \geq s] \leq \exp(-s^2 / 2)
    \end{align*}
\end{lemma}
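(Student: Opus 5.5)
We need to prove concentration of $\|Z\|$ for $Z\sim\mathcal N(0,I_d)$: $\Pr[\|Z\|-\E\|Z\|\ge s]\le e^{-s^2/2}$. The plan is to invoke Gaussian concentration for Lipschitz functions, since the map $x\mapsto\|x\|$ is $1$-Lipschitz on $\mathbb{R}^d$ by the reverse triangle inequality, $|\|x\|-\|y\||\le\|x-y\|$. The Tsirelson--Ibragimov--Sudakov inequality (equivalently, the Gaussian isoperimetric or log-Sobolev route) gives, for any $L$-Lipschitz $F$,
\begin{align*}
\Pr[F(Z)-\E F(Z)\ge s]\le \exp\qty(-\frac{s^2}{2L^2}),
\end{align*}
and taking $L=1$ yields the claim directly.

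If a self-contained argument is preferred, I would go through the Gaussian log-Sobolev inequality together with the Herbst argument. Set $F(x)=\|x\|$; it is differentiable away from the origin, with $\|\nabla F\|\le 1$ almost everywhere. Alternatively, smooth it as $F_\delta(x)=\sqrt{\|x\|^2+\delta^2}$, which is also $1$-Lipschitz, and let $\delta\to0$ at the end by monotone convergence. Define $H(\lambda)=\E e^{\lambda F(Z)}$. Applying log-Sobolev, $\mathrm{Ent}(g^2)\le 2\E\|\nabla g\|^2$, to $g=e^{\lambda F/2}$ gives the differential inequality
\begin{align*}
\lambda H'(\lambda)-H(\lambda)\log H(\lambda)\le \frac{\lambda^2}{2}H(\lambda).
\end{align*}
Writing $K(\lambda)=\lambda^{-1}\log H(\lambda)$, this becomes $K'(\lambda)\le 1/2$, and $K(0^+)=\E F$. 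Integrating yields $\log \E e^{\lambda(F-\E F)}\le \lambda^2/2$.

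The final step is Chernoff's bound: $\Pr[F-\E F\ge s]\le e^{-\lambda s+\lambda^2/2}$, and optimizing at $\lambda=s$ gives $e^{-s^2/2}$.

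The only delicate point is the non-smoothness of $\|\cdot\|$ at the origin, which the $\delta$-smoothing handles. I would simply cite the Lipschitz Gaussian concentration theorem, for instance from Boucheron--Lugosi--Massart, rather than redo the Herbst argument.
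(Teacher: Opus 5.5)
Your proposal is correct. The paper states this lemma without proof, as a standard fact, and the standard justification is exactly the one you give: the Tsirelson--Ibragimov--Sudakov bound applied to the $1$-Lipschitz map $x\mapsto\|x\|$. Your self-contained log-Sobolev/Herbst derivation is also sound, including the smoothing $\sqrt{\|x\|^2+\delta^2}$ at the origin, the differential inequality for $K(\lambda)$, and the Chernoff step at $\lambda=s$.
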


\begin{lemma}\label{lemma:op norm martingale norm bound}
    Suppose $M_T = \int_0^T A_t dW_t$ is a vector martingale in $\mathbb{R}^d$, with $\|A_t\|_2 \leq \alpha$ for all $t$. Then, it holds that:
    \begin{align*}
        \mathbb{P}\qty[\|M_T\| \geq \alpha \sqrt{T}\qty(\sqrt{d} + \sqrt{2\log\frac{1}{\delta}})] \leq \delta
    \end{align*}
\end{lemma}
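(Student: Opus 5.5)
The plan is to reduce to the scalar case by a net argument, or more directly to Gaussian concentration via a time change. Since $A_t$ is adapted with $\|A_t\|_2 \le \alpha$, the quadratic variation matrix $\langle M\rangle_T = \int_0^T A_tA_t^\top dt$ satisfies $\langle M\rangle_T \preceq \alpha^2 T I$. The cleanest route is to consider the real-valued function $F(M_T) = \|M_T\|$, which is $1$-Lipschitz, and to control both its mean and its fluctuations.

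\emph{Mean.} By Itô's isometry, $\E\|M_T\|^2 = \E\int_0^T \|A_t\|_F^2 dt$. Note that $A_t$ may be $d\times d'$, so $\|A_t\|_F^2 \le d'\|A_t\|_2^2$. This is where the $\sqrt d$ enters: with $W$ a $d$-dimensional Wiener process and $A_t\in\mathbb{R}^{d\times d}$, we have $\|A_t\|_F^2\le d\alpha^2$. Jensen's inequality then gives $\E\|M_T\| \le \alpha\sqrt{Td}$.

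\emph{Fluctuations.} Fix a unit vector $u$. The scalar martingale $u^\top M_t$ has quadratic variation at most $\alpha^2 t$, so the exponential martingale $\exp(\lambda u^\top M_t - \tfrac{\lambda^2}{2}\int_0^t\|A_s^\top u\|^2ds)$ is a supermartingale. This yields $\E e^{\lambda u^\top M_T}\le e^{\lambda^2\alpha^2T/2}$, so each projection is $\alpha\sqrt T$-subgaussian. To upgrade this to the norm, there are two options. The first is a $1/2$-net $\mathcal N$ of $S^{d-1}$ of size $5^d$: since $\|M_T\|\le 2\max_{u\in\mathcal N}u^\top M_T$, a union bound gives $\|M_T\|\lesssim \alpha\sqrt T(\sqrt d+\sqrt{\log(1/\delta)})$. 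This is the form needed downstream, where constants are absorbed by $\lesssim$.

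If the exact constants of the statement are desired, I would instead use the Dambis–Dubins–Schwarz/Knight representation. Extend the process after $T$ by adding an independent Brownian piece $\int_T^{T'} C_s\,d\tilde W_s$ that tops up the covariance so that the total quadratic variation equals exactly $\alpha^2 T I$. The extended martingale $\tilde M$ is then $N(0,\alpha^2TI)$ by Lévy's characterization. Moreover, $\|M_T\|$ is dominated in the convex order by $\|\tilde M\|$, since $M_T = \E[\tilde M\mid\mathcal F_T]$ and the norm is convex.

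The main obstacle is this last step: converting convex-order domination into a tail bound. Convex ordering gives $\E\psi(\|M_T\|)\le\E\psi(\|\tilde M\|)$ for convex increasing $\psi$. Taking $\psi(x)=e^{\lambda x}$ and using Gaussian Lipschitz concentration (\Cref{lemma: chi2 conc}) for $\|\tilde M\|$, whose mean is at most $\alpha\sqrt{Td}$, gives $\E e^{\lambda(\|M_T\|-\alpha\sqrt{Td})}\le e^{\lambda^2\alpha^2T/2}$. Applying Chernoff with $\lambda = \sqrt{2\log(1/\delta)}/(\alpha\sqrt T)$ then yields the stated bound with probability at least $1-\delta$. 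I would present the net argument as the fallback in case the convex-order coupling is judged too delicate, noting that it changes only absolute constants.
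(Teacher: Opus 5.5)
The paper states this lemma without proof, so there is no argument of its own to compare against. Your proposal is correct, and your second route yields the stated constants exactly.

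The net argument is fully rigorous: each $u^\top M_T$ is $\alpha\sqrt T$-subgaussian by the exponential supermartingale, followed by a union bound over a $1/2$-net of size $5^d$. It only gives $\|M_T\|\lesssim\alpha\sqrt T(\sqrt d+\sqrt{\log(1/\delta)})$, but that is all the paper uses downstream.

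The coupling route is sharper. Take $C=(\alpha^2TI-\langle M\rangle_T)^{1/2}$ and $\tilde M=M_T+C\xi$ with $\xi\sim N(0,I)$ independent of $\mathcal F_T$. Then $M_T=\E[\tilde M\mid\mathcal F_T]$, conditional Jensen gives $\E e^{\lambda\|M_T\|}\le\E e^{\lambda\|\tilde M\|}$, and Chernoff with $\lambda=\sqrt{2\log(1/\delta)}/(\alpha\sqrt T)$ produces exactly $\alpha\sqrt T(\sqrt d+\sqrt{2\log(1/\delta)})$. This route also makes clear that $d$ is the dimension of $M_T$ rather than of $W$, which is how the paper applies the lemma to $\mathbb{R}^m$-valued martingales.

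Three justifications should be tightened.

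\textbf{Gaussianity of $\tilde M$.} L\'evy's characterization and Knight's theorem do not literally apply here. The components of $M$ need not be orthogonal, and the quadratic variation is deterministic only at the terminal time. The right one-line argument is that $\exp(i\lambda^\top M_t+\tfrac12\lambda^\top\langle M\rangle_t\lambda)$ is a local martingale bounded by $e^{\alpha^2T\|\lambda\|^2/2}$ on $[0,T]$, because $\langle M\rangle_t\preceq\alpha^2TI$. Hence it is a true martingale with mean one. Conditioning on $\mathcal F_T$ and using $CC^\top=\alpha^2TI-\langle M\rangle_T$ then gives $\E e^{i\lambda^\top\tilde M}=e^{-\alpha^2T\|\lambda\|^2/2}$.

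\textbf{Form of Gaussian concentration.} You need the moment-generating-function form, $\E e^{\lambda(f(G)-\E f(G))}\le e^{\lambda^2/2}$ for $1$-Lipschitz $f$, which follows from Herbst's argument applied to the Gaussian log-Sobolev inequality. The tail form in \Cref{lemma: chi2 conc} does not imply it with the same constant. Since convex ordering transfers expectations of convex functions but not tail probabilities, this distinction is exactly what secures the constant $\sqrt 2$.

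\textbf{The ``Mean'' paragraph.} It is not used by either route: the mean you need is $\E\|\tilde M\|\le\alpha\sqrt{Td}$, which is immediate for the Gaussian. Your opening idea of applying Lipschitz concentration to $\|M_T\|$ directly cannot work, because $M_T$ is not Gaussian. Incidentally, $\|A_t\|_F^2\le\min(d,d')\,\|A_t\|_2^2$ by a rank bound, so no assumption on the dimension of $W$ is needed.
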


\begin{lemma}\label{lemma:f norm martingale norm bound}
    In the setting of \Cref{lemma:op norm martingale norm bound}, suppose we instead have Frobenius norm control (e.g. $\|A_t\|_F \leq \alpha$ for all $t$). Then, it holds that:
    \begin{align*}
        \mathbb{P}\qty[\|M_T\| \geq \alpha \sqrt{T}\qty(1 + \sqrt{2\log\frac{1}{\delta}})] \leq \delta
    \end{align*}
\end{lemma}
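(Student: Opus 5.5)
The plan is to avoid any Gaussian structure, since $A_t$ is only adapted and may depend on the path, and instead build an exponential supermartingale out of a smooth radial function of $M_t$. This is a Pinelis-type argument for Hilbert-space-valued martingales. Fix $\lambda>0$ and set $g(x)=\cosh(\lambda\|x\|)$, which is $C^2$ on $\mathbb{R}^d$ because $\cosh$ is even.

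\textbf{Step 1: Hessian bound.} Compute $\nabla^2 g(x)$. In the radial direction its eigenvalue is $\lambda^2\cosh(\lambda r)$, where $r=\|x\|$. In the tangential directions it is $\lambda\sinh(\lambda r)/r$, which is at most $\lambda^2\cosh(\lambda r)$ by the elementary inequality $\sinh(u)\le u\cosh(u)$. Hence $\|\nabla^2 g(x)\|_2\le\lambda^2 g(x)$.

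\textbf{Step 2: Itô and a supermartingale.} Applying Itô's formula to $g(M_t)$ gives
\begin{align*}
dg(M_t)=\nabla g(M_t)^\top A_t\,dW_t+\tfrac12\tr\qty(A_t^\top\nabla^2 g(M_t)A_t)\,dt .
\end{align*}
By Step 1 the drift is at most $\tfrac12\lambda^2 g(M_t)\|A_t\|_F^2\le\tfrac12\lambda^2\alpha^2 g(M_t)$. This is exactly where the Frobenius control enters: the trace is bounded by the operator norm of the Hessian times $\|A_t\|_F^2$, with no factor of $d$. It follows that $e^{-\lambda^2\alpha^2t/2}g(M_t)$ is a nonnegative local supermartingale, hence a true supermartingale. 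Since $g(M_0)=1$, we get $\E\cosh(\lambda\|M_T\|)\le e^{\lambda^2\alpha^2T/2}$.

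\textbf{Step 3: Tail bound.} Using $e^{u}\le 2\cosh u$ and Markov's inequality,
\begin{align*}
\Pr[\|M_T\|\ge r]\le 2\exp\qty(\tfrac12\lambda^2\alpha^2T-\lambda r)=2\exp\qty(-\frac{r^2}{2\alpha^2T})
\end{align*}
after choosing $\lambda=r/(\alpha^2T)$.

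\textbf{Step 4: Matching the constants.} Plug in $r=\alpha\sqrt T(1+s)$ with $s=\sqrt{2\log(1/\delta)}$. Then $r^2/(\alpha^2T)=1+2s+s^2$, so the bound is $2e^{-(1+2s)/2}\,\delta$. This is at most $\delta$ as soon as $1+2s\ge 2\log 2$, that is, $s\ge\log 2-\tfrac12\approx0.19$.

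For the remaining regime $s<0.19$, where $\delta$ is close to $1$, use a second-moment bound instead. By the Itô isometry, $\E\|M_T\|^2=\E\int_0^T\|A_t\|_F^2\,dt\le\alpha^2T$, so Chebyshev's inequality gives $\Pr[\|M_T\|\ge r]\le(1+s)^{-2}$. It then suffices to check $(1+s)^{-2}\le e^{-s^2/2}=\delta$ for small $s$. This holds because the left side is $1-2s+O(s^2)$ and the right side is $1-s^2/2+O(s^4)$; I would verify it on $[0,0.19]$ by a direct monotonicity check.

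\textbf{Main obstacle.} The key step is Step 1, the Hessian bound. It must be a multiplicative bound in $g$ itself, and it must be stated in operator norm so that the Frobenius norm of $A_t$ is the only size parameter that appears. By contrast, working with $\|M_t\|$ directly produces an Itô drift term of order $\|A\|_F^2/\|M\|$ that is singular at the origin. The constant bookkeeping in Step 4 is routine, apart from the small-$s$ edge case.
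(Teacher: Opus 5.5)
Your argument is correct and complete. The paper states this lemma without proof, in the list of miscellaneous concentration facts, so there is no written proof to compare against.

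The shape of the bound points to the route the authors probably had in mind. It has the form $\sqrt{\tr\Sigma}+\sqrt{2\|\Sigma\|_2\log(1/\delta)}$ with $\Sigma=\int_0^T A_tA_t^\top\,dt$, and under Frobenius control both $\tr\Sigma$ and $\|\Sigma\|_2$ are at most $\alpha^2T$. That is Gaussian Lipschitz concentration of $\|M_T\|$ (compare \Cref{lemma: chi2 conc}). It yields the stated constants directly, with no factor $2$ and no case split. However, it is valid only when $A_t$ is deterministic, so that $M_T\sim N(0,\Sigma)$.

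In the paper's applications the integrand is path-dependent, for example $P_{\theta_t}^\perp-P_{\beta_t}^\perp$. Then $M_T$ is not Gaussian and that route does not literally apply. Your $\cosh(\lambda\|M_t\|)$ supermartingale handles arbitrary adapted $A_t$, which is what the paper actually needs. The cost is the factor $2$ from $e^u\le 2\cosh u$, and hence the small-$s$ case.

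That case closes without a numerical check. The inequality $(1+s)^{-2}\le e^{-s^2/2}$ is equivalent to $\log(1+s)\ge s^2/4$. This follows from $\log(1+s)\ge s-s^2/2\ge s^2/4$ for $0\le s\le 4/3$, which covers your whole range $s<\log 2-\tfrac12$.

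Allowing adapted integrands buys one more thing. In the paper, the Frobenius bound on the integrand holds only on the high-probability event of \Cref{lemma: E sup bound main}, not for all paths. Your version can be applied to $A_t\mathbf{1}_{\{t\le\tau\}}$, where $\tau$ is the first time $\|E_t\|$ exceeds that bound. This closes a gap that the deterministic-$A_t$ reading would leave open.
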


\begin{lemma} \label{lemma: OU subg} 
    Let $X: \mathbb{R} \to \mathbb{R}$ satisfy $X(0) = 0$ and
    \begin{align*}
        dX = -AX dt + \sigma(X) dW_t.
    \end{align*}
    If $\sigma(X) \leq \sigma$ for all $X$, then for all $0 \le s \leq t$, 
    it holds that $X(t) - X(s)$ is $\frac{\sigma^2}{C}\qty(1 - e^{-2C(t-s)})$-subgaussian.
\end{lemma}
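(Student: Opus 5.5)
We read the constant $C$ in the conclusion as the drift rate $A$. We also read ``$\nu$-subgaussian'' as $\E[e^{\lambda(X(t)-X(s))}] \le e^{\lambda^2\nu/2}$ for all $\lambda\in\mathbb{R}$. The plan has three steps: write the increment $X(t)-X(s)$ as a single Itô integral with a deterministic kernel multiplying the bounded random diffusion coefficient, bound its quadratic variation deterministically, and conclude with an exponential martingale argument.

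\emph{Step 1 (representation).} By variation of constants (Itô's lemma applied to $e^{Au}X(u)$),
\begin{align*}
X(r) = \int_0^r e^{-A(r-u)}\sigma(X_u)\,dW_u \qquad \text{for all } r \ge 0,
\end{align*}
using $X(0)=0$. Subtracting the expressions at $r=t$ and $r=s$ gives
\begin{align*}
X(t)-X(s) = \int_0^t g(u)\,\sigma(X_u)\,dW_u, \qquad g(u) := e^{-A(t-u)} - e^{-A(s-u)}\mathbf{1}_{u\le s}.
\end{align*}
The kernel $g$ is deterministic; all the randomness of the integrand sits in the factor $\sigma(X_u)$.

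\emph{Step 2 (deterministic bound on the quadratic variation).} Let $h=t-s$. Since $|\sigma(X_u)|\le\sigma$,
\begin{align*}
\int_0^t g(u)^2\sigma(X_u)^2\,du \le \sigma^2\qty[(1-e^{-Ah})^2\int_0^s e^{-2A(s-u)}du + \int_s^t e^{-2A(t-u)}du].
\end{align*}
On $[0,s]$ we used $g(u) = e^{-A(s-u)}(e^{-Ah}-1)$. Evaluating the integrals, the right-hand side is at most
\begin{align*}
\frac{\sigma^2}{2A}\qty[(1-e^{-Ah})^2 + (1-e^{-2Ah})] \le \frac{\sigma^2}{A}\qty(1-e^{-2Ah}),
\end{align*}
where the last step uses $(1-x)^2\le 1-x^2$ for $x=e^{-Ah}\in[0,1]$. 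This is exactly the proxy $\nu := \frac{\sigma^2}{C}(1-e^{-2C(t-s)})$ in the statement.

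\emph{Step 3 (subgaussianity).} For fixed $\lambda$, set $Y_r := \int_0^r g(u)\sigma(X_u)\,dW_u$. The process
\begin{align*}
\exp\qty(\lambda Y_r - \frac{\lambda^2}{2}\int_0^r g^2\sigma(X_u)^2\,du)
\end{align*}
is a true martingale: the integrand is bounded, so Novikov's condition holds. In particular its expectation is $1$. Bounding the quadratic variation term at $r=t$ by Step 2 yields $\E[e^{\lambda(X(t)-X(s))}] \le e^{\lambda^2\nu/2}$, which is the claim.

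The main obstacle is that $\sigma(X)$ is random and state-dependent, so the increment is not Gaussian and we cannot simply compute its variance. This is handled by keeping the kernel deterministic and using only the pathwise bound on the quadratic variation inside the exponential martingale. A secondary point is checking the elementary inequality $(1-x)^2\le 1-x^2$, which merges the contribution of $X(s)$ with the fresh noise on $[s,t]$ into the stated constant.
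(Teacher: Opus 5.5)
Your proof is correct and is essentially the paper's argument: it uses the same integrating factor $e^{At}$, the same Novikov exponential-martingale bound driven by a deterministic bound on the quadratic variation, and the same inequality $(1-x)^2 \le 1-x^2$; your reading $C = A$ is also the intended one. The only difference is organizational. You fold the increment into a single stochastic integral with deterministic kernel $g$ and apply the exponential martingale once, whereas the paper first bounds $\E[e^{\lambda(X(t)-X(s))}\mid\mathcal{F}_s]$ and then controls the $(e^{-A(t-s)}-1)X(s)$ term by reapplying that bound on $[0,s]$ via the tower property, arriving at the identical exponent $\frac{\lambda^2\sigma^2}{4A}\bigl[(1-e^{-A(t-s)})^2 + (1-e^{-2A(t-s)})\bigr]$.
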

\begin{proof}
    Let $Y(t) := e^{At} X_t$. Then, \begin{align*}
        dY(t) = e^{At} \sigma(X(t)) dW_t
    \end{align*}
    Thus, $Y(t)$ is a martingale. Furthermore, the quadratic variation of $Y$ satisfies
    \begin{align*}
        \langle Y \rangle_t = \int_0^t e^{2At} \sigma(X(t))^2 dt \le \sigma^2 \int_0^t e^{2At} dt = \sigma^2 \cdot \frac{e^{2At} - 1}{2A} < \infty
    \end{align*}
    Therefore, Novikov's condition tells us that 
    \begin{align*}
        \mathcal{E}(\lambda Y)_t := \exp\qty(\lambda  Y(t) - \frac{\lambda^2}{2} \langle Y \rangle_t)
    \end{align*}
    is a martingale. Hence, 
    \begin{align*}
        \mathcal{E}(\lambda Y)_s = \E\qty[\mathcal{E}(\lambda Y)_t | \mathcal{F}_s] = \E\qty[\exp(\lambda Y(t) - \frac{\lambda^2}{2}\langle Y \rangle_t) | \mathcal{F}_s]
    \end{align*}
    Rearranging the above inequality gives us\begin{align*}
        &\E[\exp(\lambda Y(t)) | \mathcal{F}_s] \\
        &\leq \E \qty[\exp\qty(\lambda Y(s) + \frac{\lambda^2 \sigma^2}{2} \frac{e^{2At} - e^{2As}}{2A}) \vert \mathcal{F}_s]
    \end{align*}
    Now, converting back to $X$ and replacing $\lambda\leftarrow \lambda e^{-At}$, we obtain 
    \begin{align*}
        &\E[\exp(\lambda (X(t) - X(s))) \vert \mathcal{F}_s]  \\
        &\leq \E \qty[\exp\qty( \lambda X(s) (e^{-A(t-s)} - 1) + \frac{\lambda^2 \sigma^2}{2} \frac{1-e^{-2A(t-s)}}{2A}) \vert \mathcal{F}_s]
    \end{align*}
    Applying this for $(s,0)$ instead of $(t,s)$ gives us
    \begin{align*}
        \E[\exp(\lambda X(s))] \leq \exp\qty(\frac{\lambda^2 \sigma^2}{2} \frac{1-e^{-2As}}{2A}) \leq \exp\qty(\frac{\lambda^2\sigma^2}{4A})
    \end{align*}
    Plugging this in the previous equation upon taking expectation over $\mathcal{F}_s$, we obtain
    \begin{align*}
        \E[\exp(\lambda (X(t) - X(s)))] &\leq \exp\qty(\frac{\lambda^2 \sigma^2 (e^{-A(t-s)} - 1)^2}{4A} + \frac{\lambda^2\sigma^2(1-e^{-2A(t-s)})}{4A}) \\
        &\leq \exp\qty(\frac{\lambda^2\sigma^2}{2A} (1-e^{-2A(t-s)}))
    \end{align*}
    where we substituted and used the fact that
    \begin{align*}
        (e^{-A(t-s)} - 1)^2 \leq  1-e^{-2A(t-s)}
    \end{align*}
\end{proof}

\begin{lemma}[Chaining tail inequality \citep{vanhandelnotes}] \label{lemma: chaining tail}
    Let $\{X_t\}_{t\in T}$ be a separable subgaussian process on the metric space $(T, d)$. Then for all $t_0\in T$ and $x\geq 0$, 
    \begin{align*}
        \Pr\qty[\sup_{t\in T}\{X_t-X_{t_0}\} \geq C\int_0^\infty \sqrt{\log N(T, d, \epsilon)} d\epsilon + x] \leq Ce^{-\frac{x^2}{C\mathrm{diam}(T)^2}}
    \end{align*}
    where $C<\infty$ is a universal constant, and $N(T, d, \epsilon)$ denotes the covering number of an $\epsilon$-net for $(T, d)$.
\end{lemma}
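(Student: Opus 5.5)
The statement to prove is the chaining tail inequality for separable subgaussian processes. I would follow the standard generic-chaining (Dudley) argument and build a chain of nets. Without loss of generality assume $\mathrm{diam}(T)<\infty$; otherwise the right-hand side is trivial. Assume the process is subgaussian with respect to $d$, meaning $X_t-X_s$ is $d(t,s)^2$-subgaussian. By separability it suffices to prove the bound for $\sup_{t\in F}$ over an arbitrary finite set $F\subseteq T$ containing $t_0$, with constants independent of $F$, and then pass to a countable dense set by monotone convergence.

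\textbf{Construction of the chain.} Set $\epsilon_k = 2^{-k}\mathrm{diam}(T)$ for $k\ge 0$. Let $N_k$ be a minimal $\epsilon_k$-net of $T$, with $N_0=\{t_0\}$. Let $\pi_k(t)\in N_k$ be a nearest point to $t$. Since $F$ is finite, $\pi_K(t)=t$ for $K$ large. We then have the telescoping identity
\begin{align*}
X_t - X_{t_0} = \sum_{k\ge 1} \qty(X_{\pi_k(t)} - X_{\pi_{k-1}(t)}),
\end{align*}
where $d(\pi_k(t),\pi_{k-1}(t))\le \epsilon_k+\epsilon_{k-1}\le 3\epsilon_k$. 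At level $k$ the number of distinct links is at most $|N_k||N_{k-1}|\le N(T,d,\epsilon_k)^2$.

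\textbf{Union bound with level-dependent thresholds.} At level $k$, each link is $9\epsilon_k^2$-subgaussian. Let $\Omega_k$ be the event that every level-$k$ link exceeds
\begin{align*}
3\epsilon_k\qty(\sqrt{4\log N(T,d,\epsilon_k)} + \sqrt{2k}+ u).
\end{align*}
By the union bound, $\Pr[\Omega_k]\le N_k^2\exp(-(\sqrt{4\log N_k}+\sqrt{2k}+u)^2/2)\le e^{-k}e^{-u^2/2}$. Summing over $k$ gives a total failure probability of at most $Ce^{-u^2/2}$. Off these events, for every $t$,
\begin{align*}
X_t-X_{t_0}\le 6\sum_k \epsilon_k\sqrt{\log N(T,d,\epsilon_k)} + 3\sum_k\epsilon_k(\sqrt{2k}+u).
\end{align*}

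\textbf{Converting to the stated form.} The first sum is compared to the entropy integral using monotonicity of $N$: $\epsilon_k\sqrt{\log N(\epsilon_k)}\le 2\int_{\epsilon_{k+1}}^{\epsilon_k}\sqrt{\log N(\epsilon)}\,d\epsilon$. This gives $C\int_0^\infty\sqrt{\log N}\,d\epsilon$. The remaining sum is $\lesssim \mathrm{diam}(T)(1+u)$. Set $x = C\mathrm{diam}(T)(1+u)$.

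The additive $\mathrm{diam}(T)$ term needs care. The cleanest fix is to note that $\int_0^{\mathrm{diam}/2}\sqrt{\log N}\,d\epsilon\gtrsim \mathrm{diam}(T)$ whenever $|T|\ge 2$, since $N(T,d,\epsilon)\ge 2$ for $\epsilon<\mathrm{diam}/2$. The constant can therefore be absorbed into the integral term, and the tail becomes $Ce^{-x^2/(C\mathrm{diam}(T)^2)}$.

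The main obstacle is this bookkeeping: choosing thresholds so that the $k$-dependent union-bound losses are summable while the deviation term scales only with $\mathrm{diam}(T)$. The $\sqrt{2k}$ term, geometrically weighted by $\epsilon_k$, is exactly what achieves this.
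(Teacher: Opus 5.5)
Your argument is correct and is essentially the standard Dudley chaining proof from van Handel's notes, which the paper cites without giving a proof of its own: dyadic nets, a union bound with level-dependent thresholds carrying an extra $\sqrt{k}$ term, and absorbing the leftover $\mathrm{diam}(T)$ term into the entropy integral via $N(T,d,\epsilon)\ge 2$ for $\epsilon<\mathrm{diam}(T)/2$. One small fix is needed: for $\pi_K(t)=t$ to hold, you must take the nets $N_k$ inside the finite set $F$ rather than in $T$, which costs only $N(F,d,\epsilon)\le N(T,d,\epsilon/2)$ and hence a constant factor in the integral; also, $\Omega_k$ should be the event that \emph{some} link exceeds the threshold, not every link.
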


\begin{corollary}\label{corollary: sup}
    In the setting of \Cref{lemma: OU subg}, there exists an absolute constant $C < \infty$ such that for any $\delta>0$, 
    \begin{align*}
        \Pr\qty[\sup\limits_{t\leq T} |X_t| \geq C \times \frac{\sigma}{\sqrt{A}} \sqrt{\log \frac{1+AT}{\delta}}] \leq \delta
    \end{align*}
\end{corollary}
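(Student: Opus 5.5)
The plan is to apply the chaining tail inequality (\Cref{lemma: chaining tail}) to the process $\{X_t\}_{t\in[0,T]}$. The metric on $[0,T]$ is the canonical subgaussian metric supplied by \Cref{lemma: OU subg}. That lemma shows $X_t-X_s$ is subgaussian with variance proxy $\frac{\sigma^2}{A}(1-e^{-2A|t-s|})$, where $C$ in its statement plays the role of $A$. Set
\begin{align*}
\rho(s,t) := \frac{\sigma}{\sqrt{A}}\min\qty(1,\sqrt{2A|t-s|}).
\end{align*}
Since $1-e^{-x}\le \min(1,x)$, the increments are subgaussian with respect to $\rho$. Separability is automatic because $X$ has continuous paths. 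Taking $t_0=0$ and using $X_0=0$ gives $\sup_t X_t = \sup_t (X_t-X_0)$. Applying the same argument to $-X$, which satisfies an SDE of the same form, handles $|X_t|$ at the cost of a factor $2$ in probability.

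\textbf{Step 1: diameter.} We have $\mathrm{diam}([0,T],\rho)\le \sigma/\sqrt{A}$, because $\rho\le \sigma/\sqrt{A}$ everywhere.

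\textbf{Step 2: covering numbers.} For $\eta<\sigma/\sqrt{A}$, a $\rho$-ball of radius $\eta$ contains a time interval of length $\eta^2/(2\sigma^2)$ about its center, since $\rho(s,t)\le\eta$ whenever $|t-s|\le \eta^2/(2\sigma^2)$. Hence
\begin{align*}
N([0,T],\rho,\eta)\le 1+\frac{2\sigma^2 T}{\eta^2},
\end{align*}
and $N=1$ for $\eta\ge\sigma/\sqrt{A}$.

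\textbf{Step 3: Dudley integral.} Substitute $\eta = u\,\sigma/\sqrt{A}$. The integral becomes
\begin{align*}
\frac{\sigma}{\sqrt A}\int_0^1\sqrt{\log\qty(1+\tfrac{2AT}{u^2})}\,du \lesssim \frac{\sigma}{\sqrt A}\sqrt{\log(1+AT)}.
\end{align*}
To justify the last bound, split $\log(1+2AT/u^2)\le \log(1+2AT)+2\log(1/u)$ and use $\sqrt{a+b}\le\sqrt a+\sqrt b$. The remaining piece $\int_0^1\sqrt{\log(1/u)}\,du$ is a finite constant, and it is absorbed because $\log(1+AT)$ may be assumed bounded below by a constant. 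If $AT\lesssim 1$, the sup is trivially controlled by a single-point subgaussian bound together with the small-$T$ regime of the same chaining bound.

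\textbf{Step 4: tail.} In \Cref{lemma: chaining tail}, choose
\begin{align*}
x=\frac{\sigma}{\sqrt{A}}\sqrt{C\log(C/\delta)}.
\end{align*}
Then the failure probability is at most $\delta$. Combining the Dudley term with $x$ via $\sqrt{a}+\sqrt{b}\lesssim\sqrt{a+b}$ gives a threshold
\begin{align*}
C\frac{\sigma}{\sqrt A}\sqrt{\log\frac{1+AT}{\delta}},
\end{align*}
after enlarging the absolute constant. One also needs $\delta<1$, which can be assumed since the statement is trivial otherwise.

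\textbf{Main obstacle.} The main care point is that \Cref{lemma: OU subg} as proved gives subgaussianity of the marginal increment $X_t-X_s$ in the unconditional sense. That is exactly what the chaining lemma requires, namely a process with subgaussian increments in $\rho$, so no conditional version is needed. The only real subtlety is matching constants, in particular the $A$ versus $C$ notation and the factor $2$ in the variance proxy. These are absorbed into the absolute constant.
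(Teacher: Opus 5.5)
Your proposal is correct and follows essentially the same route as the paper. Both apply the chaining tail inequality with the canonical metric from \Cref{lemma: OU subg}, the covering bound $1+2\sigma^2T/\eta^2$, diameter $\sigma/\sqrt{A}$, a Dudley integral of order $\frac{\sigma}{\sqrt A}\sqrt{\log(1+AT)}$, and a choice of $x$ matching $\log(1/\delta)$. You are in fact slightly more careful than the paper (two-sided bound via $-X$, the additive constant in the entropy integral), and your sketched small-$AT$ case goes through cleanly by noting that the $\rho$-diameter of $[0,T]$ is then $\sigma\sqrt{2T}$, which makes the chaining bound of order $\sigma\sqrt{T}\,(1+\sqrt{\log(1/\delta)})\lesssim \frac{\sigma}{\sqrt A}\sqrt{\log\frac{1+AT}{\delta}}$.
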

\begin{proof}
        Define \begin{align*}
        d(s,t) := \sqrt{\frac{\sigma^2}{A} (1-e^{-2A(t-s)})}
    \end{align*}
    Then, $X_t-X_s$ is $d(s,t)$-subgaussian from the \Cref{lemma: OU subg}. When we invert this distance, we obtain 
    \begin{align*}
        N([0,T], d, \epsilon) \lesssim \frac{2AT}{-\log (1-\frac{A\epsilon^2}{\sigma^2})}
    \end{align*} 
    Note that for $\epsilon < \sigma/\sqrt{A}$, this can be upper bounded by $1 + \frac{2T\sigma^2}{\epsilon^2}$ and the diameter is upper bounded by $\sigma/\sqrt{A}$. Applying the chaining tail inequality in \Cref{lemma: chaining tail}, we have:
    \begin{align*}
        \Pr\qty[\sup\limits_{t\leq T}\|X_t\| \geq C \times \frac{\sigma}{\sqrt{A} } \sqrt{\log(1+ AT)} + x] \leq e^{-\frac{x^2 A}{C^\prime \sigma^2}}
    \end{align*}
    where we used the fact that:\begin{align*}
        \int_0^\infty \sqrt{\log N([0,T], d,\epsilon)}d\epsilon \lesssim \frac{R}{\sqrt{A}}\sqrt{\log(1+AT)}
    \end{align*}
    Rearranging gives the desired result.
\end{proof}

\begin{lemma} \label{lemma: uniform bound}
    Let $X(0) = 0$ and suppose $X$ satisfies the following SDE.
    \begin{align*}
        dX = [-AX + b(X)] dt + \Sigma^{1/2}(X) dW_t
    \end{align*}
    and that uniformly for all $X$, 
    \begin{align*}
        \|b(X)\| \leq G, \quad \tr\Sigma(X) \leq B\|X\|^2
    \end{align*}
    Then, there exists an absolute constant $C > 0$ such that for any $\delta,T>0$, if $L:=1 \vee \log\frac{1+AT}{\delta}$ and $A \geq C B L$, then with probability at least $1-\delta$:
    \begin{align*}
        \sup\limits_{t\leq T}\|X(t)\| \leq \frac{CG}{A}.
    \end{align*}
\end{lemma}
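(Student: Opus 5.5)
We reduce the vector problem to a scalar one and then apply \Cref{corollary: sup}. The scalar quantity to track is $R(t) := \|X(t)\|^2$ (or equivalently $\|X(t)\|$), together with a stopping-time argument.

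\textbf{Step 1: the SDE for $\|X\|^2$.} Applying Itô's lemma to $\|X\|^2$ gives
\begin{align*}
  d\|X\|^2 = \qty(-2A\|X\|^2 + 2X^\top b(X) + \tr\Sigma(X))\, dt + 2X^\top \Sigma^{1/2}(X)\, dW_t .
\end{align*}
We bound each piece using the hypotheses:
\begin{itemize}
\item $2X^\top b \le 2G\|X\| \le \frac{A}{2}\|X\|^2 + \frac{2G^2}{A}$;
\item $\tr\Sigma \le B\|X\|^2 \le \frac{A}{2}\|X\|^2$, which uses $A \ge CBL \ge 2B$;
\item the martingale term has quadratic variation rate $4X^\top\Sigma X \le 4B\|X\|^4$.
\end{itemize}
Hence $R$ satisfies $dR \le (-AR + 2G^2/A)\,dt + dN_t$, where $d\langle N\rangle_t \le 4BR^2\,dt$.

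\textbf{Step 2: the stopping time.} Let $r := (CG/A)^2$ and define $\tau := \inf\{t : R(t) \ge r\}$. Up to $\tau$, the noise coefficient of $N$ is bounded by $2\sqrt{B}\, r$. Comparison with the linear part gives $R(t\wedge\tau) \le \frac{2G^2}{A^2} + Y(t\wedge\tau)$, where $Y$ is an OU-type process
\begin{align*}
  dY = -AY\,dt + \sigma_Y\, dW'
\end{align*}
driven by the same martingale, stopped at $\tau$, with $\sigma_Y \le 2\sqrt{B}\, r$ and $Y(0)=0$. Concretely, $Y(t) = \int_0^t e^{-A(t-s)}\, dN_s$. \Cref{lemma: OU subg} and \Cref{corollary: sup} are stated for one-dimensional Brownian drivers. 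We handle this by noting that a continuous martingale with bounded quadratic variation rate is a time-changed Brownian motion. Alternatively, we rerun the exponential-martingale proof of \Cref{lemma: OU subg} directly, since it only used $\langle Y\rangle_t \le \sigma^2 \int e^{2As}\,ds$.

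\textbf{Step 3: the high-probability bound.} \Cref{corollary: sup} gives, with probability at least $1-\delta$,
\begin{align*}
  \sup_{t\le T} |Y(t\wedge\tau)| \le C_0\, \frac{2\sqrt{B}\, r}{\sqrt{A}}\, \sqrt{L}.
\end{align*}
Since $A \ge CBL$, this is at most $\frac{2C_0}{\sqrt{C}}\, r \le r/4$ when $C$ is large. Combining with Step 2, on this event
\begin{align*}
  R(t\wedge\tau) \le \frac{2G^2}{A^2} + \frac{r}{4} < r ,
\end{align*}
where the last inequality needs $2 \le 3C^2/4$, which holds for large $C$. By continuity, $R$ can never reach $r$, so $\tau > T$. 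Therefore $\sup_{t\le T}\|X(t)\| \le CG/A$, as claimed.

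\textbf{Main obstacle.} The delicate step is the self-referential noise bound. The diffusion of $R$ scales like $R$ itself, so the subgaussian lemma only applies after localization at $\tau$. The crucial fact is that the resulting OU fluctuation is a small multiple of the threshold $r$ precisely because $B L / A$ is small. One also has to handle the comparison argument with care: we use the variation-of-constants formula for $R$ on $[0,\tau]$ and bound the drift contribution $\int_0^t e^{-A(t-s)}\,\frac{2G^2}{A}\,ds \le \frac{2G^2}{A^2}$ explicitly, rather than appealing to a general comparison theorem.
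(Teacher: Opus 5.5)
Your argument is correct, and it differs from the paper's mainly in how the process is decomposed.

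\textbf{The paper's route.} The paper splits $X = X_1 + X_2$, where $dX_1 = [-AX_1 + b(X)]\,dt$, so $X_1$ is bounded pathwise by $G/A$. It then splits $\|X_2\|^2 = Y_1 + Y_2$. The first piece, $Y_1$, is the It\^o-correction part; it is bounded by $2BR^2/A$ after localizing at $\tau = \inf\{t : \|X_2(t)\| \ge R\}$ and using $\tr\Sigma(X) \le 2B(\|X_1\|^2 + \|X_2\|^2)$. The second piece, $Y_2$, is a damped martingale controlled by \Cref{corollary: sup}.

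\textbf{Your route.} You apply It\^o directly to $\|X\|^2$. You absorb the cross term $2X^\top b$ by AM--GM and the correction $\tr\Sigma$ by $A \ge 2B$, at the cost of halving the decay rate. A single variation-of-constants bound then gives $\|X\|^2 \le 2G^2/A^2 + Y$, with one OU-type martingale term $Y$.

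\textbf{Comparison.} The core mechanism is the same in both: localize at a stopping time to cap the self-referential noise, apply the OU sup bound, and close the bootstrap using $A \gtrsim BL$. Your version needs one decomposition instead of two, and its probabilistic step involves a single scalar process. The paper's version keeps the drift contribution fully deterministic and keeps the full damping rate, so it never needs to trade damping for the It\^o correction.

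\textbf{Rigor of the driver step.} Your handling here is more careful than the paper's. The paper applies \Cref{corollary: sup} to $Y_2$ even though $Y_2$'s diffusion coefficient is an adapted process depending on $(X, X_2)$, driven by a $d$-dimensional $W$. \Cref{lemma: OU subg} is stated for neither situation. You explicitly localize the martingale $N$ at $\tau$ and reduce to a one-dimensional Brownian driver, or rerun the exponential-martingale bound, which only uses the quadratic-variation bound. You also correctly conclude $\tau > T$ from the continuity argument, where the paper's text says $\tau < T$.
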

\begin{proof}
    We begin by decomposing $X(t) = X_1(t) + X_2(t)$ where $X_1,X_2$ follow:
    \begin{align*}
        dX_1 = [-AX_1 + b(X)] dt, \quad dX_2 = -AX_2 dt + \Sigma^{1/2}(X) dW_t
    \end{align*}
    and $X_1(0) = X_2(0) = 0$. Define $R := \frac{G}{A}$. Observe that for all $t$, 
    \begin{align*}
        X_1(t) = \int_0^t e^{-A(t-s)}b(X(s)) ds \implies \|X_1(t)\| \leq G\int_0^t e^{-A(t-s)}ds \leq \frac{G}{A} = R.
    \end{align*}
    For $X_2$, note that:
    \begin{align*}
        d\|X_2\|^2 = [-2A\|X_2\|^2 + \tr\Sigma(X)] dt + X_2^\top \Sigma^{1/2}(X) dW_t
    \end{align*}
    We now decompose $\|X_2\|^2 = Y_1+Y_2$ so that:
    \begin{align*}
        dY_1 = [-2AY_1 + \tr\Sigma(X)]dt, \quad dY_2=-2AY_2 dt + X_2^\top \Sigma^{1/2}(X) dW_t.
    \end{align*}
    Define the stopping time $\tau := \inf\{t \ge 0 ~:~ \norm{X_2(t)} \ge R\}$. Then
    \begin{align*}
        \tr \Sigma(X(t \wedge \tau)) \le B \norm{X(t \wedge \tau)}^2 \le 2B \qty[\frac{G^2}{A^2} + R^2] = 4BR^2.
    \end{align*}
    Therefore $Y_1(t \wedge \tau) \le 2BR^2/A$. Next, the noise term in the SDE for $Y_2$ can be bounded by:
    \begin{align*}
        X_2(t \wedge \tau)^T \Sigma(X(t \wedge \tau)) X_2(t \wedge \tau) \le \norm{X_2(t \wedge \tau)}^2 \tr \Sigma(X(t \wedge \tau)) \le 4BR^4.
    \end{align*}
    Now, let $C$ be a sufficiently large constant. Substituting into \Cref{corollary: sup}, we have that with probability at least $1-\delta$,
    \begin{align*}
        \sup_{t \le T} \norm{Y_2(t \wedge \tau)} \le C \sqrt{\frac{BR^4}{A}\log(\frac{2(1 + A T)}{\delta})}.
    \end{align*}
    Under this event, we have that
    \begin{align*}
        \sup_{t \le T} \norm{X_2(t \wedge \tau)}^2 \le C R^2 \qty[\frac{B}{A} + \sqrt{\frac{B}{A} \log(\frac{2(1+AT)}{\delta})}].
    \end{align*}
    Now since $A \ge C' B(1 \vee \log(1+AT))$ where $C'$ is a sufficiently large constant then the right hand side is strictly less than $R$, which implies that with probability at least $1-\delta$, $\tau < T$ and $\sup_{t \le T} \norm{X(t)} \lesssim R$.
\end{proof}

We now give the following standard definition of the Orlicz norm, which will be used extensively for our concentration results.
\begin{definition}
    For $\alpha>0$, define the function $\psi_\alpha(t) = \exp(t^\alpha) - 1$. Then, for a random variable $X$, we define the $\psi_\alpha$ Orlicz norm of $X$ to be:
    \begin{align*}
        \|X\|_{\psi_\alpha} = \inf \{\lambda>0 : \E \psi_\alpha(|X|/\lambda) \leq 1 \}
    \end{align*}
    In particular, a mean zero random variable $X$ is $\|X\|_{\psi_1}$-subexponential, and is $\|X\|_{\psi_2}$-subgaussian.
\end{definition}

We now give the following lemma, which is adapted from Theorem 4 in \citep{adamczak2008tailinequalitysupremaunbounded} for our setting.

\begin{lemma}[Adapted from Theorem 4, \citep{adamczak2008tailinequalitysupremaunbounded}] \label{lemma: adamczak hammer}
    Suppose $X_1,\dots, X_n$ are i.i.d. random variables in a measureable space $(\mathcal{S}, \mathcal{B})$ , and let $\mathcal{F}$ be a countable class of measureable function $f:\mathcal{S}\rightarrow\mathbb{R}$. Assume for every $f\in\mathcal{F}$, it holds that $\E f(X_i) = 0$ and that $\|\sup\limits_{f\in\mathcal{F}} |f(X_i)|\|_{\psi_1} < \infty$. Define $Z := \sup\limits_{f\in\mathcal{F}} |\sum_{i=1}^n f(X_i)|$ and $\sigma^2 := \sup\limits_{f\in\mathcal{F}} \sum_{i=1}^n \E f(X_i)^2$. Then, with probability at least $1-\delta$, it holds that:
    \begin{align*}
        Z \lesssim \E Z + \sqrt{\sigma^2 \log\frac{1}{\delta}} + \|\max\limits_i \sup\limits_{f\in\mathcal{F}} |f(X_i)| \|_{\psi_1} \log\frac{1}{\delta}
    \end{align*}
\end{lemma}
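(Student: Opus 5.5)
The plan is to obtain the statement as a direct specialization of Theorem 4 of \citet{adamczak2008tailinequalitysupremaunbounded}, taken with Orlicz exponent $\alpha = 1$, followed by choosing the deviation parameter so that the failure probability is at most $\delta$.

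\textbf{The cited result.} Adamczak's Theorem 4 treats i.i.d.\ $X_1,\dots,X_n$, a countable class $\mathcal{G}$ of mean-zero measurable functions, and $S := \sup_{g\in\mathcal{G}} \sum_i g(X_i)$ (no absolute value). It asserts that for every $\eta\in(0,1)$, $\delta'>0$ and $\alpha\in(0,1]$ there is a constant $C=C(\alpha,\eta,\delta')$ such that for all $t\ge 0$,
\begin{align*}
\Pr\qty[S \ge (1+\eta)\E S + t] \le \exp\qty(-\frac{t^2}{2(1+\delta')\sigma^2}) + 3\exp\qty(-\qty(\frac{t}{C\,\|\max_i \sup_{g\in\mathcal{G}}|g(X_i)|\|_{\psi_\alpha}})^\alpha).
\end{align*}
Here $\sigma^2 = \sup_{g} \sum_i \E g(X_i)^2$.

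\textbf{Step 1: reduce the absolute value to a one-sided supremum.} I will apply this to the symmetrized class $\mathcal{G} := \mathcal{F}\cup(-\mathcal{F})$. This class is still countable and consists of mean-zero functions. Three checks are routine:
\begin{itemize}
\item $\sup_{g\in\mathcal{G}}\sum_i g(X_i) = \sup_{f\in\mathcal{F}}|\sum_i f(X_i)| = Z$;
\item the variance proxy $\sup_{g\in\mathcal{G}}\sum_i \E g(X_i)^2$ equals $\sigma^2$;
\item $\sup_{g\in\mathcal{G}}|g(X_i)| = \sup_{f\in\mathcal{F}}|f(X_i)|$, so the $\psi_1$ envelope quantity is unchanged.
\end{itemize}
The standing assumption $\|\sup_f|f(X_i)|\|_{\psi_1}<\infty$ then ensures that $K := \|\max_i\sup_f|f(X_i)|\|_{\psi_1}$ is finite. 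This holds since a maximum of $n$ variables with finite $\psi_1$ norm has finite $\psi_1$ norm, at worst with a $\log n$ factor that is absorbed into $K$ itself.

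\textbf{Step 2: fix parameters and choose $t$.} Fix $\eta=1/2$, $\delta'=1$ and $\alpha=1$, so $C$ becomes an absolute constant. Take
\begin{align*}
t = \sqrt{4\sigma^2\log(2/\delta)} + C K\log(6/\delta).
\end{align*}
Then each exponential term is controlled separately by one summand of $t$. The Gaussian term is at most $\delta/2$, and the subexponential term is at most $3\cdot\delta/6 = \delta/2$. Hence with probability at least $1-\delta$,
\begin{align*}
Z \le \tfrac32\E Z + t.
\end{align*}

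\textbf{Step 3: absorb constants.} For $\delta\le 1/2$ we have $\log(6/\delta)\lesssim\log(1/\delta)$ and likewise $\log(2/\delta)\lesssim\log(1/\delta)$, which yields exactly the claimed bound. For $\delta>1/2$ the statement is implied by the case $\delta=1/2$ up to constants, with $\log(1/\delta)$ replaced by an absolute constant. This is the intended high-probability regime in any case.

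\textbf{Main obstacle.} There is no substantial obstacle, since this is a citation. The only point requiring care is that Adamczak's theorem is stated for a one-sided supremum, with a countability (separability) requirement and mean-zero functions. The symmetrization in Step 1 is precisely what makes the absolute-value form of $Z$ legitimate, so I will verify that step explicitly.
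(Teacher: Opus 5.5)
Your proposal is correct and matches the paper: the paper gives no argument beyond the citation, so the lemma is Adamczak's Theorem 4 with $\alpha=1$ and i.i.d.\ data, with $t$ chosen as you did so that each tail term is at most $\delta/2$. Two small remarks: your symmetrization to $\mathcal{F}\cup(-\mathcal{F})$ is harmless but not needed, since Adamczak's theorem is already stated for $\sup_f\lvert\sum_i f(X_i)\rvert$ and for merely independent $X_i$; and the regime $\delta>1/2$ is handled most cleanly by Markov's inequality, $\Pr[Z>2\E Z]\le 1/2\le\delta$, rather than by appealing to the case $\delta=1/2$, which does not literally give the stated bound when $\log(1/\delta)\to 0$.
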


\begin{lemma} \label{lemma: bernstein chaining}
    Suppose $X_1,\dots, X_n$ are i.i.d. mean-zero random variables on $\mathbb{R}^d$ such that for any $v\in S^{d-1}$, it holds that $\E[(X_i\cdot v)^2] \leq \sigma^2$ and $X_i\cdot v$ is $R$-subgaussian. Then, it holds with probability $1-\delta$ that $$\left\| \frac{1}{n} \sum_i X_i \right\| \lesssim \sqrt{\frac{\sigma^2 (d + \log(1/\delta))}{n}} + \frac{R \log (1/\delta) \sqrt{d + \log n}}{n}$$
\end{lemma}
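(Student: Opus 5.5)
The plan is to reduce to a bound on the supremum of a vector empirical process over directions, and then apply the Adamczak-type inequality (\Cref{lemma: adamczak hammer}) to the countable family $f_v(X)=v\cdot X$, with $v$ ranging over a countable dense subset of $S^{d-1}$. First note that $\|\frac1n\sum_i X_i\| = \sup_{v\in S^{d-1}} \frac1n\sum_i v\cdot X_i$. Each $f_v(X_i)$ has mean zero and satisfies $\E f_v(X_i)^2\le \sigma^2$. So the weak variance in \Cref{lemma: adamczak hammer} is at most $n\sigma^2$, which gives the term $\sqrt{n\sigma^2\log(1/\delta)}$; after dividing by $n$ this is $\sqrt{\sigma^2\log(1/\delta)/n}$.

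Next I bound $\E Z$, where $Z=\|\sum_i X_i\|$. By Jensen, $\E Z\le (\E\|\sum_i X_i\|^2)^{1/2} = (n\,\E\|X_1\|^2)^{1/2}$. Summing the directional variance over an orthonormal basis gives $\E\|X_1\|^2\le d\sigma^2$, so $\E Z\le\sqrt{nd\sigma^2}$. After dividing by $n$, this contributes $\sqrt{\sigma^2 d/n}$. Combined with the previous term, this yields the first summand of the claim.

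The remaining task is to control $\|\max_i \sup_v |v\cdot X_i|\|_{\psi_1}=\|\max_i\|X_i\|\|_{\psi_1}$. I will use a net argument: take a $1/2$-net $N$ of $S^{d-1}$ with $|N|\le 5^d$, so that $\|X_i\|\le 2\max_{v\in N}|v\cdot X_i|$. By $R$-subgaussianity, a union bound over the $n5^d$ pairs $(i,v)$ gives $\max_i\|X_i\|\lesssim R\sqrt{d+\log n+u}$ with probability at least $1-e^{-u}$. Hence $\max_i\|X_i\|$ is subgaussian around the level $R\sqrt{d+\log n}$, and its $\psi_1$ norm is $\lesssim R\sqrt{d+\log n}$ (up to lower-order terms, absorbed by $\log(1/\delta)\ge 1$). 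Multiplying by $\log(1/\delta)/n$ gives the second summand.

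I expect the main obstacle to be this last step, converting the high-probability bound on the maximum into a clean $\psi_1$ Orlicz-norm bound. The subtlety is that the tail is centered at $R\sqrt{d+\log n}$ rather than at zero. To handle it, I would write $\max_i\|X_i\|\le c R\sqrt{d+\log n} + W$, where $W$ has $\psi_2$ norm $\lesssim R$, and then use $\|\cdot\|_{\psi_1}\lesssim\|\cdot\|_{\psi_2}$ together with the triangle inequality for Orlicz norms. Finally, the measurability and countability requirements are met by restricting $v$ to a countable dense set, which does not change the supremum because each $f_v$ is continuous in $v$.
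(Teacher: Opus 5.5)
Your proposal is correct and follows essentially the same route as the paper. Both arguments use Adamczak's inequality (\Cref{lemma: adamczak hammer}) with weak variance $n\sigma^2$, bound $\E Z \le \sqrt{nd\sigma^2}$ via Jensen and the trace of the covariance, and control the $\psi_1$ norm of the envelope $\max_i \|X_i\|$ by a union bound over $n\cdot e^{O(d)}$ sample--net pairs. The one difference is cosmetic: you take the supremum over a countable dense subset of $S^{d-1}$, so that $Z$ is exactly the norm and the net is needed only for the envelope, whereas the paper restricts the process itself to a $1/4$-net and converts back to the norm at the end; your explicit centering argument for the Orlicz norm also spells out a step the paper only asserts.
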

\begin{proof}
    Consider a $1/4$-net $\mathcal{N}_{1/4}$ of $S^{d-1}$. Define $Z = \sup_{v\in\mathcal{N}_{1/4}} \left| \frac{1}{n}\sum_i X_i\cdot v \right|$. Then, by \Cref{lemma: adamczak hammer}, it holds that with probability at least $1-\delta$:
    \begin{align*}
        Z \lesssim \E Z + \sqrt{\frac{\sigma^2 \log\frac{1}{\delta}}{n}} + \frac{\| \max_i \sup_{v\in\mathcal{N}_{1/4}} \left| X_i\cdot v \right|\|_{\psi_1} \log \frac{1}{\delta}}{n}
    \end{align*}
    By union bound over $n \exp(d)$ terms with standard subgaussian tails, we have that:
    \begin{align*}
        \| \max_i \sup_{v\in\mathcal{N}_{1/4}} \left| X_i\cdot v \right|\|_{\psi_2} \lesssim R \sqrt{d + \log n}
    \end{align*}
    Since the $\psi_1$ norm is upper bounded by the $\psi_2$ norm, we have that the above is an upper bound of the $\psi_1$ norm as well. For $\E Z$, we have that:
    \begin{align*}
        \E Z \leq \sqrt{\E[Z^2]} \leq \sqrt{\E \left\| \frac{1}{n}\sum_i X_i \right\|^2} =\sqrt{\tr(\mathrm{Cov} \qty(\frac{1}{n}\sum_i X_i ))} \lesssim \sqrt{\frac{\sigma^2 d}{n}}
    \end{align*}
    where in the second inequality we used the fact that $\mathcal{N}_{1/4} \subseteq S^{d-1}$. Combining everything with the covering argument, we have that with probability at least $1-\delta$,
    \begin{align*}
        \left\| \frac{1}{n}\sum_i X_i \right\| \lesssim \sqrt{\frac{\sigma^2 (d + \log\frac{1}{\delta})}{n}} + \frac{R \sqrt{d + \log n} \log \frac{1}{\delta}}{n} 
    \end{align*}
    as desired.
\end{proof}

\section{Tensor PCA}\label{sec:proofs:tensorPCA}

Let $T = (\theta^\star)^{\otimes k} + n^{-1/2} Z$ where every coordinate of $Z$ is drawn i.i.d. from $\mathcal{N}(0,1)$. We consider the negative log-likelihood:
\begin{align*}
	L(\theta) = -\ev{\theta^{\otimes k}, T}.
\end{align*}
The spherical gradient is given by:
\begin{align*}
	b(\theta) = k P_\theta^\perp T[\theta^{\otimes k-1}].
\end{align*}

\subsection{Odd $k$}
\begin{lemma} \label{lemma: tensor pca odd population}
    $\E_{z,Z} b(z) = c \theta^\star$ where $c = \Theta(d^{-\frac{k-1}{2}})$.
\end{lemma}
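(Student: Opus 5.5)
We need to compute $\E_{z,Z} b(z)$ where $b(z) = k P_z^\perp T[z^{\otimes k-1}]$ and $z\sim\mu$. We take expectation over $Z$ first. Since $Z$ is mean zero and independent of $z$, $\E_Z T = \theta^{\star\otimes k}$. This gives
\begin{align*}
\E_{z,Z} b(z) = k\,\E_z\qty[P_z^\perp \theta^\star (z\cdot\theta^\star)^{k-1}] = k\,\E_z\qty[\theta^\star (z\cdot\theta^\star)^{k-1} - z (z\cdot\theta^\star)^{k}].
\end{align*}

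Next we use rotational invariance of $\mu$. For any $v\perp\theta^\star$, the reflection $z\mapsto z-2(z\cdot v)v$ preserves $\mu$ and fixes $z\cdot\theta^\star$, so $\E_z[(z\cdot v)(z\cdot\theta^\star)^k]=0$. Hence $\E_z[z(z\cdot\theta^\star)^k] = \theta^\star\,\E_z[(z\cdot\theta^\star)^{k+1}]$, and the whole expectation is parallel to $\theta^\star$:
\begin{align*}
\E_{z,Z} b(z) = k\qty(\E_z[z_1^{k-1}] - \E_z[z_1^{k+1}])\,\theta^\star,
\end{align*}
where $z_1 := z\cdot\theta^\star$ by symmetry.

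Since $k$ is odd, $k-1$ and $k+1$ are even, so \Cref{lemma: spherical moments} applies with $2m=k-1$:
\begin{align*}
\E[z_1^{k-1}] = \frac{(k-2)!!}{\prod_{j=0}^{(k-3)/2}(d+2j)}, \qquad \E[z_1^{k+1}] = \E[z_1^{k-1}]\cdot\frac{k}{d+k-1}.
\end{align*}
Therefore
\begin{align*}
c = k\,\E[z_1^{k-1}]\qty(1-\frac{k}{d+k-1}) = k\,\E[z_1^{k-1}]\,\frac{d-1}{d+k-1}.
\end{align*}
The factor $\frac{d-1}{d+k-1}$ is $\Theta(1)$ and positive for $d\ge 2$, and $\E[z_1^{k-1}]=\Theta(d^{-(k-1)/2})$, so $c=\Theta(d^{-(k-1)/2})$ with $k$ treated as a constant.

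The only delicate point is the second step: justifying that the $-z(z\cdot\theta^\star)^k$ term is parallel to $\theta^\star$ and does not cancel the leading term at leading order. The explicit moment ratio $k/(d+k-1)$ settles this, since it shows the correction is lower order by a factor of $1/d$. The interchange of expectations is harmless because all moments are finite.
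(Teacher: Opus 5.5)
Your proof is correct and follows the same route as the paper: average over $Z$ to replace $T$ by $\theta^{\star\otimes k}$, expand $P_z^\perp\theta^\star$, and reduce to the one-dimensional moments $\E[z_1^{k-1}]-\E[z_1^{k+1}]$. You also spell out two steps the paper compresses into a ``direct calculation'': the reflection argument showing $\E_z[z(z\cdot\theta^\star)^k]$ is parallel to $\theta^\star$, and the exact ratio $\E[z_1^{k+1}]/\E[z_1^{k-1}]=k/(d+k-1)$, which rules out cancellation and relies on $k$ being odd, as the paper's section context assumes.
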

\begin{proof}
    A direct calculation shows:
    \begin{align*}
        \E_{z,Z} b(z) = k \theta^\star \E_z\qty[(\theta^\star \cdot z)^{k-1}	 - (\theta^\star \cdot z)^{k+1}].
    \end{align*}
    Note that $\theta^\star \cdot z$ is equal in distribution to $z_1$ so
    \begin{align*}
        c := k \E_z\qty[(\theta^\star \cdot z)^{k-1}	 - (\theta^\star \cdot z)^{k+1}]
    \end{align*}
    is of order $\Theta(d^{-\frac{k-1}{2}})$.
\end{proof}

Next, we will control concentrate the norm of the deviation from this population expectation.

\begin{lemma} \label{lemma: tensor pca odd concentration}
    With probability at least $1-\delta$, we have the following:
    \begin{align*}
        \| \E_z b(z) - \E_{z,Z} b(z) \| \lesssim \sqrt{\frac{d^{-(k-1)/2} (d + \log(1/\delta))}{n}} 
    \end{align*}
    and in the $\theta^\star$ direction, 
    \begin{align*}
        \left| \theta^\star \cdot \qty(\E_z b(z) - \E_{z,Z} b(z)) \right| \lesssim \sqrt{\frac{d^{-(k-1)/2} \log(1/\delta)}{n}}
    \end{align*}
\end{lemma}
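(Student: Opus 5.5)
The deviation $\E_z b(z) - \E_{z,Z} b(z)$ is a Gaussian vector, so the plan is to compute its covariance operator norm and then apply standard Gaussian norm concentration. Since $b(z) = kP_z^\perp T[z^{\otimes k-1}]$ is linear in $T = (\theta^\star)^{\otimes k} + n^{-1/2}Z$, the $(\theta^\star)^{\otimes k}$ part is deterministic given $z$, and after averaging over $z$ it contributes exactly $\E_{z,Z} b(z)$. Hence
\begin{align*}
g := \E_z b(z) - \E_{z,Z} b(z) = \frac{k}{\sqrt n}\,\E_z\qty[Z[z^{\otimes k-1}] - z\,\ev{Z, z^{\otimes k}}],
\end{align*}
which is a centered Gaussian vector in $\mathbb{R}^d$, because it is linear in the i.i.d. Gaussian tensor $Z$. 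Let $\Sigma$ be its covariance. The plan is to show $\|\Sigma\|_2 \lesssim d^{-(k-1)/2}/n$.

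\textbf{Computing the variance in a fixed direction.} Fix a unit vector $v$. Then
\begin{align*}
v\cdot g = \frac{k}{\sqrt n}\ev{Z, A_v}, \qquad A_v := \E_z\qty[v\otimes z^{\otimes k-1} - (v\cdot z)\, z^{\otimes k}],
\end{align*}
so $\mathrm{Var}(v\cdot g) = \frac{k^2}{n}\|A_v\|_F^2$. Expanding the square with an independent copy $z'$ gives three terms:
\begin{align*}
\|A_v\|_F^2 = \E_{z,z'}\qty[(z\cdot z')^{k-1}] - 2\,\E_{z,z'}\qty[(v\cdot z)(v\cdot z')(z\cdot z')^{k-1}] + \E_{z,z'}\qty[(v\cdot z)(v\cdot z')(z\cdot z')^{k}].
\end{align*}

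\textbf{Bounding the three terms.}
\begin{itemize}
\item The first term: $z\cdot z'$ has the law of $z_1$ and $k-1$ is even, so \Cref{lemma: spherical moments} gives $\Theta(d^{-(k-1)/2})$.
\item The other two terms: use rotational equivariance. For fixed $z$ and any $m$,
\begin{align*}
\E_{z'}\qty[z'(z\cdot z')^m] = \E\qty[(z\cdot z')^{m+1}]\, z.
\end{align*}
\item The cross term ($m=k-1$): here $m+1=k$ is odd, so the coefficient $\E[(z\cdot z')^{k}]$ vanishes and the cross term is $0$.
\item The last term ($m=k$): here $m+1 = k+1$ is even, so it equals $\E[(z\cdot z')^{k+1}]\,\E[(v\cdot z)^2] = \Theta(d^{-(k+1)/2})\cdot d^{-1}$, which is lower order.
\end{itemize}
So $\|A_v\|_F^2 \lesssim d^{-(k-1)/2}$ uniformly in $v$. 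Hence $\|\Sigma\|_2 \lesssim d^{-(k-1)/2}/n$ (absorbing $k^2$) and $\tr\Sigma \le d\|\Sigma\|_2$.

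\textbf{Concluding both bounds.}
\begin{itemize}
\item Directional bound: taking $v=\theta^\star$, the scalar $\theta^\star\cdot g$ is Gaussian with variance $\lesssim d^{-(k-1)/2}/n$. The standard Gaussian tail then gives the second bound.
\item Norm bound: write $g = \Sigma^{1/2}\xi$ with $\xi\sim\mathcal N(0,I_d)$. The map $\xi\mapsto\|\Sigma^{1/2}\xi\|$ is $\|\Sigma\|_2^{1/2}$-Lipschitz, so Gaussian concentration (as in \Cref{lemma: chi2 conc}) gives
\begin{align*}
\|g\| \le \sqrt{\tr\Sigma} + \sqrt{2\|\Sigma\|_2\log(1/\delta)} \lesssim \sqrt{\frac{d^{-(k-1)/2}(d+\log(1/\delta))}{n}}
\end{align*}
with probability $1-\delta$, which is the first bound.
\end{itemize}

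The only step needing care is the covariance computation. Specifically, the parity of $k$ must make the cross term vanish, so that the leading term $\E[(z\cdot z')^{k-1}]$ sets the scale; this is where the odd-$k$ assumption enters.
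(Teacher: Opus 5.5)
Your argument is correct and is essentially the paper's. Both view $g=\E_z b(z)-\E_{z,Z} b(z)$ as a centered Gaussian linear functional of $Z$, compute a second moment, and finish with Gaussian ($\chi^2$-type) concentration.

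The only difference is in how the covariance is handled. The paper computes the trace $\E\|g\|^2 = \frac{k^2}{n}\E_{z,z'}[(z\cdot z')^{k-1}(d-2+(z\cdot z')^2)]$ and uses rotational equivariance to get $\Sigma=\tau^2 I$. That gives $\|g\|^2\overset{d}{=}\tau^2\chi^2_d$ and makes the $\theta^\star$-variance equal to $\tau^2$. You instead bound $\mathrm{Var}(v\cdot g)$ uniformly in $v$ and use Lipschitz concentration. This avoids the isotropy claim, at the cost of a $v$-dependent expansion.

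That expansion has a slip you should fix. The cross inner product is $\langle v\otimes z^{\otimes k-1}, (v\cdot z')\,z'^{\otimes k}\rangle = (v\cdot z')^2 (z\cdot z')^{k-1}$, not $(v\cdot z)(v\cdot z')(z\cdot z')^{k-1}$. So the middle term is $-2\,\E_{z,z'}[(v\cdot z')^2(z\cdot z')^{k-1}] = -\tfrac{2}{d}\E[z_1^{k-1}]$, which does \emph{not} vanish for odd $k$.

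The slip is harmless. The term is nonpositive, so you can simply drop it for an upper bound, and it is of lower order $d^{-(k+1)/2}$. The exact value is $\|A_v\|_F^2=(1-\tfrac{2}{d})\E[z_1^{k-1}]+\tfrac{1}{d}\E[z_1^{k+1}]$. This equals $\tfrac1d\E_{z,z'}[(z\cdot z')^{k-1}(d-2+(z\cdot z')^2)]$, i.e.\ the paper's trace divided by $d$, exactly as isotropy predicts.

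Your closing remark about parity is therefore also off. The upper bound holds for every $k$: for even $k$ the map $z\mapsto P_z^\perp Z[z^{\otimes k-1}]$ is odd in $z$, so $g\equiv 0$. Oddness of $k$ matters only for the mean $\E_{z,Z} b(z)=c\,\theta^\star$ in \Cref{lemma: tensor pca odd population} to be nonzero.
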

\begin{proof}
    We first note that:
    \begin{align*}
        \E_z b(z) - \E_{z,Z} b(z) = k n^{-1/2}  \E_z\qty[P_z^\perp Z[z^{\otimes k-1}]]
    \end{align*}
    which can be seen to be a linear functional of the Gaussian tensor $Z$, as well as rotationally invariant by symmetry. Hence, we obtain that $\| \E_z b(z) - \E_{z,Z} b(z) \|^2 \overset{d}{=} \tau^2 \chi_d^2$, where $\tau^2 = \frac{1}{d}\E_Z  \|\E_z b(z) - \E_{z,Z} b(z)\|^2$. This can be calculated as:
    \begin{align*}
        \E_Z \left\| \E_z b(z) - \E_{z,Z} b(z) \right\|^2  &= \E_Z \left\|  k n^{-1/2}  \E_z\qty[P_z^\perp Z[z^{\otimes k-1}]] \right\|^2 \\
        &\asymp n^{-1} \E_{z,z',Z} \ev{P_z^\perp Z[z^{\otimes k-1}], P_{z'}^\perp Z[(z')^{\otimes k-1}]} \\
        &= n^{-1}\E_{z,z'}\qty[(z \cdot z')^{k-1} \ev{P_z^\perp, P_{z'}^\perp}] \\
        &= n^{-1}\E_{z,z'}\qty[(z \cdot z')^{k-1} (d - 2 + (z\cdot z')^2)] \\
        &\asymp n^{-1} d^{-(k-3)/2}
    \end{align*}
    Therefore, $\tau^2 \asymp d^{-(k-1)/2}/n$. Finally, by $\chi_d^2$ concentration, we have that with probability at least $1-\delta$, the magnitude of a $\chi_d^2$ random variable is bounded by $O(d + \sqrt{d\log (1/\delta)} + \log(1/\delta)) = O(d + \log(1/\delta))$ by the AM-GM inequality, and the result follows.

    For the second equation, we note that $\theta^\star\cdot \qty(\E_z b(z) - \E_{z,Z} b(z)) $ is also a linear functional of the Gaussian tensor $Z$, so we simply have to consider its variance for concentration. Using the previous calculation, we obtain:
    \begin{align*}
        \mathrm{Var}_Z\qty[\theta^\star\cdot \qty(\E_z b(z) - \E_{z,Z} b(z))] &= \E_Z \left| k n^{-1/2} \E_z\qty[P_z^\perp Z[z^{\otimes k-1}]]\cdot \theta^\star \right|^2  \asymp d^{-1}n^{-1}d^{-(k-3)/2}
    \end{align*}
    where the $d^{-1}$ factor follows from the covariance being isotropic. This gives that with probability $1-\delta$, 
    \begin{align*}
        \left| \theta^\star \cdot \qty(\E_z b(z) - \E_{z,Z} b(z)) \right| \lesssim \sqrt{\frac{d^{-(k-1)/2} \log(1/\delta)}{n}}
    \end{align*}
\end{proof}

\begin{proposition} \label{prop: tensor pca odd sample complexity}
    When $n \gtrsim d^{(k+1)/2}/\Delta^2$ for $\Delta \in (0,1)$, it holds with probability $1-e^{-d}$ that:
    \begin{align*}
        \frac{\E_{z} b(z)}{\|\E_{z} b(z)\|} \cdot \theta^\star \geq 1-\Delta
    \end{align*}
    Moreover, when $n \gtrsim d^{k/2}$, it holds with probability $1-e^{-d^c}$ for $c<1/2$ that:
    \begin{align*}
        \frac{\E_{z} b(z)}{\|\E_{z} b(z)\|} \cdot \theta^\star \gtrsim d^{-1/4}
    \end{align*}
\end{proposition}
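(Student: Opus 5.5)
Write $\bar b := \E_z b(z)$, $m := \E_{z,Z} b(z) = c\theta^\star$ with $c = \Theta(d^{-(k-1)/2})$ from \Cref{lemma: tensor pca odd population}, and $\xi := \bar b - m$. The plan is to decompose $\xi = (\xi\cdot\theta^\star)\theta^\star + \xi_\perp$ and bound the two pieces separately with \Cref{lemma: tensor pca odd concentration}. The quantity to control is
\begin{align*}
\frac{\bar b\cdot\theta^\star}{\|\bar b\|} = \frac{c + \xi\cdot\theta^\star}{\sqrt{(c+\xi\cdot\theta^\star)^2 + \|\xi_\perp\|^2}} \ge 1 - \frac{\|\xi_\perp\|^2}{2(c+\xi\cdot\theta^\star)^2},
\end{align*}
valid once $c + \xi\cdot\theta^\star > 0$. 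In the weak-recovery regime we will instead use the cruder bound $\ge (c-|\xi\cdot\theta^\star|)/(c+|\xi\cdot\theta^\star|+\|\xi\|)$.

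\textbf{Strong recovery, $n \gtrsim d^{(k+1)/2}/\Delta^2$.} Take $\delta = e^{-d}$ in \Cref{lemma: tensor pca odd concentration}. This gives
\begin{align*}
\|\xi\|^2 \lesssim \frac{d^{-(k-1)/2}\cdot d}{n} \lesssim \Delta^2 d^{-(k-1)/2}d^{-(k+1)/2}\cdot d = \Delta^2 d^{-(k-1)} \asymp \Delta^2 c^2.
\end{align*}
So $\|\xi\| \lesssim \Delta c$, and in particular $|\xi\cdot\theta^\star| \le c/2$ for small $\Delta$. Plugging into the display gives correlation $\ge 1 - O(\Delta^2) \ge 1 - \Delta$ after adjusting constants. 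Using the cruder triangle-inequality bound $(c-\Delta c)/(c+\Delta c)$ would also give $1 - O(\Delta)$, matching the statement.

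\textbf{Weak recovery, $n \gtrsim d^{k/2}$.} Take $\delta = e^{-d^{c'}}$ with $c' < 1/2$, so $\log(1/\delta) = d^{c'}$.
\begin{itemize}
\item \emph{Parallel part.} The second bound of \Cref{lemma: tensor pca odd concentration} gives $|\xi\cdot\theta^\star|^2 \lesssim d^{-(k-1)/2}d^{c'}/n \lesssim d^{-(k-1)/2 - k/2 + c'} = c^2 d^{c' - 1/2}$. Since $c' < 1/2$, this is $o(c^2)$, so the $\theta^\star$-component stays $\ge c/2$.
\item \emph{Perpendicular part.} The first bound gives $\|\xi\|^2 \lesssim d^{-(k-1)/2}(d + d^{c'})/n \lesssim d^{-(k-1)/2 + 1 - k/2} = c^2 d^{1/2}$. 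Hence $\|\bar b\| \lesssim c\, d^{1/4}$.
\end{itemize}
Therefore the correlation is $\gtrsim (c/2)/(c\,d^{1/4}) \asymp d^{-1/4}$, with probability $1 - 2e^{-d^{c'}}$.

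The main obstacle is the weak-recovery regime. There the isotropic noise dominates in norm, so the argument depends on the $\theta^\star$-directional bound being a factor $\sqrt{d}$ sharper than the norm bound. That is exactly why the constraint $c' < 1/2$ appears: it keeps the $\log(1/\delta)$ factor from overwhelming the signal $c$. The exponent bookkeeping in both regimes is routine once the lemma is applied with the right $\delta$.
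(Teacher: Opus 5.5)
Your proposal is correct and follows essentially the same route as the paper. Both arguments plug the norm and $\theta^\star$-directional bounds of \Cref{lemma: tensor pca odd concentration}, with $\delta=e^{-d}$ and $\delta=e^{-d^{c}}$ respectively, into the triangle-inequality ratio $(c-|\xi\cdot\theta^\star|)/(c+\|\xi\|)$; in the weak-recovery case, both rely on the directional bound being a $\sqrt{d}$ factor sharper than the norm bound. Your Pythagorean refinement in the strong-recovery regime, which gives $1-O(\Delta^2)$, is a small sharpening the paper does not use and is not needed for the stated $1-\Delta$.
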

\begin{proof}
    When $n \gtrsim d^{(k+1)/2}/\Delta^2$, we have with probability $1-e^{-d}$ that:
    \begin{align*}
        \frac{\E_z b(z) \cdot \theta^\star}{\|\E_z b(z)\|} &\geq \frac{\E_{Z,z}b(z)\cdot \theta^\star - |(\E_z b(z) - \E_{Z,z} b(z) ) \cdot \theta^\star|}{\|\E_{z,Z}b(z)\| + \|\E_z b(z) - \E_{z,Z} b(z)\|} \\
        &\geq \frac{ d^{-(k-1)/2} (1 - \Delta/2) }{d^{-(k-1)/2} (1 + \Delta/2)}  \\
        &\geq 1-\Delta
    \end{align*}
    as desired. When $n\gtrsim d^{k/2}$, we have with probability $1-e^{-d^c}$ that:
    \begin{align*}
        \frac{\E_z b(z) \cdot \theta^\star}{\|\E_z b(z)\|}  &\geq \frac{\E_{Z,z} b(z) \cdot \theta^\star - |(\E_z b(z) - \E_{Z,z} b(z) ) \cdot \theta^\star|}{\|E_{z,Z} b(z)\| + \|\E_z b(z) - \E_{z,Z} b(z)\|} \\
        &\gtrsim \frac{d^{-(k-1)/2}}{d^{-(k-1)/2}(1 + d^{1/4})} \\
        &\gtrsim d^{-1/4}
    \end{align*}
    where in the second inequality we use the fact that $|(\E_z b(z) - \E_{Z,z} b(z) ) \cdot \theta^\star| \ll \E_{Z,z} b(z) \cdot \theta^\star$ due to $c < 1/2$.
\end{proof}

\subsection{Even $k$}

In this section, our goal is to concentrate the self-adjoint random matrix $G := \E_z[z b(z)^\top + b(z) z^\top]$.

\begin{lemma} \label{lemma: tensor pca even population}
    $\E_{z,Z} [G] = \Theta(d^{-k/2}) \theta^\star\theta^{\star\top} - \Theta(d^{-(k+2)/2})P_{\theta^\star}^\perp$
\end{lemma}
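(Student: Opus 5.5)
Since $Z$ has mean zero and enters $b$ linearly, I will first take the expectation over $Z$. This gives $\E_Z b(z) = k P_z^\perp \theta^\star (\theta^\star\cdot z)^{k-1}$. Writing $a := \theta^\star\cdot z$, we have
\begin{align*}
    \E_Z b(z) = k a^{k-1}\qty(\theta^\star - a z),
\end{align*}
and therefore
\begin{align*}
    \E_Z\qty[z b(z)^\top + b(z) z^\top] = k a^{k-1}\qty(z\theta^{\star\top} + \theta^\star z^\top) - 2k a^{k} z z^\top .
\end{align*}
The remaining work is to average over $z\sim\mu$, and I plan to use rotational invariance about $\theta^\star$. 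Decompose $z = a\theta^\star + w$ with $w\perp\theta^\star$. Conditional on $a$, the vector $w$ is uniform on the sphere of radius $\sqrt{1-a^2}$ in $\theta^{\star\perp}$. Hence $\E[w\mid a]=0$ and $\E[ww^\top\mid a] = \frac{1-a^2}{d-1}P_{\theta^\star}^\perp$.

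Substituting this decomposition, the cross terms involving $w$ vanish. We get $\E_z[a^{k-1} z] = \E[a^k]\theta^\star$ and $\E_z[a^k zz^\top] = \E[a^{k+2}]\theta^\star\theta^{\star\top} + \frac{\E[a^k(1-a^2)]}{d-1}P_{\theta^\star}^\perp$. This yields the exact formula
\begin{align*}
    \E_{z,Z}[G] = 2k\qty(\E[a^k] - \E[a^{k+2}])\theta^\star\theta^{\star\top} - \frac{2k\,\E[a^k - a^{k+2}]}{d-1} P_{\theta^\star}^\perp .
\end{align*}
Here I use that $k$ is even, so the moments $\E[a^k]$ and $\E[a^{k+2}]$ are nonzero (for odd $k$ these terms would behave differently).

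For the orders of magnitude I apply \Cref{lemma: spherical moments}, since $a\overset{d}{=}z_1$. It gives $\E[a^k] = \Theta(d^{-k/2})$ and $\E[a^{k+2}] = \Theta(d^{-(k+2)/2})$. So $\E[a^k - a^{k+2}] = \Theta(d^{-k/2})$ once $d$ is large enough that the second moment is at most half the first; this follows from the explicit ratio $\E[a^{k+2}]/\E[a^k] = (k+1)/(d+k)$. The coefficient of $\theta^\star\theta^{\star\top}$ is then $\Theta(d^{-k/2})$. The coefficient of $P_{\theta^\star}^\perp$ is $-\Theta(d^{-k/2})/(d-1) = -\Theta(d^{-(k+2)/2})$, which is the claim.

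There is no serious obstacle. The step needing the most care is the conditional second-moment identity for $w$, in particular the normalization $(1-a^2)/(d-1)$, together with checking the sign of the $P_{\theta^\star}^\perp$ term. I will sanity-check the normalization by taking traces: $\tr \E[ww^\top\mid a] = 1-a^2$.
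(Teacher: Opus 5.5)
Your proposal is correct and follows the same route as the paper: take the expectation over $Z$ to get $\E_Z b(z) = k(\theta^\star\cdot z)^{k-1}P_z^\perp\theta^\star$, then average over $z\sim\mu$ using rotational symmetry about $\theta^\star$. Your decomposition $z = a\theta^\star + w$ makes the paper's ``by symmetry'' step explicit and gives the exact coefficients: $\frac{2k(d-1)}{d+k}\E[a^k]$ on $\theta^\star\theta^{\star\top}$ and $-\frac{2k}{d+k}\E[a^k]$ on $P_{\theta^\star}^\perp$.
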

\begin{proof}
    We have that:
    \begin{align*}
        \E_{z,Z}[z b(z)^\top] = k\E_{z}[z\theta^{\star\top} (\theta^\star\cdot z)^{k-1} - (\theta^\star\cdot z)^k zz^\top ]
    \end{align*}
    By symmetry, it holds that:
    \begin{align*}
        \E_{z}[z\theta^{\star\top} (\theta^\star\cdot z)^{k-1} - (\theta^\star\cdot z)^k zz^\top ] =  \Theta(d^{-k/2}) \theta^\star\theta^{\star\top} - \Theta(d^{-(k+2)/2})P_{\theta^\star}^\perp
    \end{align*}

    Similar calculations hold for $\E_{z,Z}[b(z) z^\top]$ as it is just the transpose, and the result follows.
\end{proof}

\begin{lemma} \label{lemma: tensor pca even concentration}
With probability at least $1 - \delta$, it holds that:
\begin{align*}
        \left\| G - \E_{Z}G  \right\|_2  \lesssim  \sqrt{\frac{d^{-(k+2)/2} (d+\log(1/\delta))}{n}}
    \end{align*}
\end{lemma}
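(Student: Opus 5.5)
The plan mirrors the odd case, \Cref{lemma: tensor pca odd concentration}. First, write the deviation explicitly. Since $b(z) = kP_z^\perp T[z^{\otimes k-1}]$ and $T = \theta^{\star\otimes k} + n^{-1/2}Z$, we have
\begin{align*}
    G - \E_Z G = k n^{-1/2}\,\E_z\qty[z\,(P_z^\perp Z[z^{\otimes k-1}])^\top + (P_z^\perp Z[z^{\otimes k-1}])\, z^\top] =: n^{-1/2}\mathcal{A}(Z).
\end{align*}
This is a symmetric random matrix that is linear in the Gaussian tensor $Z$, hence a centered Gaussian element of the space of symmetric $d\times d$ matrices. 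It is also rotationally equivariant in law: for any orthogonal $U$, $\mathcal{A}(Z)\overset{d}{=}U\mathcal{A}(Z)U^\top$. To control the operator norm I will use a $1/4$-net $\mathcal{N}$ of $S^{d-1}$, with $|\mathcal{N}|\le 9^d$, together with the standard bound $\|A\|_2 \le 2\sup_{v\in\mathcal{N}}|v^\top A v|$ for symmetric $A$. This reduces the problem to bounding the scalar Gaussians $v^\top(G-\E_Z G)v$ uniformly over the net.

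For fixed $v$, the quantity $v^\top (G-\E_ZG)v = 2kn^{-1/2}\E_z[(v\cdot z)\,(v^\top P_z^\perp Z[z^{\otimes k-1}])]$ is a centered Gaussian. By rotational invariance its variance $\tau^2$ does not depend on $v$. I will compute $\tau^2$ as in the odd case, using $\E_Z\ev{Z[a],Z[b]}$-type contractions:
\begin{align*}
    \tau^2 \asymp n^{-1}\E_{z,z'}\qty[(v\cdot z)(v\cdot z')\,(z\cdot z')^{k-1}\, v^\top P_z^\perp P_{z'}^\perp v].
\end{align*}
Averaging over $v$ gives $\tau^2 \asymp d^{-1}n^{-1}\E_{z,z'}[(z\cdot z')^{k-1}\, z^\top P_z^\perp P_{z'}^\perp z' + \ldots]$. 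Expanding $v^\top P_z^\perp P_{z'}^\perp v \approx 1$ up to lower-order terms, the leading term is $d^{-1}n^{-1}\E[(z\cdot z')^k]$. Since $k$ is even, $\E[(z\cdot z')^k]\asymp d^{-k/2}$ by \Cref{lemma: spherical moments}. So the target is
\begin{align*}
    \tau^2 \lesssim \frac{d^{-(k+2)/2}}{n}.
\end{align*}
The correction terms coming from the projections each carry additional factors of $(z\cdot z')$ or $(v\cdot z)^2$, and \Cref{lemma: spherical moments} shows they are of the same or smaller order.

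Finally, take a Gaussian tail bound for each $v$ and union bound over the $9^d$ net points. With probability $1-\delta$ this gives $\sup_{v\in\mathcal{N}}|v^\top(G-\E_ZG)v|\lesssim \tau\sqrt{d+\log(1/\delta)}$, which is the claim. The main obstacle is the variance computation: the contraction of $Z$ against $P_z^\perp$ and $P_{z'}^\perp$ produces several cross terms, such as $(v\cdot z)^2(z\cdot z')^{k}$ and $(z\cdot z')^{k+1}$. Each of these must be checked to be $O(d^{-(k+2)/2})$ after averaging, and the parity of $k$ matters for which moments vanish. If the direct expansion becomes messy, I will instead bound $\tau^2\le \frac{1}{d}\E_Z\|\mathcal{A}(Z)\|_F^2/n$ using rotational invariance, and compute the Frobenius second moment, which reduces cleanly to $\E_{z,z'}[(z\cdot z')^{k}\cdot O(1)] + \E_{z,z'}[(z\cdot z')^{k-1}(z\cdot z')\cdot O(d)]/d$.
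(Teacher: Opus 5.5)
Your main line is correct and is essentially the paper's proof. You reduce to $\sup_{v\in\mathcal N_{1/4}}|v^\top(G-\E_Z G)v|$ and compute the variance of this Gaussian linear functional of $Z$ as $n^{-1}\E_{z,z'}[(v\cdot z)(v\cdot z')(z\cdot z')^{k-1}\,v^\top P_z^\perp P_{z'}^\perp v]$. You then check that the main term is $\frac1d\E[(z\cdot z')^k]\asymp d^{-(k+2)/2}$ while the three projection corrections are $O(d^{-(k+4)/2})$, and union bound over the $9^d$ net points. The paper does the same, except that it treats $zb(z)^\top$ and $b(z)z^\top$ separately rather than in symmetrized form, which is immaterial.

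Two cautions. First, the term $z^\top P_z^\perp P_{z'}^\perp z'$ you display in the $v$-average is identically zero. The correct $v$-average of the integrand is $\frac{(d-3)(z\cdot z')^k+2(z\cdot z')^{k+2}}{d(d+2)}$, which still gives your conclusion.

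Second, your fallback fails as written. The bound $|v^\top\mathcal A v|\le\|\mathcal A v\|$ only gives $\tau^2\le\frac1d\E\|\mathcal A\|_F^2/n$. But $\E\|\mathcal A\|_F^2=2k^2\E_{z,z'}[(z\cdot z')^k(d-3+2(z\cdot z')^2)]\asymp d^{1-k/2}$; there is no $1/d$ on the $O(d)$ term. So this route loses a factor of $d$. The right identity is $\E(v^\top\mathcal A v)^2=\frac{2\E\|\mathcal A\|_F^2}{d(d+2)}$, valid because $\tr\mathcal A=0$ (as $z^\top P_z^\perp=0$), and it does recover $d^{-(k+2)/2}/n$.
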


\begin{proof}
    Note that $G-\E_Z G$ is self-adjoint. Therefore, it holds that:
    \begin{align*}
        \|G-\E_Z G\|_2 &\leq 2 \sup\limits_{v \in \mathcal{N}_{1/4}} |v^\top (G-\E_Z G) v| \\
        &\leq 2 \sup\limits_{v\in \mathcal{N}_{1/4}} |v^\top (\E_z[z b(z)^\top] - \E_{z,Z}[z b(z)^\top]) v| + 2 \sup\limits_{v\in \mathcal{N}_{1/4}} |v^\top (\E_z[b(z) z^\top] - \E_{z,Z}[ b(z) z^\top]) v|
    \end{align*}
    where $\mathcal{N}_{1/4}$ denotes a $1/4$-net of $S^{d-1}$. It now suffices to bound each of these two terms individually.

    Let us start with the first term. Consider for a fixed $v\in S^{d-1}$, the quantity \begin{align*}
    v^\top \qty[\E_z[z b(z)^\top] - \E_{z,Z}[z b(z)^\top]] v = k n^{-1/2} \cdot v^\top \E_z \qty[z (P_z^\perp Z[z^{\otimes k-1}])^\top] v
    \end{align*}
    Since this quantity is a linear functional of a Gaussian tensor $Z$, it suffices to analyze just the variance to obtain a concentration.
    \begin{align*}
    &\mathrm{Var}_Z\qty[v^\top \qty[\E_z[z b(z)^\top] - \E_{z,Z}[z b(z)^\top]] v] \\
    &\lesssim n^{-1} \E_{Z}\qty[\E_z[v^\top \qty[z (P_z^\perp Z[z^{\otimes k-1}])^\top] v]^2] \\
    &= n^{-1} \E_{Z}\qty[ \E_{z,z'} \qty[(v\cdot z)(v\cdot z')  \qty[(P_z^\perp Z[z^{\otimes k-1}])^\top v] \qty[(P_{z'}^\perp Z[z'^{ \otimes k-1}])^\top v]  ]] \\
    &= n^{-1} \E_{z,z'} \qty[ (v\cdot z)(v\cdot z') \E_{Z} \qty[  (P_z^\perp Z[z^{\otimes k-1}])^\top v (P_{z'}^\perp Z[z'^{ \otimes k-1}])^\top v  ]] \\
    &= n^{-1} \E_{z,z'} \qty[ (v\cdot z)(v\cdot z') \cdot v^\top P_z^\perp \E_{Z} \qty[  ( Z[z^{\otimes k-1}]) ( Z[z'^{ \otimes k-1}])^\top  ] P_{z'}^\perp v] \\
    &= n^{-1} \E_{z,z'} \qty[ (v\cdot z)(v\cdot z') \cdot v^\top P_z^\perp (z\cdot z')^{k-1} I_d P_{z'}^\perp v] \\
    &= n^{-1}\E_{z,z'}\qty[(v\cdot z)(v\cdot z')(z\cdot z')^{k-1}] \\
    &- n^{-1}\E_{z,z'}\qty[(v\cdot z)^3(v\cdot z')(z\cdot z')^{k-1}]] 
    - n^{-1}\E_{z,z'}\qty[(v\cdot z)(v\cdot z')^{3}(z\cdot z')^{k-1}] \\
    &+ n^{-1}\E_{z,z'}\qty[(v\cdot z)^2 (v\cdot z')^2 (z\cdot z')^{k}]
    \end{align*}
    In the last expression, the first term is the main term, and the latter three are due to the at least one of the projection terms. For the main term, we can bound it by $\Theta(d^{-(k+2)/2}/n)$. For the latter three, we have that they are $ O(d^{-(k+4)/2}/n)$. Hence, the entire variance expression is $\Theta(d^{-(k+2)/2}/n)$.

    Therefore, we have that for an arbitrary $v\in S^{d-1}$, it holds with probabillity at least $1-\delta/9^d$:
    \begin{align*}
        \left| v^\top \qty[\E_z[z b(z)^\top] - \E_{z,Z}[z b(z)^\top]] v \right| \lesssim \sqrt{\frac{d^{-(k+2)/2} \log (9^d/\delta)}{n}} \asymp \sqrt{\frac{d^{-(k+2)/2} (d+\log(1/\delta))}{n}}
    \end{align*}
    We now consider a $1/4$-net $\mathcal{N}_{1/4}$ over $S^{d-1}$, which has size at most $9^d$. Union bounding over $v\in\mathcal{N}_{1/4}$, we have that with probability at least $1-\delta$ that:
    \begin{align*}
        \sup \limits_{v\in\mathcal{N}_{1/4}} \left| v^\top \qty[\E_z[z b(z)^\top] - \E_{z,Z}[z b(z)^\top]] v \right| \lesssim \sqrt{\frac{d^{-(k+2)/2} (d+\log(1/\delta))}{n}}
    \end{align*}

    By a similar argument, we obtain that:
    \begin{align*}
        \sup \limits_{v\in\mathcal{N}_{1/4}} \left| v^\top \qty[\E_z[ b(z) z^\top] - \E_{z,Z}[b(z) z^\top]] v \right| \lesssim \sqrt{\frac{d^{-(k+2)/2} (d+\log(1/\delta))}{n}}
    \end{align*}
    Adding these yields the desired result.
\end{proof}

\begin{proposition} \label{prop: tensor pca even sample complexity}
    When $n\gtrsim d^{k/2}/\Delta^2$ for $\Delta \in (0,1)$, it holds with probability at least $1-e^{-d}$ that the top eigenvector $v$ of $\E_z[z b(z)^\top + b(z) z^\top]$ satisfies $(v\cdot \theta^\star)^2 \geq 1-\Delta$.
\end{proposition}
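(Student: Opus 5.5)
The plan is a perturbation argument: combine the population computation of \Cref{lemma: tensor pca even population} with the concentration bound of \Cref{lemma: tensor pca even concentration}, and conclude with Davis--Kahan.

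First, set $G_0 := \E_{z,Z}[G]$. By \Cref{lemma: tensor pca even population},
\begin{align*}
G_0 = a\,\theta^\star\theta^{\star\top} - c\,P_{\theta^\star}^\perp, \qquad a = \Theta(d^{-k/2}),\quad c = \Theta(d^{-(k+2)/2}).
\end{align*}
Before relying on this, I would check the sign of $a$. Following the proof of that lemma, $a$ is roughly
\begin{align*}
2k\qty(\E[z_1^k] - \E[z_1^{k+2}]) = 2k\,\E[z_1^k]\qty(1 - \tfrac{k+1}{d+k}) > 0,
\end{align*}
using \Cref{lemma: spherical moments} for $k$ even. Consequently $\theta^\star$ is the top eigenvector of $G_0$. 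Its eigenvalue is $a$, all other eigenvalues equal $-c$, and the eigengap is $a + c \gtrsim d^{-k/2}$.

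Second, apply \Cref{lemma: tensor pca even concentration} with $\delta = e^{-d}$. This gives
\begin{align*}
\|G - G_0\|_2 \lesssim \sqrt{d^{-(k+2)/2}\cdot d / n} = \sqrt{d^{-k/2}/n}.
\end{align*}
With $n \ge C d^{k/2}/\Delta^2$, the right-hand side is at most $C^{-1/2}\Delta\, d^{-k/2}$. This is a small multiple of $\Delta$ times the eigengap. In particular, for $C$ large it is below half the gap.

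Third, apply the Davis--Kahan $\sin\Theta$ theorem to the top eigenvector $v$ of the symmetric matrix $G$. Because the perturbation is below half the gap, Weyl's inequality keeps the top eigenvalue of $G$ separated from the rest. The perturbed gap is then at least $(a+c)/2$, and
\begin{align*}
\sin\angle(v,\theta^\star) \lesssim \frac{\|G-G_0\|_2}{a+c} \lesssim C^{-1/2}\Delta.
\end{align*}
Hence
\begin{align*}
(v\cdot\theta^\star)^2 = 1 - \sin^2\angle(v,\theta^\star) \ge 1 - O(C^{-1})\Delta^2 \ge 1 - \Delta,
\end{align*}
using $\Delta < 1$ and $C$ large. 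This holds with probability at least $1 - e^{-d}$.

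The only genuinely delicate step is the first: confirming that the $\Theta(d^{-k/2})$ coefficient on $\theta^\star\theta^{\star\top}$ is positive, and that the $P_{\theta^\star}^\perp$ term has smaller order. This is needed so that $\theta^\star$ is the top eigenvector, rather than merely an extreme one, and so that the eigengap is of order $d^{-k/2}$. It requires tracking the constants in the spherical moment ratio $\E[z_1^{k+2}]/\E[z_1^k] = (k+1)/(d+k)$, together with the analogous terms $\E[z_1^k z_2^2]$ for the perpendicular block, which are $O(d^{-(k+2)/2})$. The remaining steps are direct substitutions into the earlier lemmas.
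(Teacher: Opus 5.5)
Your proposal is correct and follows the paper's proof essentially step for step. It uses the population eigengap of order $d^{-k/2}$ from the population lemma, then the operator-norm concentration lemma with $\delta = e^{-d}$ to get $\|G - \E_Z G\|_2 \lesssim \Delta d^{-k/2}$, then Davis--Kahan. Your explicit check that the coefficient on $\theta^\star\theta^{\star\top}$ is positive, via $\E[z_1^{k+2}]/\E[z_1^k] = (k+1)/(d+k)$, which makes the gap $a+c$, is a detail the paper leaves implicit in its $\Theta(\cdot)$ notation.
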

\begin{proof}
First, note that the eigengap of the expectation over $Z$ is $\Theta(d^{-k/2})$. From the previous lemma, we know that with probability $1-e^{-d}$, $\|G-\E_Z G\|_2 \leq d^{-k/2} \Delta/2$ for our choice of $n\gtrsim  d^{k/2}/\Delta^2$ (with an appropriately chosen constant). Hence, the eigengap of $G$ is bounded below by $d^{-k/2}(1-\Delta/2)$. From the Davis-Kahan theorem, we have that 
\begin{align*}
\sqrt{1-(v\cdot \theta^\star)^2} = \sin\angle(v,\theta^\star) \leq \frac{d^{-k/2}\Delta/2}{d^{-k/2}(1-\Delta/2)} \leq \Delta
\end{align*}
Rearranging yields the corollary.

\end{proof}

\subsection{Lipschitzness of $b$}
It remains to show that $b$ is bounded and Lipschitz, which is formalized through the next two lemmas.

\begin{lemma} \label{lemma: tensor pca bounded}
    With probability $1-e^{-d}$, it holds that:
    \begin{align*}
        \sup \limits_\theta \|b(\theta)\| \lesssim 1 + \sqrt{d/n}
    \end{align*}
\end{lemma}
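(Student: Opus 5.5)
We need to bound $\sup_\theta \|b(\theta)\|$ where $b(\theta) = k P_\theta^\perp T[\theta^{\otimes k-1}]$ and $T = \theta^{\star\otimes k} + n^{-1/2}Z$. The plan is to split $b$ into a signal part and a noise part and bound each uniformly over the sphere. Since $\|P_\theta^\perp\|_2 \le 1$, we have
\begin{align*}
\|b(\theta)\| \le k\,\|\theta^{\star\otimes k}[\theta^{\otimes k-1}]\| + k n^{-1/2}\|Z[\theta^{\otimes k-1}]\|.
\end{align*}
The signal part equals $k|\theta^\star\cdot\theta|^{k-1}\|\theta^\star\| \le k$ for every $\theta\in S^{d-1}$. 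This term is deterministic and contributes the $O(1)$ in the bound, with $k$ treated as a constant.

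For the noise part, write $\|Z[\theta^{\otimes k-1}]\| = \sup_{u\in S^{d-1}} \ev{Z, u\otimes\theta^{\otimes k-1}}$. It therefore suffices to bound the injective-norm-type quantity
\begin{align*}
\sup_{u,\theta\in S^{d-1}} \ev{Z, u\otimes \theta^{\otimes k-1}} \lesssim \sqrt{d}
\end{align*}
with probability $1-e^{-d}$, since multiplying by $k n^{-1/2}$ then gives $\sqrt{d/n}$.

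We would prove this bound by a standard net argument. For fixed unit $u,\theta$, the quantity $\ev{Z,u\otimes\theta^{\otimes k-1}}$ is $N(0,1)$ because the tensor $u\otimes\theta^{\otimes k-1}$ has unit Frobenius norm. Take a $1/(4k)$-net $\mathcal{N}$ of $S^{d-1}$, of size at most $(12k)^d$. Union bounding over the $|\mathcal{N}|^2 = e^{O(d\log k)}$ pairs gives $\max_{u,\theta\in\mathcal{N}}\ev{Z,u\otimes\theta^{\otimes k-1}} \lesssim \sqrt{d\log k + d} \lesssim \sqrt d$ with probability $1-e^{-d}$, after choosing constants so the Gaussian tail beats the net size.

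The main obstacle is extending the bound from the net to the whole sphere. Let $S$ denote the supremum over all $u,\theta$. Given $\theta$ with net point $\theta'$ satisfying $\|\theta-\theta'\|\le 1/(4k)$, telescope $\theta^{\otimes k-1}-\theta'^{\otimes k-1}$ into $k-1$ terms, each of the form $\theta^{\otimes a}\otimes(\theta-\theta')\otimes\theta'^{\otimes b}$. Each such term contributes at most $\|\theta-\theta'\|\,S'$, where $S'$ is the supremum of $\ev{Z,\cdot}$ over general rank-one products $v_1\otimes\cdots\otimes v_k$ of unit vectors. The slot $u$ is handled the same way. This gives a bound of the form $S' \le \max_{\mathcal N} + \tfrac{k}{4k}S'$.

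To keep the argument closed, we would run it directly on $S'$: use nets over all $k$ slots independently, with $|\mathcal{N}|^k = e^{O(dk\log k)}$ points, which is still $e^{O(d)}$ for constant $k$. Then $S' \le \max_{\mathcal{N}^k} + \tfrac14 S'$, so $S'\le \tfrac43\max_{\mathcal N^k}\lesssim\sqrt d$. Since $S\le S'$, combining this with the signal bound yields $\sup_\theta\|b(\theta)\|\lesssim 1+\sqrt{d/n}$ with probability $1-e^{-d}$.
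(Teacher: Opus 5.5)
Your proof is correct and follows the paper's route. You bound the signal term by $k$ using $\|P_\theta^\perp\|_2\le 1$, and the noise term by $k n^{-1/2}$ times the injective norm of $Z$, which is $O(\sqrt{d})$ with probability $1-e^{-d}$ by a covering argument; the paper just cites the ``standard covering argument'', which you spell out correctly, including the switch to independent nets on all $k$ slots so the bound closes as $S'\le \max_{\mathcal{N}^k}+S'/4$.
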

\begin{proof}
    Observe that with probability at least $1-e^{-cd}$,
\begin{align*}
	\sup_\theta \norm{b(\theta)} \lesssim 1 + n^{-1/2} \sup_\theta Z[\theta^{\otimes k-1}] \le 1 + n^{-1/2} \norm{Z}_{op} \lesssim 1 + \sqrt{d/n}
\end{align*}
where the operator norm bound on $Z$ follows from a standard covering argument. 
\end{proof}

\begin{lemma} \label{lemma: tensor pca lipschitz}
    In the same setting as \Cref{lemma: tensor pca bounded},
    \begin{align*}
        \norm{b(\theta) - b(\theta')} \lesssim (1 + \sqrt{d/n})\norm{\theta - \theta'}
    \end{align*}
\end{lemma}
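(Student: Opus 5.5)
We need the Lipschitz bound $\|b(\theta)-b(\theta')\|\lesssim(1+\sqrt{d/n})\|\theta-\theta'\|$, on the same event as \Cref{lemma: tensor pca bounded}. The plan is to write $b(\theta)=k P_\theta^\perp g(\theta)$ with $g(\theta):=T[\theta^{\otimes k-1}]$. This gives the split
\begin{align*}
b(\theta)-b(\theta') = k\qty(P_\theta^\perp - P_{\theta'}^\perp)g(\theta) + k P_{\theta'}^\perp\qty(g(\theta)-g(\theta')),
\end{align*}
and we bound the two pieces separately.

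\textbf{First piece.} For unit vectors we have $P_\theta^\perp - P_{\theta'}^\perp = \theta'\theta'^\top - \theta\theta^\top$. Its operator norm is at most $2\|\theta-\theta'\|$, by writing $\theta\theta^\top-\theta'\theta'^\top=(\theta-\theta')\theta^\top+\theta'(\theta-\theta')^\top$.

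Next, $\|g(\theta)\|\le \|\theta^{\star\otimes k}[\theta^{\otimes k-1}]\| + n^{-1/2}\|Z[\theta^{\otimes k-1}]\|$. The first term equals $|\theta^\star\cdot\theta|^{k-1}\le 1$. The second is at most $n^{-1/2}\|Z\|_{op}\lesssim\sqrt{d/n}$, using the same covering bound $\|Z\|_{op}\lesssim\sqrt d$ (with probability $1-e^{-cd}$) already used in \Cref{lemma: tensor pca bounded}. So the first piece is $\lesssim(1+\sqrt{d/n})\|\theta-\theta'\|$.

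\textbf{Second piece.} Since $P_{\theta'}^\perp$ is a contraction, it suffices to control $\|g(\theta)-g(\theta')\|$. We telescope
\begin{align*}
\theta^{\otimes k-1}-\theta'^{\otimes k-1}=\sum_{j=1}^{k-1}\theta^{\otimes j-1}\otimes(\theta-\theta')\otimes\theta'^{\otimes k-1-j},
\end{align*}
so each term is a rank-one tensor built from unit vectors and one copy of $\theta-\theta'$. Contracting $T$ against such a term and pairing with an arbitrary unit $u$ gives a multilinear form of $T$ on $k$ vectors. That form is bounded by $\|T\|_{op}\,\|\theta-\theta'\|$, where $\|T\|_{op}$ is the sup of $T[v_1,\dots,v_k]$ over unit $v_i$ (the multilinear, not symmetric, version). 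Moreover $\|T\|_{op}\le 1+n^{-1/2}\|Z\|_{op}$. Summing the $k-1=O(1)$ terms gives $\|g(\theta)-g(\theta')\|\le (k-1)(1+n^{-1/2}\|Z\|_{op})\|\theta-\theta'\|$.

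The only point needing care is using the multilinear operator norm of $Z$, since $Z$ is a non-symmetric Gaussian tensor. The standard $\epsilon$-net argument handles it: take a net over $(S^{d-1})^k$ of size $9^{kd}$ and note that each $Z[v_1,\dots,v_k]\sim N(0,1)$. A union bound then gives $\|Z\|_{op}\lesssim\sqrt{kd}$ with probability $1-e^{-cd}$, and since $k$ is a constant this is $\lesssim\sqrt d$. This is the same event as in \Cref{lemma: tensor pca bounded}, so both lemmas hold simultaneously. Combining the two pieces gives the stated Lipschitz bound with constant $O(k^2)$, which is absorbed into $\lesssim$.
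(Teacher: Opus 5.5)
Your proof is correct and is essentially the paper's argument. It uses the same split of $b(\theta)-b(\theta')$ into $k(P_\theta^\perp-P_{\theta'}^\perp)T[\theta^{\otimes k-1}]$ and $kP_{\theta'}^\perp\bigl(T[\theta^{\otimes k-1}]-T[\theta'^{\otimes k-1}]\bigr)$, and controls both pieces through $\|T\|_{op}\le 1+n^{-1/2}\|Z\|_{op}\lesssim 1+\sqrt{d/n}$. Your telescoping sum is a slightly tidier replacement for the paper's binomial expansion $\sum_{j}\binom{k-1}{j}T[E^{\otimes j}\otimes\theta^{\otimes k-1-j}]$, which tacitly treats $T$ as symmetric and uses $\|E\|\le 2$ to absorb the higher powers. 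One small caveat on your covering step: with $1/4$-nets per factor, the naive bound $\|Z\|_{op}\le\sup_{\mathcal{N}}/(1-k/4)$ only closes for $k<4$, so take a mesh of order $1/k$, which still gives $\|Z\|_{op}\lesssim\sqrt{d}$ for fixed $k$.
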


\begin{proof}
    \begin{align*}
	\norm{b(\theta) - b(\theta')}
	&\le k \norm{P_\theta^\perp T[\theta^{\otimes k - 1}] - P_{\theta'}^\perp T[(\theta')^{\otimes k-1}]} \\
	&\le k \norm{(P_\theta^\perp - P_{\theta'}^\perp) T[\theta^{\otimes k - 1}] + P_{\theta'}^\perp (T[\theta^{\otimes k - 1} - (\theta')^{\otimes k-1}])} \\
	&\lesssim (1 + \sqrt{d/n})\norm{\theta - \theta'}
\end{align*}
where the inequality for the second term follows from the fact that if $\theta' = \theta + E$:
\begin{align*}
	\norm{T[(\theta + E)^{\otimes k-1} - \theta^{\otimes k-1}]} = \sum_{j=1}^{k-1} \binom{k-1}{j} T[E^{\otimes j} \otimes \theta^{\otimes k-1-j}] \le \norm{T}_{op} \sum_{j=1}^{k-1} \norm{E}^j \lesssim \norm{T}_{op} \norm{E}.
\end{align*}
\end{proof}

\section{Single Index Models}\label{sec:proofs:single_index}

Recall that by assumption, our activation satisfies $\sup_z \sigma^{(k)}(z) = O(1)$ for $k=0,1,2$. Define $b_i(\theta)$ to be the negative spherical gradient on the $i$th datapoint:
\begin{align*}
	b_i(\theta) := y_i P_\theta^\perp x_i \sigma'(\theta \cdot x_i).
\end{align*}
We will use $\E_i$ to denote the expectation with respect to the data. We will also let $z \sim \mathrm{Unif}(S^{d-1})$.

\subsection{Odd $k^\star$}

\begin{lemma} \label{lemma: SIM odd population}
	$\E_{i,z} b_i(z) = c \theta^\star$ where $c = \Theta(d^{-\frac{k^\star-1}{2}})$.
\end{lemma}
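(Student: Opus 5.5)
We compute $\E_{i,z} b_i(z)$ by conditioning first on the data and the noise. Since $\xi_i$ is independent of $x_i$ and has mean zero, we get
\[
\E_i b_i(z) = P_z^\perp \E_x\qty[\sigma(\theta^\star\cdot x)\,\sigma'(z\cdot x)\,x].
\]

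\textbf{Step 1: Gaussian integration by parts (Stein's lemma).} For a unit vector $z$,
\[
\E_x[\sigma(\theta^\star\cdot x)\sigma'(z\cdot x)x] = \theta^\star\,\E[\sigma'(\theta^\star\cdot x)\sigma'(z\cdot x)] + z\,\E[\sigma(\theta^\star\cdot x)\sigma''(z\cdot x)].
\]
The $z$ component is killed by $P_z^\perp$. Write $m=\theta^\star\cdot z$. Using the Hermite expansion $\sigma'=\sum_k \sqrt{k}\,c_k h_{k-1}$ and the correlation identity $\E[h_j(\theta^\star\cdot x)h_l(z\cdot x)]=\delta_{jl}m^j$, define
\[
g(m):=\E[\sigma'(\theta^\star\cdot x)\sigma'(z\cdot x)]=\sum_{k\ge k^\star} k c_k^2\, m^{k-1}.
\]
Then $\E_i b_i(z) = g(m)\,P_z^\perp\theta^\star = g(m)(\theta^\star - m z)$.

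\textbf{Step 2: average over $z$.} Rotational symmetry about $\theta^\star$ forces $\E_z[g(m)(\theta^\star-mz)]$ to be parallel to $\theta^\star$. Its coefficient is
\[
c=\E_z[g(m)(1-m^2)]=\sum_{k\ge k^\star} k c_k^2\,\E[m^{k-1}-m^{k+1}].
\]
Since $m\overset{d}{=}z_1$, only odd $k$ contribute, because $k-1$ must be even. In particular $k=k^\star$ contributes, and since $k^\star$ is odd, $k^\star-1$ is even. By \Cref{lemma: spherical moments}, this term equals
\[
k^\star c_{k^\star}^2\,\E[z_1^{k^\star-1}(1-z_1^2)] = \Theta(d^{-(k^\star-1)/2}),
\]
and it is positive since $1-z_1^2\ge 0$.

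\textbf{Step 3: control the tail (the main obstacle).} We must show the remaining terms $k>k^\star$ do not change the order. Every term is nonnegative, so the lower bound is immediate; we only need an upper bound. Bound $|g(m)|$ directly as $|m|^{k^\star-1}\sum_k k c_k^2 |m|^{k-k^\star}$. Since $|m|\le 1$, this is at most $|m|^{k^\star-1}\sum_k k c_k^2 = |m|^{k^\star-1}\E[\sigma'^2]\le C^2|m|^{k^\star-1}$, using the assumption $|\sigma'|\le C$. Hence $|c|\le C^2\E|z_1|^{k^\star-1}=O(d^{-(k^\star-1)/2})$. Combined with the lower bound from Step 2, this gives $c=\Theta(d^{-(k^\star-1)/2})$.

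A lesser technical point is justifying the interchange of expectations and the Hermite series. This follows from boundedness of $\sigma,\sigma',\sigma''$ and $L^2$ convergence of the expansion.
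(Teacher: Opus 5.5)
Your proof is correct and takes essentially the same route as the paper. Both arguments reduce $\E_i b_i(z)$ to $g(\theta^\star\cdot z)\,P_z^\perp\theta^\star$ with $g(m)=\sum_{k\ge k^\star}kc_k^2m^{k-1}$: you get there via Stein's lemma, while the paper takes the spherical gradient of $\sum_k c_k^2(z\cdot\theta^\star)^k$. Both then isolate the $k=k^\star$ term for the lower bound and bound the remaining terms from above. The differences lie in that tail bound. You use $|g(m)|\le |m|^{k^\star-1}\E[\sigma'(x)^2]\le C^2|m|^{k^\star-1}$ and handle the main and projection terms together through the nonnegative factor $m^{k-1}(1-m^2)$, which makes your argument self-contained. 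The paper instead bounds the two terms separately and cites Lemma 26 of \citet{damian2023smoothing}.
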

\begin{proof}
	First note that by Hermite expanding $y$ and $\sigma$ we have that:
	\begin{align*}
		\E_i y_i \sigma(z \cdot x_i) = \E_i[\sigma(\theta^\star \cdot x) \sigma(z \cdot x)] = \sum_{k \ge k^\star} c_k^2 (z \cdot \theta^\star)^k.
	\end{align*}
	Taking a spherical gradient with respect to $\theta$ gives:
	\begin{align*}
		\E_i b_i(z) = \sum_{k \ge k^\star} k c_k^2 (P_{z}^\perp \theta^\star) (z \cdot \theta^\star)^{k-1}.
	\end{align*}
	We can now average over the sphere. First by \cite[Lemma 26]{damian2023smoothing},
	\begin{align*}
		\E_z \sum_{k \ge k^\star} k c_k^2 (z \cdot \theta^\star)^{k-1} \lesssim d^{-\frac{k^\star-1}{2}}.
	\end{align*}
	In addition by isolating the $k = k^\star$ term, it is at least order $d^{-\frac{k^\star-1}{2}}$. Next we handle the projection term:
	\begin{align*}
		\E_z \sum_{k \ge k^\star} k c_k^2 z (z \cdot \theta^\star)^k = \theta^\star \E_z\sum_{k \ge k^\star} k c_k^2 (z \cdot \theta^\star)^{k+1}
	\end{align*}
	and this is upper bounded by $\Theta\qty(d^{-\frac{k^\star+1}{2}})$ which completes the proof.
\end{proof}

\begin{lemma} \label{lemma: SIM odd concentration}
    With probability $1-\delta$, we have that:
\begin{align}
    \|\E_z b(z) - \E_{i,z} b(z)\| \lesssim \sqrt{\frac{d^{-(k^\star-1)/2} (d + \log\frac{1}{\delta})}{n}} + \frac{\sqrt{d + \log n} \log\frac{1}{\delta}}{n}
\end{align}
and in the $\theta^\star$ direction,
\begin{align*}
    \left|\theta^\star\cdot (\E_z b(z) - \E_{i,z} b(z))\right| \lesssim \sqrt{\frac{d^{-(k^\star-1)/2} \log\frac{1}{\delta}}{n}} + \frac{\sqrt{\log n}\log\frac{1}{\delta}}{n}
\end{align*}
\end{lemma}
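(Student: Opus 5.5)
The plan is to write $\E_z b(z) - \E_{i,z} b(z) = \frac{1}{n}\sum_{i} X_i$ with $X_i := \E_z b_i(z) - \E_{i,z} b_i(z)$. These are i.i.d., mean-zero random vectors in $\mathbb{R}^d$, where the randomness is over $(x_i,\xi_i)$ and the sphere average $\E_z$ is taken inside. The first bound then follows from \Cref{lemma: bernstein chaining}, once we show two things: the directional variance $\E[(X_i\cdot v)^2]\lesssim d^{-(k^\star-1)/2}$ uniformly over $v\in S^{d-1}$, and each $X_i\cdot v$ is $O(1)$-subgaussian. Plugging $\sigma^2 = d^{-(k^\star-1)/2}$ and $R=O(1)$ into that lemma gives exactly the stated rate. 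The $\theta^\star$-direction bound is the one-dimensional case of the same argument. Apply \Cref{lemma: adamczak hammer} (or scalar Bernstein for sub-Weibull variables) to the single function $X_i\cdot\theta^\star$. In that case $\E Z\le\sqrt{\sigma^2/n}$, and the max over $i$ of $n$ subgaussian variables has $\psi_1$-norm $\lesssim\sqrt{\log n}$. This yields the second display.

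\textbf{Subgaussianity.} Write $X_i\cdot v = y_i\,\E_z[\sigma'(z\cdot x_i)\, v^\top P_z^\perp x_i]$ minus its mean. Since $\sigma,\sigma'$ are bounded, $|y_i|\le C+|\xi_i|$ is subgaussian. The quantity $\E_z[\sigma'(z\cdot x)\,v^\top P_z^\perp x]$ is a Lipschitz function of $x$ with $O(1)$ constant, because $\sigma'$ and $\sigma''$ are bounded and $\E_z|z\cdot x|\,|z\cdot v|\lesssim \|x\|/d$. Gaussian concentration therefore makes it $O(1)$-subgaussian around its mean, and that mean is $O(1)$. A product of two subgaussians is only sub-exponential, but the lemmas only require $\psi_1$ control of the maximum, which gives at worst extra log factors. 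I would note that the $y$ factor is handled by conditioning on $\xi_i$ and absorbing it into the $\log(1/\delta)$ term.

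\textbf{Variance: the main obstacle.} We need $\E_x[(y\,\E_z g_v(z,x))^2]\lesssim d^{-(k^\star-1)/2}$, where $g_v(z,x)=\sigma'(z\cdot x)\,(v\cdot x - (z\cdot v)(z\cdot x))$. I would first bound $y^2$: split $y^2 \le 2\sigma(\theta^\star\cdot x)^2+2\xi^2$, use that $\xi$ is independent, and use that $\sigma^2$ is bounded. This reduces the task to bounding $\E_x[(\E_z g_v(z,x))^2] = \E_{z,z'}\E_x[g_v(z,x)g_v(z',x)]$. The main term is $\E_{z,z'}\E_x[\sigma'(z\cdot x)\sigma'(z'\cdot x)(v\cdot x)^2]$. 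Using Gaussian integration by parts (Stein) in the direction $v$, the factor $(v\cdot x)^2$ becomes $1$ plus terms involving $\sigma''$ weighted by $(z\cdot v)(z'\cdot v)$, and those carry extra $1/d$ factors.

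For the leading piece, $\E_x[\sigma'(z\cdot x)\sigma'(z'\cdot x)] = \sum_k (k+1)c_{k+1}^2 (z\cdot z')^k$ by the Hermite expansion. By \Cref{lemma: info exp reduction}-type reasoning, $\sigma'$ has first nonzero Hermite index $k^\star-1$. The expectation over $z,z'$ then kills odd powers, and \Cref{lemma: bounded variance hermite} (with $\ell\ge k^\star-1$, since $k^\star-1$ is even) gives $\lesssim d^{-(k^\star-1)/2}$. The projection terms carry factors $(z\cdot v)(z\cdot x)$ and are lower order by the same computation. This bookkeeping — including the constant Hermite coefficient of $\sigma'$, which must vanish — is the step I expect to need the most care. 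The mean-subtraction only decreases the variance.
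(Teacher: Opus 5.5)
Your plan is the paper's proof. You write the deviation as an average of i.i.d.\ mean-zero vectors, bound the directional variance by $d^{-(k^\star-1)/2}$ and the directional tails by $O(1)$, and then apply \Cref{lemma: bernstein chaining} for the norm and \Cref{lemma: adamczak hammer} for the $\theta^\star$ direction. The paper keeps the main term $y_i x_i \E_z\sigma'(z\cdot x_i)$ and the projection term $y_i\E_z[z(z\cdot x_i)\sigma'(z\cdot x_i)]$ separate instead of using your $g_v$, but that difference is cosmetic.

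The step where you genuinely deviate is the variance of the main term, and there the Stein sketch does not work as written. Integrating by parts twice in direction $v$ produces the $(z\cdot v)(z'\cdot v)\sigma''\sigma''$ cross term you mention. It also produces $(z\cdot v)^2\sigma'''(z\cdot x)\sigma'(z'\cdot x)$ terms, and $\sigma'''$ is not assumed to exist. If you stop after one integration by parts, you are left with terms like $\E[(v\cdot x)(z\cdot v)\sigma''(z\cdot x)\sigma'(z'\cdot x)]$. Boundedness of $\sigma''$ alone does not make these $O(d^{-(k^\star-1)/2})$ for general $k^\star$, and your claim that they carry extra $1/d$ factors is left unjustified.

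The paper avoids this with one symmetry observation: $\psi(x):=\E_z\sigma'(z\cdot x)$ depends only on $\|x\|$. Since $\|x\|$ and $x/\|x\|$ are independent, $\E_x[(v\cdot x)^2\psi(x)^2]=\frac1d\E_x[\|x\|^2\psi(x)^2]$. Truncating at $\|x\|^2\le Cd$ then reduces this to $\E_x[\psi^2]\lesssim d^{-(k^\star-1)/2}$, which is exactly your leading-order Hermite computation via \Cref{lemma: bounded variance hermite}.

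The same symmetry gives $\E_z[(z\cdot v)(z\cdot x)\sigma'(z\cdot x)]=\frac{v\cdot x}{\|x\|^2}\E_z[(z\cdot x)^2\sigma'(z\cdot x)]$. This identity yields the $O(d^{-(k^\star+1)/2})$ variance of the projection term. It also gives the pointwise bound $|\E_z[\sigma'(z\cdot x)\,v^\top P_z^\perp x]|\lesssim |v\cdot x|$, which delivers the $O(1)$ subgaussian tail directly. That is easier than making your Lipschitz estimate uniform in $x$, since the bound $\E_z|z\cdot x||z\cdot v|\lesssim\|x\|/d$ you cite grows with $\|x\|$.

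Finally, you are right that multiplying by $y_i$ makes the summands only sub-exponential when $\xi_i$ is Gaussian; the paper simply asserts $O(1)$-subgaussianity here. Because $y_i$ does not depend on $v$, you can pull $\max_i|y_i|$ out of the supremum. This bounds the $\psi_1$ norm needed in \Cref{lemma: adamczak hammer} by $\sqrt{\log n}\,\sqrt{d+\log n}$, so the price is only a $\sqrt{\log n}$ factor in the last term of each display.
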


\begin{proof}
        We can decompose:
\begin{align*}
	\E_z b_i(z) = y_i x_i \E_z \sigma'(z \cdot x_i) - y_i \E_z\qty[z (z \cdot x_i) \sigma'(z \cdot x_i)].
\end{align*}

We first concentrate in the direction of $\theta^\star$. We will analyze the main term and the projection term separately. For the main term, we have that:
\begin{align*}
    \mathrm{Var}_x \qty[\frac{1}{n} \sum_i y_i (x_i\cdot \theta^\star) \E_z\sigma'(z\cdot x_i)] &= \frac{1}{n}\mathrm{Var}_x \qty[ y (x\cdot \theta^\star) \E_z\sigma'(z\cdot x)] \\
    &\leq \frac{1}{n} \E_x\qty[y^2 (x\cdot \theta^\star)^2 \E_{z,z'}[\sigma'(z\cdot x) \sigma'(z'\cdot x)]] \\
    &\lesssim \frac{1}{n} \E_x\qty[ (x\cdot \theta^\star)^2 \E_{z,z'}[\sigma'(z\cdot x) \sigma'(z'\cdot x)]] \\
    &= \frac{1}{n} \E_x\qty[ \qty(\frac{x}{\|x\|} \cdot \theta^\star)^2 ] \cdot \E_x\qty[\|x\|^2 \E_{z,z'}[\sigma'(z\cdot x) \sigma'(z'\cdot x)]] \\
    &\lesssim \frac{1}{n} \cdot \frac{1}{d} \cdot d^{-(k^\star-3)/2} \\
    &\lesssim \frac{d^{-(k^\star-1)/2}}{n}
\end{align*}
where in the third to last line we used the polar decomposition of $x$, and in the second to last line we used the fact that $\E_x\qty[ \qty(\frac{x}{\|x\|} \cdot \theta^\star)^2 ] = \Theta(1/d)$ and:
\begin{align*}
    &\E_x\qty[\|x\|^2 \E_{z,z'}[\sigma'(z\cdot x) \sigma'(z'\cdot x)]] \\
    &= \E_x\qty[\mathbf{1}_{\|x\|^2 \leq C d}\|x\|^2 \E_{z,z'}[\sigma'(z\cdot x) \sigma'(z'\cdot x)]] + \E_x\qty[\mathbf{1}_{\|x\|^2 \geq C d}\|x\|^2 \E_{z,z'}[\sigma'(z\cdot x) \sigma'(z'\cdot x)]] \\
    &\lesssim d \E_x\qty[\E_{z,z'}[\sigma'(z\cdot x) \sigma'(z'\cdot x)]] + \sqrt{\mathbb{P}[\|x\|^2 \geq Cd] \E_x\qty[\qty(\|x\|^2 \E_{z,z'}[\sigma'(z\cdot x)\sigma'(z'\cdot x)])^2]} \\
    &\lesssim d^{-(k^\star-3)/2}
\end{align*}
which follows from $\sigma'$ having information exponent $k^\star-1$, and $\mathbb{P}[\|x\|^2\geq Cd]$ being exponentially small for $C>1$ by standard $\chi^2$ concentration.

We also note that $y_i (x_i\cdot \theta^\star) \E_z\sigma'(z\cdot x_i)$ is $O(1)$-subgaussian. Therefore, by \Cref{lemma: adamczak hammer}, it holds with probability $1-\delta$ that:
\begin{align*}
    &\left| \frac{1}{n}\sum_i y_i (x_i\cdot \theta^\star) \E_z\sigma'(z\cdot x_i) - \E_x\qty[\frac{1}{n}\sum_i y_i (x_i\cdot \theta^\star) \E_z\sigma'(z\cdot x_i)] \right| \\
    &\lesssim \sqrt{\frac{d^{-(k^\star-1)/2} \log(1/\delta)}{n}} + \frac{\sqrt{\log n} \log(1/\delta)}{n}
\end{align*}

For the projection term in the direction of $\theta^\star$, we have that:
\begin{align*}
    \mathrm{Var}_x\qty[\frac{1}{n}\sum_i y_i \E_z\qty[(z\cdot \theta^\star) (z\cdot x_i) \sigma'(z\cdot x_i)]] &= \frac{1}{n} \mathrm{Var}_x\qty[y \E_z\qty[(z\cdot \theta^\star) (z\cdot x) \sigma'(z\cdot x)] ] \\
    &= \frac{1}{n} \mathrm{Var}_x\qty[y \frac{x\cdot \theta^\star}{\|x\|^2} \E_z\qty[ (z\cdot x)^2 \sigma'(z\cdot x)] ] \\
    &\leq \frac{1}{n}\E_x\qty[y^2 \frac{(x\cdot \theta^\star)^2}{\|x\|^4} \E_z\qty[(z\cdot x)^2 \sigma'(z\cdot x)]^2 ] \\
    &\lesssim \frac{1}{n} \E_x\qty[\qty(\frac{x}{\|x\|} \cdot \theta^\star)^2] \E_x\qty[\frac{ \E_z\qty[(z\cdot x)^2 \sigma'(z\cdot x)]^2}{\|x\|^2}] \\
    &\lesssim \frac{1}{n}\cdot \frac{1}{d}\cdot \frac{d^{-(k^\star-3)/2}}{d} \\
    &= \frac{d^{-(k^\star+1)/2}}{n}
\end{align*}
where the second to last line follows from:
\begin{align*}
    \E_x\qty[\frac{ \E_z\qty[(z\cdot x)^2 \sigma'(z\cdot x)]^2}{\|x\|^2}]  &= \E_x\qty[\mathbf{1}_{ \|x\|^2\geq Cd} \frac{ \E_z\qty[(z\cdot x)^2 \sigma'(z\cdot x)]^2}{\|x\|^2}] + \E_x\qty[\mathbf{1}_{ \|x\|^2\leq Cd} \frac{ \E_z\qty[(z\cdot x)^2 \sigma'(z\cdot x)]^2}{\|x\|^2}] \\
    &\leq \frac{1}{d}\E_{x}\qty[\E_z[(z\cdot x)^2 \sigma'(z\cdot x)]^2] + \sqrt{\mathbb{P}[\mathbf{1}_{ \|x\|^2\leq Cd} ] \cdot \E_x\qty[\qty(\frac{ \E_z\qty[(z\cdot x)^2 \sigma'(z\cdot x)]^2}{\|x\|^2})^2]} \\
    &\lesssim \frac{1}{d} \cdot d^{-(k^\star-3)/2}
\end{align*}
since $t^2 \sigma'(t)$ has information exponent $k^\star-3$ by \Cref{lemma: info exp reduction} and $\mathbb{P}[\mathbf{1}_{ \|x\|^2\leq Cd}]$ is exponentially small for $C<1$. Moreover, we can see that this is $O(1/d)$-subgaussian:
\begin{align*}
    \left| y \E_z\qty[(z\cdot \theta^\star) (z\cdot x) \sigma'(z\cdot x)] \right|  &= \left| y \frac{x\cdot \theta^\star}{\|x\|^2} \E_z\qty[ (z\cdot x)^2 \sigma'(z\cdot x)] \right| \\
    &\lesssim \left|(x\cdot \theta^\star ) \E_z\qty[ \qty(z\cdot \frac{x}{\|x\|})^2 \sigma'(z\cdot x)] \right| \\
    &\leq \left| (x\cdot \theta^\star) \sqrt{\E_z\qty[\qty(z\cdot \frac{x}{\|x\|} )^4] \E_z[\sigma'(z\cdot x)^2]} \right| \\
    &\lesssim \left| (x\cdot \theta^\star) \cdot \frac{1}{d} \right|
\end{align*}

Therefore by \Cref{lemma: adamczak hammer}, with probability $1-\delta$ it holds that:
\begin{align*}
    &\left|\frac{1}{n}\sum_i y_i \E_z\qty[(z\cdot \theta^\star) (z\cdot x_i) \sigma'(z\cdot x_i)] - \E_x\qty[y \E_z\qty[(z\cdot \theta^\star) (z\cdot x) \sigma'(z\cdot x)]] \right| \\
    &\lesssim \sqrt{\frac{d^{-(k^\star+1)/2} \log(1/\delta)}{n}} + \frac{\sqrt{\log n}\log(1/\delta)}{dn}
\end{align*}

Altogether combining the main and projection term in the $\theta^\star$ direction, we have that with probability $1-\delta$ that:
\begin{align*}
    |\theta^\star\cdot (\E_z b(z) - \E_{i,z} b(z))| \lesssim  \sqrt{\frac{d^{-(k^\star-1)/2} \log(1/\delta)}{n}} + \frac{\sqrt{\log n}\log(1/\delta)}{n}
\end{align*}

We now concentrate the entire norm of $\E_z b(z) - \E_{i,z} b(z)$, and once again we will handle the main and projection term separately. By the same variance and subgaussian calculations as before, we can apply \Cref{lemma: bernstein chaining} to obtain that with probability $1-\delta$, \begin{align*}
    \|\E_z b(z) - \E_{i,z} b(z)\| \lesssim \sqrt{\frac{d^{-(k^\star-1)/2} (d + \log\frac{1}{\delta})}{n}} + \frac{\sqrt{d + \log n} \log\frac{1}{\delta}}{n}
\end{align*}
The desired result follows.
\end{proof}

\begin{proposition} \label{cor: SIM odd samplex complexity}
    When $n \gtrsim d^{(k^\star+1)/2}/\Delta^2$ for $\Delta \in (0,1)$, it holds with probability $1-e^{-d^c}$ that:
    \begin{align*}
        \frac{\E_{z} b(z)}{\|\E_{z} b(z)\|} \cdot \theta^\star \geq 1-\Delta
    \end{align*}
    Moreover, when $n \gtrsim d^{k^\star/2}$, it holds with probability $1-e^{-d^c}$ that:
    \begin{align*}
        \frac{\E_{z} b(z)}{\|\E_{z} b(z)\|} \cdot \theta^\star \gtrsim d^{-1/4}
    \end{align*}
\end{proposition}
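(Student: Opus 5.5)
Throughout, write $b(z) = \frac{1}{n}\sum_i b_i(z)$. The plan mirrors the proof of \Cref{prop: tensor pca odd sample complexity}: combine the population computation of \Cref{lemma: SIM odd population} with the fluctuation bounds of \Cref{lemma: SIM odd concentration}, and lower bound the cosine by
\begin{align*}
    \frac{\E_z b(z)\cdot\theta^\star}{\|\E_z b(z)\|} \geq \frac{c - |\theta^\star\cdot(\E_z b(z) - \E_{i,z}b(z))|}{c + \|\E_z b(z) - \E_{i,z} b(z)\|},
\end{align*}
where $c = \Theta(d^{-(k^\star-1)/2})$ is the population coefficient, so that $\E_{i,z} b(z) = c\theta^\star$. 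We use this identity for both $\|\E_{i,z}b\|$ and $\E_{i,z}b\cdot\theta^\star$.

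For the first claim, take $\delta = e^{-d^c}$ with $c<1$, so $\log(1/\delta)\le d$. By \Cref{lemma: SIM odd concentration}, the full-norm deviation is at most
\begin{align*}
    \sqrt{\frac{d^{-(k^\star-1)/2}\, d}{n}} + \frac{\sqrt{d+\log n}\, d^c}{n}.
\end{align*}
With $n \gtrsim d^{(k^\star+1)/2}/\Delta^2$, the first term is at most $\frac{\Delta}{2}\, d^{-(k^\star-1)/2}$ once the constant is chosen large. The second term is of order $d^{1/2+c}/n \lesssim \Delta^2 d^{1/2+c-(k^\star+1)/2}$. Compared with $\Delta c \asymp \Delta d^{-(k^\star-1)/2}$, this ratio is $\Delta\, d^{c-1/2}$, which is negligible provided $c<1/2$. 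The $\theta^\star$-directional deviation is smaller still. Plugging in gives a ratio of at least $\frac{1-\Delta/2}{1+\Delta/2}\geq 1-\Delta$.

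For the second claim, take $n\gtrsim d^{k^\star/2}$. The full-norm deviation is then $\lesssim d^{-(k^\star-1)/2}\, d^{1/4}$, with the lower-order term again negligible. The directional deviation is $\lesssim \sqrt{d^{-(k^\star-1)/2} d^{c}/d^{k^\star/2}} = d^{-(k^\star-1)/2}\cdot d^{(c-1/2)/2}$, which is $\ll c$ for $c<1/2$. Its second-order term $\sqrt{\log n}\, d^{c}/n$ is also $\ll d^{-(k^\star-1)/2}$ whenever $k^\star/2 > (k^\star-1)/2 + c$, which holds since $c<1/2$. The numerator is therefore $\gtrsim d^{-(k^\star-1)/2}$ and the denominator is $\lesssim d^{-(k^\star-1)/2}(1+d^{1/4})$, giving correlation $\gtrsim d^{-1/4}$.

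The only delicate point is bookkeeping the second-order (subexponential) terms in \Cref{lemma: SIM odd concentration} against the requirement $\log(1/\delta) = d^c$. This is why the failure probability is $e^{-d^c}$ rather than $e^{-d}$: we must choose $c<1/2$ so these terms stay dominated by the signal $c\asymp d^{-(k^\star-1)/2}$. I would verify this exponent comparison carefully for both sample regimes, and everything else is a direct triangle inequality.
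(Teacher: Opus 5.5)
Your argument is the same as the paper's: the same cosine lower bound, fed by \Cref{lemma: SIM odd population} and \Cref{lemma: SIM odd concentration}. The first regime is handled correctly with $c<1/2$, and so is the numerator in the second regime. The gap is in the second regime's denominator, where you claim the lower-order term in the full-norm deviation is ``again negligible.'' That is false on part of the range $c<1/2$ you allow. Take $\log(1/\delta)=d^c$ and the smallest admissible $n\asymp d^{k^\star/2}$. The subexponential term of the full-norm bound is then
\begin{align*}
\frac{\sqrt{d+\log n}\,d^{c}}{n}\asymp \frac{d^{1/2+c}}{d^{k^\star/2}} = d^{-(k^\star-1)/2}\, d^{c}.
\end{align*}
This is dominated by the main term $d^{-(k^\star-1)/2}d^{1/4}$ only when $c\le 1/4$. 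For $c\in(1/4,1/2)$ the denominator is only bounded by $d^{-(k^\star-1)/2}(1+d^{c})$, and the argument gives correlation $\gtrsim d^{-c}$ rather than $d^{-1/4}$.

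So the binding constraint is the full-norm deviation in the second regime, not the directional term or the signal comparison you focused on. This is why the paper restricts to $c<1/4$ there. The statement only asserts the bound for some constant $c>0$, so the repair is to fix $c<1/4$ from the start; this also satisfies every other comparison you checked. As written, though, your closing conclusion that $c<1/2$ suffices is incorrect. Separately, you use $c$ both for the population coefficient $\Theta(d^{-(k^\star-1)/2})$ and for the exponent in $e^{-d^c}$, for example in ``$\ll c$ for $c<1/2$.'' Rename one of them, since that clash is exactly what makes this exponent bookkeeping easy to get wrong.
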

\begin{proof}
    When $n \gtrsim d^{(k^\star+1)/2}/\Delta^2$, we have with probability $1-e^{-d^c}$ that:
    \begin{align*}
        \frac{\E_z b(z) \cdot \theta^\star}{\|\E_z b(z)\|} &\geq \frac{\E_{i,z}b(z)\cdot \theta^\star - |(\E_z b(z) - \E_{i,z} b(z) ) \cdot \theta^\star|}{\|\E_{i,z} b(z)\| + \|\E_z b(z) - \E_{i,z} b(z)\|} \\
        &\geq \frac{ d^{-(k^\star-1)/2} (1-\Delta/2)}{d^{-(k^\star-1)/2} (1 + \Delta/2)}  \\
        &\geq 1-\Delta
    \end{align*}
    as desired. When $n\gtrsim d^{k^\star/2}$, we have with probability $1-e^{-d^c}$ that:
    \begin{align*}
        \frac{\E_z b(z) \cdot \theta^\star}{\|\E_z b(z)\|}  &\geq \frac{\E_{i,z} b(z) \cdot \theta^\star - |(\E_z b(z) - \E_{i,z} b(z) ) \cdot \theta^\star|}{\|E_{i,z} b(z)\| + \|\E_z b(z) - \E_{i,z} b(z)\|} \\
        &\gtrsim \frac{d^{-(k^\star-1)/2}}{d^{-(k^\star-1)/2}(1 + d^{1/4})} \\
        &\gtrsim d^{-1/4}
    \end{align*}
    where in the second inequality we use the fact that the first term of the numerator dominates for $c<1/4$, and the second term in the denominator is of order $d^{-(k^\star-1)/2} \cdot d^{1/4}$ for this same choice of $c$.
\end{proof}

\subsection{Even $k^\star$}

\begin{lemma} \label{lemma: SIM even population}
    $\E_{i,z} [z b(z)^\top] = c \theta^\star \theta^{\star\top} - g P_{\theta^\star}^\perp$ where $c = \Theta(d^{-k^\star/2})$ and $g=O(d^{-(k^\star+2)/2})$.
\end{lemma}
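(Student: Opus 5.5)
We follow the odd-case computation in \Cref{lemma: SIM odd population}. First we take the expectation over the data for a fixed $z\in S^{d-1}$. Hermite-expanding $y_i=\sigma(\theta^\star\cdot x_i)+\xi_i$ and $\sigma$ gives $\E_i[y_i\sigma(z\cdot x_i)]=\sum_{k\ge k^\star}c_k^2(z\cdot\theta^\star)^k$. Taking the spherical gradient then gives
\begin{align*}
\E_i b_i(z)=\sum_{k\ge k^\star}k c_k^2 (z\cdot\theta^\star)^{k-1}\,(\theta^\star-(z\cdot\theta^\star)z).
\end{align*}
Writing $s:=z\cdot\theta^\star$ and $a(s):=\sum_{k\ge k^\star}kc_k^2 s^{k-1}$, this yields
\begin{align*}
\E_{i,z}[z b(z)^\top]=\E_z[a(s)\,z\theta^{\star\top}]-\E_z[a(s)\,s\,zz^\top].
\end{align*}

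Next we average over $z$, using rotational invariance about $\theta^\star$. Decompose $z=s\theta^\star+\sqrt{1-s^2}\,w$ with $w$ uniform on the unit sphere of $\theta^{\star\perp}$, independent of $s$.
\begin{itemize}
\item In the first term, the $w$-component averages to zero, so $\E_z[a(s)z]\theta^{\star\top}=\E[s\,a(s)]\,\theta^\star\theta^{\star\top}$.
\item The second term equals $\E[a(s)s^3]\,\theta^\star\theta^{\star\top}+\E[a(s)s(1-s^2)]\frac{1}{d-1}P_{\theta^\star}^\perp$; the cross terms vanish since $\E w=0$.
\end{itemize}
The result therefore has the claimed form, with
\begin{align*}
c=\E[s\,a(s)]-\E[s^3a(s)],\qquad g=\frac{1}{d-1}\E[s(1-s^2)a(s)].
\end{align*}

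It remains to estimate these quantities. We have $s\,a(s)=\sum_{k\ge k^\star} kc_k^2 s^k$, and only even $k$ contribute since $s$ is symmetric. By \Cref{lemma: spherical moments}, $\E[s^{k}]=\Theta(d^{-k/2})$. Since $k^\star$ is even, the $k=k^\star$ term gives a positive contribution of order $d^{-k^\star/2}$, and all coefficients are nonnegative. To control the tail uniformly, we use \cite[Lemma 26]{damian2023smoothing} as in the odd case, which bounds $\E_z\sum_{k\ge k^\star}kc_k^2 s^k\lesssim d^{-k^\star/2}$ for bounded $\sigma,\sigma'$. This gives $\E[s\,a(s)]=\Theta(d^{-k^\star/2})$. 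The term $\E[s^3a(s)]$ is $O(d^{-(k^\star+2)/2})$ by the same lemma with two extra powers of $s$, so $c=\Theta(d^{-k^\star/2})$. Finally, $|g|\le\frac{1}{d-1}\E[s\,a(s)]\lesssim d^{-(k^\star+2)/2}$.

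The main obstacle is justifying the uniform tail bound. The series $\sum kc_k^2 s^k$ has weights $k$, so we need $\sum_k kc_k^2<\infty$, i.e.\ $\E[\sigma'(x)^2]<\infty$ by Hermite calculus. This holds by the boundedness assumption on $\sigma'$. With that, the cited smoothing lemma applies exactly as in \Cref{lemma: SIM odd population}.
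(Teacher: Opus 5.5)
Your proposal is correct and follows essentially the paper's route. You compute the fixed-$z$ data expectation through the Hermite expansion, then average over $z$ using rotational symmetry about $\theta^\star$. The only difference is that you differentiate $\sum_k c_k^2 s^k$ along the sphere (as in the odd case), whereas the paper applies Stein's lemma and lets the $\sigma''$ terms cancel against the projection; both arrive at the same expression $a(s)\,z(\theta^\star - s z)^\top$.

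Your explicit decomposition $z=s\theta^\star+\sqrt{1-s^2}\,w$ is sharper than the paper's, giving $g=c/(d-1)$ exactly, consistent with $\tr \E_{i,z}[zb(z)^\top]=\E[z\cdot b(z)]=0$. Also, the step you flag as the main obstacle needs no citation: since $\E[s^k]\le\E[s^{k^\star}]$ for even $k\ge k^\star$, you get $\E[s\,a(s)]\le\E[s^{k^\star}]\sum_k kc_k^2=\E[s^{k^\star}]\,\E[\sigma'(x)^2]$ directly.
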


\begin{proof}
    We will fix $z$ first and then take average over the sphere of $z$. First,
    \begin{align*}
        \E_i[z x_i^\top \sigma(\theta^\star\cdot x_i)\sigma'(z\cdot x_i) P_z^\perp] = z\E_i [x_i^\top \sigma(\theta^\star\cdot x_i)\sigma'(z\cdot x_i)] - z \E_i[x_i^\top \sigma(\theta^\star \cdot x_i) \sigma'(z\cdot x_i)] zz^\top
    \end{align*}
    Let $c_i$ be the Hermite coefficients for $\sigma$. For the first term, we have by Stein's lemma that:\begin{align*}
        z\E_i [x_i^\top \sigma(\theta^\star\cdot x_i)\sigma'(z\cdot x_i)] &= z\E_i [\sigma'(\theta^\star\cdot x_i) \sigma'(z\cdot x)]\theta^{\star\top} + \E_i[\sigma(\theta^\star\cdot x_i) \sigma''(z\cdot x_i)] zz^\top \\ 
        &= z \sum_{k\geq k^\star-1} c_k^2 (\theta^\star\cdot z)^k \theta^{\star\top} + \sum_{k\geq k^\star}(k+2)(k+1)c_k c_{k+2}(\theta^\star\cdot z)^k zz^\top
    \end{align*}
    We now proceed to handle the projection term:
    \begin{align*}
        z\E_i [x_i^\top \sigma(\theta^\star\cdot x_i)\sigma'(z\cdot x_i)] zz^\top
        &= z \sum_{k\geq k^\star-1} c_k^2 (\theta^\star\cdot z)^k \theta^{\star\top} zz^\top + \sum_{k\geq k^\star}(k+2)(k+1)c_k c_{k+2}(\theta^\star\cdot z)^k zz^\top  zz^\top \\
        &=  z \sum_{k\geq k^\star-1} c_k^2 (\theta^\star\cdot z)^{k+1} z^\top + \sum_{k\geq k^\star}(k+2)(k+1)c_k c_{k+2}(\theta^\star\cdot z)^k z z^\top
    \end{align*}
    Therefore, after combining and before taking expectation over $z$, our expression is:
    \begin{align*}
        z \sum_{k\geq k^\star-1} c_k^2 (\theta^\star\cdot z)^k \theta^{\star\top} - z \sum_{k\geq k^\star-1} c_k^2 (\theta^\star\cdot z)^{k+1} z^\top
    \end{align*}
    
    We now take expectation of $z$ over the sphere. For the first term, we have that \begin{align*}
        \E_z \qty[z \sum_{k\geq k^\star-1} c_k^2 (\theta^\star\cdot z)^k] \theta^\star = \sum_{j\geq 0} \Theta(d^{-(k^\star+2j)/2})\theta^\star \theta^{\star\top} = \Theta(d^{-k^\star/2})\theta^\star\theta^{\star\top}
    \end{align*}
    For the second term, we have that \begin{align*}
        \E_z\qty[ \sum_{k\geq k^\star-1} c_k^2 (\theta^\star\cdot z)^{k+1} z z^\top] = \Theta(d^{-(k^\star+2)/2})\theta^\star \theta^{\star\top} + \Theta(d^{-(k^\star+2)/2}) P_{\theta^\star}^\perp
    \end{align*}
    where the two $\Theta$ hide different absolute constants. Nonetheless, the main part of our desired expression is $\Theta(d^{-k^\star/2})\theta^\star\theta^{\star\top}$, and this gives the desired result.
\end{proof}

\begin{corollary} \label{cor: actual even quantity pop}
    $\E_{i,z}[zb(z)^\top + b(z)z^\top] = c\theta^\star \theta^{\star\top} + gP_{\theta^\star}^\perp$ where $c=\Theta(d^{-k^\star/2})$ and $g=O(d^{-(k^\star+2)/2})$.
\end{corollary}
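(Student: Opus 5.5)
The plan is to obtain the corollary directly from \Cref{lemma: SIM even population} by symmetrization. Write $A := \E_{i,z}[z b(z)^\top]$. Since $(b(z)z^\top) = (z b(z)^\top)^\top$ pointwise, linearity of expectation gives $\E_{i,z}[b(z) z^\top] = A^\top$. Hence the quantity of interest is $A + A^\top$.

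By \Cref{lemma: SIM even population}, $A = c\,\theta^\star\theta^{\star\top} - g\,P_{\theta^\star}^\perp$ with $c = \Theta(d^{-k^\star/2})$ and $g = O(d^{-(k^\star+2)/2})$. Both $\theta^\star\theta^{\star\top}$ and $P_{\theta^\star}^\perp$ are symmetric, so $A$ is itself symmetric and $A^\top = A$. Therefore
\begin{align*}
\E_{i,z}[zb(z)^\top + b(z)z^\top] = 2A = 2c\,\theta^\star\theta^{\star\top} - 2g\,P_{\theta^\star}^\perp .
\end{align*}
Renaming $c \leftarrow 2c$ and $g \leftarrow -2g$ gives the claimed form. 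The order statements are preserved under multiplication by the constant $2$: we still have $c=\Theta(d^{-k^\star/2})$ and $g = O(d^{-(k^\star+2)/2})$. The sign of $g$ does not matter, since only an upper bound on $|g|$ is asserted.

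The one point needing care is that the lemma's proof derives $A$ only up to hidden constants, so we should make sure the symmetric form is genuine rather than an artifact of the bookkeeping. Rotational invariance settles this. The law of $(x_i, y_i)$ is invariant under rotations $R$ fixing $\theta^\star$, and so is $\mu$. Moreover $b$ is equivariant: the pair $(Rx, y)$ has the same law and produces the gradient $Rb(R^\top z)$. It follows that $R A R^\top = A$ for every such $R$. Any matrix commuting with all rotations fixing $\theta^\star$ has the form $\alpha\,\theta^\star\theta^{\star\top} + \gamma\,P_{\theta^\star}^\perp$ when $d\ge 3$. This confirms that $A$, and hence $A+A^\top$, is symmetric of the stated form.

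The nonvanishing of $c$ at order $d^{-k^\star/2}$ comes entirely from the lemma. The only way $2c$ could fail to be $\Theta(d^{-k^\star/2})$ is through cancellation, and symmetrization only doubles $A$, so no such cancellation can occur. I therefore do not expect any real obstacle beyond invoking the lemma correctly.
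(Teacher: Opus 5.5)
Your argument is correct and is the paper's proof: it simply notes that $\E_{i,z}[b(z)z^\top]$ is the transpose of the symmetric matrix $\E_{i,z}[zb(z)^\top]$ from \Cref{lemma: SIM even population}, so the sum is twice that matrix with the constants rescaled. Your extra invariance check is a reasonable sanity argument but is not needed; if you keep it, use all orthogonal maps fixing $\theta^\star$, including reflections, which also preserve the data law. For $d=3$ the commutant of $SO(2)$ on $\theta^{\star\perp}$ contains a skew-symmetric part, so rotations alone would not force symmetry.
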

\begin{proof}
    This follows directly from the fact that we are just adding the transpose.
\end{proof}

In the rest of the section, our goal is to concentrate the self-adjoint random matrix $G := \E_z[z b(z)^\top + b(z) z^\top]$; however, for the sake of exposition we will simply consider $\E_z[z b(z)^\top]$, and we will note, when necessary, the properties that transition to the case of $G$.

\begin{lemma} \label{lemma: SIM even concentration}
    With probability at least $1-\delta$, it holds that:
    \begin{align*}
        &\left\|  \frac{1}{n}\sum_{i=1}^n \E_z[z y_i \sigma'(z\cdot x_i) x_i^\top P_z^\perp]  -  \E_{i,z}[z y_i \sigma'(z\cdot x_i) x_i^\top P_z^\perp]  \right\|_2 \\
        &\lesssim \sqrt{\frac{d^{-(k^\star+2)/2} (d+\log(1/\delta))}{n}} + \frac{d+\log(1/\delta)}{dn}
    \end{align*}
\end{lemma}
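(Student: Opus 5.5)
Write $X_i := \E_z[z\, y_i \sigma'(z\cdot x_i)\, x_i^\top P_z^\perp] \in \mathbb{R}^{d\times d}$. We need to bound $\|\frac1n\sum_i (X_i - \E X_i)\|_2$. The plan mirrors \Cref{lemma: tensor pca even concentration}, with Gaussian linearity replaced by \Cref{lemma: adamczak hammer}. By a $1/4$-net argument, $\|A\|_2 \lesssim \sup_{u,v\in\mathcal{N}_{1/4}} |u^\top A v|$, where $\mathcal{N}_{1/4}$ has at most $9^d$ elements. For $G$ one may use $v=u$ since $G$ is symmetric, but bilinear forms work in general. So it suffices to control the scalar empirical process $f_{u,v}(x,y) := u^\top X(x,y) v - \E[u^\top X v]$ over the countable class $\mathcal{F}=\{f_{u,v}\}$.

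\Cref{lemma: adamczak hammer} then gives, with probability $1-\delta$,
\[
\sup_{u,v}\Big|\frac1n\sum_i f_{u,v}(x_i,y_i)\Big| \lesssim \frac{\E Z}{n} + \sqrt{\frac{\sigma^2\log(1/\delta)}{n}} + \frac{\|\max_i\sup_{u,v}|f_{u,v}|\|_{\psi_1}\log(1/\delta)}{n}.
\]
Three inputs are needed.

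\textbf{(i) Variance proxy.} Expanding, $u^\top X v = y\,\E_z[(u\cdot z)\sigma'(z\cdot x)(x\cdot P_z^\perp v)]$. Split this into the main part $(x\cdot v)\E_z[(u\cdot z)\sigma'(z\cdot x)]$ and the projection part $\E_z[(u\cdot z)(z\cdot v)(z\cdot x)\sigma'(z\cdot x)]$. For the main part, use the polar decomposition $x = \|x\|\hat x$ as in \Cref{lemma: SIM odd concentration}. By rotation invariance, $\E_z[(u\cdot z)\sigma'(z\cdot x)] = (u\cdot\hat x)\,\|x\|^{-1}\E_z[(z\cdot x)\sigma'(z\cdot x)]$. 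The function $t\sigma'(t)$ has information exponent $k^\star-2$, which is even, by \Cref{lemma: info exp reduction} applied to $\sigma'$. So \Cref{lemma: bounded variance hermite} (with the $\|x\|^2\le Cd$ truncation used before) gives second moment $\lesssim d^{-(k^\star-2)/2}$. Combining with the factors $(u\cdot\hat x)^2(v\cdot x)^2/\|x\|^2 \approx d^{-1}$ in expectation yields $\sigma^2 \lesssim d^{-(k^\star+2)/2}$. The projection part is handled analogously using $t^2\sigma'(t)$ and the fourth-moment factors of $z$; it should be smaller by a factor of $d$ or more.

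\textbf{(ii) $\psi_1$ envelope.} We have $|u^\top X v| \lesssim |x\cdot v|\,\E_z|u\cdot z| + \E_z[|u\cdot z||v\cdot z||z\cdot x|] \lesssim |x\cdot v|/\sqrt d + \|x\|/d$. Each term is $O(1/\sqrt d)$-subgaussian uniformly over the net. A union bound over $n\cdot 81^d$ terms gives $\|\max_i\sup|f|\|_{\psi_1} \lesssim \sqrt{(d+\log n)/d}$. This should be refined so that, after multiplying by $\log(9^d/\delta)/n$, the result is the stated $(d+\log(1/\delta))/(dn)$. Alternatively, apply the hammer pointwise for fixed $(u,v)$ with the sharper $O(1/\sqrt d)$ $\psi_2$ bound and then union bound over the net with $\delta\to\delta 81^{-d}$. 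That route gives $\sqrt{\sigma^2(d+\log(1/\delta))/n} + \frac{\sqrt{\log n}}{\sqrt d}\frac{d+\log(1/\delta)}{n}$, and the $\sqrt{\log n/d}$ slack is absorbed up to logs. I expect to take this pointwise-then-union route, since it also eliminates $\E Z$: for a single function, $\E|\sum f| \le \sqrt{n\sigma^2}$.

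\textbf{Main obstacle.} The key difficulty is step (i): getting the sharp $d^{-(k^\star+2)/2}$ variance uniformly in $u,v$. This requires careful handling of the rotational structure of $\E_z[z\,\sigma'(z\cdot x)]$, which is parallel to $\hat x$, together with the information-exponent bookkeeping. The remaining steps are routine once the variance and subgaussian envelope are in hand.
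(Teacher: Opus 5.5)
Your architecture matches the paper's: a net over $S^{d-1}$, concentration for a fixed pair, a union bound, the main/projection split, and polar factorization for the variance. Using bilinear forms $u^\top A v$ is in fact more careful than the paper's quadratic forms, since $\E_z[z b(z)^\top]$ is not symmetric. The gap is in step (ii), which you treat as routine.

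Your envelope $|u^\top X v| \lesssim |x\cdot v|/\sqrt d + \|x\|/d$ is only $O(d^{-1/2})$ in $\psi_2$. Concentration for a fixed pair at level $\delta 81^{-d}$ therefore produces a second term of order $\frac{d+\log(1/\delta)}{\sqrt d\, n}$, times $\sqrt{\log n}$ in your hammer version. Its ratio to the target $\frac{d+\log(1/\delta)}{dn}$ is $\sqrt{d\log n}$, not $\sqrt{\log n/d}$. You appear to have compared against $\frac{d+\log(1/\delta)}{n}$, dropping the $1/d$.

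This loss matters downstream. In \Cref{prop: SIM even sample complexity} one needs a deviation $\lesssim \Delta d^{-k^\star/2}$ at $n\asymp d^{k^\star/2}/\Delta^2$. The stated second term delivers this, being $\approx \Delta^2 d^{-k^\star/2}$, but yours does not unless $\Delta\lesssim d^{-1/2}$.

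The missing idea is to use the cancellation inside $\E_z$ before taking absolute values. In the main part, the bound $|x\cdot v|\,\E_z|u\cdot z|$ discards it. Instead, start from the identity you already wrote in (i), $\E_z[(u\cdot z)\sigma'(z\cdot x)] = \frac{u\cdot x}{\|x\|^2}\E_z[(z\cdot x)\sigma'(z\cdot x)]$. Then, as the paper does, combine $\E_z[z\cdot x]=0$ with $|\sigma'(t)-\sigma'(0)|\le C|t|$ (from $|\sigma''|\le C$). This gives $|\E_z[(z\cdot x)\sigma'(z\cdot x)]|\lesssim \E_z[(z\cdot x)^2]=\|x\|^2/d$, so the main part is $\lesssim |u\cdot x|\,|v\cdot x|/d$, which is $O(1/d)$-subexponential.

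For the projection part your bound is merely loose. Cauchy--Schwarz already gives $\E_z[|u\cdot z||v\cdot z||z\cdot x|]\lesssim \|x\|/d^{3/2}$, and the paper's centering trick gives $\lesssim (\|x\|^2+(u\cdot x)^2+(v\cdot x)^2)/d^2$; either is $O(1/d)$. With an $O(1/d)$-subexponential envelope, plain Bernstein for a fixed pair plus the union bound gives exactly the stated inequality, with no $\E Z$ or $\log n$ terms. That is the paper's proof.

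There are also two smaller slips in (i):
\begin{itemize}
\item The geometric factor is $(u\cdot\hat x)^2(v\cdot\hat x)^2\approx d^{-2}$, not $d^{-1}$. With $d^{-1}$ you would only get $d^{-k^\star/2}$.
\item The projection part is not smaller by a factor of $d$. Its isotropic component, roughly $\frac{u\cdot v}{d}\E_z[(z\cdot x)\sigma'(z\cdot x)]$, has variance of the same order $d^{-(k^\star+2)/2}$. This is harmless for an upper bound, but it has to be computed rather than dismissed.
\end{itemize}
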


\begin{proof}
    We will now concentrate $\frac{1}{n}\sum_i \E_z[z b(z)^\top] =\frac{1}{n}\sum_{i=1}^n \E_z[z y_i \sigma'(z\cdot x_i) x_i^\top P_z^\perp] - \E_{i,z}[z b(z)^\top]$ in operator norm. By the epsilon-net bound on the operator norm, it suffices to consider for an arbitrary $v\in S^{d-1}$ the quantity $v^\top \qty[\frac{1}{n}\sum_{i=1}^n \E_z[z y_i \sigma'(z\cdot x_i) x_i^\top P_z^\perp] - \E_{i,z}[z b(z)^\top]] v$. First, note that the projection term gives the following decomposition:
    \begin{align*}
        \E_z[z y_i \sigma'(z\cdot x_i) x_i^\top P_z^\perp] = y_i \E_z[z \sigma'(z\cdot x_i) x_i^\top ] -  y_i \E_z[z \sigma'(z\cdot x_i) (z \cdot x_i) z^\top]
    \end{align*}
    We will handle the two terms separately. For the first term, the variance is bounded above by:
    \begin{align*}
        \frac{1}{n} \E_i\qty[\E_z[(v\cdot z) y \sigma'(z\cdot x) (x \cdot v)]^2] &= \frac{1}{n} \E_i\qty[y^2 (x\cdot v)^2 \E_z[(v\cdot z) \sigma'(z\cdot x)]^2 ] \\
        &\lesssim \frac{1}{n} \E_i\qty[(x\cdot v)^2 \E_z[(v\cdot z) \sigma'(z\cdot x)]^2 ] \\
        &= \frac{1}{n} \E_i\qty[ \frac{(x\cdot v)^4}{\|x\|^4} \E_z[(x\cdot z) \sigma'(z\cdot x)]^2 ] \\
        &= \frac{1}{n} \E_i\qty[ \frac{(x\cdot v)^4}{\|x\|^4}] \cdot \E_i[\E_z[(x\cdot z)\sigma'(x\cdot z)]^2] \\
        &\lesssim \frac{1}{n}\cdot\frac{1}{d^2}\cdot d^{-(k^\star-2)/2} \\
        &= \frac{d^{-(k^\star+2)/2}}{n}
    \end{align*}
    where in the third line we use the fact that by symmetry:
    \begin{align*}
        \E_z[(v\cdot z) \sigma'(z\cdot x)] = \frac{v\cdot x}{\|x\|^2} \E_z[(z\cdot x) \sigma'(z\cdot x)]
    \end{align*}
    In the fourth line we use the independence between the direction of $x$ and the norm of $x$, and in the second to last line the fact that the information exponent of $t\cdot \sigma'(t)$ is $k^\star-2$. 
    
    In addition, we will also show the term itself is $O(1/d)$-exponential, from which we will combine with the variance bound via Bernstein. Rewriting the term, we have:
    \begin{align*}
        y_i (x_i\cdot v) \E_z[(v\cdot z) \sigma'(z\cdot x_i) ] = y_i (x_i\cdot v) \frac{v\cdot x_i}{\|x_i\|^2} \E_z[(z\cdot x_i) \sigma'(z\cdot x_i)]
    \end{align*}
    Since $|y_i| = O(1)$ and $x_i\cdot v$ is $O(1)$ subgaussian, it suffices to show that $\frac{v\cdot x_i}{\|x_i\|^2} \E_z[(z\cdot x_i) \sigma'(z\cdot x_i)]$ is $O(1/d)$ subgaussian. This follows from:
    \begin{align*}
     \left| \frac{v\cdot x_i}{\|x_i\|^2} \E_z[(z\cdot x_i) \sigma'(z\cdot x_i)] \right| &= \left| \frac{v\cdot x_i}{\|x_i\|^2} \E_z[(z\cdot x_i) (\sigma'(0) + [\sigma'(z\cdot x_i) - \sigma'(0)])] \right| \\
        &= \left| \frac{v\cdot x_i}{\|x_i\|^2} \E_z[(z\cdot x_i) ( \sigma'(z\cdot x_i) - \sigma'(0))] \right| \\
        &\lesssim \frac{|v\cdot x_i|}{\|x_i\|^2} \E_z[(z\cdot x_i)^2] \\
        &= \frac{|v\cdot x_i|}{d}
    \end{align*}

    Since this is upper bounded by $O(1/d)$ times a half-Gaussian, we have that this is $O(1/d)$ subgaussian. Therefore, the entire term is $O(1/d)$ subexponential. In particular, for a fixed $v\in S^{d-1}$, Bernstein's inequality gives that with probability at least $1-\delta/9^d$:
    \begin{align*}
        \left\| v^\top \qty[\frac{1}{n}\sum_{i=1}^n \E_z[z y_i \sigma'(z\cdot x_i) x_i^\top] - \E_{i,z}[z y_i \sigma'(z\cdot x_i) x_i^\top] ] v \right\| \lesssim \sqrt{\frac{d^{-(k^\star+2)/2} \log(9^d/\delta)}{n}} + \frac{\log(9^d/\delta)}{dn}
    \end{align*}

    We now handle the projection term. Here, the variance is bounded above by:
    \begin{align*}
        \frac{1}{n}\E_i[\E_z[(v\cdot z)^2 y \sigma'(z\cdot x) (z\cdot x)]^2] &= \frac{1}{n}\E_i[y^2 \E_z[(v\cdot z)^2 \sigma'(z\cdot x)(z\cdot x)]^2] \\
        &\lesssim \frac{1}{n}\E_i[\E_z[(v\cdot z)^2 \sigma'(z\cdot x)(z\cdot x)]^2] \\
        &= \frac{1}{n} \E_i[\E_{z,z'}[(v\cdot z)^2 (v\cdot z')^2 \sigma'(z\cdot x) (z\cdot x) \sigma'(z'\cdot x) (z'\cdot x)]] \\
        &= \frac{1}{n} \E_{z,z'}[(v\cdot z)^2 (v\cdot z')^2 \E_i[\sigma'(z\cdot x) (z\cdot x) \sigma'(z'\cdot x) (z'\cdot x)]] \\
        &\lesssim \frac{1}{n} \E_{z,z'}[(v\cdot z)^2 (v\cdot z')^2 (z\cdot z')^{k^\star-2}]  \\
        &\lesssim \frac{d^{-(k^\star+2)/2}}{n}
    \end{align*}
    In addition, we will also show that the projection term $\E_z[(v \cdot z)^2 y \sigma'(z\cdot x)(z\cdot x)]$ is $O(1/d)$ subexponential. This follows by:
    \begin{align*}
        \left| \E_z[(v \cdot z)^2 y \sigma'(z\cdot x)(z\cdot x)] \right| &\lesssim \left| \E_z[(v \cdot z)^2 \sigma'(z\cdot x)(z\cdot x)] \right| \\
        &\lesssim \E_z [(v\cdot z)^2 (z\cdot x)^2] \\
        &= \frac{\|x\|^2 + 2(v\cdot x)^2}{d(d+2)}
    \end{align*}
    By triangle inequality, this is just $O(1/d)$ subexponential, since the chi-squared $\|x\|^2$ is $O(d)$ subexponential. Therefore, Bernstein's inequality tells us with probability at least $1-\delta/9^d$:
    \begin{align*}
        &\left\| v^\top \qty[\frac{1}{n}\sum_{i=1}^n \E_z[z y_i \sigma'(z\cdot x_i) (x_i\cdot z) z^\top] - \E_{i,z}[z y_i \sigma'(z\cdot x_i) (x_i\cdot z) z^\top] ] v \right\| \\
        &\lesssim \sqrt{\frac{d^{-(k^\star+2)/2} \log(9^d/\delta)}{n}} + \frac{\log(9^d/\delta)}{dn}
    \end{align*}

    Combining the main term and the projection term, we have that for arbitrary $v\in S^{d-1}$, with probability at least $1-\delta/9^d$:
    \begin{align*}
        &\left\| v^\top \qty[ \frac{1}{n}\sum_{i=1}^n \E_z[z y_i \sigma'(z\cdot x_i) x_i^\top P_z^\perp]  -  \E_{i,z}[z y_i \sigma'(z\cdot x_i) x_i^\top P_z^\perp] ]v \right\| \\
        &\lesssim \sqrt{\frac{d^{-(k^\star+2)/2} \log(9^d/\delta)}{n}} + \frac{\log(9^d/\delta)}{dn} \\
        &= \sqrt{\frac{d^{-(k^\star+2)/2} (d+\log(1/\delta))}{n}} + \frac{d+\log(1/\delta)}{dn}
    \end{align*}

    We now consider a $1/4$-net $\mathcal{N}_{1/4}$ over $S^{d-1}$, which has size at most $9^d$. Union bounding over $\mathcal{N}_{1/4}$, we have that with probability at least $1-\delta$ that:
    \begin{align*}
        &\sup\limits_{v\in\mathcal{N}_{1/4}} \left\| v^\top \qty[ \frac{1}{n}\sum_{i=1}^n \E_z[z y_i \sigma'(z\cdot x_i) x_i^\top P_z^\perp]  -  \E_{i,z}[z y_i \sigma'(z\cdot x_i) x_i^\top P_z^\perp] ]v \right\| \\
        &\lesssim \sqrt{\frac{d^{-(k^\star+2)/2} (d+\log(1/\delta))}{n}} + \frac{d+\log(1/\delta)}{dn}
    \end{align*}
    Using the fact that the supremum over the $1/4$-net upper bounds the operator norm up to constant factors, we obtain:
    \begin{align*}
        &\left\|  \frac{1}{n}\sum_{i=1}^n \E_z[z y_i \sigma'(z\cdot x_i) x_i^\top P_z^\perp]  -  \E_{i,z}[z y_i \sigma'(z\cdot x_i) x_i^\top P_z^\perp]  \right\|_2 \\
        &\lesssim \sqrt{\frac{d^{-(k^\star+2)/2} (d+\log(1/\delta))}{n}} + \frac{d+\log(1/\delta)}{dn}
    \end{align*}
    as desired.
\end{proof}

\begin{proposition} \label{prop: SIM even sample complexity}
    When $n\gtrsim d^{k^\star/2}/\Delta^2$ for $\Delta \in (0,1)$, it holds with probability at least $1-e^{-d}$ that the top eigenvector $v$ of $\E_z[G]$ satisfies $(v\cdot \theta^\star)^2 \geq 1-\Delta$.
\end{proposition}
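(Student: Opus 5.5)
This is a matrix-perturbation argument that mirrors \Cref{prop: tensor pca even sample complexity}. Here $G = \E_z[z b(z)^\top + b(z) z^\top]$ is built from the $n$ data points, and the reference matrix is its data expectation $\E_i[G]$.

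The first step pins down the population structure. By \Cref{cor: actual even quantity pop},
\begin{align*}
\E_{i,z}[G] = c\,\theta^\star\theta^{\star\top} + g\,P_{\theta^\star}^\perp, \qquad c=\Theta(d^{-k^\star/2}),\quad g=O(d^{-(k^\star+2)/2}).
\end{align*}
Its eigenvalues are therefore $c$ in the direction $\theta^\star$ and $g$ with multiplicity $d-1$ on the orthogonal complement. Since $|g|\lesssim c/d$, the eigengap is $c-g\ge c/2 = \Theta(d^{-k^\star/2})$ for $d$ large. This holds regardless of the sign of $g$, because $c>0$ comes from the positive coefficient $c_{k^\star}^2$.

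The second step is concentration. Apply \Cref{lemma: SIM even concentration} to $\E_z[zb(z)^\top]$ with $\delta = e^{-d}/2$, and apply the transposed version to $\E_z[b(z)z^\top]$. The bound is unchanged under transposition, because operator norm is transpose-invariant. Adding the two gives, with probability at least $1-e^{-d}$,
\begin{align*}
\|G - \E_i G\|_2 \lesssim \sqrt{\frac{d^{-(k^\star+2)/2}\cdot d}{n}} + \frac{1}{n} = \sqrt{\frac{d^{-k^\star/2}}{n}} + \frac1n .
\end{align*}
Now plug in $n \ge C d^{k^\star/2}/\Delta^2$. The first term is at most $C^{-1/2}\Delta\, d^{-k^\star/2}$. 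The second term is at most $\Delta^2 d^{-k^\star/2}/C \le \Delta d^{-k^\star/2}/C$. Choosing $C$ large makes $\|G-\E_i G\|_2 \le \tfrac{\Delta}{4}\,c$, a small fraction of the gap.

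The third step applies Davis--Kahan (the $\sin\Theta$ theorem) to the top eigenvector. Writing $v$ for the top eigenvector of $G$,
\begin{align*}
\sin\angle(v,\theta^\star) \le \frac{2\|G-\E_iG\|_2}{c-g} \lesssim \Delta/2 .
\end{align*}
Hence $(v\cdot\theta^\star)^2 = 1-\sin^2\angle(v,\theta^\star) \ge 1-\Delta^2/4 \ge 1-\Delta$ for $\Delta\in(0,1)$. Alternatively, Weyl's inequality shows the perturbed gap remains at least $c(1-\Delta/2)$, which lets one use the one-sided form of Davis--Kahan exactly as in the tensor PCA proof.

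The main obstacle is the gap bookkeeping. One has to verify that the $P_{\theta^\star}^\perp$ term of \Cref{cor: actual even quantity pop} is genuinely lower order, i.e.\ $g/c = O(1/d)$. One also has to check that the lower-order $\frac{d+\log(1/\delta)}{dn}$ tail term in \Cref{lemma: SIM even concentration} does not dominate at $n \asymp d^{k^\star/2}$; it does not, since $1/n \ll \sqrt{d^{-k^\star/2}/n}$ is equivalent to $n \gg d^{k^\star/2}$, which holds up to the $\Delta$ factors. Everything else is a direct citation of the preceding lemmas.
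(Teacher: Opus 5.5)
Your proposal is correct and follows essentially the same route as the paper: concentrate $\E_z[zb(z)^\top]$ and its transpose via \Cref{lemma: SIM even concentration}, use the $\Theta(d^{-k^\star/2})$ eigengap of $\E_i G$ from \Cref{cor: actual even quantity pop}, and conclude with Davis--Kahan. Your version makes explicit what the paper's two-line proof leaves implicit: the choice of $\delta$, the absorption of the $\frac{d+\log(1/\delta)}{dn}$ tail term at $n\gtrsim d^{k^\star/2}/\Delta^2$, and the check that the $g\,P_{\theta^\star}^\perp$ term is lower order.
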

\begin{proof}
This follows directly from the Davis-Kahan theorem since with probability $1-\delta$ we have:
\begin{align*}
    \| \E_z[z b(z)^\top] - \E_{i,z}[z b(z)^\top] \|_2 \lesssim \Delta d^{-k^\star/2}
\end{align*}
The similar holds true for $\E_z\qty[b(z) z^\top]$, and hence it holds for the random matrix $G$ as well. Since the eigengap of $\E_i G$ is
is $\Theta(d^{-k^\star/2})$, we have the desired result.
\end{proof}

\subsection{Lipschitzness of $b$}

\begin{lemma}\label{lem:single_index:b_bounded} With probability at least $1-e^{-cd}$,
	\begin{align*}
		\sup_\theta \norm{b(\theta)} \lesssim 1 + \sqrt{\frac{d}{n}}.
	\end{align*}
\end{lemma}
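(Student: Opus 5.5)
We need to bound $\sup_\theta \|b(\theta)\|$ where $b(\theta) = \frac{1}{n}\sum_i y_i P_\theta^\perp x_i \sigma'(\theta\cdot x_i)$. Since $P_\theta^\perp$ is a contraction, it suffices to bound $\sup_{\theta}\|\frac1n\sum_i y_i\sigma'(\theta\cdot x_i)x_i\|$. The plan is to write this norm variationally as
\begin{align*}
\sup_{\theta,v\in S^{d-1}} \frac1n\sum_i y_i\sigma'(\theta\cdot x_i)(v\cdot x_i),
\end{align*}
and to bound it by Cauchy--Schwarz. This gives
\begin{align*}
\Big|\frac1n\sum_i y_i\sigma'(\theta\cdot x_i)(v\cdot x_i)\Big| \le \Big(\frac1n\sum_i y_i^2\sigma'(\theta\cdot x_i)^2\Big)^{1/2}\Big(\frac1n\sum_i (v\cdot x_i)^2\Big)^{1/2}.
\end{align*}

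We control the two factors separately. For the first, $|\sigma'|\le C$ by assumption, and $y_i = \sigma(\theta^\star\cdot x_i)+\xi_i$ with $|\sigma|\le C$. Hence $\frac1n\sum_i y_i^2 \lesssim 1 + \frac1n\sum_i\xi_i^2$, and this is $O(1)$ with probability $1-e^{-cn}$ by $\chi^2_n$ concentration (\Cref{lemma: chi2 conc}). This bound is uniform in $\theta$.

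For the second factor, $\sup_v \frac1n\sum_i (v\cdot x_i)^2 = \frac1n\|X\|_{op}^2$, where $X\in\mathbb{R}^{n\times d}$ has i.i.d.\ standard Gaussian rows. Standard Gaussian matrix bounds (a $1/4$-net argument plus Gaussian concentration) give $\|X\|_{op}\lesssim \sqrt n+\sqrt d$ with probability $1-e^{-cd}$. So the second factor is $\lesssim 1+\sqrt{d/n}$.

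Multiplying the two factors gives $\sup_\theta\|b(\theta)\|\lesssim 1+\sqrt{d/n}$ on an event of probability $1-e^{-cd}-e^{-cn}\ge 1-e^{-c'd}$, using $n\gtrsim d$.

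The only potentially delicate step is the unbounded label noise $\xi_i$. If we instead treated the sum as a sub-Gaussian process indexed by $(\theta,v)$, the noise would complicate the argument. Cauchy--Schwarz sidesteps this because the boundedness of $\sigma'$ removes all dependence on $\theta$. In particular, no chaining over $\theta$ is needed, and this is also why the bound matches the tensor PCA analogue, \Cref{lemma: tensor pca bounded}.
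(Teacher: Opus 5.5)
Your proof is correct and is essentially the paper's argument. The paper bounds $\norm{\frac1n\sum_i y_i P_\theta^\perp x_i\sigma'(\theta\cdot x_i)}\le \frac1n\norm{X}_2\bigl(\sum_i y_i^2\sigma'(\theta\cdot x_i)^2\bigr)^{1/2}$, which is your Cauchy--Schwarz step written in matrix form, and then uses the same operator-norm bound on $X$. You also spell out what the paper leaves inside its ``$\lesssim$'': the $\chi^2_n$ control of the label noise $\xi_i$, and the use of $n\gtrsim d$ (which holds in the paper's regime) to merge the failure probabilities.
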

\begin{proof}
	Let $X \in \mathbb{R}^{n \times d}$ be the stacked matrix with all the data points. Then,
	\begin{align*}
		\norm{b(\theta)} = \norm{\frac{1}{n} \sum_{i=1}^n y_i P_\theta^\perp x_i \sigma'(\theta \cdot x_i)} \le \frac{1}{n} \norm{X}_{2} \sqrt{\sum_{i=1}^n y_i^2 \sigma'(\theta \cdot x_i)^2} \lesssim 1 + \sqrt{\frac{d}{n}}.
	\end{align*}
\end{proof}

\begin{lemma}\label{lem:single_index:b_lipschitz} In the same setting as \Cref{lem:single_index:b_bounded}
	\begin{align*}
		\sup_\theta \norm{b(\theta) - b(\theta')} \le (1 + \sqrt{d/n}) \norm{\theta - \theta'}.
	\end{align*}
\end{lemma}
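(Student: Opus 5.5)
We prove the Lipschitz bound by mirroring the tensor PCA argument of \Cref{lemma: tensor pca lipschitz}. We work on the same event as \Cref{lem:single_index:b_bounded}, namely $\norm{X}_2 \lesssim \sqrt{n} + \sqrt{d}$ with probability $1-e^{-cd}$, which is the standard Gaussian operator-norm bound. For $\theta,\theta' \in S^{d-1}$ we split
\begin{align*}
b(\theta) - b(\theta') = (P_\theta^\perp - P_{\theta'}^\perp)\, \frac{1}{n}\sum_{i=1}^n y_i x_i \sigma'(\theta\cdot x_i) + P_{\theta'}^\perp \frac{1}{n}\sum_{i=1}^n y_i x_i \bigl(\sigma'(\theta\cdot x_i) - \sigma'(\theta'\cdot x_i)\bigr).
\end{align*}

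For the first term, we use $\norm{P_\theta^\perp - P_{\theta'}^\perp}_2 \le \norm{\theta\theta^\top - \theta'\theta'^\top}_2 \le 2\norm{\theta-\theta'}$. The vector $\frac1n\sum_i y_i x_i\sigma'(\theta\cdot x_i) = \frac1n X^\top u$, with $u_i = y_i\sigma'(\theta\cdot x_i)$, has norm at most $\frac1n\norm{X}_2\norm{u}$. This is exactly the quantity already bounded in \Cref{lem:single_index:b_bounded}. We use $|\sigma'| \le C$, and $\sum_i y_i^2 \lesssim n$ on a high-probability event, since $y_i = \sigma(\theta^\star\cdot x_i)+\xi_i$ with $\sigma$ bounded and $\xi_i$ Gaussian. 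Then $\norm{u}\lesssim\sqrt n$, and the first term is $\lesssim (1+\sqrt{d/n})\norm{\theta-\theta'}$.

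For the second term, we drop $P_{\theta'}^\perp$ since it has norm at most $1$. We write the term as $\frac1n X^\top w$ with $w_i = y_i(\sigma'(\theta\cdot x_i)-\sigma'(\theta'\cdot x_i))$. Because $|\sigma''|\le C$, we have $|w_i| \le C|y_i|\,|x_i\cdot(\theta-\theta')|$. Hence $\norm{w} \le C\max_i|y_i|\,\norm{X(\theta-\theta')} \le C \max_i |y_i| \,\norm{X}_2\norm{\theta-\theta'}$, which gives a bound of $\frac{1}{n}\norm{X}_2^2 \max_i|y_i|\,\norm{\theta-\theta'} \lesssim (1+d/n)\max_i|y_i|\norm{\theta-\theta'}$.

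\textbf{Main obstacle.} The crude bound in the second term has two losses: the factor $\max_i|y_i|\lesssim\sqrt{\log n}$ coming from the Gaussian label noise, and $d/n$ in place of $\sqrt{d/n}$. To recover the stated rate $1+\sqrt{d/n}$, we instead bound the bilinear form $\sup_{v}\frac1n\sum_i |y_i|\,|x_i\cdot v|\,|x_i\cdot(\theta-\theta')|$. By Cauchy--Schwarz this is at most $\frac1n\bigl(\sum_i y_i^2 (x_i\cdot v)^2\bigr)^{1/2}\norm{X(\theta-\theta')}$. The first factor is $\norm{D_y X}_2$ for a diagonal weighting $D_y$ with $O(1)$-subgaussian entries, and it is $\lesssim \sqrt n+\sqrt d$ up to logarithmic factors by a covering argument. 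Logarithmic factors are absorbed by the paper's convention of hiding polylogs, as in \Cref{def:whp events}, and $d/n\le\sqrt{d/n}$ in the regime $n\gtrsim d$ relevant to all theorems. If a clean constant is needed, we may truncate $\xi_i$ at $\sqrt{\iota}$ on a high-probability event. Combining the two terms gives the claim.
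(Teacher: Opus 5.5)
Your split is the same as the paper's: a projection-difference term and a $\sigma'$-difference term, each controlled through $\norm{X}_2$. On the $\sigma'$-difference term you are more careful than the paper. Its ``same argument as above'' bound $O(\norm{X}_2\norm{\theta-\theta'}/\sqrt{n})$ silently assumes $\norm{D_yX}_2\lesssim\sqrt{n}$. What the argument actually gives is $\frac1n\norm{X}_2\norm{D_yX}_2\norm{\theta-\theta'}$, i.e.\ a $(1+\sqrt{d/n})^2$ rate. So the restriction $n\gtrsim d$ you invoke is genuinely needed: a single sample contributes roughly $\frac{d}{n}|y_i\sigma''(\theta\cdot x_i)|$ to the Jacobian along $x_i/\norm{x_i}$, and the stated bound fails for $n\ll d$.

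The gap is the logarithmic factor; three points fail as written. First, with $\norm{D_yX}_2$ controlled only ``up to logarithmic factors,'' your Cauchy--Schwarz refinement gives the same order as the crude $\max_i|y_i|\,\norm{X}_2$ bound it was meant to improve. Second, \Cref{def:whp events} concerns failure probabilities and does not license dropping polylogs from a Lipschitz constant. Third, truncating $\xi_i$ at $\sqrt{\iota}$ still leaves $|y_i|$ of size $\sqrt{\log d}$, so it does not give a clean constant.

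To close this, use $y_i^2\le 2C^2+2\xi_i^2$, which also decouples $y$ from $X$. Then $\norm{D_yX}_2^2\le 2C^2\norm{X}_2^2+2\norm{X^\top\mathrm{diag}(\xi)^2X}_2$. Bound the last term by conditioning on $\xi$ and applying Laurent--Massart to $\sum_i\xi_i^2(x_i\cdot v)^2$ over a $1/4$-net. This gives $\lesssim n+\sqrt{nd}+d\max_i\xi_i^2$, which is $\lesssim n$ once $n\gtrsim d\log(nd)$, with probability $1-O(d^{-C})$. Then $\frac1n\norm{X}_2\norm{D_yX}_2\lesssim 1$, and the claim holds up to constants in the regime where it is used.
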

\begin{proof}
	We have
	\begin{align*}
		\norm{b(\theta) - b(\theta')} \le \frac{1}{n} \sum_{i=1}^n y_i \qty[P_\theta^\perp \sigma'(\theta \cdot x_i) - P_{\theta'}^\perp \sigma'(\theta' \cdot x_i)] x_i.
	\end{align*}
	Now we have that:
	\begin{align*}
		&P_\theta^\perp \sigma'(\theta \cdot x_i) - P_{\theta'}^\perp \sigma'(\theta' \cdot x_i) \\
		&= P_\theta^\perp [\sigma'(\theta \cdot x_i) - \sigma'(\theta' \cdot x_i)] + \sigma'(\theta' \cdot x_i) [P_\theta^\perp - P_{\theta'}^\perp].
	\end{align*}
	For the first term, the same argument as above proves that the sum is bounded by:
	\begin{align*}
		O\qty(\frac{\norm{X}_2}{\sqrt{n}} \norm{\theta - \theta'}) \lesssim (1 + \sqrt{d/n}) \norm{\theta - \theta'}.
	\end{align*}
	For the second term, it is bounded by:
	\begin{align*}
		O\qty(\frac{\norm{X}_2 \norm{P_\theta^\perp - P_{\theta'}^\perp}_2}{\sqrt{n}}) \lesssim (1 + \sqrt{d/n}) \norm{\theta - \theta'}
	\end{align*}
	which completes the proof.
\end{proof}

\end{document}